\def\eqref#1{equation~\ref{#1}}
\def\1{\bm{1}}
\DeclareMathAlphabet{\mathsfit}{\encodingdefault}{\sfdefault}{m}{sl}
\SetMathAlphabet{\mathsfit}{bold}{\encodingdefault}{\sfdefault}{bx}{n}
\DeclareMathOperator*{\argmax}{arg\,max}
\crefname{equation}{Equation}{Equations}
\newtheorem{theorem}{Theorem}[section]
\newtheorem{definition}[theorem]{Definition}
\crefname{supp}{Supplementary Material}{Supplementary Materials}
\renewcommand{\caption}[1]{\singlespacing\hangcaption{#1}\normalspacing}
\title {Responsible Emergent Multi-Agent Behavior}
\author {Niko A. Grupen}
\begin{document}

\maketitle
\makecopyright

\begin{abstract}
Responsible AI has risen to the forefront of the AI research community. As neural network-based learning algorithms continue to permeate real-world applications, the field of Responsible AI has played a large role in ensuring that such systems maintain a high-level of human-compatibility. Despite this progress, the state of the art in Responsible AI has ignored one crucial point: \textbf{human problems are multi-agent problems}. Predominant approaches largely consider the performance of a single AI system in isolation, but human problems are, by their very nature, multi-agent. From driving in traffic to negotiating economic policy, human problem-solving involves interaction and the interplay of the actions and motives of multiple individuals. 

This dissertation develops the study of \textbf{responsible emergent multi-agent behavior}, illustrating how researchers and practitioners can better understand and shape multi-agent learning with respect to three pillars of Responsible AI: interpretability, fairness, and robustness. First, I investigate multi-agent interpretability, presenting novel techniques for understanding emergent multi-agent behavior at multiple levels of granularity. With respect to low-level interpretability, I examine the extent to which implicit communication emerges as an aid to coordination in multi-agent populations. I introduce a novel curriculum-driven method for learning high-performing policies in difficult, sparse reward environments and show through a measure of position-based social influence that multi-agent teams that learn sophisticated coordination strategies exchange significantly more information through implicit signals than lesser-coordinated agents. Then, at a high-level, I study concept-based interpretability in the context of multi-agent learning. I propose a novel method for learning intrinsically interpretable, concept-based policies and show that it enables novel behavioral analysis tools via concept intervention that can reliably detect emergent coordination, coordination failures (lazy agents), the emergence of strategy and role assignment, and other dependencies between agent behavior. In the second part of the thesis, I study fairness through the lens of cooperative multi-agent learning. There I show that, despite being necessary for learning sophisticated coordination, mutual reward alone does not incentivize fair multi-agent behavior. I introduce novel group-based measures of fairness for multi-agent learning and develop two novel algorithms that achieve provably fair outcomes via equivariant policy learning. The third part of this thesis addresses robustness. I present a systematic analysis of search-based multi-agent learning systems such as AlphaZero and identify concrete failure modes that are present in its policy and value networks, but are disguised by search. I use these empirical findings to derive a novel extension of AlphaZero that combines uncertainty-informed value estimation and improved exploration to align AlphaZero’s policy and value predictions; thereby improving its robustness. 

Altogether, this body of work develops a framework within which researchers and practitioners can begin to understand and shape multi-agent learning systems; representing an initial step towards connecting Responsible AI and multi-agent learning.
\end{abstract}

\begin{biosketch}
Niko A. Grupen is a Ph.D. candidate in the Department of Computer Science at Cornell University, advised by Bart Selman and Daniel Lee. Niko's research centers on bridging advances in multi-agent learning, and reinforcement learning more broadly, with Responsible AI. His work has appeared in top conferences in AI and multi-agent systems such as NeurIPS, AAAI, ICML, AAMAS, and ICRA; and he has spent time at top research labs including Google Brain and Google's People + AI Research organization. Niko is also an avid writer and has contributed to the public discourse on AI and society through articles in popular venues such as Future by Andreessen Horowitz and a chapter in the 2020 book Future of Text by Frode Hegland (with contributions by Vint Cerf and Alan Kay, amongst others). Prior to his Ph.D., Niko was a machine learning engineer at Apple, working on emergent technologies. Niko received his B. Sc. of Computer Science \textit{summa cum laude} from Villanova University, with minors in Mathematics and Economics. He was elected to Phi Beta Kappa and was the recipient of the Gregor Mendel medallion for excellence in Computer Science (awarded to one student annually).  
\end{biosketch}

\begin{dedication}
To my parents, Rod \& Mary, and my brother, Yianni.
\end{dedication}

\begin{acknowledgements}
I am extremely fortunate to have been surrounded by mentors, professors, friends, and family who have generously shared their wisdom, expertise, and support with me along this journey. Their passion for knowledge and their willingness to invest time and effort in my growth have made this thesis possible.

I am forever thankful to my advisor Bart Selman for his wisdom and guidance over these many years, for his patience and tact in shaping my messy ideas into the work it has become today, and for our many Gimme chats. It has been an honor to learn from my co-advisor Daniel Lee, whose careful attention and feedback has sharpened my wit and taught me how to pursue ambitious, open-ended ideas in a principled manner. I am grateful to David Field for his kindness and encouragement, for cultivating my interest in psychology and cognitive science, and for the many stimulating conversations at his weekly round-table. I am honored to have had the opportunity to work with the three of you as a committee---I will carry your many lessons with me.

I am extremely grateful to Shayegan Omidshafiei, Natasha Jaques, and Been Kim for their mentorship and support during (and since) my time at Brain/PAIR; and for believing in Concept Bottleneck Policies when it was just a bullet point under "Possible Research Ideas:" on a literature review. I would also like to thank my other collaborators and colleagues from Brain, PAIR, and DeepMind for many thoughtful discussions: Yannick Assogba, Lucas Dixon, Joel Leibo, Edgar Duéñez Guzmán, Asma Ghandeharioun, Adam Pearce, Ryan Mullins, Peter Hase, Srivatsan Krishnan, and Blair Bilodeau.

I would like to thank my brilliant friends and colleagues at the Stanford Digital Economy lab---Stephan Zheng, Seth Benzell, Victor Ye, Michael Curry, James Chapman---who graciously let me participate as they navigated the frontier of multi-agent RL + macroeconomics. Thanks also to Aviv Tamar and Carmel Rabinowitz for your joint love of intuitive physics and robot learning, and for our collaborations in those areas. Thanks also goes to Benjamin Laufer for our collaboration on collective obfuscation and to Michael Hanlon and Alexis Hao for our work together on AlphaZero. To many lab-mates and friends at Cornell and Cornell Tech---namely, Travers Rhodes, Chanwoo Chun, Ge Gao, Ilan Mandel, David Goedicke, Rei Lee, Frank Bu, Matt Franchi, Maria Teresa Parreira, Natalie Friedman, Mark Colley, Alexandra Bremers, Elif Çelikörs, Alena Hutchinson, Christoforos Mavrogiannis, Wil Thomason, Claire Liang, Andy Ricci, and Valts Blukis---thank you for making this such an incredible experience. And special thanks goes to George Karagiannis and Florian Suri-Pyaer---two funny guys.

I am indebted to Gabriel Pereyra and Winston Weinberg for taking a chance on me. I look forward to the journey ahead, and to learning from the the brilliant minds of Daniel Hunter, Julio Pereyra, Spencer Poff, BK, Beth Lebens, Karl de la Roche, Lisa Patel, John LaBarre, and Gordon Moodie.

To my close and long-term friends---especially Connor and Alexandra, Tyler and Alison, Curtis, Danny, Ryan and Liz, Sean, and Kirsten---thank you for always being there, supporting me, and mostly for bearing with me.

Most of all, I am grateful for the love and support of my parents, Roderic Grupen and Mary Andrianopoulos. Thank you for instilling in me a love for science, an insatiable curiosity, and for always being a sound board to my craziest ideas. To my brother, Yianni, thank you for constantly challenging me to be my best, and teaching me to be determined in what I do. I would like to thank my grandparents, William and Doris Grupen, for supporting the pursuit of education and knowledge, and for always inviting me over for a coffee table chat. And to my grandparents, John and Eugenia Andrianopoulos, who gave up their lives in Greece, immigrated through Ellis Island, and worked in a pie factory so that future generations of their family could have an opportunity at a better life---there is no way that I can thank you enough.
\end{acknowledgements}

\contentspage
\tablelistpage
\figurelistpage

\normalspacing \setcounter{page}{1} \pagenumbering{arabic}
\pagestyle{cornell} \addtolength{\parskip}{0.5\baselineskip}

\chapter{Introduction}

\section{Move 37, Emergence, and Multi-Agent Problems}
For thirty-six moves, there was nothing unusual about Lee Sedol's second match against DeepMind's AlphaGo. Sedol and AlphaGo had each spent their first eighteen stones establishing positions in the upper and lower corners of the board, as is standard in high-level Go play. With its 19th stone, however, AlphaGo opted to play a stone in a largely unoccupied middle section on the right-hand side of the board, rather than continuing to fortify its hold on the corners. To capture just how unprecedented this move was, Sedol---then the world's highest-ranked Go player---was so flummoxed that he actually left the room to collect himself and, upon returning, studied the board for a full 15 minutes before playing another stone. What is more, though many viewers took AlphaGo's move to be a mistake at first, Move 37 initiated a highly complex and influential sequence of moves that ultimately led to AlphaGo's victory in the match; and was heralded afterwards as "beautiful" by fellow Go champion Fan Hui \cite{move37alphago}.

Understanding how a distinctly non-human move like Move 37 could occur requires an understanding of how AlphaGo learns in the first place. AlphaGo is a self-play reinforcement learning (RL) algorithm---starting from scratch, it plays \textit{against itself}, improving its strategy slowly over time as guided exclusively by a straightforward objective: win the game. This simple strategy, repeated across millions of games, enables AlphaGo to grow from novice to super-human level expert. Moreover, aside from setting up the algorithm and defining the rules of the game, self-play involves zero human input. This means that a system trained via self-play is not constrained to make the same decisions that a human would make---every decision is an \textbf{emergent behavior} picked up during its training. In the case of Go, self-play allowed AlphaGo to explore move sequences that were effective within the context of winning the game, yet highly non-traditional from the perspective of an extensive history of human gameplay.

AlphaGo's success is representative of a larger paradigm shift sweeping across AI research---one defined by simple yet highly scalable algorithms that are used to train an appropriately large neural network over vast quantities of data with little to no human intervention---and there are few domains where algorithms leveraging these ingredients have yet to achieve super-human levels of performance. For example, successor algorithms to AlphaGo have generalized self-play learning to other complex games such as Chess and Shogi \cite{silver2017alphazero}, and even to general planning problems \cite{schrittwieser2020mastering}. Elsewhere, these same principles have led to diffusion-based model architectures, which have redefined the fields of image \cite{ramesh2022hierarchical, saharia2022photorealistic, rombach2022high} and video generation \cite{singer2022make, ho2022video, ho2022imagen}; and transformer-based architectures that have subsumed the majority of natural language processing tasks \cite{brown2020language, bubeck2023sparks}, many programming-related tasks \cite{li2022competition}, and even previously infeasible tasks such as protein structure prediction \cite{jumper2021highly} and mathematical problem solving \cite{lightman2023let}.

Despite the success of this design pattern that promotes scale-at-all-costs and minimal human input, it is important to recognize that AI systems do not exist in a vacuum. In their current form, AI systems are primarily used with or alongside human counterparts. Even in scenarios where AI automation removes humans from the problem-solving loop, the impact of AI systems materially changes the human experience and the intent is to do so positively.

Recently, the fields of Responsible AI and AI Safety have grown to fill the role of mediator for AI progress. As AI continues to permeate real-world use cases and be deployed in human-facing applications, it is the role of these communities to ensure that it is done safely and with human-compatibility as an utmost priority. Thus far, they have been successful in this goal. For example, issues of bias and fairness have risen to the forefront of supervised learning research \cite{mitchell2018prediction, barocas2019fairmlbook}, leading to formal measures that quantify the presence of unfair or biased outcomes and novel algorithms that improve model performance with respect to those measures \cite{dwork2012fairness, kleinberg2016inherent, hardt2016equality, feldman2015certifying, zafar2017fairness, johndrow2019algorithm, kamiran2009classifying}. Discussions of human-centered AI have also percolated more broadly \cite{russell2019human} to include studies of how learning objectives can be better aligned with human preferences \cite{hadfield2019incomplete, zhuang2020consequences}, or avoid causing societal harm by way of under-specification or uncertainty in learning processes \cite{amodei2016concrete, hendrycks2021unsolved}. Most recently, these ideas have been channeled towards generative models by studying the extent to which generative outputs can be steered towards human-aligned outcomes by fine-tuning from human feedback \cite{christiano2017deep, ziegler2019fine, ouyang2022training} or constraining outputs to adhere to human-specified principles \cite{bai2022constitutional}.

However, the aforementioned research vectors within Responsible AI have focused primarily on supervised, self-supervised/generative, or single-agent contexts, ignoring one crucial point: \textbf{human problems are multi-agent problems.} Predominant approaches largely consider the performance of a single AI system in isolation, but human problem-solving is, by its very nature, multi-agent. From driving in traffic \cite{vinitsky2023optimizing} to negotiating economic policy \cite{zheng2020ai}, human problem-solving involves interaction and the interplay of the actions and motives of multiple individuals.

Multi-agent learning has a deep-rooted history and it is one that has also been rejuvenated by deep learning. In reinforcement learning settings specifically, the incorporation of neural network-based policy and value functions has led to novel algorithms and network architectures that stabilize and scale multi-agent learning \cite{lowe2017multi, foerster2018counterfactual, yang2018cm3, vinyals2019grandmaster, perolat2022mastering, yu2021surprising, he2016opponent, wang2020rode, berner2019dota, mahajan2019maven, rashid2020monotonic}; and has further enabled the exploration of solution spaces for complex social interactions involving diverse incentive structures \cite{team2021open, dennis2020emergent, jaques2019social, johanson2022emergent, leibo2021scalable, leibo2017multi, baker2019emergent} and \textit{ab initio} strategies for learning inter-agent communication that simulate the development of language-like communication protocols \cite{foerster2016learning, lazaridou2016towards, lazaridou2020emergent, das2019tarmac, chaabouni2020compositionality, cogswell2019emergence, resnick2019capacity, chaabouni2019anti, mordatch2017emergence, sukhbaatar2016learning, havrylov2017emergence, eccles2019biases, lowe2019pitfalls}. 

However, the majority of multi-agent learning research is targeted at maximizing traditional measures of performance (e.g. reward, Elo rating, competition against human experts, etc). To realize the full potential of AI systems, the research community will need to bridge multi-agent learning with Responsible AI. This thesis introduces \textbf{responsible emergent multi-agent behavior} as a new class of techniques developed at the intersection of these two points; with the goal of reliably understanding and shaping emergent behavior as a first-step towards unlocking the full potential of AI systems. Focusing on responsible emergent multi-agent behavior will help multi-agent learning in reaching technical parity with other learning disciplines, developing fundamentally new algorithms and techniques for understanding and shaping emergent multi-agent behavior, and formalizing measures of performance that meet the unique demands of multi-agent problems.

\section{Towards Responsible Emergent Multi-Agent Behavior}
The goal of this thesis is to initiate the study of responsible emergent multi-agent behavior, illustrating how researchers and practitioners can better understand and shape emergent multi-agent behavior with respect to three pillars of Responsible AI: interpretability, fairness, and robustness. Each of these sub-disciplines and the motivation behind the contributions of this thesis are outlined below.

\subsection{Interpretability}
\Cref{part:interpretabiltiy} of this thesis focuses on \textbf{interpreting} emergent multi-agent behavior. Unlike supervised learning, where there have been significant efforts to understand a model's decisions, multi-agent interpretability remains under-investigated. This is in part due to the increased complexity of the multi-agent setting. For one, interpreting the decisions of multiple agents over time is combinatorially more complex than understanding individual, static decisions. Moreover, due to non-stationary learning dynamics, the scope and variability of emergent behavior is vast---the behavior of each individual agent, as well as the joint behavior of the multi-agent system, can vary drastically with environmental conditions, or even random seeds (even more so than in single-agent RL). Finally, behavioral changes are difficult to identify through traditional measures of performance (e.g. reward, Elo, evaluations against human experts, etc) alone; and so the trailing pace of multi-agent interpretability relative other learning settings is also a reflection of the limited availability of interpretability tools. Overall, the nature of interactions between agents remains difficult to gauge in MARL.

This thesis posits that a rich set of interpretability techniques are needed, specific to multi-agent learning and that match the expressivity and level of insight outlined by these in other disciplines (e.g. supervised learning). Such tools should enable us to answer \textbf{high-level} strategic questions, such as:
\begin{itemize}
    \item \textit{In cooperative environments, do agents learn to coordinate, opt for less structured independent action, or some combination both?}
    \item \textit{In mixed-motive environments, do agents act selfishly, in a manner that improves social welfare, or both?}
    \item \textit{In competitive environments, when during the training process do strategic behaviors such as role assignment emerge, if at all?}
\end{itemize}
\noindent as well as \textbf{low-level} behavioral questions, such as:
\begin{itemize}
    \item \textit{To what extent do agents exchange information implicitly through their respective action spaces?}
    \item \textit{Can we measure the influence of one agent's behavioral cues on its teammates?}
\end{itemize}

\subsection{Fairness}
Next, this thesis examines the implications of emergent behavior with respect to \textbf{algorithmic fairness}. Increasingly, multi-agent algorithms are being employed alongside human counterparts \cite{claure2022fairness}, as a proxy for human engagement \cite{park2023generative}, or to solve problems that may materially impact a human's socio-economic well-being (e.g. tax policy, social planning \cite{zheng2020ai}). However, as we have seen, simple reward maximization yields a wide range of multi-agent behavior. The emergence of sub-optimal behavior in these human-centered applications is problematic, as it may impact human outcomes.

The goal of algorithmic fairness is to ensure that outcomes or decisions derived from machine learning systems do not come to rely on sensitive information, or disproportionately favor particular individuals or groups based on that information (in terms of accuracy or output quality). As in the interpretability domain, however, algorithmic fairness has not received the depth of formal treatment in MARL settings as it has in supervised learning\footnote{There is a large body of literature studying fairness in the game-theoretic contexts, but our focus here is on formalizing multi-agent fairness more generally (irrespective of reward structure) and, in particular, highlighting the role of sensitive attributes and their interactions with multi-agent outcomes}; and there are a number of key differences in the problem setup across these domains. In supervised learning, decisions (i.e. predictions or classifications) are made statically for one individual at a time \cite{barocas2019fairmlbook, mitchell2018prediction, dwork2012fairness, hardt2016equality}, whereas in multi-agent settings, decisions (i.e. actions) influence multiple individuals (or agents) simultaneously and occur repeatedly over many time-steps. Reward as a measure of outcomes further complicates the picture---agent incentives may be aligned (cooperative), conflicting (competitive), or mixed between local vs. global interests (social dilemmas).

Existing fairness frameworks do not yet fully capture these multi-agent nuances. This part of the thesis takes first steps towards addressing fairness in multi-agent learning settings by asking: 
\begin{itemize}
    \item \textit{How do we formalize fairness in multi-agent settings? For example, what do sensitive variables mean for an agent within a multi-agent team? Can we reconcile reward as an outcome variable?}
    \item \textit{In cooperative environments, is shared reward enough to incentivize fair behavior to emerge, or does unfair behavior emerge naturally?}
    \item \textit{Can we shape multi-agent behavior during training to achieve fairer outcomes? If so, to what extent does a trade-off between fairness and utility/efficiency exist, and what properties of the MARL problem contribute to that trade-off?}
\end{itemize}

\subsection{Robustness}
Finally, we turn our attention to issues of robustness in search-based multi-agent learning systems, such as AlphaZero \cite{silver2017alphazero}. By combining tree search with neural network-based value estimation and action selection, AlphaZero and its variants (e.g. AlphaGo \cite{silver2017alphago}) have achieved impressive feats, including reaching super-human levels of play in Go \cite{silver2017alphago}, Chess, and Shogi \cite{silver2017mastering}. However, the combinatorial complexity of these games\footnote{Chess has up to ${\sim} 10^{44}$ possible board states, Shogi ${\sim} 10^{71}$, and Go ${\sim} 10^{170}$} and the complexity of the AlphaZero algorithm itself makes interpretation and analysis of its performance challenging. For this reason, despite the clear evidence that AlphaZero's best case performance surpasses human game players, researchers have a limited understanding of what exactly AlphaZero has learned about the games it plays and why it makes the decisions that it does.

This is especially problematic in situations where AlphaZero's decision-making is sub-optimal. For example, recent work has shown that many board states elicit surprising \textit{yet ineffective} actions from AlphaGo \cite{lan2022alphazero}; and that simple\footnote{Simple enough that each of the adversarial attacks was easily countered by amateur-level Go players} opponent strategies \cite{wang2022adversarial} create the equivalent of adversarial examples \cite{goodfellow2014explaining} for AlphaZero's policy network. Subsequent work in concept-based interpretability has gone a long way towards understanding where in AlphaZero's large network certain strategic concepts are encoded \cite{mcgrath2022acquisition}, yet the issue of \textbf{robustness} still remains.

The goal of this work is not only to examine \textit{how} AlphaZero fails, but also to uncover \textit{why} AlphaZero fails; and use that understanding as motivation for algorithmic improvements to AlphaZero's generalization and robustness. In particular, the following questions are relevant to this study of robustness:
\begin{itemize}
    \item \textit{Why does AlphaZero's performance degrade so severely in the presence of simple adversarial examples?}
    \item \textit{Can the robustness of AlphaZero's policy and value networks be improved algorithmically?}
\end{itemize}

\section{Contributions}
The primary contribution of this thesis is to initiate the study of responsible emergent multi-agent behavior, bridging Responsible AI and multi-agent learning to better understand and shape emergent phenomena in multi-agent systems. In doing so, I introduce several novel methods and improvements to multi-agent learning algorithms, propose novel ways of measuring and analyzing emergent phenomena, and investigate their benefits across a range of multi-agent problem domains.

With respect to the core research areas outlined in the previous section, my contributions can be enumerated as follows:

\paragraph{Interpretability}
In the pursuit of interpretability, I introduce methods to better understand emergent multi-agent behavior at both a low-level, by examining agent action selection and behavioral cues, and at a high-level, by deriving concept-based explanations that are more amenable to human understanding.

First, at the low-level, I examine the extent to which implicit communication aides coordination in multi-agent populations. To accurately model the conditions under which coordination and communication emerges naturally, I prioritize decentralized multi-agent learning and introduce novel algorithms that enable multi-agent teams to learn to coordinate from scratch in difficult sparse-reward environments (where decentralized learning typically fails). Specifically, I introduce Curriculum-Driven Deep Deterministic Policy Gradients (CD-DDPG) as a policy learning method that combines environment shaping and behavioral curricula. I then compare the highly-coordinated strategies learned by CD-DDPG agents to those of lesser-skilled agents; and introduce a method of position-based social influence to measure the exchange of implicit information between agents. Results of this study show that agents trained with CD-DDPG exchange almost \textbf{three times more information} on average, compared to other less-coordinated strategies  This shows that agents trained with our method learn to rely heavily on the exchange of implicit signals as a means to coordinating in a sophisticated manner (and outperforming other strategies).

\noindent In sum, my contributions in this area are as follows:
\begin{enumerate}
    \item I highlight the importance of studying implicit signals as a form of emergent communication.
    \item I introduce a curriculum-driven learning strategy, Curriculum-Driven DDPG, that enables cooperative agents to solve difficult coordination tasks with sparse reward.
    \item I show that, using this strategy, agents learn strong coordination strategies that significantly outperforming sophisticated analytical and learned methods.
    \item I examine the use of implicit signals in emergent multi-agent coordination through position-based social influence and find that agents trained with my strategy exchange up three times more information per time-step on average than competing methods.
\end{enumerate}

Next, with the goal of understanding emergent multi-agent behaviors at a higher-level, I examine the intersection of multi-agent learning and concept-based interpretability. I introduce an intrinsically-interpretable, concept-based policy architecture for multi-agent learning---the Concept Bottleneck
Policy (CBP)---that makes multi-agent decision-making immediately transparent
by distilling each agent's decisions into a set of human-understandable concepts. The CBP architecture forces an agent to make decisions by first predicting an intermediate set of human-understandable concepts, then using those concepts to select actions. In turn, agents reveal the environmental and inter-agent factors influencing each of their actions. 

This method unlocks a novel class of techniques for understanding emergent multi-agent behavior. By conditioning each agent’s actions on its own concepts from construction, CBPs support behavioral analysis via concept intervention. Specifically, it is possible to perform interventions over an individual agent’s CBP, examine its impact on performance (with respect to rewards, actions, environment features, etc), and derive insights into key aspects of the emergent behavior of the larger multi-agent system. I perform this analysis over multi-agent teams in a range of domains, including cooperative environments---where intervention can be used to identify coordination from independent action, expose inter-agent factors that drive coordination, and even diagnose common failure modes such as lazy agents---and mixed-motive social dilemmas---where intervention unlocks our ability to examine inter-agent social dynamics (e.g., exploitation, public resource sharing). Finally, I show that CBPs perform comparably to non-concept based policies, thus achieving interpretability without sacrificing performance

\noindent In sum, my contributions in this area are as follows:
\begin{enumerate}
    \item I introduce Concept Bottleneck Policies as an interpretable, concept-based architecture for multi-agent reinforcement learning.
    \item I introduce a suite of novel techniques for multi-agent behavioral understanding that leverage concept intervention and graph learning.
    \item I show that CBPs can match the performance of non-concept-based network architectures.
\end{enumerate}

\paragraph{Fairness}
In the pursuit of fairness, I examine algorithmic fairness in the context of multi-agent learning and develop the connective tissue needed to understand fairness in cooperative multi-agent settings. I begin with a study of emergent behavior amongst cooperative multi-agent teams that are trained under mutual reward. Initial experiments suggest that mutual reward, though crucial for learning coordination, is not enough to incentivize fair emergent coordination. Specifically, I find that naive, unconstrained maximization of mutual reward yields a symmetry-breaking form of role assignment that in turn creates unfair individual outcomes for cooperative teammates.

I introduce a novel measure of fairness for multi-agent learning settings---team fairness---that is able to capture this phenomenon quantitatively. Team fairness is a group-based fairness measure, inspired by demographic parity \cite{dwork2012fairness, feldman2015certifying}, that requires the distribution of a team's reward to be equitable across sensitive groups. Next, to incorporate team fairness into policy learning, I introduce a novel multi-agent learning strategy---\textit{Fairness through Equivariance} (Fair-E). Fair-E enforces team fairness during policy optimization by transforming the team's joint policy into an equivariant map and achieves provably fair reward distributions under assumptions of agent homogeneity. 

Despite achieving fair outcomes, Fair-E represents a binary switch---one can either choose fairness (at the expense of utility) or utility (at the expense of fairness). In many cases, however, it is advantageous to modulate between fairness and utility. To this end, I also introduce a soft-constraint version of Fair-E that incentivizes equivariance through regularization: \textit{Fairness through Equivariance Regularization} (Fair-ER. With Fair-ER, it is possible to tune fairness constraints over multi-agent policies by adjusting the weight of equivariance regularization. Empirically, Fair-ER reaches higher levels of utility than Fair-E while achieving fairer outcomes than non-equivariant policy learning.

Finally, as in both prediction-based settings \cite{corbett2017algorithmic, zhao2019inherent} and in traditional multi-agent variants of fairness \cite{okun2015equality, le1990equity}, it is important to understand the ``cost" of fairness. I present novel findings regarding the fairness-utility trade-off for cooperative multi-agent settings, showing that the magnitude of the trade-off depends on the skill level of the multi-agent team. When agent skill is high (making the task easier to solve), fairness comes with no trade-off in utility, but as skill decreases (making the task more difficult), gains in team fairness are increasingly offset by decreases in team utility.

\noindent In sum, my contributions in this area are as follows:
\begin{enumerate}
    \item I show that mutual reward is critical to multi-agent coordination. Experimentally, agents trained with mutual reward learn to coordinate effectively, whereas agents trained with individual reward do not. Despite this, the agents learn coordination strategies that are unfair with respect to individual agent outcomes.
    \item I introduce team fairness as a group-based fairness measure for multi-agent teams that requires equitable reward distributions across sensitive groups.
    \item I introduce Fairness through Equivariance (Fair-E), a novel multi-agent strategy leveraging equivariant policy learning. Fair-E achieves provably fair outcomes for individual members of a cooperative team.
    \item I introduce Fairness through Equivariance Regularization (Fair-ER) as a soft-constraint version of Fair-E. Fair-ER reaches higher levels of utility than Fair-E while achieving fairer outcomes than non-equivariant learning.
    \item I present novel findings regarding the fairness-utility trade-off for cooperative settings, showing that the magnitude of the trade-off depends on agent skill---when agent skill is high, fairness comes for free; whereas with lower skill levels, fairness is increasingly expensive.
\end{enumerate}

\paragraph{Robustness}
Finally, in the pursuit of robustness, I examine emergent phenomena within search-based multi-agent learning algorithms like AlphaZero. As mentioned previously, recent studies have shown that, in addition to positive emergent phenomena like Move 37, there are may negative phenomena hidden within AlphaZero's vast neural network backbone. For example, studies leveraging adversarial game-play have shown that AlphaZero is surprisingly susceptible to minor perturbations in opponent behavior \cite{lan2022alphazero} and can be exploited to lose games that could be easily won by amateur human players.

To better understand AlphaZero's hidden failure modes, I conduct a thorough analysis of AlphaZero's gameplay in solved board games. This examination reveals multiple phenomena that underlie AlphaZero's documented failure modes. First, despite learning to play a game optimally \textit{within its search process}, AlphaZero's policy network and value function are \textbf{far from optimal themselves}. This reveals an interesting tension between learning and search: search is necessary to fight the combinatorial exploration problem of complex games, but also offers neural networks a crutch that leads to inadequate generalization. Next, I identify a novel emergent phenomenon, \textbf{policy-value misalignment}, in which AlphaZero's policy and value predictions contradict each other. This misalignment is a direct consequence of AlphaZero's objective, which includes separate terms for policy and value error, but no constraint on their consistency. I introduce an information-theoretic definition of policy-value misalignment and use it to quantify AlphaZero's misalignment. Lastly, I find evidence of \textbf{inconsistency within AlphaZero's value function} with respect to symmetries in the environment. As a result, AlphaZero's value predictions generalize poorly to infrequently-visited and unseen states. Crucially, these phenomena are \textbf{masked by search} and exist even when AlphaZero plays optimally within its search process. This work therefore demonstrates that, though search allows AlphaZero to achieve unprecedented scale and performance, it also hides deficiencies in its learned components.

To address these issues, I propose two modifications to AlphaZero aimed directly at policy-value alignment and value function consistency. First, I introduce Value-Informed Selection (VIS): a modified action-selection rule in which action probabilities are computed using a one-step lookahead with AlphaZero's value network. VIS then alternates stochastically between this value action selection rule and AlphaZero's standard policy (which is derived from search probabilities). During training, this forces alignment between AlphaZero's policy and value predictions. Next, I introduce Value-Informed Symmetric Augmentation (VISA): a data augmentation technique that takes into account uncertainty in AlphaZero's value network. Unlike data augmentation techniques that naively add states to a network's replay buffer, VISA considers symmetric transformations of a given state and stores only the transformed state whose value differs most from the network's value estimate for the original, un-transformed state.

VISA and VIS can improve both policy-value alignment and value robustness in AlphaZero simultaneously. I refer to this joint method as VISA-VIS. In experiments across a variety of symmetric, two-player zero-sum games, VISA-VIS reduces policy-value misalignment by \textbf{up to 76\%}, reduces value generalization error by \textbf{up to 50\%}, and reduces average value prediction error \textbf{by up to 55\%}; as compared to AlphaZero.

\noindent In sum, my contributions in this area are as follows:
\begin{enumerate}
    \item I systematically study AlphaZero and identify multiple emergent phenomena that are masked by search.
    \item I define an information-theoretic measure of policy-value misalignment.
    \item I propose Value-Informed Selection (VIS): a modified action-selection rule for AlphaZero that forces alignment between policy and value predictions.
    \item I propose Value-Informed Symmetric Augmentation (VISA): a data augmentation technique that leverages symmetry and value function uncertainty to learn a more consistent value network.
    \item I introduce a new method, VISA-VIS, that combines VIS and VISA into a single algorithm. Experimentally, I show that VISA-VIS improves the quality of AlphaZero's policy and value networks, and significantly reduces policy-value misalignment.
\end{enumerate}

\section{Overall Structure}
This dissertation is organized in three parts, each addressing one of the outstanding challenges outlined in the preceding section. The core contributions are preceded by a review of background material. \Cref{sec:background} reviews material in each of the key areas presented throughout the thesis. There exists a setup of the reinforcement learning (RL) problem and discussion of the RL and multi-agent RL algorithms that provide a foundation for this work. Then, a short introduction to notation and definitions for related areas of machine learning is provided---namely, interpretability and fairness---that will be incorporated into the MARL framework. Finally, background on the multi-agent environments used in our experiments is provided.

\Cref{part:interpretabiltiy} initiates a study of interpretability in the context of emergent multi-agent behavior. \Cref{chapter:implicit_comm}, addresses low-level interpretability through emergent implicit communication in multi-agent RL. Specifically, this work highlights the importance of recognizing communication as a spectrum---from fine-grained behavioral cues that are exchanged implicitly to rich, compositional protocols that are exchanged explicitly---and show that agents learn to rely upon implicit signals from their teammates when solving difficult coordination tasks. In turn, the volume of implicit information exchange is indicative of the strength of their coordination. \Cref{chapter:interpretability}, targets high-level, concept-based explanations of emergent multi-agent behavior and introduces an intrinsically interpretable policy architecture for MARL agents that conditions each agent's policy on a set of human-understandable concepts. By conditioning on understandable concepts, this method enables post-hoc behavioral analysis via concept intervention. Concept intervention is then shown to reliably detect emergent coordination, expose sub-optimal behavior (e.g., lazy agents), identify complex inter-agent social dynamics, and strategic behavior as they emerge throughout training.

\Cref{part:fairness} connects algorithmic fairness to cooperative multi-agent settings. There we provide formal definitions of fairness for multi-agent teams, introducing a group-based fairness measure that requires equitable reward distributions across sensitive groups. Then, we introduce a novel actor-critic algorithm for learning provably fair multi-agent policies and conduct an empirical analysis of fairness in cooperative games. Finally, we revisit the canonical fairness-utility trade-off for multi-agent learning and show that, in cooperative settings, the magnitude of this trade-off is largely dependent on the relative skills of each agent in the team.

\Cref{part:robustness} addresses robustness in large-scale multi-agent systems. In particular, conduct a thorough analysis of AlphaZero's failure modes is conducted, using board games with enumerable state spaces to gain a holistic picture of AlphaZero's behavior. Two issues are found at the root of AlphaZero's brittleness. First, AlphaZero's policy and value functions are \textit{misaligned} in many states---i.e. AlphaZero's policy network selects a winning action, yet its value network estimates that AlphaZero is losing (or vice versa). This is possible because AlphaZero employs a joint objective (weighted sum) that includes separate terms promoting action and value accuracy, but does not enforce that they are consistent. Further inconsistency within AlphaZero's value function itself, which causes it to generalize poorly to infrequently visited or unseen states. Using these empirical findings as motivation, we derive a novel extension of AlphaZero that combines uncertainty-informed value estimation and improved exploration. Together, this algorithm aligns AlphaZero's policy and value predictions. Experimentally, our algorithm greatly improves AlphaZero's robustness.

\chapter{Background}
\label{sec:background}

\section{Reinforcement Learning}
Reinforcement Learning (RL) refers to a class of sequential decision-making problems in which an \textbf{agent} resides in an \textbf{environment} and interacts with the environment by performing \textbf{actions} over the course of some time horizon. The environment responds to agent actions by transitioning from its current \textbf{state}  to a new state---as governed by a (possibly unknown) transition function---and providing a \textbf{reward} that quantifies the strength or weakness of the agent's action. The goal of the agent is to maximize the cumulative reward it obtains over time.

\subsection{Problem Setup}
Under full-observability, RL problems are formalized with a \textbf{Markov Decision Process} (MDP). An MDP $\mathcal{M}$ is a tuple $\mathcal{M} = \{S, A, T, R \}$ consisting of a state-space $\mathcal{S}$, action-space $A$, transition function $T$, and reward function $R$, respectively. The state-space $S$ defines the set of all environmental states that an agent may encounter. The action-space $A$ defines the set of possible actions an agent can take. Both states and actions may be discrete or continuous. The transition function $T$ defines the probability $p(s' \mid s, a)$ of transitioning from a state $s \in S$ to a new state $s' \in S$ after taking an action $a \in A$. Finally, the reward function $R$ defines the single-step reward $r(s, a)$ that an agent receives for taking the action $a$ in state $s$.

An agent's decision-making is dictated by a \textbf{policy}, which is a function $\pi$ mapping states to actions. Policies can be either deterministic, where $\pi(s)=a$ (deterministic policies may also be denoted $\mu$), or stochastic, wherein $\pi$ defines a conditional probability distribution $\pi(a \mid s) = P(A=a \mid S=s)$. Using these ingredients, the agent-environment interaction more formally as follows: for each time-step $t$, the agent observes a state $s_t$, selects an action $a_t \sim \pi(a \mid s_t)$, transitions to a new state $s_{t+1}$ with probability $p(s_{t+1} \mid s_t, a_t)$, and receives a reward $r_t = r(s_t, a_t)$. For a time horizon of $T$ steps, this produces a $T$-step trajectory:
\begin{equation}
    \tau = \{s_1, a_1, ..., s_T, a_T\}
\end{equation}
with probability:
\begin{align}
    p(\tau) &= p(s_1, a_1, ..., s_T, a_T)\\
    &= p_\emptyset(s_1) \prod_{t=1}^T \pi_\theta(a_t \mid s_t) p(s_{t+1} \mid s_t, a_t)
\end{align}
where the initial state $s_1$ is drawn from a special start-state distribution $p_\emptyset$.

\subsubsection{Value Functions}
Recall that the goal of an RL agent is to maximize the cumulative reward it obtains over time. It is therefore important for an agent to compute or estimate the reward that it can expect to receive in the future. This quantity is known as the \textbf{return} (denoted $G$) and it can be computed for any time-step $t$, as the total sum of discounted rewards going forward:
\begin{equation}
    \label{eqn:background_return}
    G_t = r_t + \gamma r_{t+1} + \gamma^2 r_{t+2} + \cdots = \sum_{k=0}^\infty \gamma^k r_{t+k}
\end{equation}
where $\gamma \in [0, 1]$ is a discount factor for future rewards.

From $G_t$, we define two functions that quantify the value of a state (or state-action pair) with respect to expected reward. The \textbf{state-value function} $V^\pi(s)$ for a state $s_t$ defines the expected return of an agent, starting from $s_t$ and following a specific policy $\pi$ thereafter:
\begin{equation}
    \label{eqn:background_v_func}
    V^\pi(s_t) = \mathbb{E} [G_t \mid S_t = s_t]
\end{equation}
and the \textbf{action-value function} (or Q-function) $Q^\pi(s_t, a_t)$ defines the expected return of an agent, after taking the action $a_t$ from $s_t$, and following the policy $\pi$ thereafter:
\begin{equation}
    \label{eqn:background_q_func}
    Q^\pi(s_t, a_t) = \mathbb{E} [G_t \mid S_t = s_t, A_t = a_t]
\end{equation}
Since $V^\pi$ and $Q^\pi$ are defined with respect to a specific policy $\pi$ (hence the superscript), $\pi$ can recover $V^\pi$ from $Q^\pi$ as follows:
\begin{equation}
    \label{eqn:background_v_from_q}
    V^\pi(s) = \sum_{a\in A} Q^\pi(s,a) \pi(a \mid s)
\end{equation}
Another useful quantity is the difference between $Q^\pi(s,a)$ and $V^\pi(s)$:
\begin{equation}
    \label{eqn:background_adv}
    A^\pi(s, a) = Q^\pi(s, a) - V^\pi(s)
\end{equation}
\noindent which is known as the \textbf{advantage function}. Intuitively, $A^\pi(s, a)$ tells us how strong of an action $a$ is relative all actions available from $s$.

Importantly, using \cref{eqn:background_return}, both the state and state-action value functions can be decomposed into a sum of the immediate reward for the current time-step and the discounted future values. For $V^\pi(s_t)$, this is done as follows:
\begin{align}
    V^\pi(s_t) &= \mathbb{E} [G_t \mid S_t = s_t] \\
    &= \mathbb{E} [r_t + \gamma r_{t+1} + \gamma^2 r_{t+2} + \cdots \mid S_t = s_t] \\
    &= \mathbb{E} [r_t + \gamma(r_{t+1} + \gamma r_{t+2} + \cdots) \mid S_t = s_t] \\
    &= \mathbb{E} [r_t + \gamma G_{t+1} \mid S_t = s_t] \\
    &= \mathbb{E} [r_t + \gamma V^\pi(S_{t+1}) \mid S_t = s_t]
    \label{eqn:background_bellman_v}
\end{align}
\noindent and similarly for $Q^\pi(s_t, a_t)$ we have:
\begin{align}
    Q^\pi(s_t, a_t) &= \mathbb{E} [G_t \mid S_t = s_t, A_t = a_t] \\
    &= \mathbb{E} [r_t + \gamma V^\pi(S_{t+1}) \mid S_t = s_t, A_t = a_t] \\
    &= \mathbb{E} [r_t + \gamma \mathbb{E}_{a \sim \pi} Q^\pi(S_{t+1}, a_t) \mid S_t = s_t, A_t = a_t]
    \label{eqn:background_bellman_q}
\end{align}
\noindent \Cref{eqn:background_bellman_v} and \cref{eqn:background_bellman_q} are referred to as the \textbf{Bellman equations}.

The recursive nature of the Bellman equations are key to solving the RL problem, as it means that it is possible to either (i) compute $V^\pi$ and $Q^\pi$ directly (if a model of the environment is available) or (ii) estimate them from experience (if such a model is not available). In both cases, once the value function is known (or learned), a policy can be derived by acting greedily with respect to the value function (i.e. taking the highest-value action from every state).

\subsection{Value-based Learning}
In complex environments, it is often the case that a model---i.e. the transition probabilities $T$ and rewards $R$ specified by the environment---is unavailable. In such cases, $V^\pi$ and/or $Q^\pi$ must be computed in a \textbf{model-free} manner, by collecting an agent's interactions with the environment (exploration) and learning from that experience. Value-based learning refers to a class of algorithms that learn a value function from experience. Here we review a few model-free methods that our key to our work.

\subsubsection{Temporal-Difference Learning}
Temporal-difference (TD) learning is the canonical algorithm for model-free, value-based learning. TD learning is derived from the following key insight: for a given value $V(s_t)$, it is possible to create a slightly better estimate of $s_t$'s true value using a one-step look-ahead. That look-ahead is computed as the discounted sum of the immediate reward $r_{t+1}$ received when exiting $s_t$ and the value $V(s_{t+1})$ of the subsequent state $s_{t+1}$ as follows:
\begin{equation}
    r_{t+1} + \gamma V(s_{t+1})
\end{equation}
\noindent where $\gamma$ is a discount factor (as in \cref{eqn:background_return}). This look-ahead is known as the \textbf{TD target}. Importantly, the TD target can be computed via bootstrapping. Namely, the value estimate $V(s_t)$ can be computed using an existing \textit{estimate} of $V(s_{t+1})$, rather than computing $V(s_{t+1})$ recursively down to some terminal state. Updating the TD target with existing estimates means that TD learning can operate over \textit{partial episodes} of experience (as opposed to requiring entire trajectories).

Together, these insights motivate the following value function update rule:
\begin{equation}
    \label{eqn:background_td_learning_v}
    V(s_t) \gets V(s_t) + \alpha(r_{t+1} + \gamma V(s_{t+1}) - V(s_t))
\end{equation}
\noindent where $\alpha$ is akin to a learning rate hyperparameter dictating how rapidly existing estimates are updated. The difference between the TD target $r_{t+1} + \gamma V(s_{t+1})$ and current value estimate $V(s_t)$ is known as the \textbf{TD error}. Similarly for the action-value function, we have:
\begin{equation}
    \label{eqn:background_td_learning_q}
    Q(s_t, a_t) \gets Q(s_t, a_t) + \alpha(r_{t+1} + \gamma Q(s_{t-1}, a_{t-1}) - Q(s_t, a_t))
\end{equation}
\noindent This simple update rule is the basis for nearly every value-based RL algorithm.

\subsubsection{Q-Learning}
Q-learning \cite{watkins1992q} is an off-policy algorithm for finding an optimal policy via TD learning. Q-learning first learns the optimal action-value function $Q^*(s,a)$ using the update rule outlined in \cref{eqn:background_td_learning_q}, then recovers the optimal policy as $\pi^*(a|s) = \argmax_a Q^*(s,a)$. More concretely, starting from an initial $Q$ (random or all zeros), Q-learning iteratively performs the following steps until convergence:
\begin{enumerate}
    \item At time-step $t=0$, initialize a trajectory starting from a state $s_0$.
    \item Select an action $a_t = \argmax_{a \in A}Q(s_t, a)$.
    \item After applying $a_t$, collect the reward $r_{t+1}$ and next state $s_{t+1}$ returned by the environment.
    \item Update $Q$ using \cref{eqn:background_td_learning_q}.
    \item Increment $t$ and repeat steps 2-4 until the trajectory ends.
\end{enumerate}
\noindent An exploration strategy such as $\epsilon$-greedy action selection is commonly used in place of greedy action selection in step (2) to ensure a diverse set of actions are used during training.

\subsubsection{Deep Q-Learning}
Though Q-learning is effective for simple environments, computing $Q^*(s,a)$ becomes computationally infeasible as the size of the environment's state and action spaces increases (and is impossible for continuous states and actions). The Deep Q-Network (DQN) \cite{mnih2013playing, mnih2015human} was introduced to circumvent these issues by using neural networks to approximate $Q$. Specifically, DQN defines a neural network function $Q_{\omega}$ with parameters $\omega$ and optimizes it with the following TD error-inspired loss:
\begin{equation}
    \label{eqn:background_dqn_loss}
    \mathcal{L}(\theta) = \mathop{\mathbb{E}}_{s, a, r, s'} \big [ (Q_\omega (s_t, a_t) - y_t)^2  \big]
\end{equation}
\noindent where:
\begin{equation*}
    y_t = r(s_t, a_t) + Q_\omega(s_{t+1}, \pi(s_{t+1}))
\end{equation*}
\noindent Tuples of experience ($s_t$, $a_t$, $r_t$, $s_{t+1}$) are collected throughout training by a separate exploration policy, stored in a replay buffer $\mathcal{D} = \{(s_t, a_t, r_t, s_{t+1})\}_{i=1}^N$, and sampled uniformly at random when computing \cref{eqn:background_dqn_loss}. In practice, the use of a replay buffer is vital to the performance of DQN, as it improves data efficiency and reduces correlations between sequential data points.

Also important to the stability of DQN training (and most value-based learning methods) is the use of a target network. A target network is copy $Q'_\omega(s,a)$ of the original Q-network $Q_\omega(s,a)$ that is in a separate process such that the copied parameters $\omega'$ trail those of the original network:
\begin{equation*}
    \omega' \gets \tau\omega + (1-\tau)\omega'
\end{equation*}
\noindent where $\tau << 1$. The target values provided by $Q'_\omega(s,a)$ are constrained to change slowly, which prevents $Q_{\omega}$ from computing Q-values as it is updating. This in turn improves the stability.

\subsection{Policy Gradient Methods}
\label{sec:background_rl_pg}
Policy gradient methods are a class of policy-based RL solutions \cite{sutton2000policy}. Unlike value-based methods, which derive and improve a policy implicitly through an approximate value function, policy gradient methods assume the policy itself is a function $\pi_\phi$ with parameters $\phi$ and seek to optimize this policy directly. Policy-based methods have an advantage over value-based methods in continuous action spaces, as they avoid computing a max over actions; though they pay for this convenience with efficiency, suffering from higher variance and, in turn, slower convergence. Below is an introduction top policy gradients and a review of prior methods that are key to this work.

\subsubsection{The Gradient of a Policy}
Let $r(\tau)$ represent the cumulative reward the agent receives over a trajectory $\tau = \{s_1, a_1, ..., s_T, a_T \}$:
\begin{equation*}
    r(\tau) = \sum_{t=1}^T r(s_t, a_t)
\end{equation*}
\noindent We can use this to write the RL objective (reward maximization) as:
\begin{equation}
    \label{eqn:background_rl_objective_tau}
    J(\phi) = \mathop{\mathbb{E}}_{\tau \sim p_\phi(\tau)} \bigg [\sum_t r(\tau) \bigg] = \int \pi_\phi(\tau)r(\tau)d\tau
\end{equation} 
To improve $\pi_\phi$, the gradient of \cref{eqn:background_rl_objective_tau} is computed with respect to parameters $\phi$:
\begin{equation}
    \label{eqn:background_grad_objective}
    \nabla_\phi J(\phi) = \int \nabla_\phi \pi_\phi(\tau)r(\tau)d\tau
\end{equation}

\noindent and ascend accordingly:
\begin{equation}
    \label{eqn:background_grad_ascent}
    \phi \leftarrow \phi + \alpha \nabla_\phi J(\phi)
\end{equation}

\noindent Though gradient ascent is simple in principle, computing $\nabla_\phi J(\phi)$ is difficult in practice, as $\pi_\phi(\tau)$ can contain a product of many terms (depending on trajectory length). Moreover, $\pi_\phi(\tau)$ includes terms that may not be known \textit{a priori}, such as the transition distribution $p(s_{t+1} | s_t, a_t)$. We must therefore construct an objective that does not depend on these terms explicitly. Fortunately, the policy gradient theorem allows us to do just that.

\subsubsection{Policy Gradient Theorem}
For any policy $\pi_\phi(\tau)$ and any objective $J(\phi)$, we can write the policy gradient as:
\begin{align}
    \label{eqn:background_pg_theorem}
    \nabla_\phi J(\phi) &= \mathop{\mathbb{E}}_{\tau \sim \pi_\phi(\tau)} \bigg[ \bigg( \sum_t \nabla_\phi \textrm{log} \pi_\phi(a_t|s_t) \bigg) \bigg(\sum_t r(s_t, a_t) \bigg) \bigg ]
    \\[8pt]
    &\approx \frac{1}{N} \sum_{i=1}^N \bigg( \sum_{t=1}^T \nabla_\phi \textrm{log} \pi_\phi(a_{i,t}|s_{i,t}) \bigg) \bigg(\sum_{t=1}^T r(s_{i,t}, a_{i,t}) \bigg)
\end{align}

\noindent This theorem alone enables the construction of a theoretically sound policy gradient method. In fact, the REINFORCE algorithm \cite{willianms1988toward, williams1992simple} does just that---the method generates samples trajectories from the current policy $\pi_\phi(a_t|s_t)$, computes the gradient using \cref{eqn:background_pg_theorem}, and updates the parameters $\phi$ of the policy according to \cref{eqn:background_grad_ascent}.

\noindent \textbf{Note:} The summation over reward in \cref{eqn:background_pg_theorem} starts at time-step $t=1$, even when the summation over policy $\pi_\phi$ is at a time-step $t>1$. In reality, the policy at time $t'$ cannot effect the reward at time $t$ when $t < t'$. To account for this, the policy gradient $\nabla_\phi J(\phi)$ is often rewritten as:
\begin{align}
    \nabla_\phi J(\phi) &\approx \frac{1}{N} \sum_{i=1}^N \bigg( \sum_{t=1}^T \nabla_\phi \textrm{log} \pi_\phi(a_{i,t}|s_{i,t}) \bigg) \bigg(\sum_{t'=t}^T r(s_{i,t'}, a_{i,t'}) \bigg)
    \\
    \label{eqn:background_pg_with_q}
    & \approx \frac{1}{N} \sum_{i=1}^N \bigg( \sum_{t=1}^T \nabla_\phi \textrm{log} \pi_\phi(a_{i,t}|s_{i,t}) \bigg) Q_{i,t}^{\pi_\phi}(s_{i,t}, a_{i,t})
\end{align}
\noindent where the summation of ``reward-to-go", conditioned on actions, is equivalent to the action-value function in \cref{eqn:background_q_func}.

\subsubsection{Baselines}
Though simple, one drawback of REINFORCE is that it results in a high-variance gradient estimator, which leads to inefficient learning in practice. Due to randomness, the return of each trajectory may vary drastically episode-to-episode. By estimating the gradient with finite samples, it is possible to receive a different gradient estimate for every set of samples. This reduces the smoothness of the gradient path, in turn increasing the amount of time needed to converge. The problem of variance is inherent to all policy gradient methods and there exist many extensions of this basic idea that specifically target the amount of variance in the learning process.

The most prominent variance reduction technique is to subtract some quantity from cumulative reward. For a single trajectory $\tau$, we have:
\begin{equation}
    \nabla_\phi J(\phi) \approx \frac{1}{N}\sum_{i=1}^N \nabla_\phi \textrm{log} \pi_\phi(\tau)[r(\tau) - b]
\end{equation}
\noindent The quantity $b$ is known as a \textbf{baseline} and allows us to change the amount of variance present in the system while keeping the same gradient in expectation. The baseline can be chosen strategically to maximize variance reduction while maintaining the direction of the gradient. For example, a popular choice is $b(s_t) = V^{\pi_\phi}(s_t)$, which results in the modified objective:
\begin{align}
    \nabla_\phi J(\phi) &\approx \frac{1}{N} \sum_{i=1}^N \bigg( \sum_{t=1}^T \nabla_\phi \textrm{log} \pi_\phi(a_{i,t}|s_{i,t}) \bigg) \bigg( Q_{i,t}^{\pi_\phi}(s_{i,t}, a_{i,t}) - V_{i,t}^{\pi_\phi}(s_{i,t})\bigg)
    \\
    \label{eqn:background_pg_baseline}
    &\approx \frac{1}{N} \sum_{i=1}^N \bigg( \sum_{t=1}^T \nabla_\phi \textrm{log} \pi_\phi(a_{i,t}|s_{i,t}) \bigg) A_{i,t}^{\pi_\phi}(s_{i,t}, a_{i,t})
\end{align}
\noindent where the difference $Q^{\pi_\phi}(s_t, a_t) - V^{\pi_\phi}(s_t)$ is equivalent to the advantage function $A^{\pi_\phi}(s_t, a_t)$ from \cref{eqn:background_adv}.

\subsubsection{Proximal Policy Optimization}
Proximal Policy Optimization (PPO)~\citep{schulman2017proximal} is an on-policy, policy gradient learning method that streamlines recent advances in trust-region based policy optimization~\citep{schulman2015trust}. PPO iteratively updates the parameters $\phi$ of an agent's policy $\pi_\phi$ with the respect to the following clipped surrogate objective:
\begin{equation*}
    L_{\textrm{CLIP}}(\phi) = \underset{a, s \sim \pi_{\phi_{\textrm{old}}}}{\mathbb{E}} \bigg [\min \bigg( \frac{\pi_\phi(a|s)}{\pi_{\phi_{\textrm{old}}}} \hat{A}^{\pi_{\textrm{old}}}, \textrm{clip} \bigg( \frac{\pi_\phi(a|s)}{\pi_{\phi_{\textrm{old}}}} \hat{A}^{\pi_{\textrm{old}}}, 1 - \epsilon, 1 + \epsilon  \bigg) \hat{A}^{\pi_{\textrm{old}}} \bigg) \bigg]
\end{equation*}
\noindent where $\epsilon$ regulates the step size of policy updates (from $\phi_{\textrm{old}}$ to $\phi$) and $\hat{A}^{\pi_{\textrm{old}}}$ is an estimate of the agent's advantage function (via generalized advantage estimation~\citep{schulman2015high}). In practice, PPO benefits from two additional objectives:
\begin{enumerate}
    \item An error term on value estimation, which we take to be the mean-squared error, $(V_\omega(s) - V^{\textrm{targ}}_{\omega'}(s))^2$, between value estimates from the agent's value function $V_\omega$ (with parameters $\omega$) and a target value function $V^{\textrm{targ}}_{\omega'}$. Typically, parameter sharing between the policy and value functions is allowed (i.e. $\phi = \omega$).
    \item An entropy bonus $H(\pi_\phi(\cdot|s))$ over the agent's policy that encourages exploration.
\end{enumerate}
Altogether this yields the following three-pronged PPO objective:
\begin{equation}
    \label{eqn:background_ppo_loss}
    L_{\textrm{PPO}}(\phi, \omega) = L_{\textrm{CLIP}}(\phi) - \alpha_1(V_\omega - V^{\textrm{targ}}_{\omega'})^2 + \alpha_2H(\pi_\phi(\cdot|s))
\end{equation}
\noindent where $\alpha_1$ and $\alpha_2$ are coefficients weighing the relative importance of the value error term and entropy bonus term, respectively. Recently, PPO has shown strong performance in cooperative multi-agent settings~\citep{de2020independent, yu2021surprising}.

\subsection{Actor-Critic Algorithms}
Both \cref{eqn:background_pg_with_q} and \cref{eqn:background_pg_baseline} offer powerful ways to estimate the gradient while also reducing variance. In reality, however, we often do not have access to the true $Q^\pi(s,a)$ or $V^\pi(s)$ and must approximate them as well. Actor-critic methods, combining the benefits of value-based and policy-based RL methods, provide a mechanism to do so. In actor-critic methods, an \textit{actor} function controls the policy $\pi_\phi$, taking actions in the environment. A \textit{critic} function performs policy evaluation, fitting an approximate value function (either $Q_\omega(s,a)$ or $V_\omega(s)$) and using it to evaluate the actor's actions. The actor then updates $\phi$ in the direction suggested by the critic. This back-and-forth procedure yields an approximate policy gradient---adjust $\pi_\phi$ in the direction that will get more reward, according to the critic.

\subsubsection{Deep Deterministic Policy Gradients}
\label{sec:background_ddpg}
Deep Deterministic Policy Gradients (DDPG) is an off-policy actor-critic algorithm for policy gradient learning in continuous action spaces \cite{lillicrap2015continuous}. DDPG learns an optimal deterministic policy $\mu_\phi$ with respect to the RL objective:
\begin{equation}
    \label{eqn:background_ddpg_obj}
    J(\phi) = \mathbb{E}_s[Q_{\omega}(s,a) \mid_{s=s_t, a=\mu_{\phi}(s_t)} ]
\end{equation}
\noindent by performing gradient ascent over the following gradient:
\begin{equation}
    \label{eqn:background_ddpg_grad}
    \nabla_\phi J(\phi) = \mathbb{E}_s [\nabla_a Q_{\omega}(s,a) \mid_{s=s_t, a=\mu(s_t)}\nabla_\phi \mu(s)\mid_{s=s_t}]
\end{equation}
\noindent where $\phi$ and $\omega$ are parameters associated with $\mu$ and $Q$, respectively. \Cref{eqn:background_ddpg_grad} is a consequence of the deterministic policy gradient theorem \cite{silver2014deterministic}. For critic updates, DDPG minimizes the loss function:
\begin{equation}
    L(\omega) = \underset{s, a, r, s'}{\mathbb{E}} \big[\big(Q_\omega(s, a) - (r(s,a) + \gamma Q_\omega(s', \mu_\phi(s')))\big)^2\big]
\end{equation}
\noindent where ($s, a, r, s'$) are transition tuples sampled from a replay buffer and $\gamma {\in} [0,1]$ is a scalar discount factor. In this work, agents learn in a decentralized manner, each performing DDPG updates individually.

\subsection{Multi-Agent Settings}
In this section, we review a few notational and algorithmic considerations for RL in multi-agent settings.

\subsubsection{Markov Games}
\label{sec:background_markov_games}
A Markov game is a multi-agent extension of the Markov decision process (MDP) formalism \cite{littman1994markov}. For $n$ agents, it is represented by a state space $S$, joint action space $\boldsymbol{A} = \{A_1, ... , A_n\}$, joint observation space $\boldsymbol{O} = \{O_1, ... , O_n\}$, transition function $T:S \times \boldsymbol{A} \rightarrow S$, and joint reward function $\boldsymbol{r}$. Following multi-objective RL \cite{zimmer2020learning}, we define a vectorial reward $\boldsymbol{r}:S \times \boldsymbol{A} \rightarrow \mathbb{R}^n$ with each component $r_i$ representing agent $i$'s contribution to $\boldsymbol{r}$. Each agent $i$ is initialized with a policy $\pi_i:O_i \rightarrow A$ (or deterministic policy $\mu_i$) from which it selects actions and an action-value function $Q_i: S \times A_i \rightarrow \mathbb{R}$ with which it judges the value of state-action pairs. Following action selection, the environment transitions from its current state $s_t$ to a new state $s_{t+1}$, as governed by $T$, and produces a reward vector $\boldsymbol{r}_t$ indicating the strength or weakness of the group’s decision-making. In the episodic case, this process continues for a finite time horizon $T$, producing a trajectory $\tau = (s_1, \boldsymbol{a_1}, ..., s_{T-1}, \boldsymbol{a_{T-1}}, s_T)$ with probability:
\begin{equation}
    \label{eqn:background_traj_prob}
    P(\tau) = P_\emptyset(s_1)\prod_{t=1}^T  P(s_{t+1} \mid s_t, a_t) \pi(a_t \mid s_t)
\end{equation}

\noindent where $P_\emptyset$ is a special distribution specifying the likelihood of each ``start" state.

\subsubsection{Decentralized vs. Centralized Learning}
As discussed in \cref{sec:background_rl_pg}, policy gradient methods are inherently susceptible to high variance gradient updates. This variance problem is exacerbated in multi-agent settings, as the environment (and thus the learning problem) becomes increasingly non-stationary for each agent. For example, consider an agent $i$ that, at some time $t$, selects an action $a_{i,t}$ from some state $s_t$. In single-agent RL, the reward $r_{i,t}$ received by agent $i$ is primarily a function of the quality of agent $i$'s decision-making from $s_t$. In multi-agent settings, however, this is not always the case. In separate trajectories, agent $i$ could select the same action $a_{i,t}$ from $s_t$ and receive \textit{different rewards} because one or many agents in the environment changed their action selection. Since each agent runs its own learning process and updates its policy quickly, this scenario can occur frequently throughout training, to the detriment of learning stability.

To solve this problem, recent work has popularized centralized training, decentralized execution (CTDE) training paradigms in which each agent receives some information about the parameter updates of the other agents in the environment during training \cite{lowe2017multi, foerster2018counterfactual}. Here we present one such technique---Multi-Agent DDPG (MADDPG) \cite{lowe2017multi}---as a representative example.

\paragraph{Multi-Agent Deep Deterministic Policy Gradients}
MADDPG a multi-agent extension of the DDPG algorithm from \cref{sec:background_ddpg}. Given $N$ agents with policies $\boldsymbol{\pi_\phi} = \{\pi_{\phi_1}, ..., \pi_{\phi_N} \}$ parameterized by $\boldsymbol{\phi} = \{\phi_1, ...,  \phi_N \}$, we can write a multi-agent variant of the policy gradient as:
\begin{equation*}
    \nabla_{\phi_i} J(\phi_i) = \mathop{\mathbb{E}}_{\substack{s_{t+1} \sim p(s_{t+1}|s_t, a_t) \\ a_i \sim \pi_{\phi_i}}} \bigg[\nabla_{\phi_i} \textrm{log} \pi_{\phi_i}(a_i|o_i)Q^{\pi_i}(\boldsymbol{x}, a_1, ..., a_N) \bigg]
\end{equation*}

\noindent where $Q^{\pi_i}$ is a centralized action-value function that receives state information $\boldsymbol{x}$ and the actions of each agent as input. The state information $\boldsymbol{x} = (o_1, ..., o_N)$ consists of the observations of each agent plus, in some cases, additional state from the environment. As in DDPG, we can write the multi-agent objective in terms of deterministic policies $\boldsymbol{\mu_\phi} = \{\mu_{\phi_1}, ..., \mu_{\phi_N} \}$:
\begin{equation*}
    \nabla_{\phi_i} J(\phi_i) = \mathop{\mathbb{E}}_{\substack{s_{t+1} \sim p(s_{t+1}|s_t, a_t) \\ a_i \sim \mathcal{D}}} \bigg[\nabla_{\phi_i} \textrm{log} \mu_{\phi_i}(a_i|o_i)\nabla_{a_i}Q^{\mu_i}(\boldsymbol{x}, \mu_{\phi_1}(o_1), ..., \mu_{\phi_N}(o_N)) \bigg]
\end{equation*}
\noindent and make use of a replay buffer $\mathcal{D} = (\boldsymbol{x}, \boldsymbol{x'}, a_1, ..., a_N, r_1, ..., r_N)$ and target networks for both the actor ($\mu_{\phi_i}'$) and the critic ($Q_\omega^{\mu_i'}$). The critic is trained via Q-learning with the objective:
\begin{equation}
    \label{eqn:background_maddpg_q_obj}
    \mathcal{L}(\phi_i) = \mathop{\mathbb{E}}_{\boldsymbol{x}, a, r, \boldsymbol{x}'} \big [ (Q_\omega^{\mu_i} (\boldsymbol{x}, \mu_{\phi_1}(o_1), ..., \mu_{\phi_N}(o_N)) - y)^2  \big]
\end{equation}
\noindent where:
\begin{equation}
    \label{eqn:background_maddpg_q_target}
    y = r_i + (Q_\omega^{\mu_i'} (\boldsymbol{x}', \mu_{\phi_1}'(o_1), ..., \mu_{\phi_N}'(o_N)))
\end{equation}

In this work, however, we prioritize decentralized learning because it more accurately that type of learning problem that agents will face when learning alongside human counterparts.

\subsection{AlphaZero}
\label{sec:background_alphazero}
AlphaZero is a hybrid algorithm that combines self-play RL and Monte-Carlo Tree Search (MCTS). AlphaZero's search and learning processes are reviewed below, as both are crucial to understanding our method.

\paragraph{Setup}
AlphaZero maintains a search tree in which each node is a state $s$ and each edge $(s, a)$ represents taking an action $a$ from $s$. Each edge is described by a tuple of statistics $\{N(s,a), W(s,a), Q(s,a), P(s,a)\}$, holding the edge's visit count, total action value, mean action value, and selection probability, respectively. AlphaZero uses a two-headed neural network $f_\theta$ with parameters $\theta$ to compute policy probabilities $\boldsymbol{p}$ and a value estimate $v$ for a state $s$ as follows:
\begin{equation}
    \label{eqn:background_az_network}
    (\boldsymbol{p}, v) = f_\theta(s)
\end{equation}
MCTS uses $f_\theta$ to evaluate states during the search process. 

\paragraph{Search}
In each state $s$, AlphaZero executes $n$ simulations of MCTS tree search. In each simulation, MCTS traverses the game tree by selecting edges that maximizes the upper confidence bound $Q(s,a)+U(s,a)$, where:
\begin{equation}
    \label{eqn:background_az_puct}
    U(s,a) = cP(s,a)\frac{\sqrt{\sum_b N(s,b)}}{1 + N(s,a)}
\end{equation}
encourages the exploration of lesser-visited states (within the search tree); and $c$ is a coefficient determining the level of exploration. When a leaf node $s_L$ is encountered, it is expanded, evaluated by AlphaZero's network to produce $(\boldsymbol{p}_L, v_L) \sim f_\theta(s_L)$, and initialized as follows:
\begin{equation}
    \label{eqn:background_az_node_init}
    N(s,a) = 0, W(s,a) = 0, Q(s,a) = 0, P(s,a) = p_{L, a}
\end{equation}
\noindent Then, in a backward pass up the search tree, the statistics of each traversed edge are updated as follows: $N(s,a) = N(s,a) + 1$, $W(s,a) = W(s,a) + v_L$, $Q(s,a) = W(s,a) / N(s,a)$.

\paragraph{Action Selection}
After MCTS, AlphaZero constructs a policy $\boldsymbol{\pi}$, where the probability $\pi(a \mid s)$ of selecting action $a$ from state $s$ is computed directly from visitation counts:
\begin{equation}
    \label{eqn:background_mcts_policy}
    \pi(a \mid s) = \frac{N(s,a)^{1/\tau}}{\sum_b N(s,b)^{1/\tau}}
\end{equation}
and $\tau$ is a temperature parameter that controls exploration. During both training and test-time rollouts, AlphaZero samples actions $a \sim \boldsymbol{\pi}$ according to this policy.

\paragraph{Learning}
Starting from an initial state $s_0$ (e.g. an empty board), AlphaZero proceeds with self-play, using the aforementioned search process to select actions until a terminal state $s_T$ is reached. A score $z$ is produced for $s_T$ according to the rules of the game and tuples $(s_t, a_t, z)$ are stored in a replay buffer $D$ for each time-step $t \in [0, T]$ in the trajectory. AlphaZero's neural network is updated to minimize the loss function:
\begin{equation}
    \label{eqn:background_az_loss}
    l = (z - v)^2 - \boldsymbol{\pi}^T \log \boldsymbol{p} + \lambda||\theta||^2
\end{equation}
where $(\boldsymbol{p}, v) \sim f_\theta(s)$ as in \cref{eqn:background_az_network}, $\boldsymbol{\pi}$ follows from \cref{eqn:background_mcts_policy}, and $\lambda$ is a coefficient weighting $L2$-regularization. Intuitively, \cref{eqn:background_az_loss} encourages the network to (i) minimize the error between predicted values $v$ and game outcomes $z$; and (ii) maximize similarity between predicted policy probabilities $\boldsymbol{p}$ and search probabilities $\boldsymbol{\pi}$.

\section{Interpretability}
\label{sec:background_interpretability}
Neural networks, and any algorithms that leverage them, are inherently black-box decision-makers. The field of interpretable machine learning seeks to unlock this black-box such that neural networks have the "\textit{the ability to explain [their decisions] or to present [them] in understandable terms to a human}" \cite{doshi2017towards}. In general, interpretability techniques can be classified into one of two categories: (i) \textbf{intrinsic interpretability}, which describes algorithms and models that are architected to have an interpretable decision-making structure prior to training (e.g. decision trees); and (ii)\textbf{post-hoc interpretability}, where interpretation methods (e.g. linear probing) are applied to a pre-trained, often uninterpretable, model. The descriptions below prioritize concept-based interpretability---a subset of interpretability methods that aim to express network decisions and explanations in terms of a set of manually-curated, human-understandable concepts---as it is a primary motivation for the interpretability methods developed in this work.

\subsection{Intrinsic Interpretability}
\label{sec:background_interpretability_intrinsic}
Consider a standard supervised learning setting, such as regression, where the goal is to learn a function $f:\mathbb{R}^d \rightarrow \mathbb{R}$ that maps inputs $x \in \mathbb{R}^d$ to scalar outputs $y \in \mathbb{R}$, using a dataset $\{(x^{(j)}, y^{(j)})\}_{j=1}^n$ of input-output examples as supervision. The goal of intrinsic interpretability is to ensure that $f$ itself is interpretable---i.e. when an output $\hat{y} = f(x)$ is produced, the factors driving the prediction $\hat{y}$ are explicitly exposed by $f$ (e.g. feature importance, decision tree elements).

In most cases, this means that representing $f_\theta$ as a neural network (here with parameters $\theta$) is not feasible, because neural network activations do not provide this level of decision transparency. However, recent work has shown that it is possible to construct customized network architectures that enable intrinsic interpretability while still leveraging the expressivity of neural network function approximation \cite{koh2020concept}. One such method is a focal point here, as it is key to the motivation of the interpretability methods developed in this work, though many other intrinsic interpretability methods have been proposed in the literature \cite{zhang2021survey, doshi2017towards, ghorbani2019towards}.

\subsubsection{Concept Bottleneck Models}
Most relevant to our work is the Concept Bottleneck Model architecture introduced by \citet{koh2020concept}. Concept bottleneck models refer to a class of intrinsically-interpretable neural network architectures for supervised learning. The CBM network makes predictions by first estimating a set of human-understandable concepts, then producing an output based on those concept estimates---i.e. the network is ``bottlenecked" by concepts. Formally, given a dataset $\{(x^{(j)}, y^{(j)}, c^{(j)})\}_{j=1}^n$ consisting of inputs $x \in \mathbb{R}^d$, outputs  $y \in \mathbb{R}$, and human-understandable concepts $c \in \mathbb{R}^k$ (where $k$ is the number of unique concepts), a concept bottleneck model learns two mappings---$f: \mathbb{R}^d \rightarrow \mathbb{R}^k$ from input-space to concept-space, and $g: \mathbb{R}^k \rightarrow \mathbb{R}$ from concept-space to output-space. The model can then make predictions $\hat{y} = f(g(x))$ as a composition of those mappings. 

\citet{koh2020concept} demonstrate the increased interpretability achieved by concept bottlenecks in the context of an arthritis classification task in which arthritis severity is predicted from MRI images of knee joints. Unlike standard networks that learn an uninterpretable mapping from MRI images to arthritis severity labels, the concept bottleneck's output is conditioned on an intermediate set of human-understandable concepts. Since these concepts correspond to real-world features---e.g., the presence of bone spurs in the arthritis task---they can be used by human observers to understand model failures or misclassifications. Moreover, because the model's predictions are conditioning on these concepts, we can manually change the model's output (without retraining or fine-tuning) by intervening on its concept estimates.

\subsection{Post-hoc Interpretability}
\label{sec:background_interpretability_posthoc}
Despite the explicit decision-making transparency enabled by intrinsically interpretable models, there is often a trade-off in asymptotic performance that comes with the architectural choices needed to support that interpretability. For example, concept bottleneck models require a sufficiently rich set of concepts \cite{yeh2020completeness} to perform well, which can be difficult to obtain manually in complex settings. For this reason, post-hoc interpretability methods seek to generate or otherwise extract explanations from a pre-trained model.

\subsubsection{Linear Probing}
We review the predominant post-hoc interpretability method, linear probing, as a representative example. As above, we focus on a \textit{concept-based} variant of the problem domain \cite{kim2018interpretability}. Let $f_\theta$ be a neural network with parameters $\theta$ that has been trained via supervised learning (as outlined in \cref{sec:background_interpretability_intrinsic}). Given an input $\boldsymbol{x}$, and assuming that $f_\theta$ is composed of $L$ layers, we can rewrite the network's prediction $\hat{y} = f_\theta(\boldsymbol{x})$ as:
\begin{equation*}
    \hat{y} = f^L_\theta \circ ... \circ f^2_\theta \circ f^1_\theta(\boldsymbol{x}) = f^{1:L}_\theta(\boldsymbol{x})
\end{equation*}
\noindent where $f^l_\theta$ is an intermediate layer $f_\theta$ accessed via an index $l \in \{1, ..., L\}$. During a forward pass through the network, it is also possible to extract the activation $\boldsymbol{z}^l$ from layer $l$ by passing the output of the previous layer $\boldsymbol{z}^{l-1}$ through $f^l_\theta$:
\begin{equation*}
    \boldsymbol{z}^l = f^l_\theta(\boldsymbol{z}^{l-1})
\end{equation*}
\noindent where $\boldsymbol{z}^0 = \boldsymbol{x}$ and $\boldsymbol{z}^L = \hat{y}$. Now, given a human-understandable concept $c$, the goal of linear probing is to train a separate model $g^l$ over the activation $\boldsymbol{z}^l$ from some layer $l$ to approximate $c$. Importantly, $g^l$ is defined as a sparse linear regression function:
\begin{equation*}
    g^l(\boldsymbol{z}^l) = \boldsymbol{w}^T \boldsymbol{z}^l + \boldsymbol{b}^l
\end{equation*}
\noindent weighted by $\boldsymbol{w}$ (and with biases $\boldsymbol{b}$). Concept-based interpretability can then be measured as a function of the accuracy of the linear predictor. Intuitively, if $g^l(\boldsymbol{z}^l)$ accurately predicts $c$ on a held-out test set of examples, we can say that layer $l$ has learned to encode and therefore carries information about the concept $c$.

The recent work of \citet{mcgrath2022acquisition} underscores the power of linear probing as a post-hoc interpretability tool. Specifically, they conduct an interpretability analysis over AlphaZero's policy network using probing techniques and show that it is possible to identify both when (and where) in it has learned to encode strategic concepts that are pertinent to strong performance in board games. In the board game setting, a single concept $c:\mathbb{R} \rightarrow R$ is a mapping of the current game state $c(\boldsymbol{x})$ to a scalar value---e.g. in chess, $c$ could be defined in terms of piece location ($c(\boldsymbol{x}) = 1$ if white has both bishops, $c(\boldsymbol{x}) = 0$ otherwise) or some higher-level evaluation (material imbalance).

\section{Fairness}
\label{sec:background_fairness}
As machine learning algorithms proliferate real-world applications (e.g. lending \cite{hardt2016equality, liu2018delayed}, hiring \cite{hu2018short}), making decisions on behalf of real people and operating over the private and often personal information of individuals, it is of the utmost importance that the researchers and practioners behind the development and commercialization of these algorithms ensure that they do not disproportionately harm already disadvantaged groups or reinforce societal biases. This has driven significant research interest into definitions and quantifications of \textbf{fairness}, and algorithmic improvements that enable machine learning systems to uphold these fairness definitions.

\subsection{Prediction-based Fairness}
The majority of fairness work to date has examined fairness in the context of supervised learning; also known as \textbf{prediction-based fairness}. Prediction-based fairness considers a population of $n$ individuals (indexed $i = 1, ..., n$), each described by variables $v_i$ (i.e. features or attributes), which are separated into sensitive variables $z_i$ and other variables $x_i$. Variables $v_i$ are used to predict (typically binary) outcomes $y_i \in Y$ by estimating the conditional probability $P[Y=1 \mid V=v_i]$ through a scoring function $\psi:\mathcal{V} \rightarrow \{0,1\}$. Outcomes in turn yield decisions by applying a decision rule $\delta(v_i) = f(\psi(v_i))$. For example, in a lending scenario, a classifier may use $v_i$ to predict whether an individual $i$ will default on ($y_i = 0$) or repay ($y_i = 1$) his/her loan, which informs the decision to deny ($d_i = 0$) or approve ($d_i = 1$) the individual's loan application \cite{mitchell2018prediction}.

This notation supports a wide range of fairness criteria\footnote{We point to \citet{mitchell2018prediction} for a comprehensive overview of individual and group fairness definitions that are not used in this work.}. For one, it enables the comparison of outcome or decision fairness at the level of individuals. Such \textbf{individual fairness} measures \cite{dwork2012fairness} posit that two individuals with similar features should receive similar decision outputs from a classifier---i.e. similarity in feature-space implies similarity in decision-space---and many task-specific measures of similarity have been proposed in the literature \cite{barocas2019fairmlbook, chouldechova2018frontiers}.

It is also possible to quantify fairness across cohorts of individuals with a particular assignment of sensitive variables. These are known as \textbf{group fairness} measures. Group-based fairness examines how well outcome ($Y$) and decision ($D$) consistency is preserved across sensitive groups ($Z$) \cite{feldman2015certifying, zafar2017fairness}. We highlight the group-based measure of demographic parity, which requires that $D \perp Z$ or, equivalently, that $P[D=1 \mid Z=z] = P[D=1 \mid Z=z']$ for all $z, z'$ where $z \neq z'$. Group-based fairness definitions are of particular relevance to this work and will be examined in multi-agent settings in \cref{sec:fairness_intro}.

\subsection{Multi-Agent Fairness}
In multi-agent settings, fairness has traditionally been examined through a socio-economic lens, where outcomes are defined in terms of utility, and fairness is defined in terms of social welfare (the distribution of utility). For example, in the seminal work of \citet{zheng2020ai}, socio-economic agents are participants in a gather-and-build game where utility is derived from raw resources collected in the environment (and goods built from those resources) and fairness is measured wealth equality that is attained through taxation and transfer (as dictated by a governing body, such as a tax authority). Many social welfare functions have been proposed to measure the equality outcomes, such as the well-known Gini index \cite{gini1921measurement, dorfman1979formula}, and can be incorporated directly into the objectives of learning agents \cite{zimmer2020learning, siddique2020learning}.

These definitions are well-suited to multi-agent problems in which measuring outcome equality is straightforward, such as resource allocation \cite{elzayn2019fair, zhang2014fairness}; or in game-theoretic settings such as social dilemmas---where an agent's local incentives conflict with some global notion of utility that is defined for all agents (e.g. tragedy of the commons) \cite{leibo2021scalable, leibo2017multi, de2005priority}.

However, drawing an analogy to studies of fairness in supervised learning settings, measuring fairness through outcomes alone may not be a complete picture, as there may exist influential, though less visible, factors (i.w. sensitive variables) upon which the fairness of outcomes or decisions relies. We posit that the influence of sensitive variables---and even how they should be defined in the first place---is far more nuanced in multi-agent settings than what has been captured in the literature thus far. One of the primary goals of this work is to highlight and address these considerations.

\section{Environments}
\label{sec:background_environments}
Multi-agent learning environments come in many forms and with a variety of incentive structures. In cooperative settings, agents' incentives are aligned---in terms of reward (or other utility-based objectives) what is good for one agent is good for all agents in the multi-agent system and this alignment provides a rich ground for studying coordination dynamics among agents. Competitive multi-agent environments are characterized by conflicting incentives, where an increase in one agent's reward corresponds to a decrease in another's, and these settings are particularly interesting for studying strategic behavior and competition among agents. Mixed-motive social dilemmas present a more complex scenario where agents' incentives are partially aligned and partially in conflict, often leading to sub-optimal outcomes if agents act purely out of self-interest without considering the collective good. These environments are suitable for studying learned social dynamics, such as: exploitation, public resource sharing, and the emergence of free-rider problems. In this work, we investigate each type of environment with the goal of understanding and shaping emergent behavior in each.

\subsection{Cooperative Environments}
\label{sec:background_environments_cooperative}
Here we describe the cooperative multi-agent environments used in this work.

\subsubsection{Pursuit-Evasion}
Pursuit-evasion is a classic setting for studying multi-agent coordination \cite{isaacs1999differential}. Though often played on a graph \cite{parsons1978pursuit}, work on continuous-space pursuit-evasion has enabled real-world applications such as unmanned aerial vehicles \cite{vidal2002probabilistic} and mobile robots \cite{chung2011search}. Further work has shown that optimal control strategies can be derived from value functions \cite{jang2005control} or even learned from scratch in MARL setting \cite{lowe2017multi}. A relevant class of pursuit-evasion games define the evader to be of equal or greater speed than the pursuers. This setting highlights the need for coordinated motion (e.g. encircling) \cite{vicsek2010closing} and communication \cite{wang2020cooperative} by the pursuers and is subject to theoretical performance bounds under these conditions \cite{ramana2017pursuit}. We use this setting to study the emergence of implicit communication as a low-level form of behavioral understanding; and as a means for identifying issues of fairness that emerge amongst cooperative agents. 

Formally, the pursuit-evasion game is played between $n$ pursuers $\{p_1, ..., p_n \}$ and a single evader $e$. The goal of the pursuers is to catch the evader as quickly as possible and, conversely, the goal of the evader is to remain uncaught. Each agent $i$ is described by its current position and heading $q_i$ and is subject to planar motion $\dot{q}_i$:
\begin{align*}
    q_i =
    \begin{bmatrix}
        x_i\\
        y_i\\
        \theta_i
    \end{bmatrix}
    &&
    \dot{q}_i =
    \begin{bmatrix}
        \dot{x}_i\\
        \dot{y}_i\\
        \dot{\theta}_i
    \end{bmatrix}
    =
    \begin{bmatrix}
        \lvert \vec{v}_i \rvert \, \textrm{cos}(\theta_i) \\
        \lvert \vec{v}_i \rvert \, \textrm{sin}(\theta_i) \\
        \textrm{atan2}(\dot{y}_i, \dot{x}_i)
    \end{bmatrix}
\end{align*}
\noindent where $\vec{v}_i$ is the agent $i$'s velocity. The environment state $s_t$ is described by the position and heading of all agents $s_t = \{q_{p_1}, ..., q_{p_n}, q_e \}$. Upon observing $s_t$, each agent selects its next heading $\theta_i$ as an action. The chosen heading is pursued at the maximum allowed speed for each agent ($\lvert \vec{v}_p\rvert$ for the pursuers, $\lvert \vec{v}_e\rvert$ for the evader); with orientation changes being instantaneous. To encourage teamwork, we set $\lvert \vec{v}_p \rvert \leq \lvert \vec{v}_e\rvert$ in our work. 

This work utilizes the pursuit-evasion environment in \cref{fig:implicit_comm_torus_env}, which is a toroidal extension of the planar pursuit-evasion game proposed by \citet{lowe2017multi}. 
\begin{figure}[t!]
    \centering
    \makebox[\linewidth][c]{\includegraphics[width=0.99\linewidth]{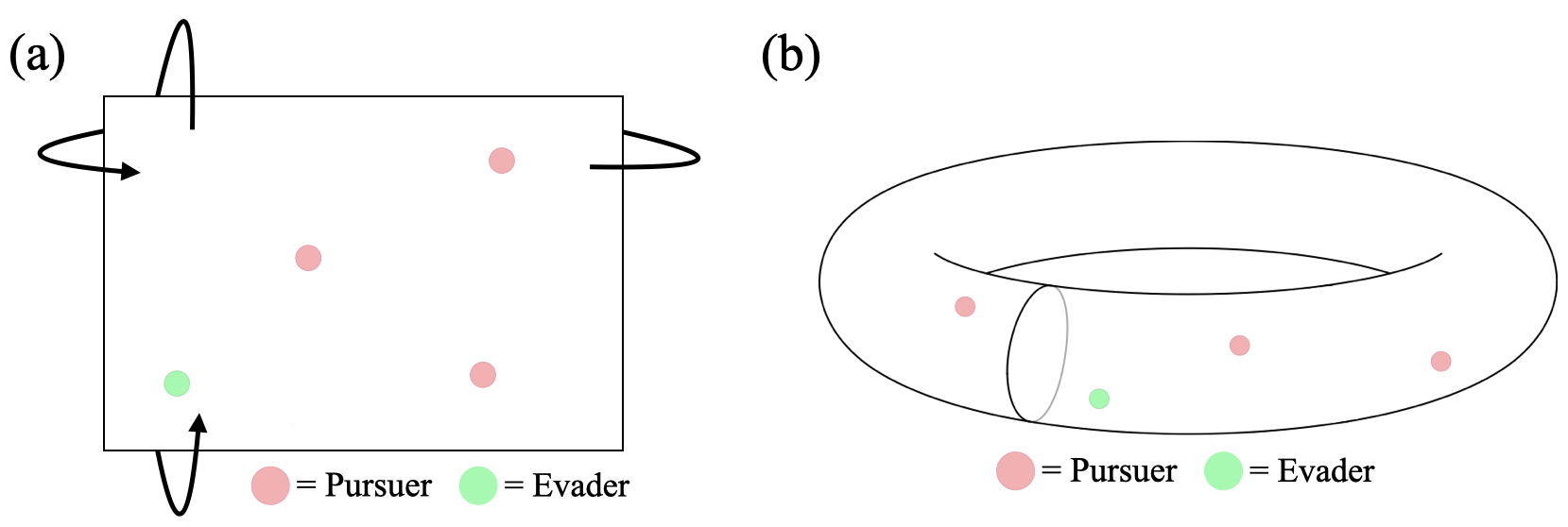}}
    \caption{A planar pursuit-evasion game with periodic boundary conditions interpreted as a toroidal pursuit-evasion environment.}
    \label{fig:implicit_comm_torus_env}
\end{figure}

In general, unbounded planar pursuit-evasion can be described by two cases:
\begin{itemize}
    \item Case 1: $\lvert \Vec{v}_p \rvert > \lvert \Vec{v}_e\rvert$. The game is solved by a straight-line chase towards the evader and is not interesting from the perspective of coordination.
    \item Case 2: $\lvert \Vec{v}_p \rvert \leq \lvert \Vec{v}_e\rvert$. The evader has a significant advantage. Pursuers have at most one opportunity to capture the evader and are usually only successful under strict initialization conditions \cite{ramana2017pursuit}.
\end{itemize}
\noindent \citet{lowe2017multi} addressed this by penalizing agents for leaving the immediate area defined by the camera with negative reward. The evader defined by \cref{eqn:implicit_comm_evader_objective}, however, will run away indefinitely in the $\lvert \Vec{v}_p \rvert \leq \lvert \Vec{v}_e\rvert$ case. To provoke consistent interaction between agents, we extend the planar environment with periodic boundary conditions. One can think of this as playing the pursuit-evasion game on a torus (see \cref{fig:implicit_comm_torus_env}).

Toroidal pursuit-evasion does not require strict initialization conditions or special rewards. Pursuers can be initialized randomly and allowed to construct ad-hoc formations. Second, pursuit is no longer a one-and-done proposition. This reflects the notion that, in nature, predators often do not give up after a single attempt at a prey---they regroup and pursue it again.

\subsubsection{Collaborative Cooking}
\label{sec:background_environments_cooperative_cooking}
\begin{figure}
    \centering
    \includegraphics[width=0.5\linewidth]{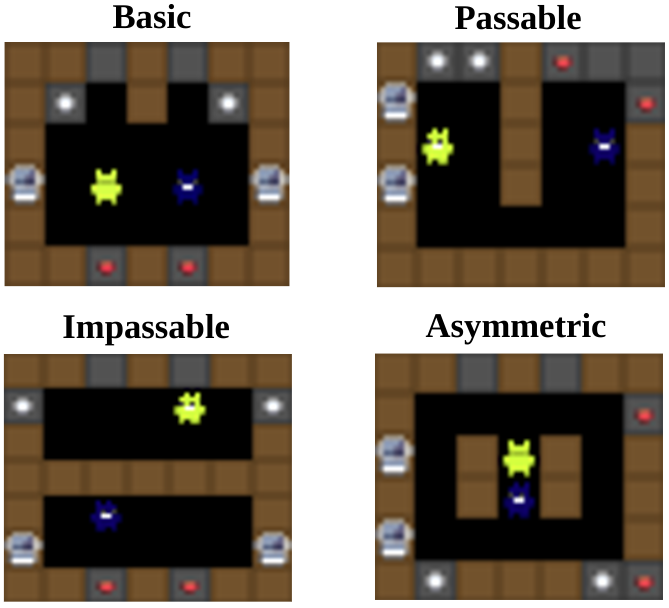}
    \caption{Collaborative Cooking Environments.}
    \label{fig:background_environments_cooking}
\end{figure}

Collaborative Cooking is a cooperative multi-agent environment from DeepMind's Melting Pot suite of environments \cite{leibo2021scalable}. Based on the game Overcooked~\citep{overcooked} and subsequent work that has developed Overcooked-like environments for studying multi-agent coordination~\citep{carroll2019utility, wu2021too}, Melting Pot's Collaborative Cooking is a game in which a group of agents inhabit a kitchen-like environment and must collaborate to find ingredients, complete recipes, and deliver finished dishes as quickly as possible. Solving the cooking task requires sophisticated coordination, involving both task partitioning---splitting a recipe into parts---and role assignment---distributing sub-tasks among agents. For these reasons, Collaborative Cooking is investigated in a number of prior works \citep{wu2021too, carroll2019utility, strouse2021collaborating} and is emerging as a strong benchmark for multi-agent learning. A visualization of the Collaborative Cooking environments that we use in our experiments is shown in \cref{fig:background_environments_cooking}.

More formally, the Collaborative Cooking game for $N$ agents is defined as follows:

\begin{itemize}
    \item \emph{States}: Each environment is a grid world. Grid cells can be filled by an agent or an environment-specific object.
    \item \emph{Observations}: Agents receive partial multi-modal observations consisting of their own position and orientation in the grid, as well as a partial RGB rendering of a $5\textrm{-cell} \times 5\textrm{-cell}$ window centered at the agent.
    \item \emph{Actions}: Agents can execute one of 8 actions: no-op, move $\{\texttt{up}, \texttt{down}, \texttt{left}, \texttt{right}\}$, turn $\{\texttt{left}, \texttt{right}\}$, and interact.
    \item \emph{Recipe}: (i) Bring a tomato to a cooking pot (3 times), (ii) Wait for soup to cook in the pot (20 time-steps), (iii) Bring a dish to the cooking pot, (iv) Pour soup from the pot into the dish, (v) Deliver soup to the delivery location. In practice, solving this task from scratch in its entirety is extremely difficult, as each of the aforementioned steps requires agents to execute a series of movement and interaction actions in sequence.
    \item \emph{States}: Grid cells can be filled by an agent or any of the following items: Floor, Counter, Cooking Pot, Dish, Tomato.
    \item \emph{Reward}: By default, agents share a positive reward for completing the entire recipe outlined above. In practice, however, solving the cooking task with this sparse reward alone is infeasible (completing each of the recipe steps through random exploration is prohibitively challenging) and successful approaches in prior works either pair learning agents with helpful bot agents or introduce ``densified" pseudorewards to augment the agents' learning signal~\citep{leibo2017multi}. We implement the latter, giving agents a small positive reward for completing steps (i) and (iii) of the recipe. More concretely, we define the following three-part reward:
    \begin{equation*}
        r_t =
        \begin{cases}
            20, & \textrm{if soup cooked and delivered}. \\
            1, & \textrm{if tomato placed in cooking pot.} \\
            1, & \textrm{if soup poured into dish.} \\
            0, & \textrm{otherwise.}
        \end{cases}
    \end{equation*}
    \item \emph{Concepts}: We assume that each environment supports the following concepts (and concept types): (i) agent position (scalar); (ii) agent orientation (scalar); (iii) whether or not an agent has a tomato, dish or soup (binary); (iv) cooking pot position (scalar) (v) the progress of the cooking pot (scalar); (vi) the number of tomatoes in the cooking pot (categorical); and (vii) the position of each tomato and dish (scalar).
\end{itemize}

\subsection{Social Dilemmas}
Here we describe the mixed-motive multi-agent environments used in this work.

\subsubsection{Clean Up}
\begin{figure}
    \centering
    \includegraphics[width=0.5\linewidth]{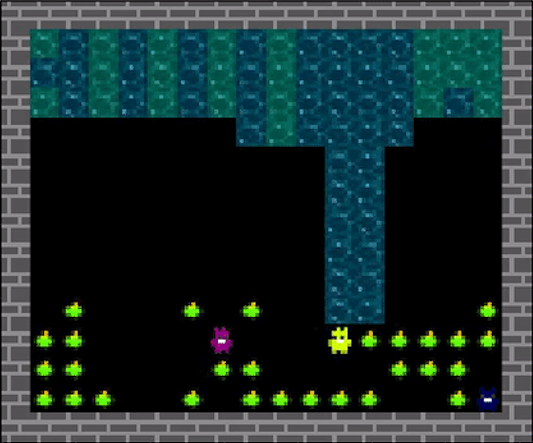}
    \caption{Clean Up Environment.}
    \label{fig:background_environments_cleaning}
\end{figure}
The Clean Up game, also from Melting Pot \cite{leibo2021scalable}, is shown in \cref{fig:background_environments_cleaning}. Clean Up is a classic social dilemma in which agents are forced to balance selfish behaviors with public goods. Each agent is rewarded proportionally to the number of apples it harvests. However, the rate at which apples respawn after they are eaten is directly related to the amount of pollution that exists in the river. Over time, pollution builds up in the river if it remains uncleaned until eventually, apples are prevented from growing altogether. Agents must learn to periodically clean the river to ensure that apples continuously grown, even though they are not directly incentivized to do so by default. Therefore, it is possible for agents to develop both emergent "free-loading" behaviors---harvesting apples without cleaning the river---and emergent public service behaviors---cleaning the river while trying to harvest as many of the remaining apples after cleaning. This makes Clean Up a fitting environment with which to study the extent to which inter-agent social dynamics can be revealed.

The state space, observation space, and action space for this environment are the same as outlined in the Collaborative Cooking environments in \cref{sec:background_environments_cooperative_cooking}, except for the following environment-specific details:
\begin{itemize}
    \item \emph{States}: Grid cells can be filled by an agent or any of the following items: Floor, Wall, Apple, Clean River, Polluted River.
    \item \emph{Reward}: Agents receive an individual positive reward for collecting apples. By default, agents are not rewarded for cleaning the river. We found that this greatly increased the amount of training time required for agents to learn to clean the river (because they have no incentive to do so). Because we are motivated by understanding social behaviors once they have emerged, we augmented each agent's reward with a small positive reward for cleaning the river. Specifically, we define the following reward for the cleaning task:
    \begin{equation*}
        r_t =
        \begin{cases}
            1, & \textrm{for eating an apple}. \\
            0.01, & \textrm{for cleaning pollution.} \\
            0, & \textrm{otherwise.}
        \end{cases}
    \end{equation*}
\end{itemize}

\subsection{Competitive Environments}
Here we describe the competitive multi-agent environments used in this work.

\subsubsection{Capture the Flag}
\label{sec:background_environments_competitive_ctf}
\begin{figure}
    \centering
    \includegraphics[width=.5\linewidth]{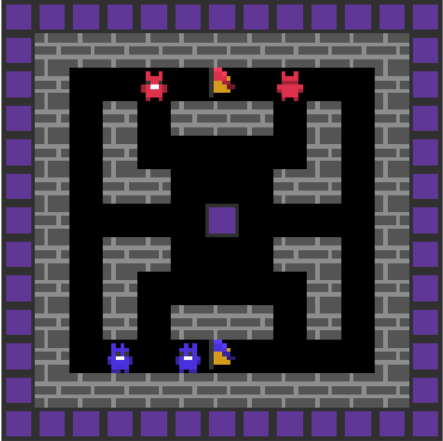}
    \caption{Capture the Flag Environment.}
    \label{fig:background_environments_ctf}
\end{figure}
The Capture the Flag game is shown in \cref{fig:background_environments_ctf}. Capture the Flag is a well-known competitive environment in the multi-agent literature~\citep{jaderberg2019human}. The game is played as follows: two multi-agent teams (red, blue) head-to-head in an arena-style environment. The red team is spawned at the top of the environment, and the blue team is spawned at the bottom. Both team have at their ``home base" a flag of their team's color. The goal of each team is to capture the opposing team's flag as many times as possible within the game's time horizon; while also preventing the opposing team from capturing the team's home flag. The primary defense mechanism for an agent is a ``zap" action that reduces an opponent agent's health if it is in the zap radius, and forces the agent to drop the flag (if it is holding one). In addition to zapping other agents, agents can choose a ``paint" action that colors the floor tiles with the painting agent's home color. Agents of the opposite color cannot move over painted tiles until they paint it their own color, so the painting mechanism can be used strategically to slow down the movement of opponents through certain corridors. There is also a set of purple tiles in the environment (one in the center, many surrounding the arena) that serves as an indicator of whether or not a flag has been taken from its home base. This tile serves as an observation for each agent, informing that agent of the current state of both team's flags (purple = no flags taken, blue = blue flag taken, red = red flag taken, gray = both flags taken).

Sophisticated Capture the Flag agents can learn a number of complex strategies, such as attacking an opponent's flag, battling opponent agents for territory, and base camping to protect the home flag. Agents are forced to learn these competitive behaviors from scratch and refine those behaviors during training. From the perspective of our work, therefore, Capture the Flag is a strong case study of the emergence of strategic behaviors over time. 

The state space, observation space, and action space are the same as outlined in the Collaborative Cooking environments in \cref{sec:background_environments_cooperative_cooking}. Some environment-specific details are outlined below:
\begin{itemize}
    \item \emph{States}: Grid cells can be filled by an agent or any of the following items: Wall, Red Flag, Blue Flag, Red Paint, Blue Paint, Flag Indicator Tile.
    \item \emph{Reward}: Agents receive a shared team reward for capturing the opposing team's flag (by running it back to base). Agents are also rewarded individually for picking up a flag from the opposing team's base, returning a previously stolen flag to base, and zapping an opponent flag carrier. Agents are therefore most strongly motivated to act offensively, but must learn to balance attacks with defensive strategy. In sum, we define the following reward for the task:
    \begin{equation*}
        r_t =
        \begin{cases}
            1, & \textrm{for capturing the opponent's flag.}. \\
            0.01, & \textrm{for picking up the opponent's flag.} \\
            0.01, & \textrm{for returning a flag to base.} \\
            0.01, & \textrm{for zapping an opponent flag carrier.} \\
            0, & \textrm{otherwise.}
        \end{cases}
    \end{equation*} 
\end{itemize}

\subsubsection{Board Games}
\label{sec:background_environments_board_games}
\begin{figure}
    \centering
    \includegraphics[width=.95\linewidth]{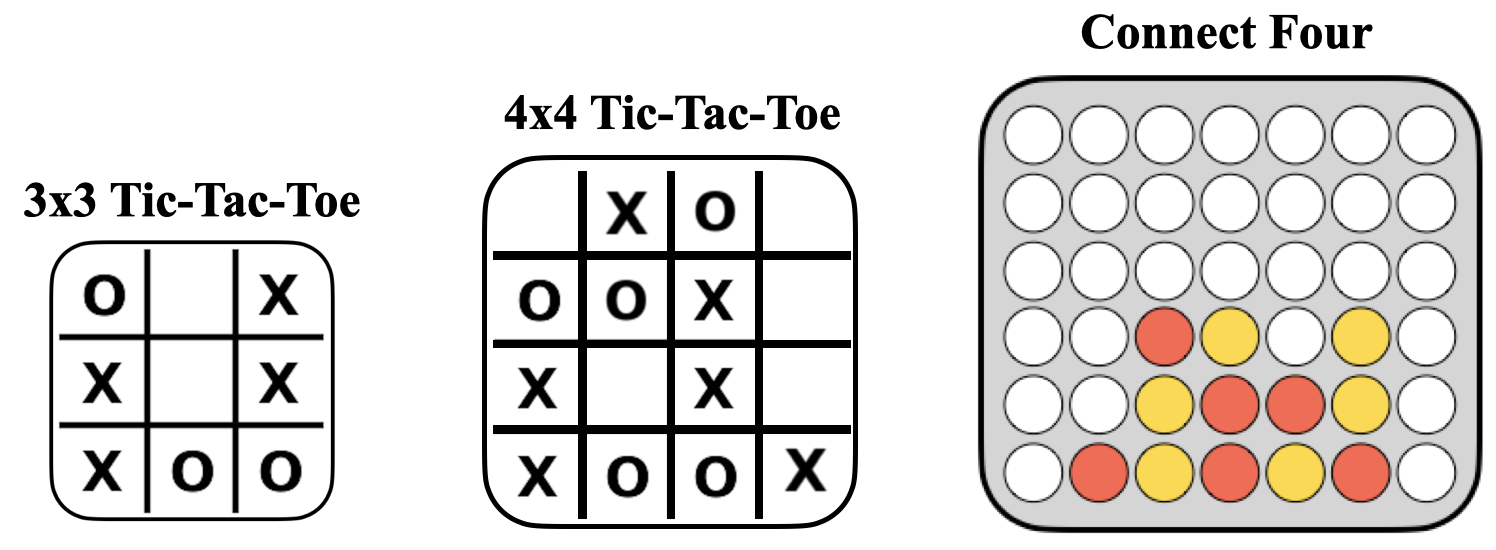}
    \caption{Board games studied in this thesis.}
    \label{fig:background_environments_board_games}
\end{figure}
Historically, board games have proven to be a fertile ground for testing and advancing artificial intelligence (AI) systems. The deterministic rules and clear win-loss conditions of these games offer a structured environment where AI can learn, experiment, and improve over time. The first notable success came when IBM's DeepBlue defeated the reigning world chess champion, Garry Kasparov, in 1997 \cite{campbell2002deep}. This was a landmark achievement, demonstrating the potential of AI in complex decision-making tasks. Decades later, Google's AlphaGo mastered the ancient Chinese board game of Go, a game with a complexity far exceeding that of chess \cite{silver2017alphago}. These victories underscored the power of AI and its capabilities in strategic thinking and decision-making.

Moreover, board games are fundamentally multi-agent environments---multiple players interact and make decisions that affect the game's outcome. This makes them an excellent testbed for studying multi-agent systems, which are critical in understanding how AI can function in environments where multiple, possibly competing, agents coexist.  In this work, three primary board games are studied experimentally (see \cref{fig:background_environments_board_games}): tic-tac-toe, a variant of tic-tac-toe on a 4x4 grid, and Connect Four.

Tic-tac-toe is a two-player game played on a 3x3 grid. Players take turns placing their mark (either an 'X' or an 'O') in an empty square. The objective is to place three of one's own marks in a horizontal, vertical, or diagonal row before the opponent can do the same. The game ends when either player achieves this goal or when all squares are filled, resulting in a draw. The 4x4 variant follows the same rules on a larger board.

Connect Four is a two-player game played on a 6x7 grid. Players take turns dropping a colored disc into one of the vertical columns, with the objective of forming a horizontal, vertical, or diagonal line of four of one's own discs. The game ends when either player achieves this goal or when all positions are filled, which results in a draw.

These games, with their simple rules but deep strategic possibilities, are an excellent platform for studying and understanding the dynamics of AI in multi-agent environments.

\part{Interpreting Emergent Multi-Agent Behavior}
\label{part:interpretabiltiy}
In this first part of the thesis, we focus on \textbf{interpreting} emergent multi-agent behavior. Drawing inspiration from the supervised learning literature, we will seek to answer both \textbf{high-level} strategic questions, such as:
\begin{itemize}
    \item \textit{In cooperative environments, do agents learn to coordinate or opt for less structured independent action?}
    \item \textit{In mixed-motive environments, do agents act selfishly or in a manner that improves social welfare?}
\end{itemize}
and low-level behavioral questions, such as:
\begin{itemize}
    \item \textit{To what extent do agents exchange information implicitly through their respective action spaces?}
    \item \textit{Can we measure the influence of one agent's behavioral cues on its teammates?}
\end{itemize}

In \cref{chapter:implicit_comm}, we address the low-level through a study of emergent implicit communication. We highlight the importance of recognizing communication as a spectrum and show that multi-agent teams, not unlike humans or animals, learn to rely upon implicit signals from their teammates when coordinating.

In \cref{chapter:interpretability}, we target high-level, concept-based explanations of emergent multi-agent behavior. We introduce an intrinsically interpretable policy architecture for MARL agents and show experimentally that it can reliably detect emergent coordination, expose sub-optimal behaviors (e.g., lazy agents), identify complex inter-agent social dynamics, and strategic behaviors as they emerge throughout training.

\chapter{Emergent Implicit Signaling}
\label{chapter:implicit_comm}

\section{Introduction}
Communication is a critical scaffolding for coordination. It enables humans and animals alike to coordinate on complex tasks, synchronize plans, allocate team resources, and share missing state. Understanding the process through which communication emerges has long been a goal of philosophy, linguistics, cognitive science, and AI. Recently, advances in multi-agent reinforcement learning (MARL) have propelled computational studies of emergent communication that examine the representations and social conditions necessary for communication to emerge in situated multi-agent populations \cite{lazaridou2020emergent}. Existing approaches, targeting language-like communication, have shown that it is possible for agents to learn protocols that exhibit language-like properties such as compositionality \cite{chaabouni2020compositionality,resnick2019capacity} and Zipf's Law \cite{chaabouni2019anti} when given additional learning biases \cite{eccles2019biases}. 

\begin{figure}
  \centering
  \includegraphics[width=.9\linewidth]{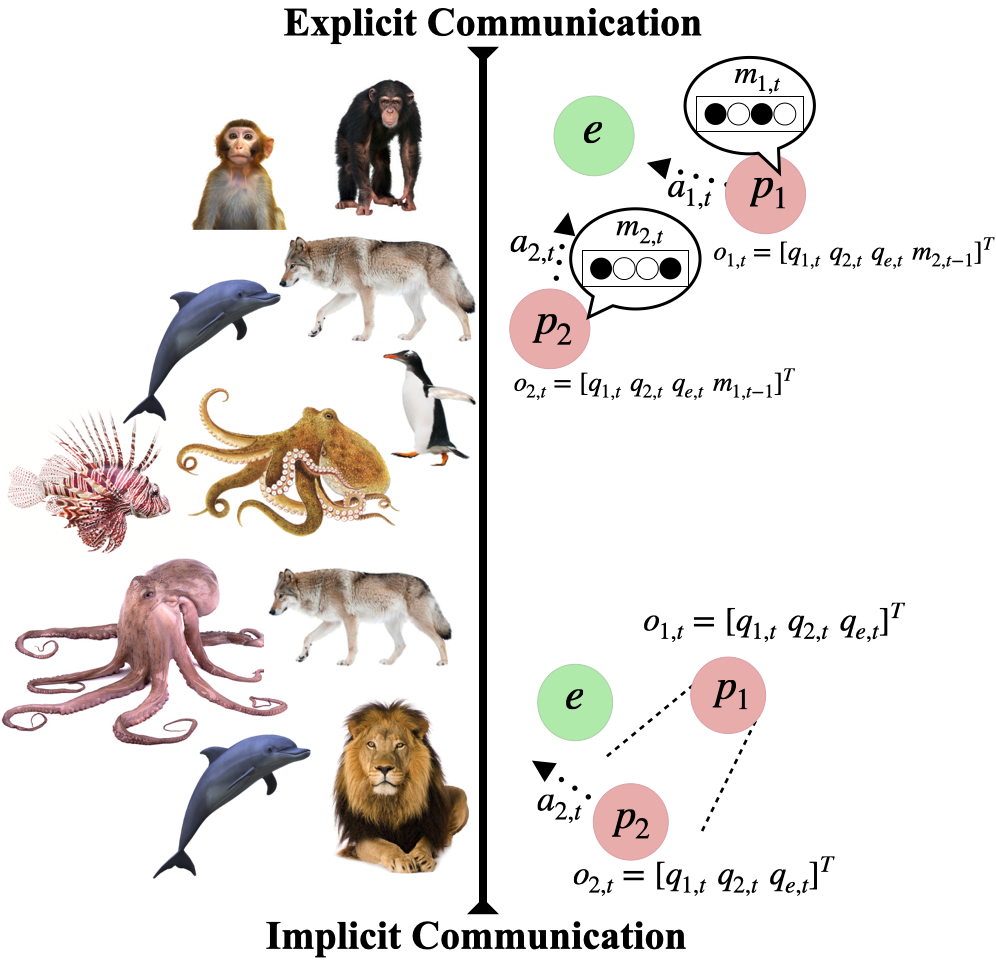}
  \caption{Left: Animal communication spans a complete spectrum from explicit to implicit communication. Right: A pursuit-evasion game with two pursuers $p_1$ and $p_2$ and an evader $e$ (with positions $q_{1,t}$, $q_{2,t}$, and $q_{e,t}$ at time $t$). The emergent communication literature has typically considered only language-like communication (top), where agents share information over specialized sender-receiver architectures---e.g. $p_1$ receives the extended observation $o_{1,t} = [q_{1,t} \ q_{2,t} \ q_{e,t} \ m_{2, t-1}]^T$ and selects actions using the extra information $m_{2, t-1}$ that $p_2$ chose to share at the previous time-step (and vice versa for $p_2$). With implicit signals (bottom), $p_1$ selects a movement action $a_{1,t}$ based only on the observation $o_{1,t} = [q_{1,t} \ q_{2,t} \ q_{e,t}]^T$. The only information $p_1$ gets from $p_2$ is generated by $p_2$'s physical position (and vice versa for $p_2$). We posit that implicit signals are an important step towards learning richer communication.}
  \label{fig:implicit_comm_spectrum}
\end{figure}

Though emergent communication is fundamentally an \textit{ab initio} approach compared to top-down language learning \cite{vaswani2017attention}, most recent methods have targeted protocols with complex structure and representational capacity, like that of human language \cite{lazaridou2020emergent, lowe2019pitfalls}. Such techniques equip agents with specialized sender-receiver architectures and allow them to exchange bits of information (or continuous signals) through these channels. One notable exception is the work of \citet{jaques2019social}, which considers how influential agents can pass information to other agents through their actions in the environment. We build on these ideas, recognizing the full spectrum of communication that exists in nature and proposing the study of lower-level forms of communication that do not require specialized architectures as a first step towards more sophisticated communication (see \cref{fig:implicit_comm_spectrum}).

Multi-agent cooperation in nature yields a wide range of communication protocols that vary in structure and complexity. In animal communication \cite{bradbury1998principles}, for example, reef-dwelling fish use body shakes \cite{vail2013referential, bshary2006interspecific} and octopuses punch collaborators \cite{sampaio2020octopuses}---forms of non-verbal communication---whereas chimps \cite{boesch1989hunting}, macaques \cite{mason1962communication}, and gentoo penguins \cite{choi2017group} each maintain a diverse vocal repertoire. Importantly, animals also exchange information through non-explicit channels. In the Serengeti, for example, lions use the posture of fellow pride members to stalk prey covertly \cite{schaller2009serengeti}. Moreover, wolves \cite{peterson2003wolf} and dolphins \cite{quick2012bottlenose}, both frequent vocal communicators, communicate implicitly during foraging---adjusting their group formation based on the position and orientation of other pack members \cite{herbert2016understanding}. \citet{breazeal2005effects} define such forms of communication, called \textit{implicit communication}, as "conveying information that is inherent in behavior but which is not deliberately communicated". Studies have shown that implicit cues are used frequently by humans as well. Human teams rely on gaze, facial expressions, and non-symbolic movement during cooperative tasks \cite{entin1999adaptive} including collaborative design \cite{wuertz2018design}, crowdsourcing \cite{zhou2017collaboration}, and multiplayer games (e.g. Hanabi \cite{liang2019implicit, bard2020hanabi}).

The role of implicit communication in teamwork has been studied extensively in the human-computer \cite{liang2019implicit, schmidt2000implicit} and human-robot interaction \cite{che2020efficient, breazeal2005effects, knepper2017implicit, butchibabu2016implicit} communities, and even in multi-agent systems \cite{pagello1999cooperative, gildert2018need}; but is less well-understood in the context of emergent agent-agent interactions \cite{de2010evolution}. This is due in large part to the anthropomorphized nature of standard definitions of implicit communication. Note that the key difference between explicit communication (a deliberate information exchange) and implicit communication (a non-deliberate exchange) in the definition above is \textit{intent} \cite{breazeal2005effects, gildert2018need}. Understanding implicit communication in the context of artificial agents requires (i) ascribing human-like mental states to artificial agents and (ii) giving agents the ability to reason about each other's mental states with respect to a common goal (\textit{\`a la} \citet{cohen1991teamwork}); which is a highly non-trivial undertaking.

In this work, we take a first step in this direction by examining the extent to which implicit communication aides coordination in multi-agent populations. Following the work of \citet{de2010evolution} in evolutionary settings, we study \textit{implicit signals}: signals generated by the physical position of an agent as it interacts with its environment. Implicit signals are observed passively by an agent's teammates, and so require neither special sender-receiver architectures nor reasoning about an agent's mental states. For RL agents, we distinguish implicit signals from explicit communication through each agent's actions and observations. As a simple example, consider the environment in \cref{fig:implicit_comm_spectrum} where, at time $t$, agents $p_1$, $p_2$, and $e$ are described by their positions $q_{1,t}$, $q_{2,t}$, and $q_{e,t}$ respectively. With implicit signals, $p_1$ selects a movement action $a_{1,t}$ based on the observation $o_{1,t} = [q_{1,t} \ q_{2,t} \ q_{e,t}]^T$---i.e. the only information $p_1$ gets from $p_2$ is the information contained in $p_2$'s physical position; and vice versa. With explicit communication, $p_1$ emits a communicative action $m_{1,t}$ in addition to $a_{1,t}$ (and same for $p_2$). In this case, $p_1$ receives the extended observation $o_{1,t} = [q_{1,t} \ q_{2,t} \ q_{e,t} \ m_{2, t-1}]^T$ and can select actions using the extra information $m_{2, t-1}$ that $p_2$ chose to share at the previous time-step.

Experimentally, we study coordination and the role of implicit communication with pursuit-evasion games \cite{isaacs1999differential}. Pursuit-evasion simulates an important coordination task, foraging, where communication is especially impactful. To accurately model the conditions under which coordination and communication emerges in the animal kingdom, we prioritize decentralized multi-agent learning. Despite the challenges posed by decentralized approaches \cite{foerster2016learning}, this feature is critical in the emergence of animal communications and so we focus on this important class of multi-agent learning frameworks to exploit inspiration from the natural world. Moreover, to encourage teamwork, we target pursuit-evasion games in which the pursuers are slower than the prey (coordination is not needed with the pursuers have a velocity advantage). Similar to \citet{lowe2017multi}, we study pursuit-evasion as a MARL problem. 

We show that, under these conditions, naive decentralized learning fails to solve the pursuit-evasion task. There are a number of challenges that lead to this outcome. First, decentralized learning is non-stationary \cite{lowe2017multi}---with multiple agents learning simultaneously, the learning problem is less stable for each agent individually. Next, since we prioritize tasks that require teamwork (e.g. pursuer speed less than evader speed), learning from scratch is difficult, as agents are forced to learn how to interact with the environment individually while simultaneously learning to coordinate with their teammates. Finally, pursuit-evasion is a sparse reward task---pursuers receive reward only when capturing the evader. During the early stages of training, when each agent is following a randomly initialized policy, it is virtually impossible for the agents to coordinate effectively enough to experience positive reward.

To address these challenges, we introduce a curriculum-driven approach for solving difficult, sparse reward coordination tasks. Our strategy, combining ideas from the ecological reinforcement learning \cite{co2020ecological} and automatic curriculum learning \cite{portelas2020automatic} literature, is motivated by the following insights: (i) we can create a \textit{curriculum over task difficulty by shaping the environment} to the current skill level of the cooperative agents; (ii) we can aide agents in learning multi-agent team behaviors by forcing them to \textit{learn single-agent behaviors first}. More specifically, we first adopt an environment shaping curriculum using velocity bounds, which allows pursuers to gather initial experience at a velocity greater than that of the evader, and then tune their strategies over time as velocity decreases. Then, we introduce a curriculum over agent behavior that warm-starts cooperative multi-agent learning by first seeding each agent with experience of successful single-agent behaviors.

Our first result confirms the importance of curriculum-driven multi-agent learning. We compare the performance of our strategy to ablations over the multi-agent curricula individually and show that, using our strategy, decentralized agents learn to solve difficult coordination tasks, such as the pursuit-evasion game.

Next, we stress-test both the strength of coordination and capacity for implicit signaling of policies learned with our method. We compare the performance of multi-agent coordination learned with our method against a set of analytical and learned strategies that represent ablations over both coordination and implicit signaling---i.e. outperforming one of these methods is equivalent to surpassing that method's level of sophistication in coordination, capacity for implicit signaling, or both. 

Empirical results show that our method significantly outperforms highly sophisticated multi-agent strategies. Coordination strategies learned with our method successfully complete the pursuit-evasion task at a \textbf{speed ratio of 0.5} (i.e. the pursuers are moving at half the evader's speed), whereas each of the baseline methods fails to complete the pursuit-evasion task when pursuer speed is only slightly below evader speed (\textbf{speed ratio of 0.8)}.

Further, we hone in on the nature of each strategy's coordination more directly by examining agent behavior during the most important subsets of the trajectory---those that immediately precede the accumulation of reward. In the context of pursuit-evasion, we examine the distribution of pursuer locations relative the evader at the time of capture. By comparing rotational symmetry and rotational invariance in the agents' capture distributions, we find that our strategy learns structured coordination while simultaneously allowing pursuers to make dynamic adjustments to their position relative the evader to successfully achieve capture.

Finally, we examine the role of implicit signals (as introduced by \citet{de2010evolution}) more directly using measures of social influence. Influence measures such as Instantaneous Coordination \cite{jaques2019social} quantify the extent to which one agent's behavior influences its teammates. We repurpose this method to measure the exchange of implicit signals as \textit{position-based social influence}. Results of this study show that pursuers trained with our strategy exchange \textbf{up to 0.375 bits of information per time-step} on average, compared to a maximum of \textbf{only 0.15 bits on average} from the baseline methods. This indicates that our method learns, and relies heavily on, the exchange of implicit signals.

\noindent \textbf{Preview of contributions}
Our work is summarized by the following contributions:
\begin{enumerate}
    \item We highlight the importance of studying implicit signals as a form of emergent communication.
    \item We introduce a curriculum-driven learning strategy that enables cooperative agents to solve difficult coordination tasks with sparse reward.
    \item We show that, using our strategy, pursuers learn to coordinate to capture a superior evader, significantly outperforming sophisticated analytical and learned methods. Coordination strategies learned with our method complete the pursuit-evasion task at a speed ratio of $0.5$ (i.e. half the evader's speed), whereas each of the competing methods fails to complete the pursuit-evasion task at any speed ratio below $0.8$.
    \item We examine the use of implicit signals in emergent multi-agent coordination through position-based social influence. We find that pursuers trained with our strategy exchange up to 0.375 bits of information per time-step on average, compared to a maximum of only 0.15 bits on average from competing methods, indicating that our method has learned, and relies heavily on, the exchange of implicit signals.
\end{enumerate}

\section{Related Work}
\paragraph{Emergent Communication}
Emergent communication examines the process by which cooperative agents learn communication protocols as a means to completing difficult multi-agent tasks \cite{lazaridou2020emergent}. Recent work has shown that MARL agents converge upon useful communication in referential games \cite{havrylov2017emergence} and can even develop language-like properties such as compositionality \cite{chaabouni2020compositionality,resnick2019capacity} and Zipf's Law \cite{chaabouni2019anti} when exposed to additional structural learning biases. More recently, this class of techniques has expanded to include complex situated environments \cite{das2019tarmac}, high-dimensional observations \cite{cowen2020emergent}, and the negotiation of belief states \cite{foerster2018bayesian}. Further work has shown that influential communication can be incentivized through additional learning objectives (i.e. inductive biases) \cite{eccles2019biases} and reward shaping \cite{jaques2019social}.

Implicit interactions are less well-studied in the emergent communication literature, despite their importance to teamwork, as outlined in the human-computer \cite{liang2019implicit, schmidt2000implicit} and human-robot interaction \cite{che2020efficient, breazeal2005effects, knepper2017implicit, butchibabu2016implicit} communities. Implicit communication has been studied for cooperative multi-agent tasks \cite{gildert2018need, grover2010implicit, pagello1999cooperative}, though not in the context of emergent behavior. Though some studies have shown that agents can learn to communicate non-verbally through actions and gestures \cite{mordatch2017emergence}, such forms of action-space communication \cite{baker2019emergent} are examples of non-verbal explicit communication, whereas our goal is to study completely implicit signals. Most similar to our work is that of \citet{de2010evolution}, which examines both implicit and explicit communication in evolutionary settings. Our work can be interpreted as bridging these ideas with the MARL literature.

\paragraph{Multi-Agent Reinforcement Learning}
Multi-agent reinforcement learning (MARL) encompasses a large body of literature extending RL techniques to multi-agent settings. In general, MARL algorithms subscribe to either decentralized or centralized learning. With decentralized learning (or independent learning), each agent is responsible for updating its own policy (or Q-network) individually \cite{tan1993multi}. Though recent work has shown that decentralized learning is feasible in complex environments \cite{de2020independent}, it is often challenging, due to the non-stationary (and therefore unstable) nature of the learning problem \cite{laurent2011world, matignon2012independent}. Centralized techniques stabilize multi-agent learning by allowing agents to share a joint Q-network during training, then act independently at test-time \cite{lowe2017multi, foerster2018counterfactual}. Centralization has also proven useful for emergent communication, as agents can share gradients directly through their communication channels \cite{foerster2016learning}. Despite these benefits, we prioritize decentralized learning because it more accurately represents the learning problem that humans and animals face in the real-world.

Recently, significant effort has been aimed at connecting ideas from the curriculum learning literature \cite{bengio2009curriculum} to RL \cite{portelas2020automatic}. Such methods have derived curricula from virtually every aspect of the RL problem; including reward shaping \cite{bellemare2016unifying, pathak2017curiosity}, modifying initial state distributions \cite{florensa2017reverse}, and procedurally-generating sub-tasks \cite{portelas2020teacher, risi2019procedural}. Some multi-agent curricula have been shown to lead to more generally-capable RL agents \cite{team2021open}. Further work has examined the environment's role in generating curricula, leading to new methods such as environment shaping \cite{co2020ecological} and unsupervised environment design \cite{dennis2020emergent}. Of particular relevance is the work of \citet{co2020ecological}, which considers curricula that manipulate the dynamics of the RL environment to the benefit of learning agents. Our work combines a velocity-based variant of environment shaping with an additional curriculum for bootstrapping multi-agent learning with single-agent experience, which itself is inspired by the strategy employed by \citet{yang2018cm3} for multi-goal learning.

\section{Preliminaries}
\label{sec:implicit_comm_prelims}

\subsection{Potential Field Navigation}
Given an agent $i$ with position $q_i$, we can define a potential function $U(q_i, q_{\textrm{goal}})$ between $q_i$ and a target point $q_{\textrm{goal}}$ such that the negative gradient $F(q_i, q_{\textrm{goal}}) = -\nabla U(q_i, q_{\textrm{goal}})$ specifies a control law for $i$'s motion. For example, let $U_{\textrm{att}}(q_i, q_{\textrm{goal}})$ be a quadratic function of distance between $q_i$ and a target point $q_{\textrm{goal}}$:
\begin{equation}
    \label{eqn:implict_comm_attractive_potential}
    U_{\textrm{att}}(q_i, q_{\textrm{goal}}) = \frac{1}{2}k_{\textrm{att}} \, d(q_i, q_{\textrm{goal}})^2
\end{equation}
where $k_{\textrm{att}}$ is an attraction coefficient and $d(,)$ is a measure of distance. The resulting force exerted on agent $i$ is:
\begin{equation}
    F_{\textrm{att}} = -\nabla U_{\textrm{att}}(q_i, q_{\textrm{goal}}) = -k_{\textrm{att}}(q_i - q_{\textrm{goal}})
\end{equation}
\noindent In this work, the agent's action-space is defined in terms of headings, so only the \textit{direction} of this force impacts our agents.

\subsection{Pursuit-evasion}
Experimentally, we use the pursuit-evasion environment outlined in \cref{sec:background_environments_cooperative}. Here we provide additional details that are specific to this work. We assume the evader to be part of the environment, as defined by the potential function:
\begin{equation}
    \label{eqn:implicit_comm_evader_objective}
    U_\textrm{evade}(\theta_e) = \sum_i \bigg(\frac{1}{r_i}\bigg) \cos(\theta_e - \tilde{\theta}_i)
\end{equation}
where $r_i$ and $\tilde{\theta}_i$ are the L2-distance and relative angle between the evader and the $i$-th pursuer, respectively, and $\theta_e$ is the heading of the evader. This objective is inspired by theoretical analysis of escape strategies in the pursuit-evasion literature \cite{ramana2017pursuit}. Intuitively, \cref{eqn:implicit_comm_evader_objective} pushes the evader away from pursuers, taking the largest bisector between any two when possible. The goal of the pursuers---to capture the evader as quickly as possible---is mirrored in the reward function, where $r(s_t, a_t) = 50.0$ if the evader is captured and $r(s_t, a_t) = -0.1$ otherwise. A derivation of \cref{eqn:implicit_comm_evader_objective} and additional environmental details are provided in \cref{apdx:implicit_comm_apdx_details}.

\subsection{Implicit Communication}
For clarity, we restate the definitions of implicit vs. explicit communication from \citet{breazeal2005effects} and implicit signals from \citet{de2010evolution}, which we adopt throughout this work.

\begin{definition}[Explicit Communication]
    \label{def:implicit_comm_explicit}
    Communication that is ``deliberate where the sender has the goal of sharing specific information with the collocutor" \cite{breazeal2005effects}.
\end{definition}
\begin{definition}[Implicit Communication]
    \label{def:implicit_comm_implicit_comm}
    Communication that conveys ``information that [is] inherent in behavior but which is not deliberately communicated" \cite{breazeal2005effects}.
\end{definition}
\begin{definition}[Implicit Signal]
    \label{def:implicit_comm_implicit_signal}
    The ``signal that is generated by the actual physical position of the [agents] and that is detected by the other [agents]" \cite{de2010evolution}.
\end{definition}

In the context of RL agents, an implicit signal refers to positional information observed by an agent's teammates as part of the environmental state $s_t$ or observation $o_t$ space, whereas explicit communication involves sending/receiving messages over a dedicated communication channel. We note that \cref{def:implicit_comm_explicit} and \cref{def:implicit_comm_implicit_comm} rely on deliberation (or intention), whereas \cref{def:implicit_comm_implicit_signal} does not. For the purposes of our study---exploring the first-step in a bottom-up approach to emergent communication---we therefore focus on implicit signals as a form of emergent implicit communication.

\subsection{Instantaneous Coordination}
Instantaneous Coordination (IC) is a measure of social influence between agents \cite{jaques2019social}. IC is defined for two agents $i$ and $j$ as the mutual information $\textrm{I}(a_i^t; a_j^{t+1})$ between $i$'s action at time $t$ and $j$'s action at the next time-step. Formally, assuming agent $i$'s actions are drawn from the random variable $A_i$ with marginal distribution $P_{A_i}$ (and similarly for agent $j$), we can rewrite IC using the standard definition of mutual information as the Kullback-Leibler divergence between the joint distribution and the product of the marginals:
\begin{align*}
    I(A_i; A_j) &= D_{\textrm{KL}}(P_{A_i A_j} || P_{A_i} \times P_{A_j}) \\
    &= \sum_{\substack{a_i \in \mathcal{A}_i,\\ a_j \in \mathcal{A}_j}} P_{A_i A_j}(a_i, a_j) \log \bigg( \frac{P_{A_i A_j}(a_i, a_j)}{P_{A_i}(a_i) \times P_{A_j}(a_j)} \bigg)
\end{align*}
where $\mathcal{A}_i$ and $\mathcal{A}_j$ are the spaces over $A_i$ and $A_j$, respectively. Intuitively, high IC is indicative of influential behavior, while low IC indicates that agents are acting independently. We highlight that, in the absence of explicit communicative actions, IC is a measure of implicit signals (as defined in the previous section). This is because agent $i$'s action at time $t$ inherently dictates $i$'s position $q_i$ at time $t+1$, which is observed by $i$'s teammates. IC in this context is therefore a measure of position-based social influence. Since this work considers deterministic policies and uses IC at test-time, IC is computed over Monte-Carlo estimates of the relevant distributions.

\section{Curriculum-Driven DDPG}
\label{sec:implicit_comm_cd_ddpg}
Our goal is to learn multi-agent coordination with decentralized training and, in doing so, explore the role of implicit signaling in teamwork. However, there are a number of issues with decentralized learning in difficult, sparse reward environments, such as our pursuit-evasion game. First, though setting $\lvert \Vec{v}_p \rvert \leq \lvert \Vec{v}_e\rvert$ is important for studying teamwork, it places the pursuers at a severe deficit. In the early stages of training---since each pursuer's action selection is determined by the randomly initialized weights of its policy network---the chance of slower pursuers capturing the evader defined in \cref{eqn:implicit_comm_evader_objective} is extremely low. The pursuers are unlikely to obtain a positive reward signal, which is vital to improving their policies. This issue is exacerbated by non-stationarity. In the case of decentralized DDPG, multiple agents learning in the same environment causes the value of state-action pairs for any one agent (as judged by its Q-function) to change as a result of policy updates of \textit{other} agents. This non-stationarity leads to higher-variance gradient estimates and unstable learning. Though recent advances in ``centralized training, decentralized execution" help in such cases \cite{lowe2017multi}, they violate our goal of decentralized learning.

To address these challenges, we introduce a \textit{curriculum-driven method} for decentralized multi-agent learning in sparse reward environments. Our strategy is motivated by the following principles: (i) we can create a \textit{curriculum over task difficulty by shaping the environment} to the current skill level of the cooperative agents; (ii) we can aide agents in learning multi-agent team behaviors by forcing them to \textit{learn single-agent behaviors first}. More specifically, we combine an environment shaping curriculum over agent velocities with a behavioral curriculum for bootstrapping cooperative multi-agent learning with successful single-agent experience. We refer to this curriculum-driven variant of DDPG as CD-DDPG throughout the rest of this paper.

\subsection{Velocity Ratio Curriculum}
Curriculum learning \cite{bengio2009curriculum} is a popular technique for solving complex learning problems by breaking them down into smaller, easier to accomplish tasks. Recent advances in ecological reinforcement learning \cite{co2020ecological} have introduced environment shaping---in which properties, initial states, or the dynamics of an environment are modified gradually---as a useful and more natural instantiation of curriculum learning for RL agents (than, say, traditional reward shaping). With this in mind, we construct a sequence of increasingly difficult pursuit-evasion environments by incrementally lowering the ratio of pursuer speed to evader speed ($\lvert \Vec{v}_p \rvert / \lvert \Vec{v}_e \rvert$).

More formally, we define a curriculum over velocity bounds. Let $\Vec{v}_0$ be an initial setting of the environment's velocity ratio $\lvert \Vec{v}_p \rvert / \lvert \Vec{v}_e \rvert$. We anneal $\lvert \Vec{v}_p \rvert / \lvert \Vec{v}_e \rvert$ to a target ratio $\Vec{v}_{\textrm{target}}$ over $v_{\textrm{decay}}$ epochs as:
\begin{equation}
    \label{eqn:implicit_comm_curriculum}
    \Vec{v}_{i} \gets \Vec{v}_{\textrm{target}} + (\Vec{v}_0 - \Vec{v}_{\textrm{target}})*\max\bigg(\frac{(v_{\textrm{decay}}-i)}{v_{\textrm{decay}}}, 0.0\bigg)
\end{equation}
\noindent where $i$ represents the current training epoch and, in turn, $\Vec{v}_{i}$ the current velocity ratio. In practice, we initialize the environment such that $\Vec{v}_0 > 1.0$ (i.e. pursuers are faster than the evader), then anneal this ratio slowly as training progresses. This gives the pursuers an opportunity enjoy a velocity advantage early on in training, then develop increasingly coordinated strategies to capture the evader as $\lvert \Vec{v}_p \rvert / \lvert \Vec{v}_e \rvert$ decays.

\subsection{Behavioral Curriculum}
We combine the aforementioned velocity curriculum with a behavioral curriculum that extends off-policy learning to allow for targeted single-agent exploration early in the training process. As a consequence of the deterministic policy gradient and importance sampling \cite{silver2014deterministic}, the policy gradient can be estimated in an off-policy manner---i.e. using trajectories sampled from a separate behavior policy $\beta(a \mid s)$ where $\beta(a \mid s) \neq \mu_\phi$. Formally, this means \cref{eqn:background_ddpg_grad} can be represented equivalently as:
\begin{align*}
    J_\beta(\phi) &= \int_\mathcal{S} p^\beta Q_{\omega}(s,a) \mid_{s=s_t, a=\mu_{\phi}(s_t)} \textrm{d}s
    \\[4pt]
    &= \mathbb{E}_{s \sim p^\beta}[Q_{\omega}(s,a) \mid_{s=s_t, a=\mu_{\phi}(s_t)} ]
\end{align*}
\noindent with the corresponding gradient:
\begin{equation*}
    \nabla_\phi J_\beta(\phi) = \mathbb{E}_{s \sim p^\beta} [\nabla_a Q_{\omega}(s,a)\mid_{s=s_t, a=\mu(s_t)}\nabla_\phi \mu(s)\mid_{s=s_t}]
\end{equation*}
\noindent where $p^\beta$ is the state distribution of the behavior policy $\beta$.

Our behavioral curriculum takes advantage of off-policy learning by splitting training into two exploration phases that use distinct behavior policies $\beta_0$ and $\beta_{\mu}$, respectively. 
The key is that we define $\beta_0$ strategically to be a supervisory policy that collects \textit{successful single-agent experience} and $\beta_{\mu}$ to be a standard exploration policy ($\epsilon-$greedy for discrete actions, random noise for continuous actions). In multi-agent settings, defining $\beta_0$ this way allows each agent to learn how to interact with the environment first, before learning to coordinate with teammates. Without this curriculum, agents are forced to learn both simultaneously.

In our pursuit-evasion environment, we define $\beta_0$ as follows:
\begin{equation}
    \label{eqn:implicit_comm_beta_0}
     \beta_0 = -\nabla U_{\textrm{att}}(q_{p_i}, q_{e})
\end{equation}
\noindent where $U_{\textrm{att}}$ is the attractive potential-field defined in \cref{eqn:implict_comm_attractive_potential}. Note that $\beta_0$ is a greedy policy that runs directly towards the evader. This strategy is obviously sub-optimal when $\lvert \Vec{v}_p \rvert / \lvert \Vec{v}_e \rvert \leq 1.0$, but helps pursuers learn to move in the direction of the evader when $\lvert \Vec{v}_p \rvert / \lvert \Vec{v}_e \rvert > 1.0$. In our experiments, we use $\beta_0$ only during the first phase of the velocity curriculum. After this phase, agents follow the standard DDPG behavior policy $\beta_\mu = \mu_\phi(s_t) + \mathcal{N}$, where $\mathcal{N}$ is the Ornstein-Uhlenbeck noise process. 

\section{Results}
\label{sec:implicit_comm_results}
Our evaluation addresses two primary questions: (i) Does our curriculum-driven strategy enable decentralized agents to learn to coordinate in difficult sparse reward environments? (ii) To what extent does implicit signaling emerge in the learned strategy? To answer these questions, we first perform an ablation study over each of the multi-agent curricula that comprise CD-DDPG. Then, we measure the performance of CD-DDPG against a set of analytical and learned strategies of increasing sophistication. We intentionally select these strategies to represent ablations over both coordination and implicit signaling.

\subsection{Multi-Agent Curricula}
\label{sec:implicit_comm_results_ablation}
\begin{figure}[]
    \centering
    \makebox[\linewidth][c]{\includegraphics[width=0.95\linewidth]{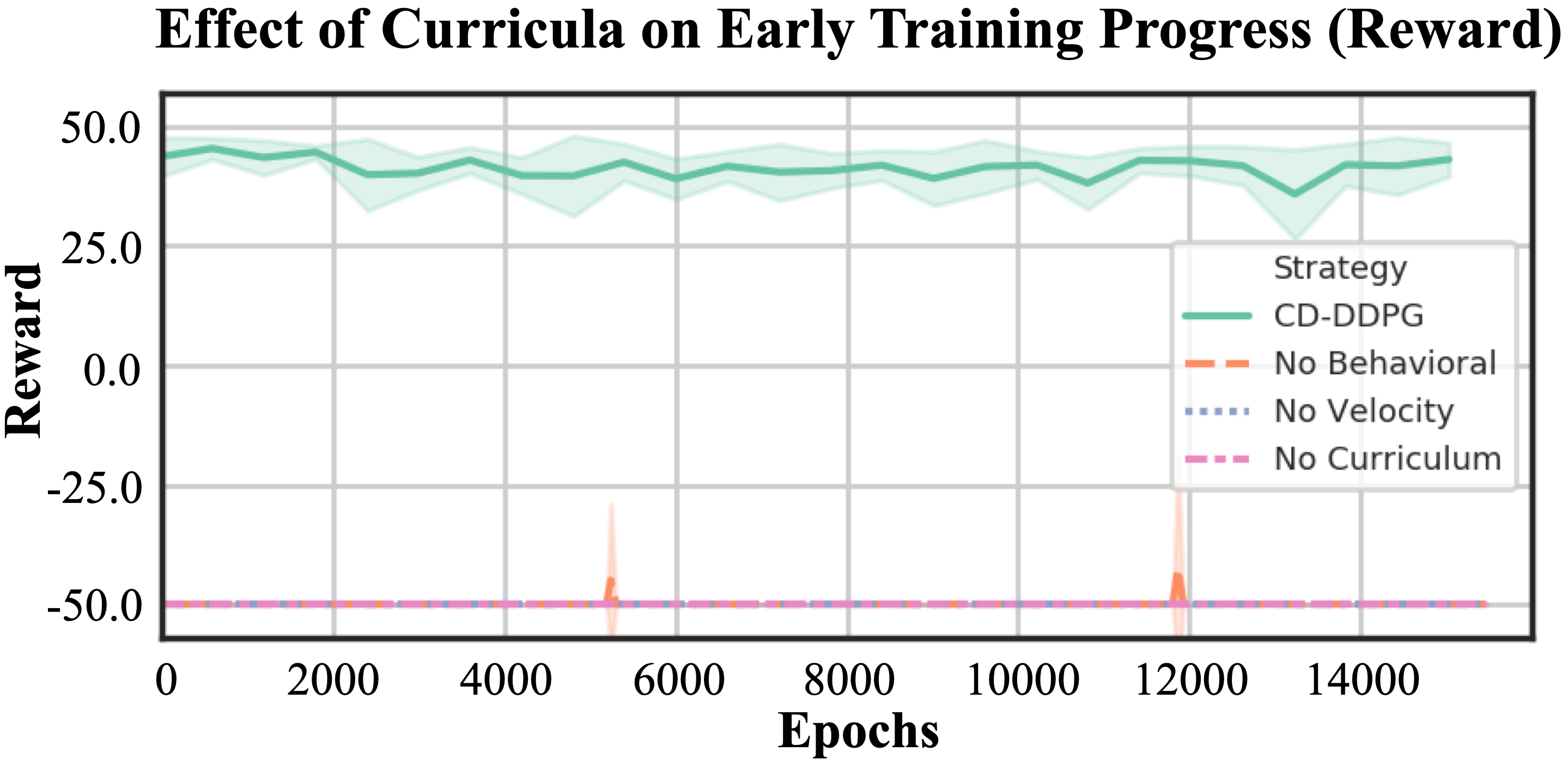}}
    \caption{An ablation study of curriculum-driven learning early in the training process. Note that $\lvert \Vec{v}_p \rvert / \lvert \Vec{v}_e \rvert$ is decreasing over time (increasing the task difficultly), so learning is characterized by \textit{sustained} reward rather than \textit{improved reward}. Only CD-DDPG successfully learns team coordination.}
    \label{fig:implicit_comm_ablation}
\end{figure}
To isolate our method's performance, we run ablations over each of CD-DDPG's constituent parts. We compare CD-DDPG to the following alternatives: \textit{No Behavioral}, which follows the velocity curriculum, but not the behavioral one; \textit{No Velocity}, which follows the behavioral curriculum, but not the velocity one; and \textit{No Curriculum}, which uses neither curriculum during training (i.e. it is vanilla DDPG trained at a constant ratio $\lvert \Vec{v}_p \rvert / \lvert \Vec{v}_e \rvert = 0.7$). We trained each method for $15,000$ epochs in our pursuit-evasion environment across 10 different random seeds. Results are shown in \cref{fig:implicit_comm_ablation}.

\textit{No Curriculum}, unsurprisingly, flat-lines throughout the training process. It never experiences a positive reward signal, reflecting the difficulty of pursuit-evasion when $\lvert \Vec{v}_p \rvert / \lvert \Vec{v}_e \rvert < 1.0$. \textit{No Behavioral}, trained from a velocity ratio of $\lvert \Vec{v}_p \rvert / \lvert \Vec{v}_e \rvert = 1.2$ downwards, experiences small spikes in reward, but not consistently enough to improve the pursuers' policies. Finally, \textit{No Velocity} also fails to gain any traction (it's curve is behind No Curriculum in the figure), highlighting the importance of both curricula working together. This validates our intuition from \cref{sec:implicit_comm_cd_ddpg}. Even with a velocity advantage, the agents struggle to learn individual interaction with the environment and coordination with their teammates simultaneously. \textit{CD-DDPG} is able to capture valuable experience even in the earliest stages of training. This bootstraps each pursuer's learning process, allowing them to maintain a high level of performance even after the warm-up is over. We therefore find that the combination of both curricula is crucial to warm-starting policy learning.

\subsection{Properties of Emergent Teamwork}
\begin{table}
  \caption{A summary of the strategies used to evaluate CD-DDPG and their capacity for coordination and implicit signaling. Outperforming one of these methods is equivalent to surpassing its level of sophistication in coordination, implicit signaling, or both.}
  \label{table:implicit_comm_eval_straties}
  \centering
  \begin{tabular}{lll}
    \toprule
    Name     & Coordination     & Implicit Signaling \\
    \midrule
    Greedy & \hspace{20pt}No & \hspace{40pt}No \\
    CD-DDPG (Partial) & \hspace{20pt}Yes & \hspace{40pt}No \\
    Pincer & \hspace{20pt}Yes & \hspace{40pt}Yes \\
    \bottomrule
  \end{tabular}
\end{table}
To study the strength of CD-DDPG's coordination further and identify the role of implicit signaling in the agents' emergent teamwork, we compare the performance of CD-DDPG to a set of analytical and learned strategies of increasing sophistication. These strategies represent ablations over both coordination and implicit signaling---i.e. outperforming one of these methods is equivalent to surpassing that method's level of sophistication in coordination, capacity for implicit signaling, or both. With this in mind, we evaluate CD-DDPG against the following policies, which are also summarized in \cref{table:implicit_comm_eval_straties} \footnote{We do not evaluate against centralized methods such as MADDPG \cite{lowe2017multi} because they violate our requirement of decentralized learning.}:

\paragraph{Greedy}
Each pursuer follows the greedy control strategy in \cref{eqn:implict_comm_attractive_potential}. In Greedy pursuit, each pursuer ignores the positions of its teammates. Greedy pursuit therefore represents independent action (i.e. no coordination, no communication).

\paragraph{CD-DDPG (Partial)}
We train CD-DDPG under partial observability. Instead of the complete environment state $s_t$, each pursuer $p_i$ receives a private observation $o_t = \{q_{p_i}, q_e \}$ consisting of its own location $q_{p_i}$ and the location of the evader $q_e$. Despite not observing each other, CD-DDPG (Partial) pursuers are capable of coordinating through static role assignment. This is equivalent to assigning roles before each trajectory---i.e. $p_1$ always approaches from the left, $p_2$ from the right, etc.---and coordinating through these roles during pursuit. CD-DDPG (Partial) pursuers are therefore coordinated, but with no ability to communicate implicitly to modify their behavior extemporaneously.

\paragraph{Pincer}
We define the Pincer strategy as an adversarial function that exploits knowledge of the evader's objective in \cref{eqn:implicit_comm_evader_objective}:
\begin{equation}
    \label{eqn:implicit_comm_pincer}
    F(\boldsymbol{\tilde{\theta}_i, r_i}) = \underset{\boldsymbol{\tilde{\theta}_i, r_i}}{\max} \big[ \underset{\theta_e}{\min} \big[ U_\textrm{evade}(\theta_e)\big] \big] = \underset{\boldsymbol{\tilde{\theta}_i, r_i}}{\max} \bigg[ \underset{\theta_e}{\min} \bigg[ \sum_i \bigg(\frac{1}{r_i}\bigg) \cos(\theta_e - \tilde{\theta}_i) \bigg] \bigg]
\end{equation}
\noindent where $\boldsymbol{\tilde{\theta}_i}$ and $\boldsymbol{r_i}$ are the polar coordinates of each pursuer relative the evader. Intuitively, Pincer encircles the evader and cuts off potential bisector escape paths while enclosing the circle. It therefore supports both coordination---pursuers uphold a circular formation---and implicit signaling---every time-step, pursuers use information from the locations of their teammates to adjust their own position on the circular formation. We provide additional details on the Pincer strategy in \cref{apdx:implicit_comm_details_pincer}.

All experiments involve $n=3$ pursers. For CD-DDPG and CD-DDPG (Partial), agents are trained for $50,000$ epochs. The velocity ratio is decreased from $\lvert \Vec{v}_p \rvert / \lvert \Vec{v}_e \rvert = 1.2$ to $\lvert \Vec{v}_p \rvert / \lvert \Vec{v}_e \rvert = 0.4$ in decrements of $0.1$ (i.e. eight separate training sessions each). Velocity ratio decay occurs over $v_{\textrm{decay}}=15000$ epochs, during which the behavioral curriculum $\beta_0$ is used (but only during the first training session). After training, we test the resulting policies at each velocity step. Test-time performance is measured over 100 independent trajectories (averaged over five random seeds each).

\subsubsection{Capture Success}
\begin{figure}
  \centering
  \includegraphics[width=.99\linewidth]{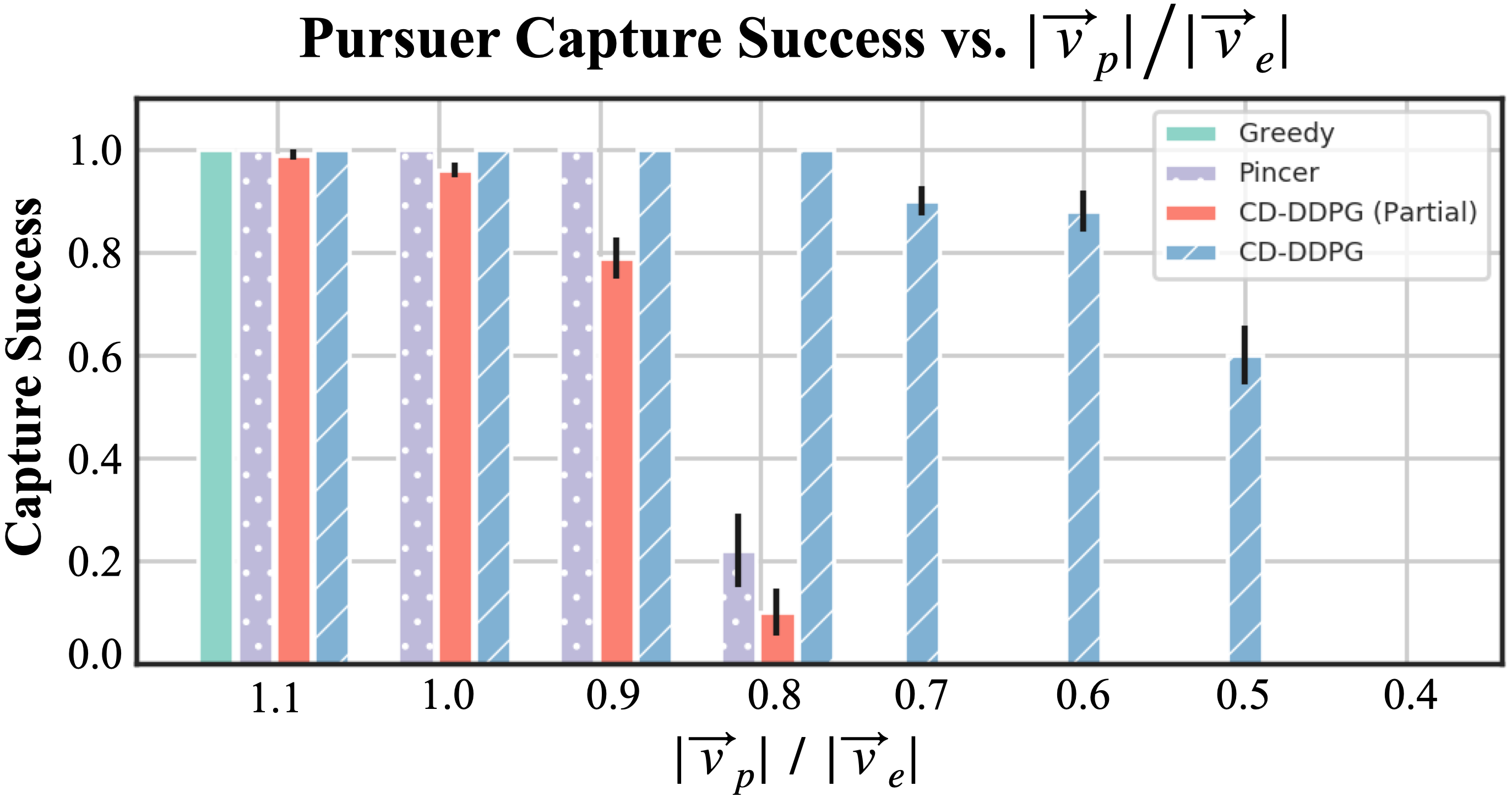}
  \caption{Capture success as a function of velocity. CD-DDPG succeeds at $\lvert \Vec{v}_p \rvert / \lvert \Vec{v}_e \rvert = 0.5$ whereas each competing method fails below $\lvert \Vec{v}_p \rvert / \lvert \Vec{v}_e \rvert = 0.8$.}
  \label{fig:implicit_comm_cap_success}
\end{figure}
First, we evaluate capture success as a function of the velocity advantage of the evader. The results are shown in \cref{fig:implicit_comm_cap_success}. Unsurprisingly, each method has a high capture success rate when $\lvert \Vec{v}_p \rvert / \lvert \Vec{v}_e \rvert > 1.0$. The Greedy strategy drops off at $\lvert \Vec{v}_p \rvert / \lvert \Vec{v}_e \rvert = 1.0$, which is also expected---a straight-line chase only works when $\lvert \Vec{v}_p \rvert / \lvert \Vec{v}_e \rvert > 1.0$. The Pincer and CD-DDPG (Partial) strategies are able to coordinate successfully at lower speeds, but eventually \textbf{fail to capture the evader at $\boldsymbol{\lvert \Vec{v}_p \rvert / \lvert \Vec{v}_e \rvert = 0.8}$} and below. As $\lvert \Vec{v}_p \rvert / \lvert \Vec{v}_e \rvert$ decreases further, CD-DDPG significantly outperforms the other strategies. Specifically, CD-DDPG successfully completes the pursuit-evasion task at a \textbf{speed ratio of 0.5} (i.e. pursuers moving at half the evader's speed). These results show that CD-DDPG learns to coordinate significantly more effectively than the other strategies.

\subsubsection{Relative position during capture}
\begin{figure}[t!]
    \centering
    \makebox[\linewidth][c]{\includegraphics[width=0.8\linewidth]{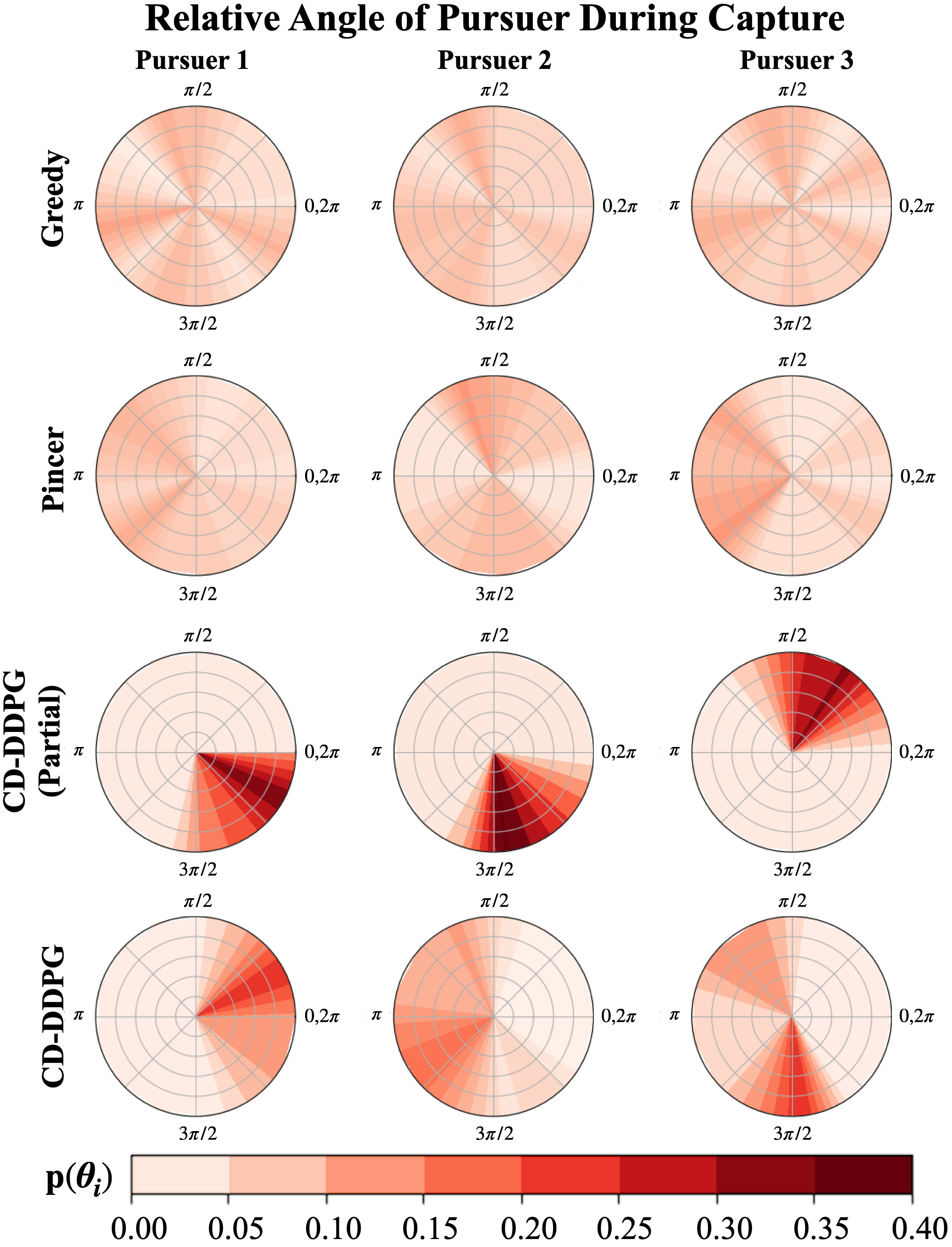}}
    \caption{Distribution of relative angle for each pursuer during capture, binned in the range $[0, 2\pi]$ and displayed as a heatmap. Each row represents a pursuit strategy.}
    \label{fig:implicit_comm_capture_pos}
\end{figure}
Next, we study the nature of each strategy's coordination with finer granularity. In sparse reward tasks, the most important time-steps are those that immediately precede reward. For this reason, we look at the distribution of pursuer locations at the time of capture. We collect 100 successful trajectories from each strategy and compute the distribution of pursuer positions relative the evader. In general, we expect coordinated pursuit to exhibit rotational symmetry during capture. Rotational symmetry suggests that pursuers have learned strategies which lead them to well-defined ``capture points" around the evader. Conversely, rotational invariance is indicative of independent pursuit---i.e. pursuers do not follow concrete patterns of attack. Results for this study are shown in \cref{fig:implicit_comm_capture_pos}.

We find that Greedy and Pincer both yield uniform capture distributions. This is unsurprising for Greedy pursuers, whose pursuit paths are not effected by their teammates. The Pincer strategy encircles the evader, but does not constrain pursuers to specific locations on the circle. This leads to less-structured capture(i.e. weaker role assignment) by the Pincer pursuers. In contrast, CD-DDPG (Partial) pursuers demonstrate very strong role assignment, with each pursuer capturing the evader from the same relative angle each time. Taken into context with the results from \cref{fig:implicit_comm_cap_success}, it is clear that this level of rotational symmetry impacts success. In fact, it is an example of over-commitment to role assignment. The pursuers adopt very constrained roles---e.g. ``$p_1$ always move left", ``$p_2$ always move right"---which works when $\lvert \Vec{v}_p \rvert / \lvert \Vec{v}_e \rvert \geq 1.0 $, but fails at lower velocities. CD-DDPG balances rotational symmetry and invariance. Each pursuer follows a unique angle towards the evader, but does not commit to that angle completely. CD-DDPG therefore learns structured coordination while allowing pursuers to make dynamic adjustments to their position relative the evader to achieve capture.

\subsubsection{Position-based Social influence}
\begin{figure}
  \centering
  \includegraphics[width=.85\linewidth]{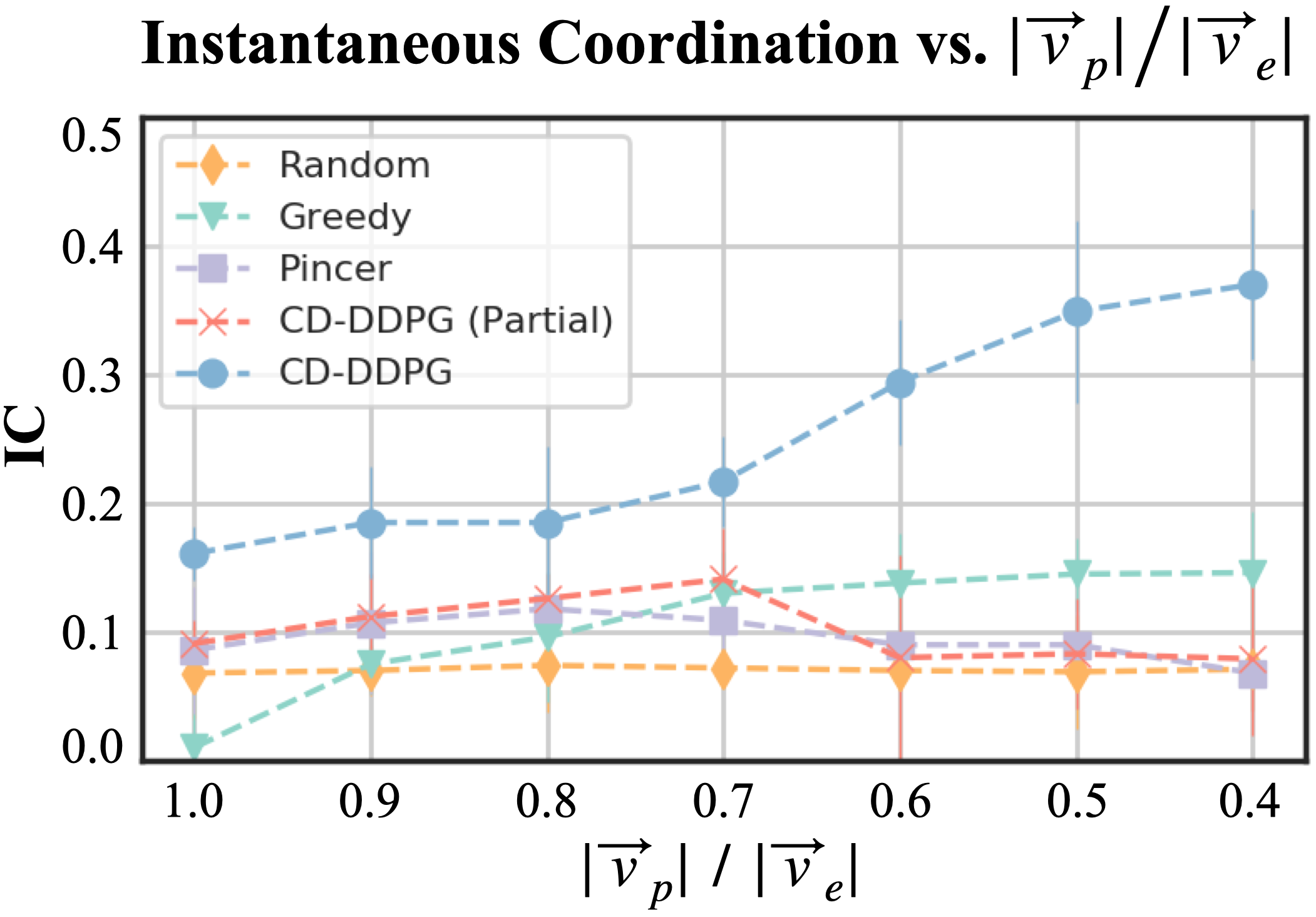}
  \caption{Instantaneous Coordination (IC) results as a function of velocity ratios. Agents trained with CD-DDPG exchange up to 0.375 bits of information per time-step on average, increasing as $\lvert \Vec{v}_p \rvert / \lvert \Vec{v}_e \rvert$ drops, whereas competing strategies peaks remains constant.}
  \label{fig:implicit_comm_ic_results}
\end{figure}
To further study the role of implicit signaling in pursuer performance, we compute the IC score for each strategy. As noted in \cref{sec:implicit_comm_prelims}, by measuring the amount that one agent's actions (and therefore its next position) influences the actions of its teammates, IC quantifies the exchange of implicit signals (see \cref{def:implicit_comm_implicit_signal}) amongst teammates. Following \citet{jaques2019social}, we compute IC empirically as a Monte-Carlo approximation over multi-agent trajectories. We average influence across all trajectory steps and for each agent-agent pair. We also evaluate pursuers that act randomly, which provides a baseline for independent action. The results are shown in \cref{fig:implicit_comm_ic_results}.

We find that, as $\lvert \Vec{v}_p \rvert / \lvert \Vec{v}_e \rvert$ decreases, the IC levels attained by CD-DDPG increases significantly, whereas it remains stagnant for other methods. In fact, across all $\lvert \Vec{v}_p \rvert / \lvert \Vec{v}_e \rvert$ levels, we find that pursuers trained with \textbf{CD-DDPG exchange up to 0.375 bits of information} per time-step on average, compared to a maximum of \textbf{only 0.15 bits on average} from the baseline methods. This indicates that CD-DDPG \textit{achieves increasingly complex coordination} as task difficulty increases and is a promising sign that the CD-DDPG team is exchanging implicit signals---i.e. each pursuer is responding to positional information from its teammates. Finally, we note a minor (though surprising) increase in coordination for the greedy pursuers at low velocities. This is an artifact we dub ``phantom coordination" and discuss further in the \cref{apdx:implicit_comm_apdx_qual_phantom}.

\subsubsection{High-influence moments}
\begin{figure}[t!]
    \centering
    \makebox[\linewidth][c]{\includegraphics[width=0.85\linewidth]{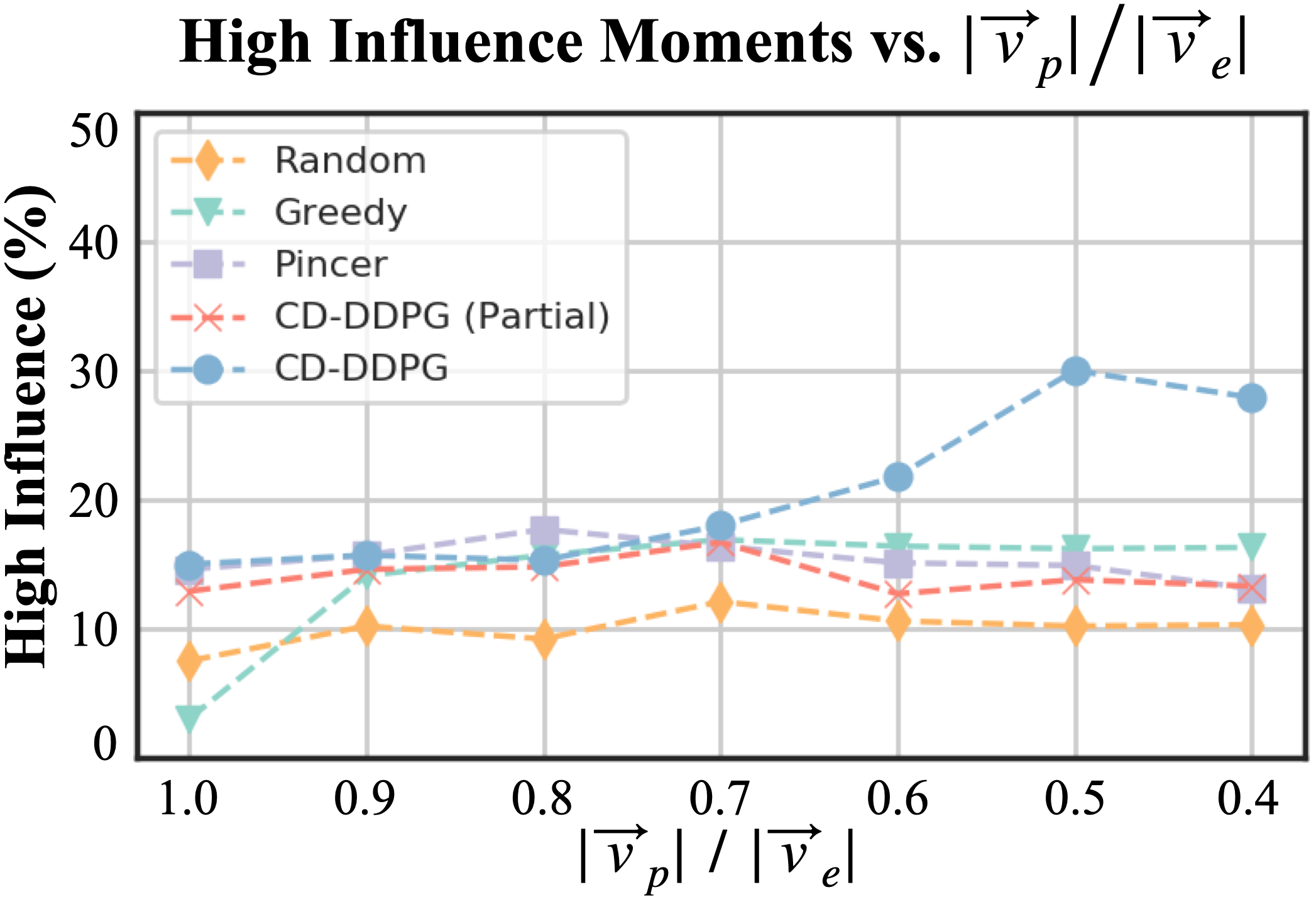}}
    \caption{Percentage of high-influence moments as a function of velocity ratios. Agents trained with CD-DDPG become more influential as $\lvert \Vec{v}_p \rvert / \lvert \Vec{v}_e \rvert$ decreases.}
    \label{fig:implicit_comm_high_ic}
\end{figure}
We also report on the percentage of high-influence moments that occur between pairs of agents (see \cref{fig:implicit_comm_high_ic}). A high-influence moment is a time-step in which IC is above the mean IC for the entire trajectory. Similar to previous work \cite{jaques2019social}, we find that influence is sparse in most cases---\textbf{only between 10-15\%} of trajectory steps exhibit high-influence across all $\lvert \Vec{v}_p \rvert / \lvert \Vec{v}_e \rvert$ levels. The exception, notably, occurs for CD-DDPG. At low speeds (i.e. $\lvert \Vec{v}_p \rvert / \lvert \Vec{v}_e \rvert \leq 0.7$), we see a significant increase in the percentage of high-influence moments between the CD-DDPG pursuers, reaching a \textbf{maximum near 30\%}. This is further evidence that, as $\lvert \Vec{v}_p \rvert / \lvert \Vec{v}_e \rvert$ decreases, CD-DDPG pursuers form increasingly highly-coordinated formations and make split-second decisions based on the movements of their teammates. This points more concretely to the use of implicit signals between CD-DDPG pursuers.

Interestingly, the behavior of CD-DDPG closely matches the documented behaviors of social predators such as dolphins and wolves---i.e. sudden changes of position/orientation as a response to the movements of other teammates \cite{herbert2016understanding}. We elaborate on these and other qualitative findings in \cref{apdx:implicit_comm_apdx_qual}.

\section{Conclusion}
\label{sec:implicit_comm_conclusion}
This work connects emergent communication to the spectrum of communication that exists in nature, highlighting the importance of interpreting communication as a spectrum from implicit to explicit communication. We proposed a curriculum-driven strategy for policy learning in difficult multi-agent environments. Experimentally, we showed that our curriculum-driven strategy enables pursuers to coordinate and capture a superior evader, outperforming other highly-sophisticated analytic pursuit strategies. We also provided evidence suggesting that the emergence of implicit signaling is a key contributor to the success of this strategy. There are a number of extensions of this work that study how common principles contribute to integrated, communicative behavior; including: imperfect state information, increased environmental complexity, and nuanced social dynamics between agents.

\chapter{Concept-based Understanding of Emergent Multi-Agent Behavior}
\label{chapter:interpretability}

\section{Introduction}
Multi-agent learning continues to play a crucial role in the development of scalable and generally-capable AI systems. In addition to well-known successes in board games \citep{perolat2022mastering, silver2016mastering, silver2017mastering} and online games \citep{berner2019dota, vinyals2019grandmaster}, multi-agent learning has enabled agents to develop a range of capabilities, such as navigating social dilemmas~\citep{leibo2017multi}, optimizing traffic flow for autonomous vehicles~\citep{vinitsky2023optimizing, wu2021flow}, and even learning to cooperate with humans~\citep{carroll2019utility}. 
For all its success, interpreting emergent multi-agent behavior remains an open challenge. In cooperative environments, for example, reward alone is not enough to understand the nature of a learned coordination strategy, or whether agents have learned to coordinate at all. In more complex settings, such as social dilemmas, agents often learn nuanced relationships that include both low-level spatial (e.g., navigation) and high-level social (e.g., exploitation, public resource allocation) interactions that can only be fully-understood through exhaustive visualization.

For this reason, recent work has shifted focus from traditional measures of performance (i.e. reward, human analysis) to better understanding emergent behaviors~\citep{omidshafiei2022beyond}. Related methods perform interpretability analysis in a \emph{post-hoc} manner, either by measuring basic behavioral statistics~\citep{liu2022embodied}, or regressing concepts from policy network activations~\citep{mcgrath2022acquisition}. An alternative to post-hoc analysis is \emph{intrinsic interpretability}, where a model's decisions incorporate human-understandable concepts directly. In supervised learning settings, for example, it has been shown that it is possible to train a neural network that is ``bottlenecked" by human-understandable concepts, enabling intrinsic interpretability while maintaining high performance in classification tasks~\citep{koh2020concept}.
\begin{figure}
    \centering
    \includegraphics[width=0.99\textwidth]{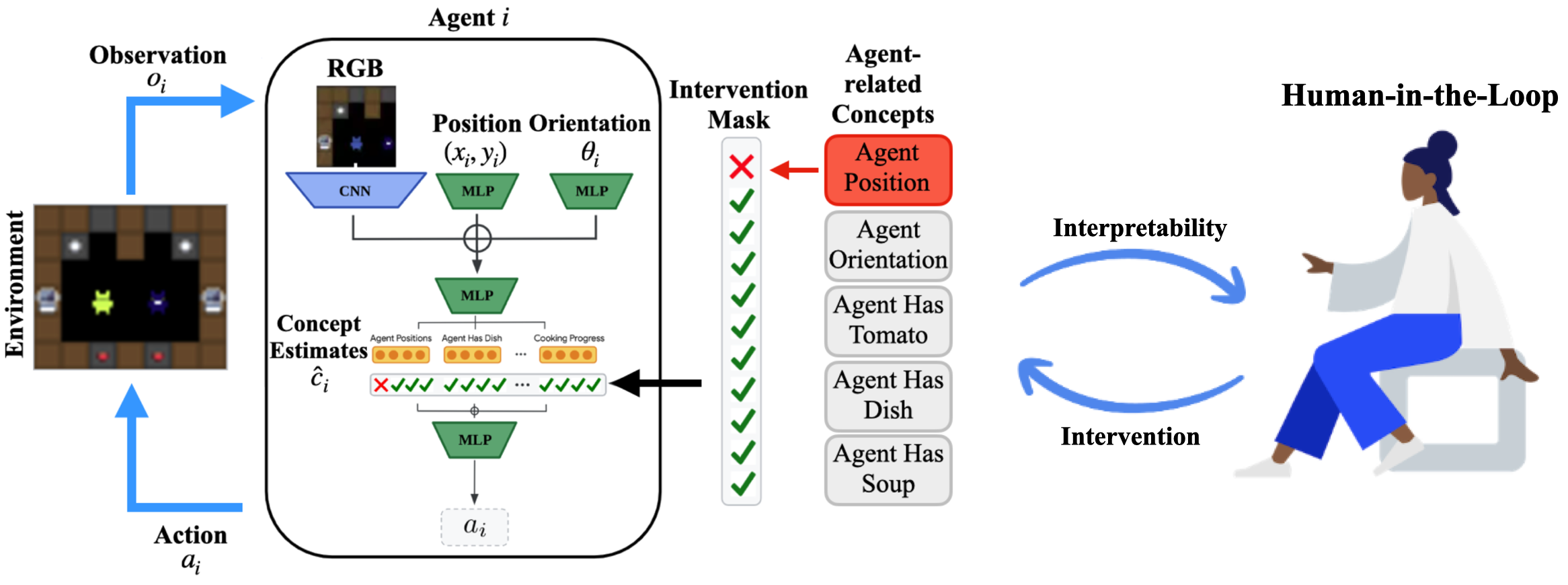}
    \caption{Concept Bottleneck Policies distill agent decisions into human-understandable concepts, yielding interpretable policies for MARL and enabling behavioral analysis via concept intervention.}
    \label{fig:interpretability_concept_bottleneck_policies}
\end{figure}

In this work, we introduce an intrinsically-interpretable, concept-based policy architecture for multi-agent learning. Our proposed architecture, the Concept Bottleneck Policy (CBP), forces an agent to make decisions by first predicting an intermediate set of human-understandable concepts, then using those concepts to select actions. This architecture makes decision-making immediately transparent, as agents reveal the environmental and inter-agent factors influencing each of their actions.

Beyond intrinsic interpretability on a per-agent basis, our method also \textbf{unlocks a novel class of techniques for understanding emergent multi-agent behavior}. By conditioning each agent's actions on its own concepts from construction, our method supports behavioral analysis via concept intervention. Specifically, we can perform interventions over each concept estimate in an agent's CBP and examine its impact on the larger multi-agent system (with respect to rewards, actions, environment features, etc). We use this technique to uncover many key aspects of multi-agent behavior.

In cooperative environments, our method can reliably detect agents who have learned to coordinate from those that act independently, determine which environments require coordination to solve vs. those that permit independent solutions, expose inter-agent factors that drive coordination, and even diagnose common failure modes such as lazy agents. In competitive environments, our method disentangles more sophisticated strategic behavior (e.g., role assignment) as it emerges during training. In social dilemmas, we show that it is possible to identify inter-agent social dynamics (e.g., exploitation, public resource sharing) by learning a graph over intervention outcomes. Specifically, we learn a sparse graph representing the pairwise relationships of inter-agent intervention effects and show that it captures both high-level behavioral patterns (free loader agents vs. public service agents) and low-level spatial information (agents who interact). Finally, we show that CBPs perform comparably to non-concept based policies, thus achieving interpretability without sacrificing performance.

In sum, our contributions are as follows: (i) We introduce Concept Bottleneck Policies as an interpretable, concept-based architecture for MARL; (ii) We introduce a suite of novel techniques for multi-agent behavioral understanding that leverage concept intervention and graph learning; (iii) We show that CBPs can match the performance of non-concept-based network architectures.

\section{Related Work}
Our work lies at the intersection of interpretability and MARL.
In interpretability, computer vision works have used saliency maps to help provide pixel-level explanations for model predictions ~\citep{simonyan2013deep,smilkov2017smoothgrad,sundararajan2017axiomatic,springenberg2014striving}.
Interpretability using grounded concepts has also been explored to provide more meaningful, human-understandable explanations of model decisions~\citep{kim2018interpretability,ghorbani2019towards,bau2020understanding,ghandeharioun2021dissect,chen2020concept,yeh2020completeness,stammer2021right}.
As described earlier, concept bottleneck models~\citep{koh2020concept} constrain the networks through such grounded concepts, enabling analysis of the model via intervention.

RL interpretability techniques include those that either increase the transparency of agent decision-making directly~\citep{bastani2017interpretability,madumal2020explainable}; or analyze agent behaviors from a post-hoc perspective~\citep{yang2018learn,amir2018highlights,sequeira2020interestingness,liu2022motor,jaderberg2019human,omidshafiei2022beyond,mcgrath2022acquisition,forde2022concepts}.
Approaches that directly increase transparency include those that combine model compression and decision trees to yield more interpretable agent policies~\citep{bastani2017interpretability}, or rely upon causal decision trees with limited depth to explore causality of agent decisions~\citep{madumal2020explainable}. 
However, these approaches have not yet been extended to RL settings with complex observations (e.g., images), where function approximators are typically needed. Post-hoc methods have used traditional saliency-mapping techniques to attribute decisions to image observation regions~\citep{yang2018learn}, highlighting states that lead to major differences in agent behaviors~\citep{amir2018highlights}, and summarized key agent behaviors using measures such as action uncertainty~\citep{sequeira2020interestingness}.
Increasingly, natural language has been used to aid human understanding of agents through instructions~\citep{ahn2022can} or explanations~\citep{hayes2017improving}.

Interpretability of MARL agents has received limited attention in prior works, with primary investigations gauging agents' internal representations by making predictions about future outcomes~\citep{liu2022motor}, visualizing latent clusterings of agents' neural activation vectors~\citep{jaderberg2019human, mahajan2019maven}, or using offline behavioral analysis to learn behavioral spaces over agents~\citep{omidshafiei2022beyond}.
Recent work has also analyzed decision-making in two-player games such as Chess~\citep{mcgrath2022acquisition}, and Hex~\citep{forde2022concepts}.
In contrast to our work, these approaches focus on post-hoc interpretability, whereas ours makes agent decisions directly transparent and enables novel forms of downstream analysis. To the best of our knowledge, ours is the first work to identify coordination, lazy agents, emergent skill learning, and social behaviors via intervention.

\section{Background}
\label{sec:interpretability_background}
\textbf{Markov Games}
A partially-observable Markov game \citep{littman1994markov} of $N$ agents is defined by the tuple $\mathcal{M} = (\mathcal{S}, \boldsymbol{\mathcal{A}}, \mathcal{T}, \boldsymbol{\mathcal{R}}, \boldsymbol{\Omega}, \mathcal{O})$ where $\mathcal{S}$, $\boldsymbol{\Omega}$, and $\boldsymbol{\mathcal{A}}$ are the game's global state-space, joint observation-space, and joint action-space, respectively. In each state, each agent $i$ selects an action $a_i \in \mathcal{A}_i$, yielding a joint action $\boldsymbol{a} = (a_1, ..., a_N)$ for all agents. Following action selection, the environment transitions to a new state according to the transition function $\mathcal{T}: \mathcal{S} \times \boldsymbol{\mathcal{A}} \rightarrow \mathcal{S}$, produces new observations for each agent following the observation function $\mathcal{O}: \mathcal{S} \times \boldsymbol{\mathcal{A}} \rightarrow \boldsymbol{\Omega}$, and emits a reward defined by the reward function $\boldsymbol{\mathcal{R}}: \mathcal{S} \times \boldsymbol{\mathcal{A}} \rightarrow \mathbb{R}^N$. The actions of each agent are dictated by a policy $\pi_i (a_i | o_i)$ and the collection of all individual policies $\boldsymbol{\pi} = (\pi_1, ..., \pi_N)$ is called the joint policy.

\textbf{Concept Bottleneck Models}
CBMs are a class of intrinsically-interpretable neural network architecture for supervised learning~\citep{koh2020concept}. CBMs make predictions by first estimating a set of human-understandable concepts, then producing an output based on those concepts--i.e., the network is ``bottlenecked" by concepts. Formally, given a dataset $\{(x^{(j)}, y^{(j)}, c^{(j)})\}_{j=1}^n$ consisting of inputs $x \in \mathbb{R}^d$, outputs  $y \in \mathbb{R}$, and human-understandable concepts $c \in \mathbb{R}^k$ (where $k$ is the number of unique concepts), a concept bottleneck model learns two mappings: $f: \mathbb{R}^d {\rightarrow} \mathbb{R}^k$ from input-space to concept-space, and $g: \mathbb{R}^k {\rightarrow} \mathbb{R}$ from concept-space to output-space. The model can then make predictions $\hat{y} = f(g(x))$ as a composition of those mappings. 
For example, in an application of arthritis detection from MRI images, the model first predicts related concepts such as knee joint spacing and the presence of bone spurs, then uses those concepts to classify arthritis severity.

\section{Concept Bottleneck Policies for MARL}
\label{sec:interpretability_method}
Here we introduce Concept Bottleneck Policies (CBPs) as an intrinsically interpretable, concept-based policy learning method for MARL (see \Cref{fig:interpretability_concept_bottleneck_policies}). First, we extend the Markov game formalism to support concept-based learning. Then we introduce the CBP architecture and show how it fits within any policy learning scheme. Finally, we demonstrate our method's potential for interpreting emergent multi-agent behavior, introducing multiple behavioral analysis techniques that use intervention to detect a variety of multi-agent social phenomena.

\subsection{Concept-based Markov Games}
We extend Markov games in two important ways. First, we assume that, in addition to a state space $\mathcal{S}$, there exists an interpretable concept-state space $\mathcal{C}$, where each concept-state $\boldsymbol{c} \in \mathcal{C}$ is a vector of human-understandable concepts that describe key features of the environment (e.g., agent positions, object states). This assumption holds in many RL domains like Atari~\citep{mnih2013playing} and Melting Pot~\citep{leibo2021scalable}. Second, we allow agents to generate estimates $\boldsymbol{\hat{c}}$ of the concept state as part of their internal representation. In sum, at each time-step $t$, the environment produces both a state $s_t$ and concept-state $\boldsymbol{c}_t$, and each agent $i$ selects both actions $a_{i,t}$ and concept estimates $\boldsymbol{\hat{c}}_{i,t}$.

\subsection{Concept Bottleneck Architecture}
There are many ways in which concepts can be modeled within the RL framework. Inspired by CBMs~\citep{koh2020concept}, we examine policy architectures in which an agent's action is conditioned entirely on its own concept estimates. To this end, we factorize an agent $i$'s policy $\pi_i(a_i|o_i)$ into a function $f_i: \mathcal{O}_i \rightarrow \mathcal{C}_i$ mapping observations $o_i$ to concept estimates $\boldsymbol{\hat{c}}_i$; and $\pi_i^{\text{act}}: \mathcal{C}_i \rightarrow \mathcal{A}_i$ mapping concept estimates $\boldsymbol{\hat{c}}_i$ to actions $a_i$. Composing $f_i$ and $\pi_i^{\text{act}}$ yields a standard policy mapping observations to actions: $\pi_i(a_i|o_i) = \pi_i^{\text{act}}(f(o_i)) = \pi^{\text{act}}(a_i|\hat{c}_i)$. Given an observation $o_{i,t}$ for some agent $i$ and time $t$, the bottleneck first produces concept estimates $\boldsymbol{\hat{c}}_{i,t} = f(o_{i,t})$, then uses those estimates alone to select an action $a_{i,t} \sim \pi_i^{\textrm{act}}$. Conditioning actions on concepts creates an intrinsically interpretable policy, as the policy is forced to provide a human-understandable rendering of the factors driving its decision making. Importantly, $\boldsymbol{\hat{c}_i}$ can be also used to interpret multi-agent behavior. For example, if two agents $i$ and $j$ collide while moving in the environment, examining $\boldsymbol{\hat{c}}_{i}$ and $\boldsymbol{\hat{c}}_{j}$ may identify which of the agents incorrectly modeled the location of its teammate.

\subsection{Concept Bottleneck Learning}
Under the CBP framing, learning a strong but interpretable policy requires both (i) learning to predict concepts accurately, and (ii) learning to select actions from those concepts effectively. To achieve the former, we introduce a  concept loss:
\begin{equation}
    \label{eqn:interpretability_concept_loss}
    L_C (\boldsymbol{c}, \boldsymbol{\hat{c}}) = \sum_{j=1}^{|C|} L_{C_j} (c_j, \hat{c}_j)
\end{equation}
\noindent where each component $L_{C_j}$ measures the error between the $j$'th predicted concept and its concept label. Each $L_{C_j}$ is a supervised loss, with a specific form dictated by the value of $c_j$---mean-squared error if $c_j$ is scalar, log loss if $c_j$ is binary, etc. In practice, each pair of concepts $\boldsymbol{c}$ and concept estimates $\boldsymbol{\hat{c}}$ are stored in an agent's replay buffer during training alongside standard $(s, a, r, s')$ tuples.

To learn a policy from concept estimates, we attach $L_C$ as an auxiliary loss to the reward-based loss defined by any base RL algorithm (e.g., PPO~\citep{schulman2017proximal}, etc). In general, if $L_{\textrm{RL}}$ is a generic reward-based loss, we construct a joint concept bottleneck policy loss:
\begin{equation}
     L_{CBP} = L_{\textrm{RL}} + \lambda L_C \, ,
\end{equation}
\noindent where the concept loss coefficient $\lambda$ weights the relative importance of concept prediction.

\subsection{Behavioral Analysis via Concept Intervention}
A key feature of our method is its support of intervention analysis. For an agent $i$, let $\textrm{Int}(\hat{c}_j, \bar{c}_j, i)$ be an intervention over the $j$'th concept estimate $\hat{c}_j$ from $i$'s bottleneck with the replacement value $\bar{c}_j$ (in the simplest case, $\bar{c}_j = 0$). We can then observe the effect, if any, that $\bar{c}_j$ has on the agent's behavior by comparing its impact on reward:
\begin{equation}
    \mathop{\mathbb{E}}\bigg [\sum_t r(s_t, \pi_i^{\text{act}}(\hat{c_j})) \bigg] - \mathop{\mathbb{E}}\bigg [\sum_t r(s_t, \pi_i^{\text{act}}(\bar{c_j})) \bigg]
\end{equation}
In mutual reward settings, this serves as a proxy for the intervention's impact on team behavior.\footnote{We carefully examine the possibility of OOD examples and propose ways to test this in \cref{apdx:interpretability_coordination_analysis}.}
Here we propose two examples of behavioral tests enabled by this approach:

\paragraph{Detecting Coordination:} Let $i$ and $j$ be two agents. For $i$ to coordinate with $j$, $i$ must condition its policy on information about $j$ ($j$'s position, orientation, etc). If the agents are coordinating and we remove $i$'s concept estimates pertaining to $j$, we should expect a decrease in reward. Conversely, if reward does not degrade, then $i$ and $j$ must not be explicitly coordinating (i.e., directly using signals from each other). By intervening over concepts pertaining to an agent's teammates, therefore, we can identify whether or not agents are coordinating.

\paragraph{Exposing Lazy Agents:} Here we define a lazy agent as one that does not contribute to increasing team reward through its own actions. For an agent to contribute, it must at least condition its policy on information about its own interactions with the environment. A lazy agent, therefore, is one that has learned a sub-optimal policy that does not encode such information, leading to unproductive behavior. It follows that we can test for the \emph{degree of laziness} of an agent by replacing its concept estimates \emph{about itself} and measuring the resulting degradation of team reward. If performance remains the same with the agent incapacitated, we conclude that it is a lazy agent.

\subsection{Modeling Inter-Agent Social Dynamics}
Here we show that it is possible to deepen our intervention-based interpretability by learning a sparse graph over CBP intervention outcomes. Specifically, we take each intervention $\textrm{Int}(\hat{c}_j, \bar{c}_j, i)$ to be a random variable that is described by some agent-related features, such as the average reward $\boldsymbol{r_{\hat{c}_j}} = \{r_{\hat{c}_j}^1, ..., r_{\hat{c}_j}^n \}$ each agent receives under the intervention\footnote{Outcomes can also be defined in terms of other features (e.g., resources collected, agent proximity, etc.)}. By performing $\textrm{Int}(\cdot)$ iteratively over each concept $c \in C$ and for each agent $i \in \{1, ..., N\}$, we can construct a matrix $\boldsymbol{X}$ wherein each row is the outcome of an intervention:
\begin{equation*}
    \boldsymbol{X}
    = 
    \begin{bmatrix}
        \textrm{Int}(\hat{c}_j, \bar{c}_j, 1) \\
        \textrm{Int}(\hat{c}_k, \bar{c}_k, 2) \\
        \vdots                      \\
        \textrm{Int}(\hat{c}_l, \bar{c}_l, N)
    \end{bmatrix}
    = 
    \begin{bmatrix}
        r_{\hat{c}_j}^1 & r_{\hat{c}_j}^2 & ... & r_{\hat{c}_j}^n \\
        r_{\hat{c}_k}^1 & r_{\hat{c}_k}^2 & ... & r_{\hat{c}_k}^n \\
        & \vdots                                \\
        r_{\hat{c}_l}^1 & r_{\hat{c}_l}^2 & ... & r_{\hat{c}_l}^n
    \end{bmatrix}
\end{equation*}
We can then learn a graph that encodes the pairwise relationships between interventions by performing Lasso neighborhood selection~\citep{meinshausen2006high} over $\boldsymbol{X}$. Specifically, for the intervention $\textrm{Int}(\hat{c}_j, i)$, we solve:
\begin{equation}
    \label{eqn:interpretability_concept_graph}
    \min_{\beta_{ij}} ||\boldsymbol{X}_{ij} - \boldsymbol{X}_{\textbackslash ij} \beta_{ij} ||^2_2 + \alpha ||\beta_{ij}||_1
\end{equation}
where $\boldsymbol{X}_{ij}$ is the outcome of $\textrm{Int}(\hat{c}_j, \bar{c}_j, i)$, $\boldsymbol{X}_{\textbackslash ij}$ represents the remaining $N \times |C|{-}1$ interventions, and $\alpha$ is an $L_1$-penalty, enforcing sparsity. Under this graphical interpretation, the coefficient vector $\boldsymbol{\beta}_{ij} \in \mathbb{R}^{n-1}$ is used to establish edges between intervention outcomes. Crucially, non-zero edge weights in $\boldsymbol{\beta}_{ij}$ mark the similarity of $\textrm{Int}(\hat{c}_j, i)$'s outcome to other interventions. Outcome similarities may hint at the nature of agent behavior. We provide a comprehensive overview of Lasso neighborhood selection in \cref{apdx:interpretability_lasso}.

\subsubsection{Illustrative Example:}
\begin{figure*}
    \centering
    \includegraphics[width=0.99\textwidth]{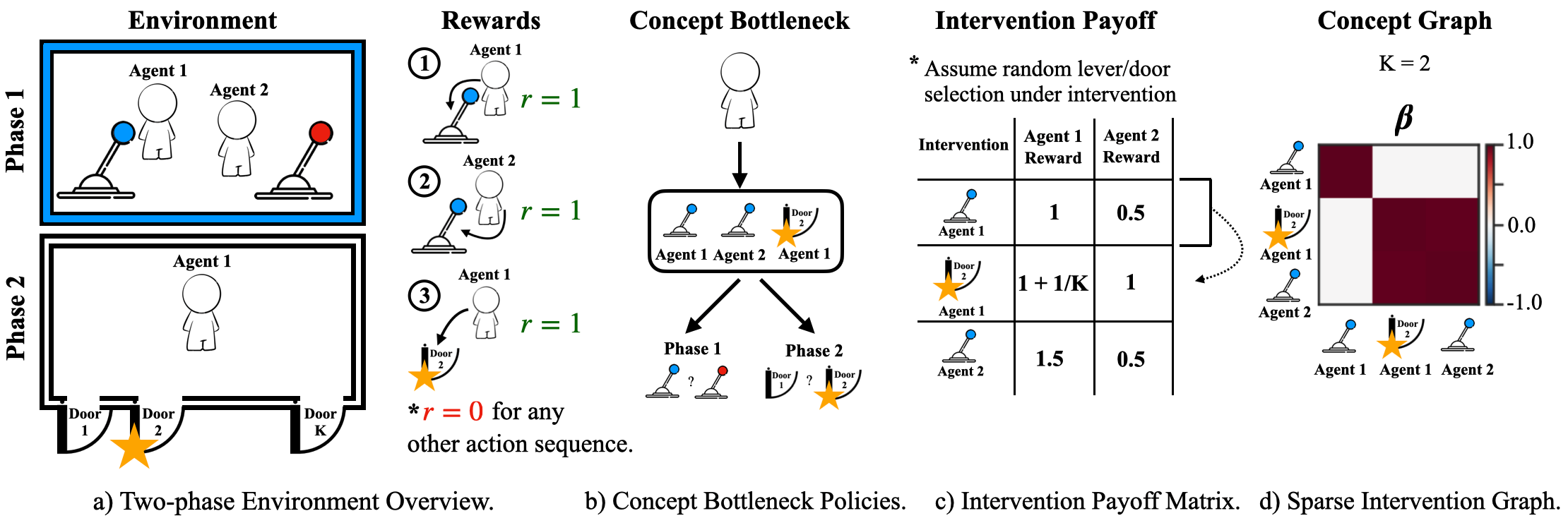}
    \caption{A toy example of graph learning over CBPs. \textbf{a)} A two-phase game. In phase one, Agent 1 and 2 choose from two levers (red, blue) using a special observation (wall color) and receive individual rewards for selecting the correct lever (blue) in the correct sequence (Agent 1 moves first, then Agent 2). If done correctly, the agents move on to the second phase. In phase two, Agent 1 selects from $K$ doors, also using a special observation (gold star), and receives an individual reward for selecting the correct door. \textbf{b)} Both agents use a CBP: Agent 1 has concepts for both lever color and the correct door, while Agent 2's estimates lever color alone. \textbf{c)} It is possible to compute each agent's expected payoff under concept intervention. \textbf{d)} Learning a graph over these payoffs reveals relationships between concept interventions.}
    \label{fig:interpretability_toy_example}
\end{figure*}
To build intuition, we present the following mathematical example. Consider the two-phase sequential game in \cref{fig:interpretability_toy_example}a. In the Phase 1 of this game, $N=2$ agents spawn in a room that contains two levers (red and blue). One lever is set as a reward-giving lever (blue here) and is identified as such by an observable feature (wall color). Agent 1 acts first and must choose a lever to pull, receiving an individual reward ($r=1.0$) for selecting the correct lever. Agent 2 then acts, also receiving $r=1.0$ for selecting the correct lever. If both agents select the correct lever, the game proceeds to Phase 2; otherwise it is terminated. In Phase 2, Agent 1 spawns in a new room containing $K$ unique doors (Agent 2 does not participate). Agent 1 must select a door to exit the room. One reward-giving door produces an individual reward ($r=1.0$), while the other $K{-}1$ doors produce zero reward. As before, an observable feature (gold star) indicates which door is the correct one.

Next, assume both agents select actions with a CBP (see \cref{fig:interpretability_toy_example}b)---Agent 1's CBP conditions actions on two concepts: lever color and the door indicator; and Agent 2's CBP estimates lever color alone. If we assume that intervening over any concept estimate reduces decision-making to a random policy. we can compute the expected rewards for each agent under intervention, as shown in \cref{fig:interpretability_toy_example}c. Computing \cref{eqn:interpretability_concept_graph} over these expected rewards with $K=2$ returns a graph with a strong bi-directional edge between Agent 1's door color concept and Agent 2's lever color concept (see \cref{fig:interpretability_toy_example}d). This edge reveals an important inter-agent relationship: Agent 1's door selection in Phase 2 relies heavily on Agent 2's lever selection in Phase 1. In contrast, Agent 1's lever selection in Phase 1 does not depend on Agent 2 (evident by the lone self-loop for Agent 1's lever color intervention).

\section{Experiments}

\label{sec:interpretability_experiments}
In this section, we evaluate the extent to which our method addresses the following questions:

\paragraph{Identifying Emergent Coordination}
In cooperative settings, can concept intervention identify emergent coordination from policies that act independently, or how much coordination the environment demands? Can it expose lazy agents and, in general, measure an agent's contribution to the multi-agent system? Moreover, if agents \emph{are} coordinating, what specific features underlie that coordination?

\paragraph{Emergent Strategic Behavior} In competitive environments, do CBPs reveal strategic behaviors as they emerge during multi-agent self-play training? For example, can intervention help us better understand both inter-team emergent roles and intra-team counter-strategies that develop over time?

\paragraph{Inter-Agent Social Dynamics}
In mixed-incentive environments, how well does concept intervention identify inter-agent social dynamics? Can it expose the \textit{functional connectivity} of the multi-agent system; or reveal chains of dependencies that might represent brittle inter-agent overfitting?

We use three environments from Melting Pot~\citep{leibo2021scalable} for our experiments. Melting Pot environments vary along a number of multi-agent axes---incentive alignment (cooperative, mixed motive, adversarial), coordination requirements, etc---and therefore provide a strong benchmark for studying emergent behavior. Specific environments are described in the subsections below, with further details (including concept states) in \cref{sec:background_environments} and \cref{apdx:interpretability_training}. For training, we use PPO~\citep{schulman2017proximal}, which has been shown to be state-of-the-art for multi-agent settings \cite{de2020independent,yu2021surprising}, and train in a fully-decentralized fashion. We augment the PPO objective with our concept loss (called ConceptPPO moving forward). Additional training details and hyperparameter sweeps are provided in \cref{apdx:interpretability_training}.

\subsection{Cooperative: Identifying Emergent Coordination}
\label{sec:interpretability_identifying_coord}

\begin{figure*}
     \centering
     \begin{subfigure}[b]{0.45\textwidth}
         \centering
         \includegraphics[width=\textwidth]{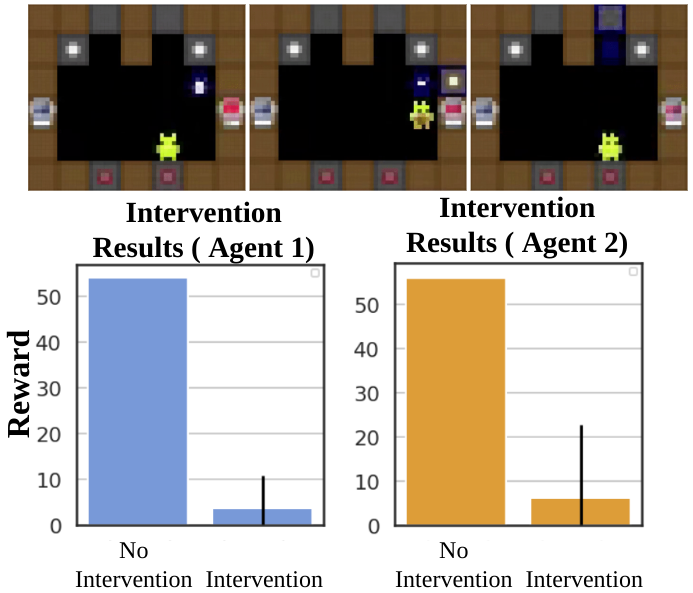}
         \caption{Coordinating Agents.}
         \label{fig:interpretability_results_coord}
     \end{subfigure}
     \hfill
     \begin{subfigure}[b]{0.45\textwidth}
         \centering
         \includegraphics[width=\textwidth]{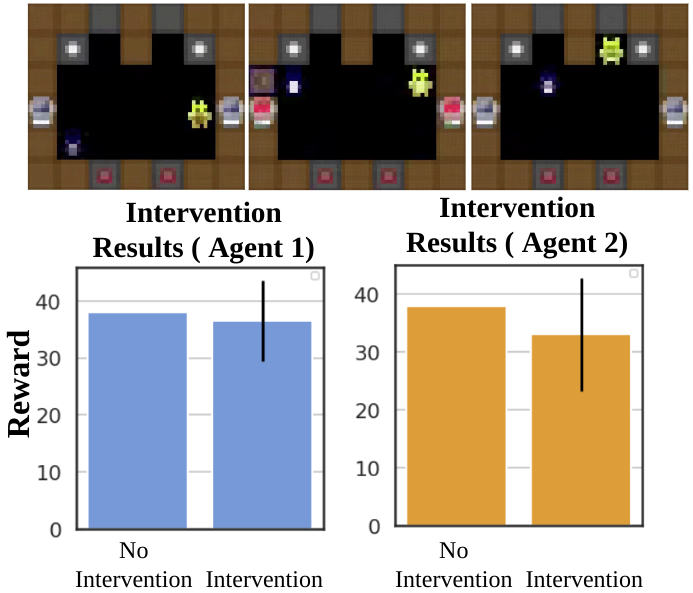}
         \caption{Independent Agents.}
         \label{fig:interpretability_results_ind}
     \end{subfigure}
        \caption{Performance degradation under concept intervention uncovers (a) policies that coordinate and (b) policies that act independently.}
        \label{fig:interpretability_results_coord_vs_ind}
\end{figure*}
We first evaluate CBPs as a tool for understanding emergent coordination in fully-cooperative settings. We use the game Collaborative Cooking, where a group of agents inhabit a kitchen-like environment and must collaborate to find ingredients (tomatoes), complete recipes (bringing tomatoes to and from cooking pots), and deliver food as quickly as possible. We train 10 ConceptPPO policies with a concept cost weight of $\lambda=0.1$ in this environment and use them in our evaluation below.

\paragraph{Do agents learn to coordinate?}
We aim to distinguish policies that learn coordination from those that solve the task independently. We evaluate each of the trained ConceptPPO policies over 100 test-time trajectories (and 5 seeds each) while intervening over all of the concepts related to each agent’s teammate. The average cumulative reward under intervention is shown in \cref{fig:interpretability_results_coord_vs_ind}.

\cref{fig:interpretability_results_coord_vs_ind} investigates a cooking environment with duplicate sets of ingredients (tomatoes) and tools (bowls, pots) on both sides. Therefore, both independent strategies and highly coordinated strategies can complete the task. In the trajectory in \cref{fig:interpretability_results_coord}, we see an emergent strategy in which two agents coordinate. The orange agent works the bottom of the environment picking up tomatoes and bringing them to the cooking pot, while the blue agent stays in the top running dishes to and from the pot to deliver soup. Intervention over teammate-related concepts leads to a catastrophic drop in reward, as the agent's coordination clearly hinges on an accurate modeling of its teammate. In \cref{fig:interpretability_results_ind}, a different strategy emerges, where agents complete the task independently on opposite sides of the kitchen. Here, intervention does not hurt reward, as neither agent needs to closely monitor its teammate to complete the task. This result therefore indicates that coordination detection through intervention is accurate and reliable with CBPs. In \cref{apdx:interpretability_environmental_demands}, we study several cooking environment layouts and average the reduction in reward observed when intervening on teammate concepts across all policies. We can thus measure the coordination demands of each environment.

\paragraph{How do they coordinate?}
\begin{figure}
    \centering
    \includegraphics[width=0.55\textwidth]{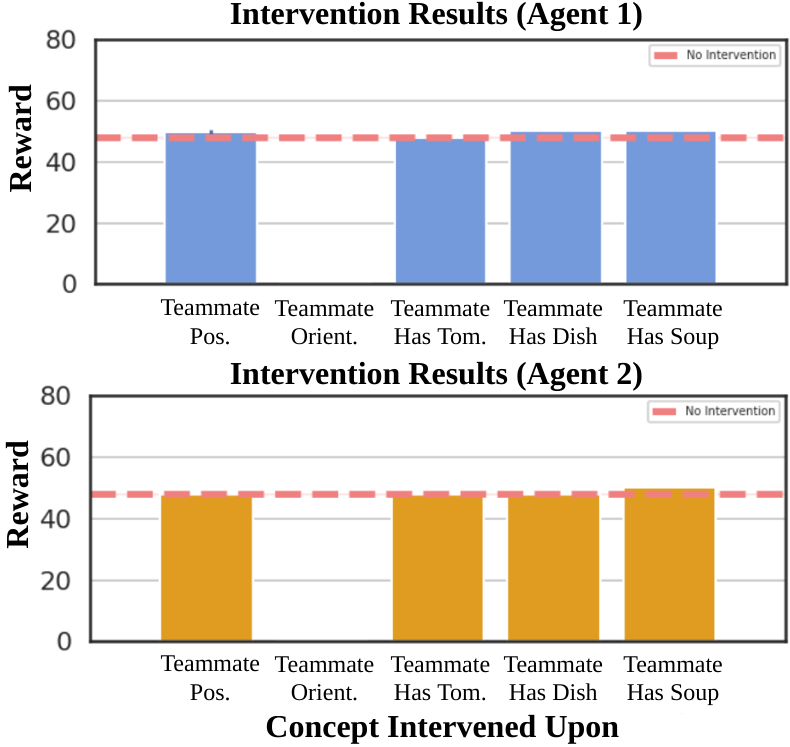}
    \caption{For coordinating agents, intervening over teammate-related concepts exposes the factors driving the team's coordination strategy. Interestingly, teammate orientation is relied upon heavily by both agents for coordination.}
    \label{fig:interpretability_results_teammate}
\end{figure}
Next, we examine the policies that \emph{have} learned to coordinate and pinpoint the specific inter-agent features that drive their coordination (\cref{fig:interpretability_results_teammate}). Rather than intervening over all teammate-related concepts at once, we intervene one at a time. If agents are coordinating using a particular signal, the corresponding concept intervention should result in a sharp drop in performance.

Interestingly, performance only drops when intervening over the orientation concept. This suggests that agents are primarily using the orientation of other agents as a coordination signal, which is curious, as we might expect coordination to involve multiple sources of inter-agent information. For completeness, we conduct two supporting analyses. First, we rule out the possibility that intervening with a fixed orientation creates an OOD (or adversarial) input. We do this using an empirical sample of agent orientations to manufacture interventions that are both in- and out-of-distribution; and show that our results are consistent in both cases (see \cref{apdx:interpretability_ood_analysis}). Second, we re-run this intervention over additional ConceptPPO policies trained without orientation as a concept. These results show that agent coordination latches onto yet another concept (teammate has\_soup) as its primary signal (see \cref{apdx:interpretability_int_wo_orn}). The fact that agents learn a policy in which coordination hinges only on one particular signal reveals the brittleness of deep multi-agent reinforcement learning policies and their tendency to rely on highly specialized conventions. This brittleness is one of the reasons that MARL agents often fail to generalize to novel partners, or have poor zero-shot coordination performance~\citep{hu2020other}.

\paragraph{Identifying Lazy Agents}
\begin{figure}
    \centering
    \includegraphics[width=0.99\textwidth]{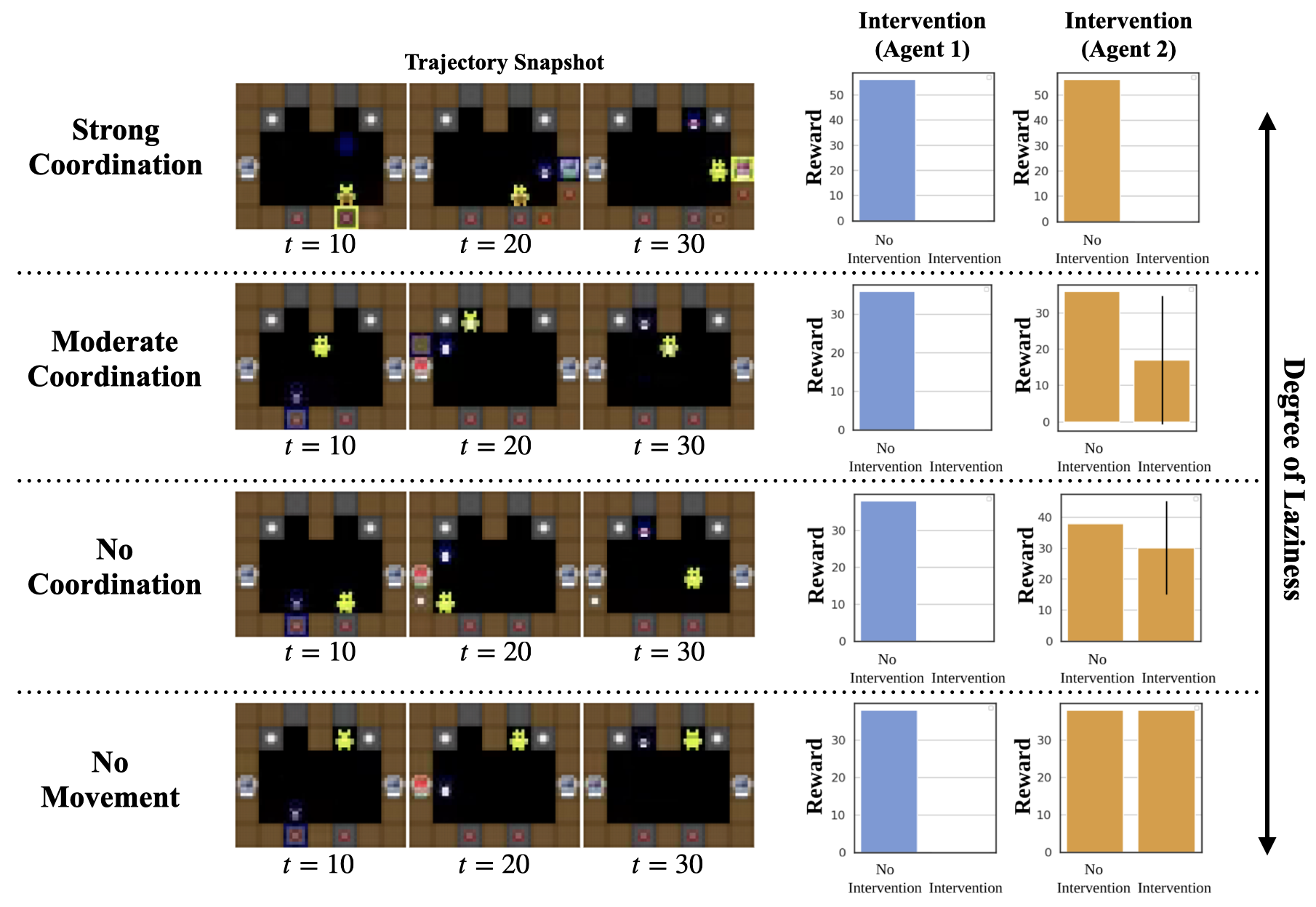}
    \caption{Intervening on an agent's concepts pertaining to itself exposes its \textit{degree of laziness}.}
    \label{fig:interpretability_results_lazy}
\end{figure}
Our method also allows us to test for \emph{coordination failures}. Here we test for lazy agents by replacing each agent's concept estimates about itself, including its own position, orientation, etc. If an agent is acting productively in the environment, removing this information will greatly hinder its performance; and team performance as a whole. If performance does not degrade, the agent likely is not contributing to the task. \Cref{fig:interpretability_results_lazy} shows the results of this test for three policies, each differing in the strength of their contribution to the team.

The magnitude of performance degradation is a direct function of the productivity of each agent. When agents are both contributing (left), concept intervention has a strong negative impact on the performance of the agents. 
In the extreme lazy agent case where one agent does not move over the course of a trajectory (right), reward is not impacted by intervention. Interestingly, this intervention also captures intermediate effects. For example, when only one agent is productive, but both agents share a workspace, there is a small but noticeable drop in performance (middle). Thus, these results not only demonstrate that CBPs give us a unique way of diagnosing lazy agents, but also shows that it can quantify the \emph{degree of laziness} of each agent as a function of performance degradation.

\subsection{Emergence of Strategic Behavior}
\label{sec:interpretability_ctf}
\begin{figure}
    \centering
    \includegraphics[width=0.95\textwidth]{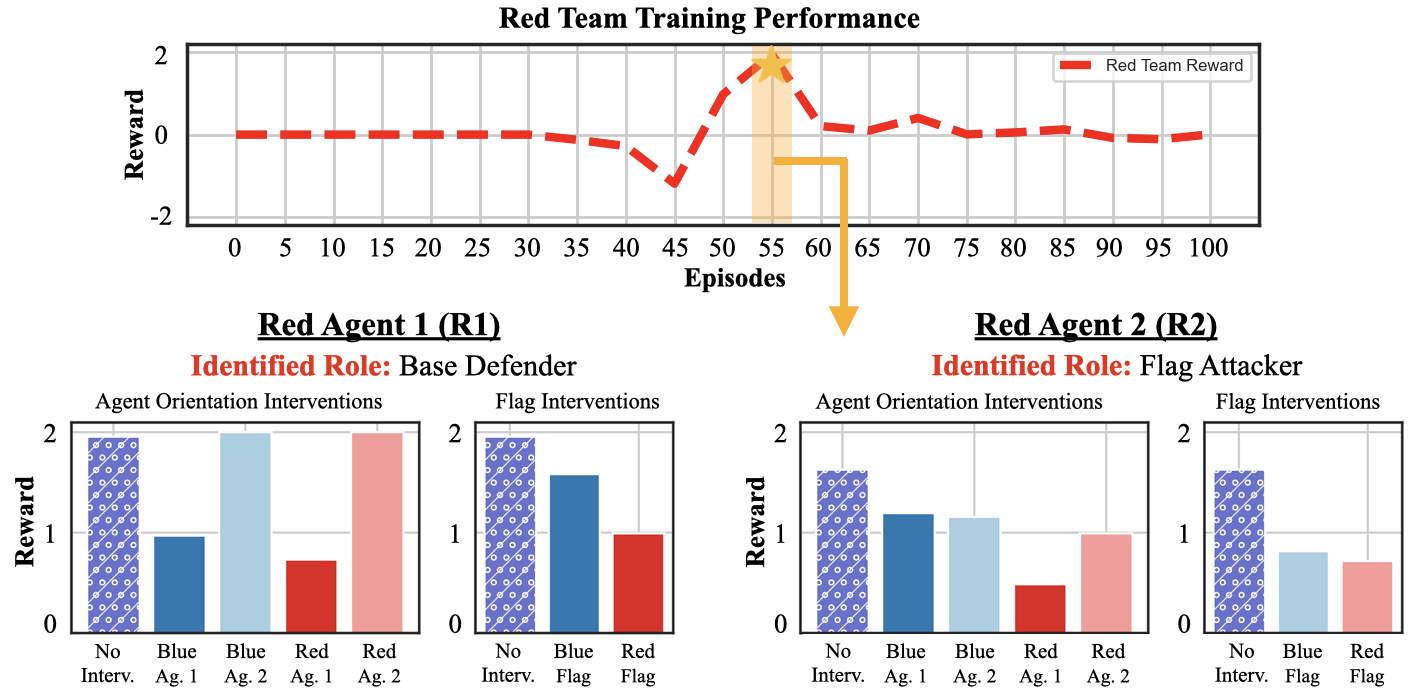}
    \caption{Understanding reward oscillations with CBPs. The red team discovers a dominant strategy at episode $55$. CBP interventions reveal two emergent roles: (i) Red agent R1 is identified as a flag defender agent---it models B1 (the blue attacking agent) and its home flag and is only negatively impacted by interventions over those concepts; (ii) R2 interacts with both blue agents, both flags, and is strongly impacted by intervention. R2 is thus identified as a flag attacking agent.}
    \label{fig:interpretability_ctf_int_results}
\end{figure}

Mixed cooperative-competitive environments induce an automatic skill learning curriculum for MARL agents similar to that of self-play~\citep{silver2017mastering}, which can lead to highly-sophisticated behaviors such as the emergent tool use observed in team hide-and-seek by~\citet{baker2019emergent}. Here we investigate the extent to which CBPs can expose these strategic behaviors during training. We train independent CBP agents in Melting Pot's Capture the Flag environment (outlined further in \cref{sec:background_environments} and \cref{apdx:interpretability_training}) and intermittently perform interventions over agent concept estimates during training. As before, we measure the magnitude of reward degradation.

Results from this analysis are shown in \cref{fig:interpretability_ctf_int_results}. The reward curve represents one oscillation of self-play strategy for the red team---the red team receives negative reward in episode 45, but eventually counters with a dominant strategy in episode 55. Oscillations such as this are indicative of a newly learned strategic behavior by one or both agents~\citep{baker2019emergent}. But what strategic behavior was learned?

Conducting intervention analysis at episode 55 reveals more precisely how the agents behave beyond red agents capturing the flag. Analysis of the first red agent (R1) shows that it is negatively impacted by interventions over itself, the blue team's attacking agent (B1), and its home flag. This pattern is consistent with the behavior of a \textit{base defender}. Interventions over the second red agent (R2) expose a different pattern---the agent is negatively impacted by interventions over both flags and each of the other agents. This pattern suggests that R2 is a \textit{flag attacker}, navigating between the opponent base to take flags and its home base to return them. A qualitative analysis of the agents' behavior is performed in \cref{apdx:interpretability_ctf_qualitative} and confirms these results. Thus, our intervention technique can be used \textit{during training} to augment reward-based analysis and investigate strategic behaviors \textit{as they emerge}.

\subsection{Social Dilemmas: Inter-agent Social Dynamics}
\label{sec:interpretability_social_dynamics}
Social dilemmas further extend the scope of emergent behavioral complexity to include exploitation, free-riding, and public resource sharing. We evaluate our method's ability to uncover such interactions in Melting Pot's Clean Up, an environment in which agents must balance selfish behavior (harvesting fruit) with public service (cleaning a river is necessary for fruit to grow).

As we've seen, a natural way to investigate inter-agent dynamics with CPBs is by looking at outcomes with and without intervention. In \cref{fig:interpretability_cleanup_raw_stats}, we do the same for each agent in Clean Up and find that the rewards of Agent 1 and 2 are correlated under intervention. We also see an increase in Agent 1 and 2's idleness and a decrease in their inter-agent distance. A reasonable conclusion, therefore, is that Agent 1 and Agent 2 are coordinating spatially and require accurate estimations of each other to complete the task. However, as we'll show, our method can uncover a more descriptive \textit{chain of social interactions} that lead to these intervention outcomes. The trajectory in \cref{fig:interpretability_cleanup_raw_stats} confirms this.
\begin{figure*}
    \centering
    \includegraphics[width=0.99\linewidth]{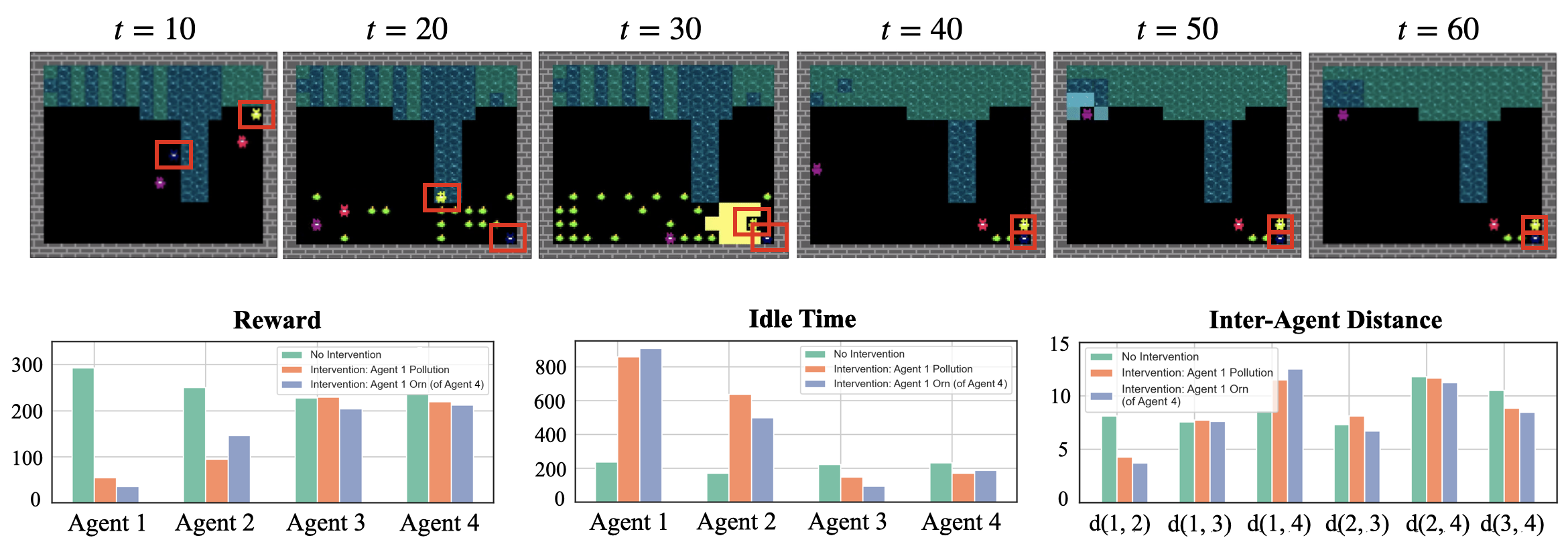}
    \caption{Visualization and raw statistics over interventions identify a collision between Agents 1 and 2 but not a complete chain of social dependencies of the agents.}
    \label{fig:interpretability_cleanup_raw_stats}
\end{figure*}

To investigate, we learn a series of graphs by computing \cref{eqn:interpretability_concept_graph} over all agent-concept interventions using average reward, pairwise inter-agent distance, and the number of idle (non-moving) steps as features, respectively. \cref{fig:interpretability_results_graph_reward} shows the graph computed over rewards (in matrix form), which exposes a number of intriguing concept relationships across agents. First, there are \textit{no} identifiable relationships between Agent 1 and 2 (no edges link interventions over Agent 1’s concepts pertaining to Agent 2 or vice versa). This means that Agent 1 and 2 are in fact not relying directly on each other's concepts. Next, we find a strong bi-directional relationship between Agent 1's closest pollution concept and Agent 1's estimate of Agent 4's orientation. Importantly, this same bi-directional edge also exists in the graphs computed over inter-agent distance and idleness (see \cref{fig:interpretability_graphs}).
\begin{figure*}
    \centering
    \includegraphics[width=0.9\linewidth]{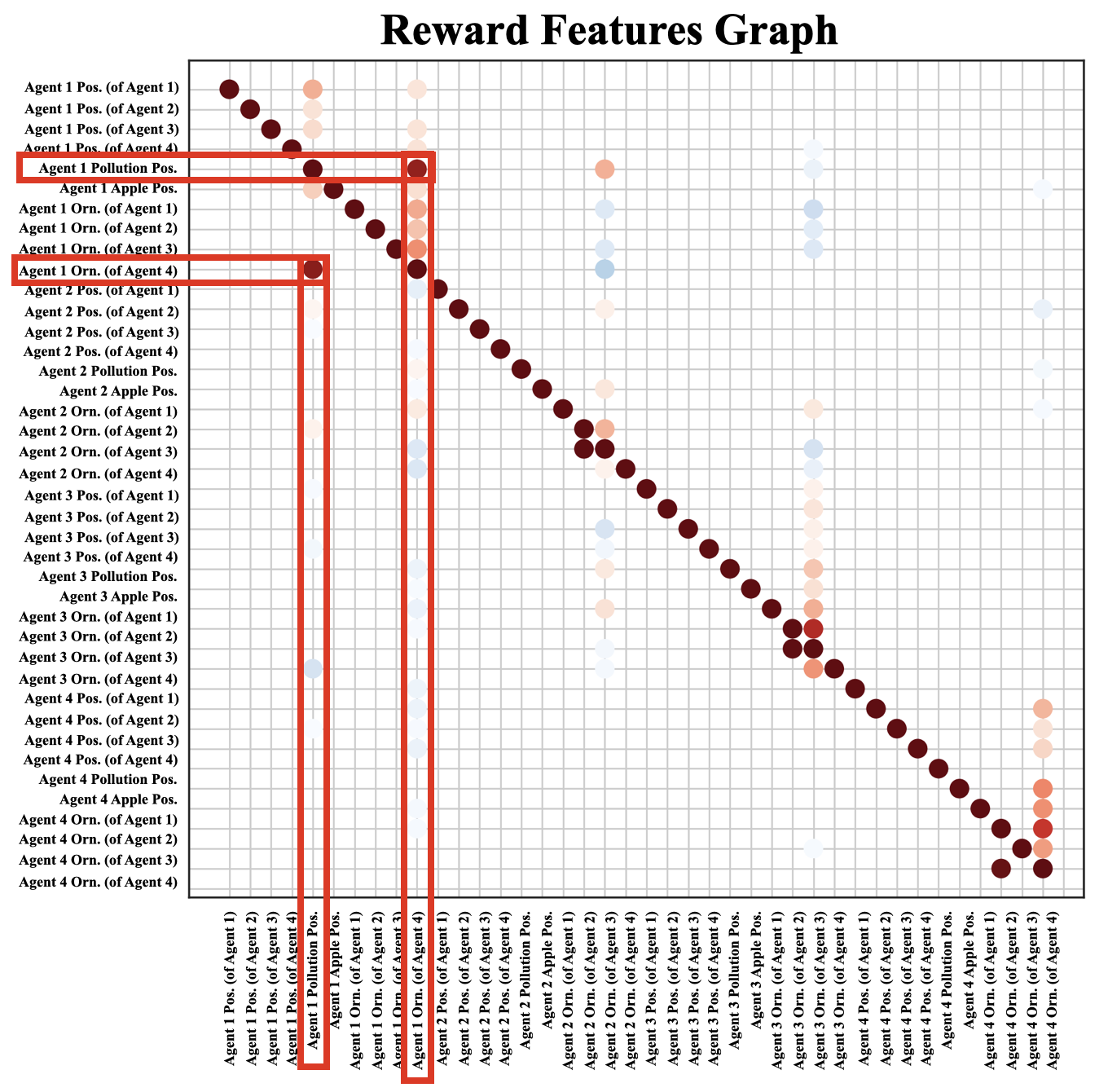}
    \caption{A sparse graph learned over concept interventions with respect to average reward features. A strong bi-directional relationship between two concepts: Agent 1's closest pollution, and Agent 1's estimate of Agent 4's orientation.}
    \label{fig:interpretability_results_graph_reward}
\end{figure*}

The graphs therefore reveal the following: First, the lack of a link between the Agent 1 and 2 in the graph suggests that their relationship is not one of coordination, but rather the result of an incidental interaction in the environment (they are not explicitly modeling each other). Next, the bi-directional edge reveals that Agent 1's performance is largely dependent on its exploitation of the public service of Agent 4. Specifically, Agent 4 runs to and from the river to clean, at which point apples appear in the area of the patch that Agent 4 left. Agent 1 uses both Agent 4's orientation and its estimate of pollution (which Agent 4 changes) to decide when it should run to the area of the patch left by Agent 4 to consume apples. Because Agent 1's policy is so reliant on these two concepts, intervening over either of them causes Agent 1's behavior to collapse; and this happens in such a way that it interrupts Agent 2. Concretely, intervening over both Agent 1's pollution concept and Agent 1's estimate of Agent 4 lead to the same outcome: a collision between Agent 1 and Agent 2.
\cref{fig:interpretability_graphs}).
\begin{figure}
     \centering
     \begin{subfigure}[b]{0.475\textwidth}
         \centering
         \includegraphics[width=\textwidth]{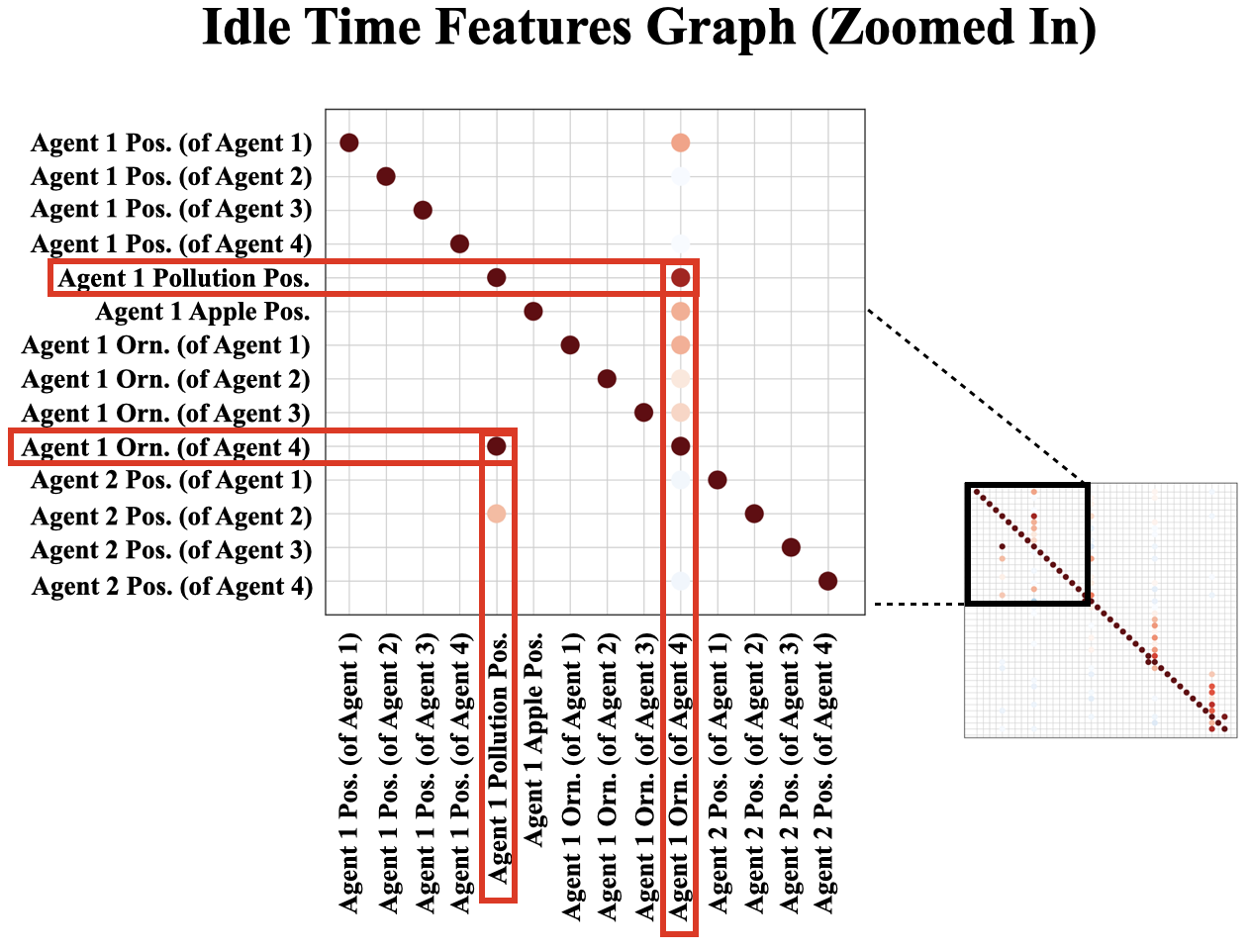}
         \caption{Graph over idle time features.}
         \label{fig:interpretability_results_graph_time}
     \end{subfigure}
     \hfill
     \begin{subfigure}[b]{0.475\textwidth}
         \centering
         \includegraphics[width=\textwidth]{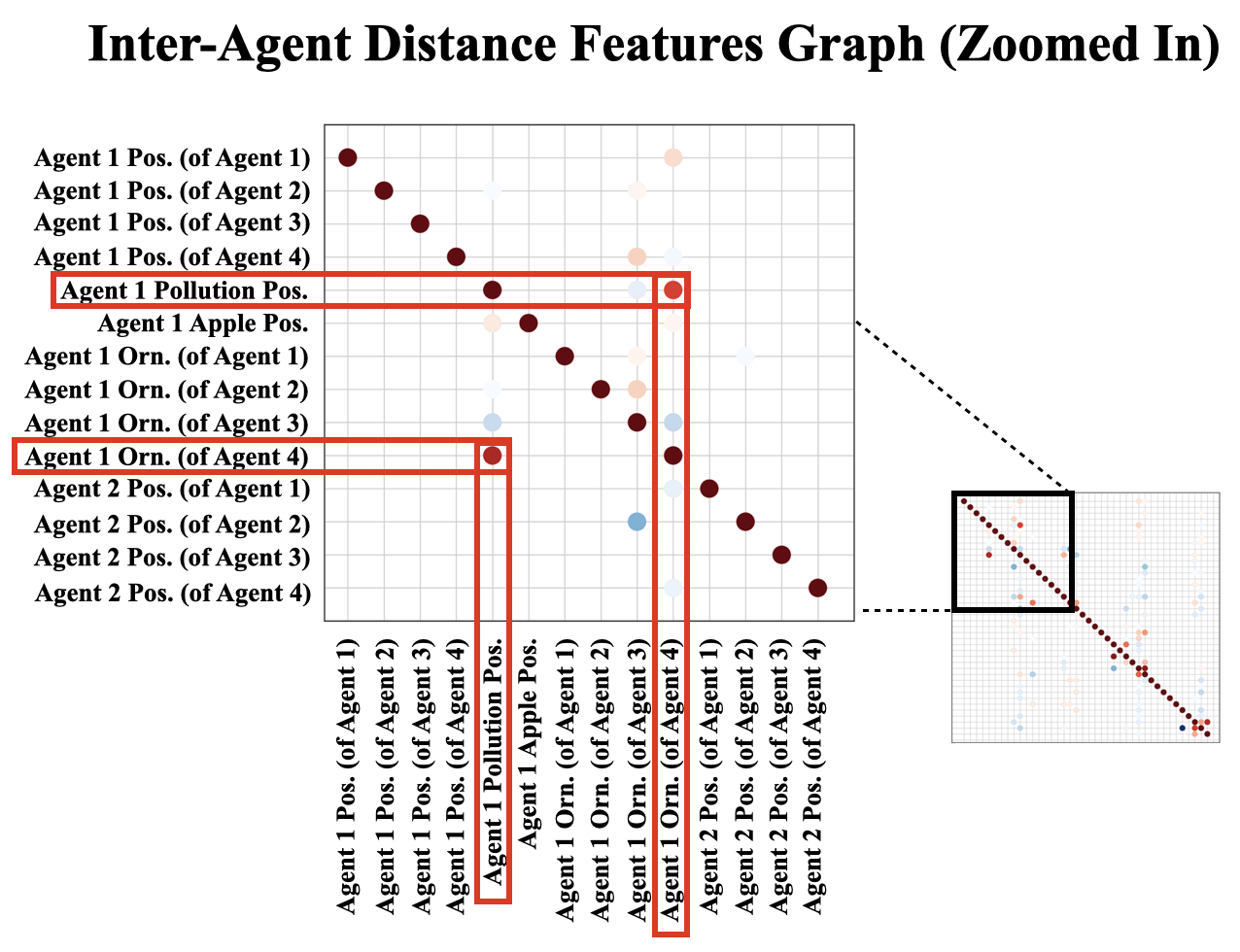}
         \caption{Graph over inter-agent distance features.}
         \label{fig:interpretability_results_graph_distance}
     \end{subfigure}
    \caption{Additional sparse graphs learned over concept interventions with respect to \textbf{a)} inter-agent distance and \textbf{b)} agent idleness features. The same strong bi-directional relationship exists between two concepts: Agent 1's closest pollution, and Agent 1's estimate of Agent 4's orientation.}
    \label{fig:interpretability_graphs}
\end{figure}
In sum, the chain of dependencies is not Agent 1 $\leftrightarrow$ Agent 2, but rather Agent 1 $\rightarrow$ Agent 4 / Pollution $\rightarrow$ Agent 2 and rather than coordinating, we have discovered that Agent 1 and Agent 2 have overfit to brittle behavioral patterns that are easily disrupted. This is in turn why there is a correlation between their rewards under intervention. Our graph learning technique over CBP interventions thus reveals complex inter-agent dynamics that could take humans many hours to detect.

\section{Conclusion and Future Work}
We introduced Concept Bottleneck Policies as an interpretable, concept-based policy learning method for MARL and demonstrated that they are effective for understanding emergent multi-agent behavior. In particular, CBPs support concept intervention, which can be used to identify when a multi-agent team has learned to coordinate, what inter-agent features drive that coordination, and to what extent coordination is required in an environment. Moreover, concept intervention helps expose coordination failures like lazy agents and complex inter-agent dynamics. We discuss broader impacts, limitations, and future work in \cref{apdx:interpretability_discussion}.

There are a number of interesting avenues for future work in the area of concept bottlenecks for MARL. First, taking inspiration from the literature on POMDPs and belief-space planning, we can explore extensions to our architecture where agents can leverage both histories of observations and histories of concepts in training and concept estimation. This may also be combined with architecture improvements, such as giving agents memory (e.g., through recurrent policies). Further, there is room to explore the use of concept prediction accuracy as an intrinsic reward that incentivizes agents to explore subsets of the state space in which they do not predict concepts accurately (thereby improving their concept estimates). An important future work is to extend our simple graph-based analysis technique to more complex graph learning paradigms~\citep{dong2019learning}.

\part{Fair Multi-Agent Behavior}
\label{part:fairness}
In this part of the thesis, we bridge multi-agent learning and algorithmic fairness, using the following questions as motivation: 
\begin{itemize}
    \item \textit{How do we formalize fairness in multi-agent settings? For example, what do sensitive variables mean for an agent within a multi-agent team? Can we reconcile reward as an outcome variable?}
    \item \textit{In cooperative environments, is shared reward enough to incentivize fair behavior to emerge, or does unfair behavior emerge naturally?}
    \item \textit{Can we shape multi-agent behavior during training to achieve fairer outcomes? If so, to what extent does a trade-off between fairness and utility/efficiency exist, and what properties of the MARL problem contribute to that trade-off?}
\end{itemize}
\noindent In chapter \cref{chapter:fairness}, we provide suitable definitions of fairness for multi-agent settings and introduce an actor-critic algorithm for learning provably fair multi-agent policies. We validate our method empirically in in cooperative games, and discuss the fairness-utility trade-off as it applies to multi-agent settings.

\chapter{Cooperative Multi-Agent Fairness and Equivariant Policies}
\label{chapter:fairness}

\section{Introduction}
\label{sec:fairness_intro}
Algorithmic fairness is an increasingly important sub-domain of AI. As statistical learning algorithms continue to automate decision-making in crucial areas such as lending \cite{fuster2020predictably}, healthcare \cite{potash2015predictive}, and education \cite{dorans2016fairness}, it is imperative that the performance of such algorithms does not rely upon sensitive information pertaining to the individuals for which decisions are made (e.g. race, gender). Despite its growing importance, fairness research has largely targeted prediction-based problems, where decisions are made for one individual at one time \cite{mitchell2018prediction}. Though recent studies have extended fairness to the multi-agent case \cite{jiang2019learning}, such work primarily considers social dilemmas in which team utility is in obvious conflict with the local interests of each team member \cite{leibo2017multi, rapoport1974prisoner, van2013psychology}.

Many real-world problems, however, must weigh the fairness implications of team behavior in the presence of a single overarching goal. In-line with recent work that has highlighted the importance of leveraging multi-agent learning to study socio-economic challenges such as taxation, social planning, and economic policy \cite{zheng2020ai}, we posit that understanding the range of team behavior that emerges from single-objective utility maximization is crucial for the development of fair multi-agent systems. For this reason, we study fairness in the context of \textit{cooperative multi-agent settings}. Cooperative multi-agent fairness differs from traditional game-theoretic interpretations of fairness (e.g. resource allocation \cite{elzayn2019fair, zhang2014fairness}, social dilemmas \cite{leibo2017multi}) in that it seeks to understand the fairness implications of emergent coordination learned by multi-agent teams that are bound by a shared reward. Cooperative multi-agent fairness therefore reframes the question---``Will agents cooperate or defect, given the choice between local and team interests?"---to a related but novel question---``Given the incentive to work together, do agents learn to coordinate effectively and fairly?"

Experimentally, we target pursuit-evasion (i.e. predator-prey) as a test-bed for cooperative multi-agent fairness. Pursuit-evasion allows us to simulate a number of important components of socio-economic systems, including: (i) Shared objectives: the overarching goal of pursuers is to capture an evader;
(ii) Agent skill: the speed of the pursuers relative the evader serves as a proxy for skill; (iii) Coordination: success requires sophisticated cooperation by the pursuers. Using pursuit-evasion, we study the fairness implications of behavior that emerges under variations of these ``socio-economic" parameters. Similar to prior work \cite{lowe2017multi, mordatch2018emergence, grupen2020low}, we cast pursuit-evasion as a multi-agent reinforcement learning (RL) problem. 

Our first result highlights the importance of shared objectives to cooperation. In particular, we compare policies learned when pursuers share in team success (mutual reward) to those learned when pursuers do not share reward (individual reward). We find that sophisticated coordination only emerges when pursuers are bound by mutual reward. Given individual reward, pursuers are not properly incentivized to work together. However, though mutual reward aides coordination, it does not specify how to coordinate fairly. In our experiments, we find that naive, unconstrained maximization of mutual reward yields unfair individual outcomes for cooperative teammates. In the context of pursuit-evasion, the optimal strategy is a form of \textit{role assignment}---the majority of pursuers act as supporting agents, shepherding the evader to one designated ``capturer" agent. Solving this issue is the subject of the rest of our analysis.

Addressing this form of unfair emergent coordination requires connecting fairness to multi-agent learning settings. To do this, we first introduce \textit{team fairness}, a group-based fairness measure inspired by demographic parity \cite{dwork2012fairness, feldman2015certifying}. Team fairness requires the distribution of a team's reward to be equitable across sensitive groups. We then show that it is possible to enforce team fairness during policy optimization by transforming the team's joint policy into an equivariant map. We prove that equivariant policies yield fair reward distributions under assumptions of agent homogeneity. We refer to our multi-agent learning strategy as \textit{Fairness through Equivariance} (Fair-E) and demonstrate its effectiveness empirically in pursuit-evasion experiments.

Despite achieving fair outcomes, Fair-E represents a binary switch---one can either choose fairness (at the expense of utility) or utility (at the expense of fairness). In many cases, however, it is advantageous to modulate between fairness and utility. To this end, we introduce a soft-constraint version of Fair-E that incentivizes equivariance through regularization. We refer to this method as \textit{Fairness through Equivariance Regularization} (Fair-ER) and show that it is possible to tune fairness constraints over multi-agent policies by adjusting the weight of equivariance regularization. Moreover, we show empirically that Fair-ER reaches higher levels of utility than Fair-E while achieving fairer outcomes than non-equivariant policy learning.

Finally, as in both prediction-based settings \cite{corbett2017algorithmic, zhao2019inherent} and in traditional multi-agent variants of fairness \cite{okun2015equality, le1990equity}, it is important to understand the ``cost" of fairness. We present novel findings regarding the fairness-utility trade-off for cooperative multi-agent settings. Specifically, we show that the magnitude of the trade-off depends on the skill level of the multi-agent team. When agent skill is high (making the task easier to solve), fairness comes with no trade-off in utility, but as skill decreases (making the task more difficult), gains in team fairness are increasingly offset by decreases in team utility.
\newline

\noindent \textbf{Preview of Contributions}
In sum, our work offers the following contributions:
\begin{enumerate}
    \item We show that mutual reward is critical to multi-agent coordination. In pursuit-evasion, agents trained with mutual reward learn to coordinate effectively, whereas agents trained with individual reward do not.
    \item We connect fairness to cooperative multi-agent settings. We introduce team fairness as a group-based fairness measure for multi-agent teams that requires equitable reward distributions across sensitive groups.
    \item We introduce Fairness through Equivariance (Fair-E), a novel multi-agent strategy leveraging equivariant policy learning. We prove that Fair-E achieves fair outcomes for individual members of a cooperative team.
    \item We introduce Fairness through Equivariance Regularization (Fair-ER) as a soft-constraint version of Fair-E. We show that Fair-ER reaches higher levels of utility than Fair-E while achieving fairer outcomes than non-equivariant learning.
    \item We present novel findings regarding the fairness-utility trade-off for cooperative settings. Specifically, we show that the magnitude of the trade-off depends on agent skill---when agent skill is high, fairness comes for free; whereas with lower skill levels, fairness is increasingly expensive.
\end{enumerate}

\section{Related Work}
\label{sec:fairness_related_work}
At a high-level, the prediction-based fairness literature can be split into two factions: individual fairness and group fairness. Introduced by \citet{dwork2012fairness}, individual fairness posits that two individuals with similar features should be classified similarly (i.e. similarity in feature-space implies similarity in decision-space). Such approaches rely on task-specific distance metrics with which similarity can be measured \cite{barocas2019fairmlbook, chouldechova2018frontiers}. Group fairness, on the other hand, attempts to achieve outcome consistency across sensitive groups. This idea has given rise to a number of methods such as statistical/demographic parity \cite{feldman2015certifying, johndrow2019algorithm, kamiran2009classifying, zafar2017fairness}, equality of opportunity \cite{hardt2016equality}, and calibration \cite{kleinberg2016inherent}. Recent work has extended fairness to the RL setting to consider the feedback effects of decision-making \cite{jabbari2017fairness, wen2021algorithms}.

In multi-agent systems, fairness is typically studied in game-theoretic settings in which individual payoffs and overall group utility are in obvious conflict \cite{de2005priority}---such as resource allocation \cite{elzayn2019fair, zhang2014fairness} and social dilemmas \cite{leibo2017multi, rapoport1974prisoner, van2013psychology}. In multi-agent RL settings, these tensions have been addressed through myriad techniques, including reward shaping \cite{peysakhovich2017prosocial}, intrinsic reward \cite{wang2018evolving}, parameterized inequity aversion \cite{hughes2018inequity}, and hierarchical learning \cite{jiang2019learning}. Also related is the Shapley value: a method for sharing surplus across a coalition based on one’s contributions to the coalition \cite{shapley201617}. Shapley value-based credit assignment techniques have recently been shown to stabilize learning and achieve fairer outcomes when incorporated into the multi-agent RL problem \cite{wang2020shapley, li2021shapley}.

Our work differs from this prior work in two key ways. First we target fully-cooperative multi-agent settings \cite{hao2016fairness} in which fairness implications emerge naturally in the presence of a single overarching goal (i.e. mutual reward). In this fully-cooperative setting, individual and team incentives are not in obvious conflict. Our motivation for studying fully-cooperative team objectives follows from recent work that highlights the role of multi-agent learning in real-world problems characterized by shared objectives, including taxation and economic policy \cite{zheng2020ai}. Moreover, we study modifications to the utility-maximization objective that yield fairer outcomes by incentivizing agents to change their behavior, rather than redistributing outcomes after-the-fact. Most relevant is \citet{siddique2020learning} and \citet{zimmer2020learning}, which introduce a class of algorithms that successfully achieve fair outcomes for multi-agent teams through pre-defined social welfare functions that encode specific fairness principles. Our work, conversely, introduces task-agnostic methods for incentivizing fairness through both hard-constraints on agent policies and soft-constraints (i.e. regularization) \cite{liu2018delayed} on the RL objective.

Finally, discussion of the fairness-utility (or fairness-efficiency) trade-off has a long history in game-theoretic multi-agent settings \cite{okun2015equality, le1990equity, bertsimas2012efficiency, joe2013multiresource, bertsimas2011price} and is also prevalent throughout the prediction-based fairness literature \cite{menon2018cost}. Existing work has shown both theoretically \cite{calders2009building, kleinberg2016inherent, zhao2019inherent} and empirically \cite{dwork2012fairness, feldman2015certifying, kamiran2009classifying, lahoti2019operationalizing, pannekoek2021investigating} that gains in fairness come at the cost of utility. Our discussion of the fairness-utility trade-off is most similar to \citet{corbett2017algorithmic} in this regard, as we study the trade-off through the lens of constrained vs. unconstrained optimization. However, we take this trade-off a step further, outlining a relationship between fairness, utility, and agent skill that is not present in prior work.

\section{Preliminaries}
\label{sec:fairness_preliminaries}

\paragraph{Mutual Information}
\label{sec:fairness_mi}
Given random variables $X_1 {\sim} P_{X_1}$ and $X_2 {\sim} P_{X_2}$ with joint distribution $P_{X_1 X_2}$, mutual information is defined as the Kullback-Leibler (KL-) divergence between the joint $P_{X_1X_2}$ and the product of the marginals $P_{X_1} \otimes P_{X_2}$:

\begin{equation}
    I(X_1;X_2) := D_{KL}(P_{X_1 X_2} || P_{X_1} \otimes P_{X_2})
    \label{eqn:fairness_kl_div}
\end{equation}

\noindent Mutual information quantifies the dependence between $X_1$ and $X_2$ where, in \cref{eqn:fairness_kl_div}, larger divergence represents stronger dependence. Importantly, mutual information can also be represented as the decrease in entropy of $X_1$ when introducing $X_2$:

\begin{equation}
    I(X_1;X_2) := H(X_1) - H(X_1 \mid X_2)
    \label{eqn:fairness_mi_entropy}
\end{equation}

\paragraph{Equivariance}
Let $g_1$ and $g_2$ be G-sets of a group $G$ and $\sigma$ be a symmetry transformation over $G$. Then a function $f:g_1 \rightarrow g_2$ is equivariant with respect to $\sigma$ if the commutative relationship $f(\sigma \cdot x) = \sigma \cdot f(x)$ holds. Equivariance in the context of RL implies that separate policies will take the same actions under permutations of state space.
\begin{figure*}[t!]
    \centering
    \includegraphics[width=0.99\textwidth]{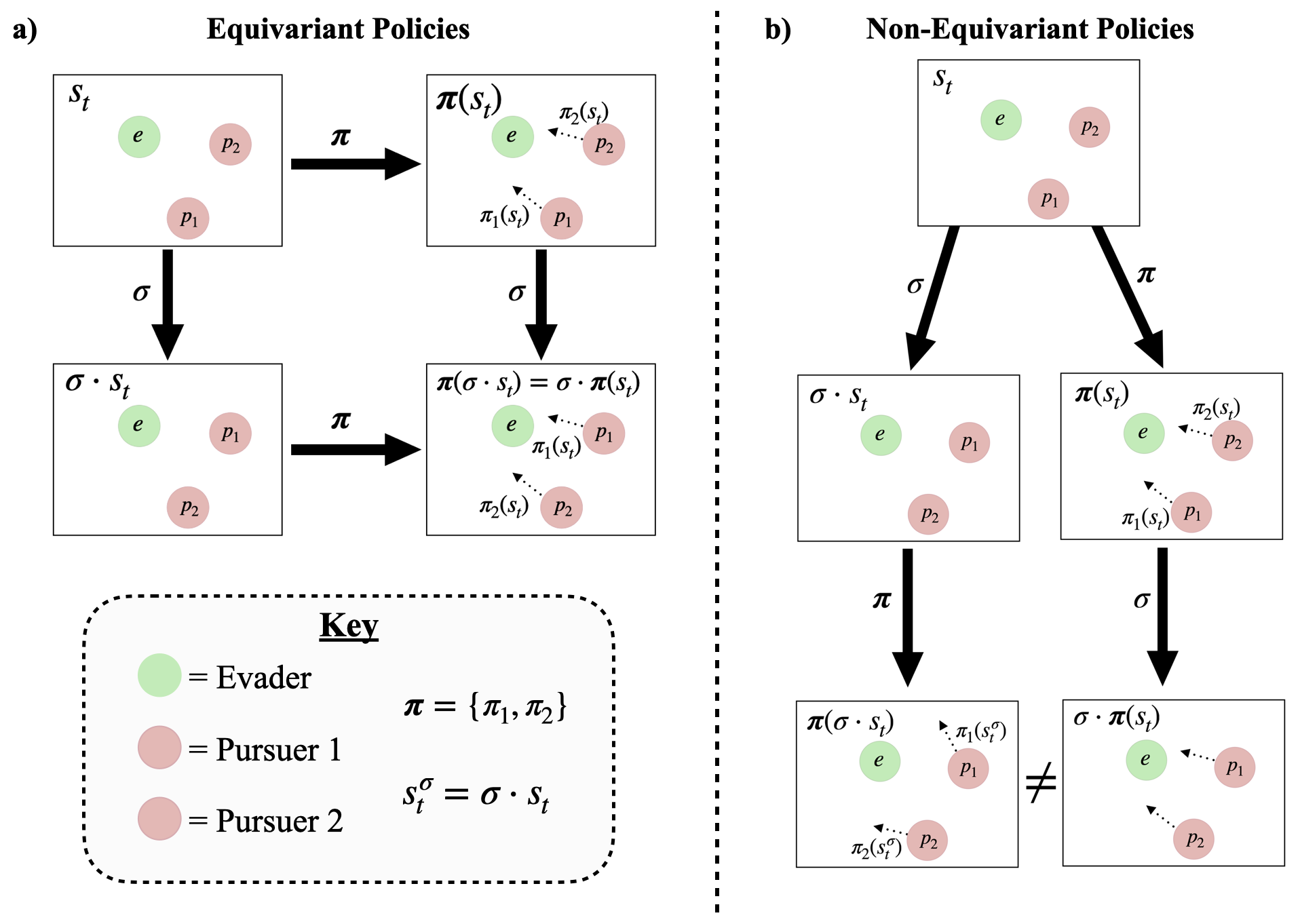}
    \caption{Snapshot of a pursuit-evasion game. Pursuers $p_1$ and $p_2$ (red) chase an evader $e$ (green) with the goal of capturing it. Given a state $s_t$, each pursuer selects its next heading (dotted arrow) from its policy, yielding the joint policy $\boldsymbol{\pi} {=} \{\pi_1(s_t), \pi_2(s_t)\}$. Evader action selection is omitted for clarity. \textbf{a)} For an equivariant joint policy, applying the transformation $\sigma$ to the state $s_t$ (producing $\sigma \cdot s_t$ which, in this example, swaps the positions of $p_1$ and $p_2$) and running the policy $\boldsymbol{\pi}(\sigma \cdot s_t)$ is equivalent to running the joint policy first and transforming the joint action afterwards (i.e. the commutative relationship $\boldsymbol{\pi}(\sigma \cdot s) = \sigma \cdot \boldsymbol{\pi}(s)$ holds). \textbf{b)} This commutative relationship does not necessarily hold for non-equivariant policies.}
    \label{fig:fairness_equivariance}
\end{figure*}

\paragraph{Pursuit-Evasion}
Here we again leverage the pursuit-evasion environment experimentally (see \cref{sec:background_environments_cooperative}). As in \cref{sec:implicit_comm_prelims}, we assume the evader to be part of the environment, defined by the potential-field policy:
\begin{equation}
    \label{eqn:fairness_evader_objective}
    U(\theta_e) = \sum_i \bigg(\frac{1}{r_i}\bigg) \cos(\theta_e - \tilde{\theta}_i)
\end{equation}

where $r_i$ and $\tilde{\theta}_i$ are the L2-distance and relative angle between the evader and the $i$-th pursuer, respectively, and $\theta_e$ is the heading of the evader. Intuitively, $U(\theta_e)$ pushes the evader away from pursuers, taking the largest bisector between any two when possible. The goal of the pursuers---to capture the evader as quickly as possible---is mirrored in the reward function, where $r(s_t, a_t) {=} 50.0$ if the evader is captured and $r(s_t, a_t) {=} -0.1$ otherwise. Note that $\lvert \vec{v}_p \rvert$ serves as a proxy for agent skill level. When $\lvert \vec{v}_p \rvert > \lvert \vec{v}_e \rvert$, pursuers are skilled enough to capture the evader on their own, whereas $\lvert \vec{v}_p \rvert \leq \lvert \vec{v}_e \rvert$ requires that pursuers work together.

\section{Method: Equivariant Policy Learning}
\label{sec:fairness_fair_marl}
In this section, we present a novel interpretation of fairness for cooperative multi-agent teams. We then introduce our proposed method---Fairness through Equivariance (Fair-E)---and prove that it yields fair outcomes. Finally, we present Fairness through Equivariance Regularization (Fair-ER) as a soft-constraint version of Fair-E. 

\paragraph{Notation}
Let $n$ be the number of agents in a cooperative team. We describe each agent $i$ by variables $v_i = (z_i, x_i)$, consisting of sensitive variables $z_i \in Z$ and non-sensitive variables $x_i \in X$. In team settings, we define non-sensitive variables $x_i$ to be any variables that affect agent $i$’s performance on the team; such as maximum speed. Other variables that should not impact team performance, such as an agent’s identity or belonging to a minority group, are defined as sensitive variables $z_i$. We define fairness in terms of reward distributions $R$---where each $\boldsymbol{r} \in R$ is a vectorial team reward and each component $r_i$ is agent $i$'s contribution to $\boldsymbol{r}$. In the following definitions, let $I(R;Z)$ be the mutual information between reward distributions $R$ and sensitive variables $Z$.

\subsection{Team Fairness}
\label{sec:fairness_team_fairness}
 We now define team fairness, a group-based fairness measure for multi-agent learning.
\begin{definition}[Exact Team Fairness]
    \label{def:fairness_exact_fairness}
    A set of cooperative agents achieves exact team fairness if $\textrm{I}(R;Z) = 0$.
\end{definition}

\begin{definition}[Approximate Team Fairness]
    \label{def:fairness_approx_fairness}
     A set of cooperative agents achieves approximate team fairness if $\textrm{I}(R;Z) \leq \epsilon$ for some $\epsilon > 0$.
\end{definition}

\noindent Team fairness connects cooperative multi-agent learning to group-based fairness, as $\textrm{I}(R;Z) = 0$ is equivalent to requiring $R \perp Z$ \cite{barocas2019fairmlbook}.

\subsection{Fairness Through Equivariance}
\label{sec:fairness_equivariance}
To enforce team fairness during policy optimization, we introduce a novel multi-agent learning strategy. The key to our approach is equivariance: by enforcing parameter symmetries \cite{ravanbakhsh2017equivariance} in each agent $i$'s policy network $\pi_{\phi_i}$, we show that equivariance propagates through the multi-agent RL problem. In particular, we show that the joint policy $\boldsymbol{\pi} = \{\pi_{\phi_1}, ..., \pi_{\phi_n} \}$ is an equivariant map with respect to permutations over state and action space. Further, we show that equivariance in policy-space begets equivariance in trajectory-space; namely, the terminal state $s_T$ following a multi-agent trajectory is equivariant to that trajectory's initial state $s_1$. Finally, we prove that equivariance in multi-agent policies and trajectories yields exact team fairness. A comparison of equivariant vs. non-equivariant joint policies is provided in \cref{fig:fairness_equivariance}.

In the proofs that follow, we assume: (i) homogeneity across agents on the team---i.e. agents are identical in their non-sensitive variables $x$; (ii) the distribution of agent positions satisfies exchangeability. Finally, though our derivations utilize general (stochastic) policies, we provide equivalent proofs for deterministic policies in \cref{apdx:fairness_apdx_deterministic_proofs}.

\begin{theorem}
    \label{thm:fairness_policy_eqv_map}
    If individual policies $\pi_{\phi_i}$ are symmetric, then the joint policy $\boldsymbol{\pi} = \{\pi_{\phi_1}, ..., \pi_{\phi_n} \}$ is an equivariant map.
\end{theorem}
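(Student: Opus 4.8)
The plan is to make precise the two structures in play — the group action of agent permutations on state and joint-action space, and the meaning of a ``symmetric'' individual policy — and then verify the commutation relation $\boldsymbol{\pi}(\sigma \cdot s) = \sigma \cdot \boldsymbol{\pi}(s)$ by a direct index-chasing argument. Let $G = S_n$ be the symmetric group on the $n$ agents, acting on a state $s = (q_1, \dots, q_n, q_e)$ by permuting the agent slots, $(\sigma \cdot s)_i = q_{\sigma^{-1}(i)}$ (the evader coordinate $q_e$ is held fixed), and acting on a joint action $\boldsymbol{a} = (a_1, \dots, a_n)$ in the same way, $(\sigma \cdot \boldsymbol{a})_i = a_{\sigma^{-1}(i)}$. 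Fixing this left-action convention once — so that $(\sigma\tau)\cdot s = \sigma \cdot (\tau \cdot s)$ — is what makes the later coordinate matching go through. Under the homogeneity assumption (agents identical in their non-sensitive variables $x$), ``$\pi_{\phi_i}$ symmetric'' is read as two conditions holding simultaneously: all agents share a single policy map applied with agent $i$ placed in the ego slot, and that map is invariant to the ordering of the teammate features it receives. The exchangeability assumption on the position distribution guarantees that the permuted input $\sigma \cdot s$ is itself a legitimate state, so the outputs compared below are all well-defined.

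First I would isolate the key lemma: for every agent $i$ and every $\sigma \in G$,
\begin{equation*}
    \pi_{\phi_i}(\sigma \cdot s) = \pi_{\phi_{\sigma^{-1}(i)}}(s).
\end{equation*}
This is the step that consumes both halves of the symmetry hypothesis. Applying $\sigma$ moves the contents of slot $\sigma^{-1}(i)$ into slot $i$, so the ego feature seen by agent $i$ in $\sigma \cdot s$ is exactly the ego feature that agent $\sigma^{-1}(i)$ saw in $s$; shared parameters (homogeneity) together with invariance to teammate relabeling then force the two outputs to coincide, since the multiset of teammate features is unchanged by $\sigma$.

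With the lemma in hand, the theorem follows by reading the equivariance relation off coordinate by coordinate:
\begin{equation*}
    [\boldsymbol{\pi}(\sigma \cdot s)]_i = \pi_{\phi_i}(\sigma \cdot s) = \pi_{\phi_{\sigma^{-1}(i)}}(s) = [\boldsymbol{\pi}(s)]_{\sigma^{-1}(i)} = [\sigma \cdot \boldsymbol{\pi}(s)]_i,
\end{equation*}
which holds for all $i$, giving $\boldsymbol{\pi}(\sigma \cdot s) = \sigma \cdot \boldsymbol{\pi}(s)$ and hence that $\boldsymbol{\pi}$ is equivariant in the sense of the commutative relation defining equivariance. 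For general (stochastic) policies I would run the identical argument at the level of distributions, phrasing the lemma as $\pi_{\phi_i}(a \mid \sigma \cdot s) = \pi_{\phi_{\sigma^{-1}(i)}}(a \mid s)$ and concluding that the joint conditional satisfies $\boldsymbol{\pi}(\sigma \cdot \boldsymbol{a} \mid \sigma \cdot s) = \boldsymbol{\pi}(\boldsymbol{a} \mid s)$; the deterministic case promised in \cref{apdx:fairness_apdx_deterministic_proofs} is the specialization to point masses.

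The main obstacle here is definitional and bookkeeping-related rather than analytic: pinning down ``symmetric'' so that the lemma is both true and strong enough, and keeping the direction of the permutation action consistent across state space and action space, since a flipped convention ($\sigma$ versus $\sigma^{-1}$) silently breaks the coordinate-matching chain above. I would therefore commit to the left-action convention at the outset and reuse it verbatim throughout. A secondary subtlety is justifying that invariance to teammate relabeling is the intended reading of ``parameter symmetry'' in the sense of \cite{ravanbakhsh2017equivariance}, and that the permuted state lies in the support on which the shared map is defined — it is precisely the exchangeability assumption that licenses the latter.
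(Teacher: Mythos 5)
Your proof is correct and follows essentially the same route as the paper's: both verify the commutative relation $\boldsymbol{\pi}(\sigma \cdot s) = \sigma \cdot \boldsymbol{\pi}(s)$ directly, with your key lemma $\pi_{\phi_i}(\sigma \cdot s) = \pi_{\phi_{\sigma^{-1}(i)}}(s)$ and the coordinate chase supplying exactly the step the paper's one-line chain ($\boldsymbol{\pi}(s^{\sigma}) = \boldsymbol{a}^{\sigma}$) asserts without justification. Your explicit reading of ``symmetric'' as shared parameters applied through an ego-centric slot plus invariance to teammate relabeling is the charitable interpretation needed to make the paper's claim both true and non-vacuous, so no gap remains.
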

\begin{proof}
    Let $\sigma$ be a permutation operator that, when applied to a vector (such as a state $s_t$ or action $\boldsymbol{a}_t$), produces a permuted vector ($\sigma \cdot s_t = s^\sigma_t$ or $\sigma \cdot \boldsymbol{a}_t = \boldsymbol{a}^\sigma_t$, respectively). Under parameter symmetry (i.e. $\phi_1$=$\phi_2{= \cdots =}\phi_n$), we have:
    \begin{equation}
        \label{eqn:fairness_equivariant_policy}
        \boldsymbol{\pi}(\sigma \cdot s) 
        = \boldsymbol{\pi}(s^{\sigma}) 
        = \boldsymbol{a}^\sigma 
        = \sigma \cdot \boldsymbol{a} 
        = \sigma \cdot \boldsymbol{\pi}(s)
    \end{equation}
    \noindent where the commutative relationship $\boldsymbol{\pi}(\sigma \cdot s) = \sigma \cdot \boldsymbol{\pi}(s)$ implies that $\boldsymbol{\pi}$ is an equivariant map. Commutativity here is crucial---\cref{eqn:fairness_equivariant_policy} and therefore \cref{thm:fairness_states_eqv} and \cref{thm:fairness_deterministic_sym_fair} do not hold for non-equivariant policies (see \cref{fig:fairness_equivariance}b).
\end{proof}

\begin{theorem}
    \label{thm:fairness_states_eqv}
    Let $p^\pi(s \rightarrow s', k)$ be the probability of transitioning from state $s$ to state $s'$ in $k$ steps \cite{sutton2018reinforcement}.
    Given that the joint policy $\boldsymbol{\pi}$ is an equivariant map, it follows that $p^{\boldsymbol{\pi}}(s_1 \rightarrow s_T, T) = p^{\boldsymbol{\pi}}(s_1^\sigma \rightarrow s_T^\sigma, T)$.
\end{theorem}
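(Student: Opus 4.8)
The plan is to reduce the $T$-step statement to a one-step statement and then propagate it by induction via the Chapman--Kolmogorov decomposition. The one-step claim to establish is $p^{\boldsymbol{\pi}}(\sigma \cdot s \rightarrow \sigma \cdot s', 1) = p^{\boldsymbol{\pi}}(s \rightarrow s', 1)$, and securing it requires two ingredients: the equivariance of the joint policy (already available from \cref{thm:fairness_policy_eqv_map}) and a matching symmetry of the environment's transition kernel. First I would write the one-step transition explicitly by marginalizing over the joint action,
\[
    p^{\boldsymbol{\pi}}(s \rightarrow s', 1) = \sum_{\boldsymbol{a}} \boldsymbol{\pi}(\boldsymbol{a} \mid s)\, P(s' \mid s, \boldsymbol{a}),
\]
and then lift the policy equivariance to the level of distributions: since $\boldsymbol{\pi}(\sigma \cdot s) = \sigma \cdot \boldsymbol{\pi}(s)$, the mass assigned to the permuted action under the permuted state is unchanged, i.e. $\boldsymbol{\pi}(\sigma \cdot \boldsymbol{a} \mid \sigma \cdot s) = \boldsymbol{\pi}(\boldsymbol{a} \mid s)$.

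Combining this with the kernel symmetry $P(\sigma \cdot s' \mid \sigma \cdot s, \sigma \cdot \boldsymbol{a}) = P(s' \mid s, \boldsymbol{a})$, I would perform the change of variables $\boldsymbol{a} \mapsto \sigma \cdot \boldsymbol{a}$ in the sum defining $p^{\boldsymbol{\pi}}(\sigma \cdot s \rightarrow \sigma \cdot s', 1)$. Because $\sigma$ is a permutation, hence a bijection on the action space, reindexing the sum is valid and each factor collapses back to its unpermuted counterpart, yielding the base case. For the inductive step I would invoke the recursion
\[
    p^{\boldsymbol{\pi}}(s \rightarrow s', k) = \sum_{s''} p^{\boldsymbol{\pi}}(s \rightarrow s'', 1)\, p^{\boldsymbol{\pi}}(s'' \rightarrow s', k-1),
\]
substitute the intermediate state $s'' = \sigma \cdot u$, and apply the base case together with the inductive hypothesis at $k-1$, again using that $\sigma$ reindexes the sum over intermediate states bijectively. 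Setting $k = T$, $s = s_1$, and $s' = s_T$ then gives the stated identity.

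The hard part will be the symmetry of the transition kernel, $P(\sigma \cdot s' \mid \sigma \cdot s, \sigma \cdot \boldsymbol{a}) = P(s' \mid s, \boldsymbol{a})$, which is \emph{not} implied by policy equivariance and must instead be drawn from the standing assumptions of agent homogeneity and exchangeability of agent positions. The key observation is that $\sigma$ merely relabels which agent occupies which slot; since all pursuers share identical dynamics, such a relabeling cannot alter the probability of any outcome. I would isolate this as a short lemma stating that the environment dynamics commute with $\sigma$, and flag that in the continuous-state pursuit-evasion setting $\sigma$ acts as a coordinate permutation with unit-magnitude Jacobian, so the argument carries over verbatim with sums replaced by integrals and the change of variables remaining measure-preserving.
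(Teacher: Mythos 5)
Your proposal is correct and follows essentially the same route as the paper's proof: both arguments combine policy equivariance (\cref{thm:fairness_policy_eqv_map}) with the transition-kernel symmetry $P(s^\sigma_{t+1} \mid s^\sigma_t, \boldsymbol{a}^\sigma_t) = P(s_{t+1} \mid s_t, \boldsymbol{a}_t)$ drawn from agent homogeneity, and then propagate through the trajectory by marginalizing intermediate states -- your induction via Chapman--Kolmogorov is just a reorganization of the paper's direct product-and-integrate argument. Your version is slightly more careful in two respects the paper glosses over: you explicitly marginalize over the joint action in the one-step kernel and justify the reindexing as a measure-preserving bijection, and you correctly observe that the kernel symmetry is an independent assumption about the environment rather than a consequence of policy equivariance.
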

\begin{proof}
    It follows from our assumption of agent homogeneity that permuting a state $\sigma \cdot s_t$, which (from \cref{thm:fairness_policy_eqv_map}) permutes action selection $\sigma \cdot \boldsymbol{a}_t$, also permutes the environment's transition probabilities:
    \begin{equation*}
        P(s_{t+1} \mid s_t, \boldsymbol{a}_t) = P(s_{t+1}^\sigma \mid s_t^\sigma, \boldsymbol{a}_t^\sigma)
    \end{equation*}
    This is because, from the environment's perspective, a state-action pair is indistinguishable from the state-action pair generated by the same agents after swapping their positions and selected actions. Assuming a uniform distribution of start-states $P_\emptyset$, we also have $P_\emptyset(s_1) = P_\emptyset(s_1^\sigma)$. Recall the probability of a trajectory from \cref{eqn:background_traj_prob}. Given the equivariant function $\boldsymbol{\pi}$ and the two equalities above, it follows that:
    \begin{align*}
        P_\emptyset(s_1)\prod_{t=1}^T  & P(s_{t+1} \mid s_t, a_t)\boldsymbol{\pi}(a_t \mid s_t) = P_\emptyset(s_1^\sigma)\prod_{t=1}^T  P(s_{t+1}^\sigma \mid s_t^\sigma , a_t^\sigma )\boldsymbol{\pi}(a_t^\sigma \mid s_t^\sigma)
    \end{align*}
    We can represent the probability of a trajectory as a single transition from initial state $s_1$ to terminal state $s_T$ by marginalizing out the intermediate states, so it follows that:
    \begin{align*}
        p^{\boldsymbol{\pi}}(s_1 \rightarrow s_T, T) &= \int_{s_1} \cdots \int_{s_{T-1}} P_\emptyset(s_1)\prod_{t=1}^T  P(s_{t+1} \mid s_t, a_t)\boldsymbol{\pi}(a_t \mid s_t) \\
        &= \int_{s_1^\sigma} \cdots \int_{s_{T-1}^\sigma} P_\emptyset(s_1^\sigma)\prod_{t=1}^T  P(s_{t+1}^\sigma \mid s_t^\sigma, a_t^\sigma)\boldsymbol{\pi}(a_t^\sigma \mid s_t^\sigma) \\
        &= p^{\boldsymbol{\pi}}(s_1^\sigma \rightarrow s_T^\sigma, T)
    \end{align*}
    Thus, the probability of reaching terminal state $s_T$ from initial state $s_1$ is equivalent to the probability of reaching $s_T^\sigma$ from $s_1^\sigma$.
\end{proof}

\begin{theorem}
    \label{thm:fairness_sym_fair}
    Equivariant policies are exactly fair with respect to team fairness.
\end{theorem}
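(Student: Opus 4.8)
The plan is to show directly that equivariance forces the reward distribution $R$ to be independent of the sensitive variables $Z$, which by \cref{def:fairness_exact_fairness} is exactly the statement $I(R;Z)=0$. Using the entropy form of mutual information from \cref{eqn:fairness_mi_entropy}, $I(R;Z) = H(R) - H(R \mid Z)$, so it suffices to prove that conditioning on $Z$ leaves the law of $R$ unchanged, i.e. $P(R \mid Z = z) = P(R \mid Z = z')$ for every pair of sensitive assignments $z \neq z'$. This reduction is convenient because it converts the information-theoretic target into a statement about the invariance of a distribution under relabeling of agents.

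First I would fix the correspondence between the symmetry group used for equivariance and the relabeling of sensitive variables. Since each $z_i$ encodes only an agent's identity or group membership and, by the homogeneity assumption, agents are identical in their non-sensitive variables $x_i$, any permutation $\sigma$ that swaps two agents acts on the sensitive assignment as $\sigma \cdot Z$ while leaving every performance-relevant quantity untouched. I would then invoke \cref{thm:fairness_states_eqv}: because the joint policy is an equivariant map (\cref{thm:fairness_policy_eqv_map}), the terminal-state law is permutation invariant, $p^{\boldsymbol{\pi}}(s_1 \rightarrow s_T, T) = p^{\boldsymbol{\pi}}(s_1^\sigma \rightarrow s_T^\sigma, T)$, and combining this with exchangeability of the start-state distribution $P_\emptyset$ yields that the distribution of $s_T$ equals that of $s_T^\sigma$.

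The next step is to push this invariance through to rewards. In pursuit-evasion the reward vector is a deterministic function of the terminal state, and homogeneity guarantees that the reward map treats agents symmetrically, so it commutes with the permutation: $\boldsymbol{r}(s_T^\sigma) = \sigma \cdot \boldsymbol{r}(s_T)$. Chaining this with the invariance of the terminal-state law shows that $R$ is exchangeable, $P(R = \boldsymbol{r}) = P(R = \sigma \cdot \boldsymbol{r})$ for all $\sigma$. Exchangeability of the joint reward is precisely what decouples an agent's reward from its sensitive label: swapping the labels of any two agents via the corresponding $\sigma$ leaves the joint law of $R$ intact, so $P(R \mid Z = z) = P(R \mid Z = z')$, which gives $R \perp Z$ and hence $I(R;Z) = 0$.

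The main obstacle I anticipate is the bookkeeping that aligns the three distinct permutation actions---on state and action space (from \cref{thm:fairness_policy_eqv_map}), on the sensitive assignment $Z$, and on the reward vector---into a single consistent $\sigma$, and in particular establishing the equivariance of the reward map $\boldsymbol{r}(s_T^\sigma) = \sigma \cdot \boldsymbol{r}(s_T)$ rigorously from agent homogeneity rather than taking it for granted. The exchangeability hypothesis on agent positions is what makes the start-state term cooperate, so I would be careful to state exactly where it is used; once the reward vector is shown to be exchangeable, the passage to $R \perp Z$ is immediate.
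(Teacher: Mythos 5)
Your proposal follows essentially the same route as the paper's proof: invoke \cref{thm:fairness_states_eqv} to get permutation invariance of the trajectory (hence terminal-state) law, transfer this to the reward vector to obtain $P[R=\boldsymbol{r} \mid Z=\boldsymbol{z}] = P[R=\boldsymbol{r}^\sigma \mid Z=\boldsymbol{z}^\sigma]$, and conclude $R \perp Z$ and thus $I(R;Z)=0$. Your only addition is to make explicit the equivariance of the reward map, $\boldsymbol{r}(s_T^\sigma) = \sigma \cdot \boldsymbol{r}(s_T)$, a step the paper treats as immediate from agent homogeneity; this is a welcome clarification but not a different argument.
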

\begin{proof}
    The proof follows directly from \cref{thm:fairness_states_eqv}. Since $p^{\boldsymbol{\pi}}(s_1 \rightarrow s_T, T) = p^{\boldsymbol{\pi}}(s_1^\sigma \rightarrow s_T^\sigma, T)$, the probability of the agents obtaining reward $\boldsymbol{r}$ must be equal to obtaining reward $\boldsymbol{r}^\sigma$. Under the full distribution of initial states, the equality:
    \begin{equation*}
        P[R=\boldsymbol{r} \mid Z=\boldsymbol{z}] = P[R=\boldsymbol{r}^\sigma \mid Z=\boldsymbol{z}^\sigma]
    \end{equation*}
    \noindent holds for all $\boldsymbol{r}$ and assignments of sensitive variables $\boldsymbol{z}$. This is only possible if $R \perp Z$ and, therefore, $\textrm{I}(R;Z) = 0$, which meets exact team fairness.
\end{proof}

\subsection{Fairness Through Equivariance Regularization}
Though Fair-E achieves team fairness, it does so in a rigid manner---imposing hard constraints on policy parameters. Fair-E therefore has no choice but to pursue fairness to the fullest extent (and accept the maximum utility trade-off in return). In many cases, it is advantageous to tune the strength of the fairness constraints. For this reason, we propose a soft-constraint version of Fair-E, which we call Fairness through Equivariance Regularization (Fair-ER). Fair-ER is defined by the following regularization objective:
\begin{equation}
    \label{eqn:fairness_eqv_obj}
    J_{\textrm{eqv}}(\phi_1, ..., \phi_i, ..., \phi_n)
    = \underset{s}{\mathbb{E}}[\underset{j \neq i}{\mathbb{E}}[1 - \cos(\pi_{\phi_i}(s) - \pi_{\phi_j}(s))\mid_{s=s_t}]]
\end{equation}

\noindent which encourages equivariance by penalizing agents proportionally to the amount their actions differ from the actions of their teammates. Using \cref{eqn:fairness_eqv_obj}, Fair-ER extends the standard RL objective from \cref{eqn:background_ddpg_obj} as follows:
\begin{equation}
    J(\phi_i) + \lambda J_{\textrm{eqv}}(\phi_1, ..., \phi_i, ..., \phi_n)
\end{equation}
\noindent where $\lambda$ is a ``fairness control parameter" weighting the strength of equivariance. Differentiating the joint objective with respect to parameters $\phi_i$ produces the Fair-ER policy gradient:
\begin{equation}
    \label{eqn:fairness_eqv_grad}
    \nabla_{\phi_i} J_{\textrm{eqv}}(\phi_i)
    = \underset{s}{\mathbb{E}}\bigg[\sum_i \frac{1}{N-1} \sum_{j \neq i} \sin(\pi_{\phi_i}(s) - \pi_{\phi_j}(s)) \nabla_{\phi_i} \pi_{\phi_i}(s) \mid_{s=s_i}\bigg]
\end{equation}

\noindent In this work, Fair-ER is applied to each agent's actor network by optimizing \cref{eqn:fairness_eqv_grad} alongside \cref{eqn:background_ddpg_grad}. Though the above derivations consider stochastic policies, we highlight that Fair-ER is also applicable to deterministic policies and is therefore useful to any multi-agent policy gradient algorithm. We provide further background and a derivation of \cref{eqn:fairness_eqv_grad} in \cref{apdx:fairness}.

\section{Results}
\label{sec:fairness_results}
Pursuit-evasion allows us to quantify the performance of emergent team behavior (in terms of both team success and fairness) under variations of ``socio-economic" parameters such as shared objectives and agent skill-level. We therefore use the pursuit-evasion game formalized in \cref{sec:fairness_preliminaries} to verify our methods. In each experiment, $n{=}3$ pursuer agents are trained in a decentralized manner (each following DDPG) for a total of 125,000 episodes, during which velocity is decreased from $\lvert \vec{v}_p\rvert = 1.2$ to $\lvert \vec{v}_p\rvert = 0.4$. The evader speed is fixed at $\lvert \vec{v}_e\rvert = 1.0$. After training, we test the resulting policies at discrete velocity steps (e.g. $\lvert \vec{v}_p\rvert {=} 1.0$, $\lvert \vec{v}_p\rvert {=} 0.9$, etc), where a decrease in $\lvert \vec{v}_p\rvert$ represents a lesser skilled pursuer. We define the sensitive attribute $z_i$ for each agent $i$ to be a unique identifier of that agent (i.e. $z_i=[0,0,1], z_i=[0,1,0]$ or $z_i=[1,0,0]$ in the $n=3$ case). Each method is evaluated in terms of both utility---through traditional measures of performance such as success rate---and fairness---through the team fairness measure proposed in \cref{sec:fairness_team_fairness}.

Our evaluation proceeds as follows: first, we study the role of mutual reward in coordination by comparing policies trained with mutual reward to those trained with individual reward. Next, we show that naive mutual reward maximization results in high utility at the expense of fairness. We then show the efficacy of our proposed solution, Fair-E, in resolving these fairness issues. Finally, we evaluate our soft-constraint method, Fair-ER, in balancing fairness and utility.

\subsection{Importance of Mutual Reward}
\begin{figure}[]
    \centering
    \makebox[\linewidth][c]{\includegraphics[width=0.75\linewidth]{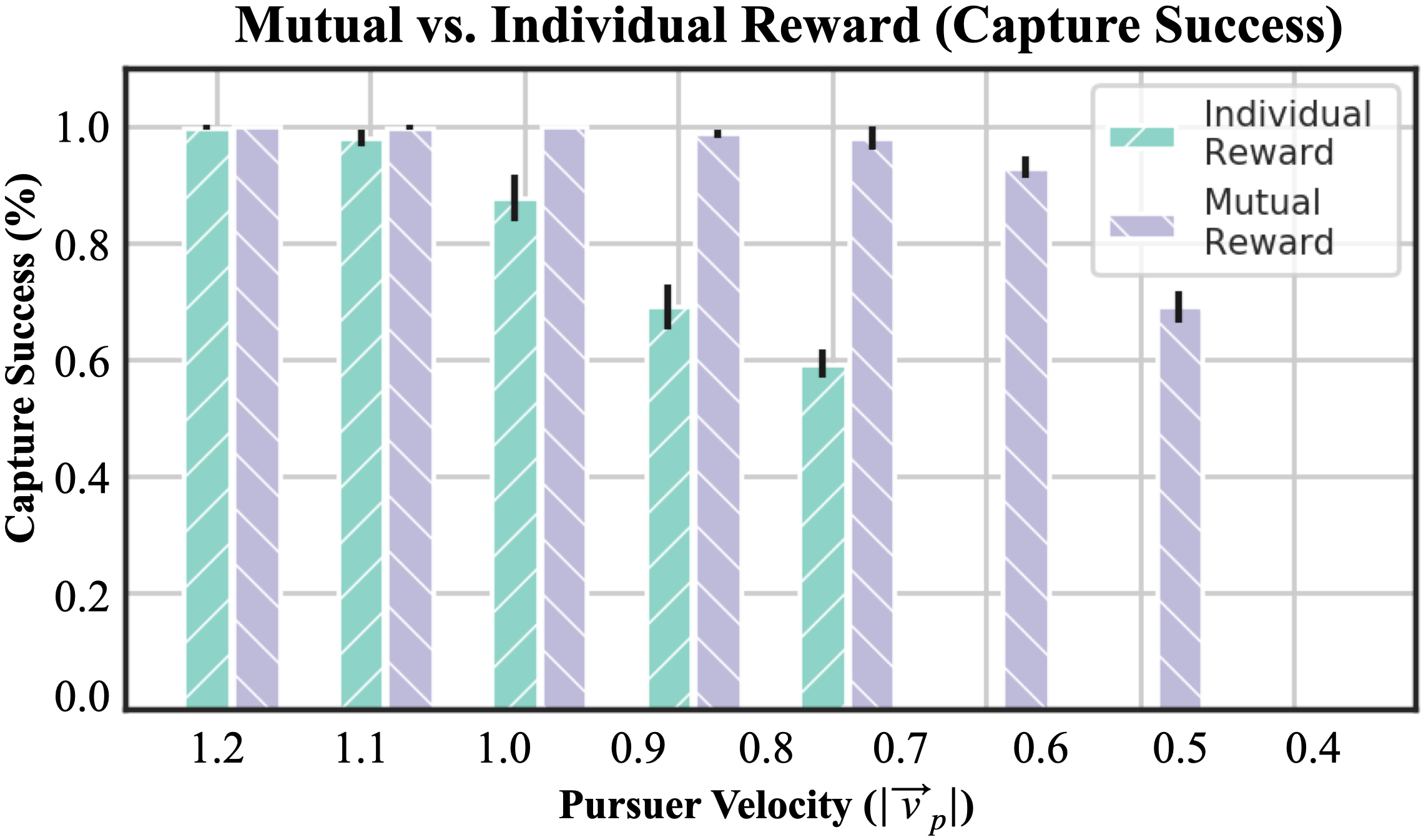}}
    \caption{Performance of policies trained with individual vs. mutual reward. As pursuer velocity decreases, the pursuit-evasion task requires more sophisticated coordination.}
    \label{fig:fairness_ind_vs_mut}
\end{figure}
We train pursuer policies with decentralized DDPG under conditions of either mutual or individual reward. In the mutual reward condition, pursuers share in the success of their teammates, each receiving the sum of the reward vector $\boldsymbol{r}$. In the individual reward condition, a pursuer is only rewarded if it captures the prey itself, which makes the pursuit-evasion task competitive. The results are shown in \cref{fig:fairness_ind_vs_mut}, where utility is the capture success rate of the multi-agent team.

We find that pursuers trained with mutual reward significantly outperform those trained with individual reward. Mutual reward pursuers maintain their performance even as speed drops to $\lvert \vec{v}_p\rvert=0.5$; which is only half of the evader's speed. Under individual reward, performance drops off quickly for $\lvert \vec{v}_p\rvert \leq 1.0$. The velocity $\lvert \vec{v}_p\rvert=1.0$ represents a crucial turning-point in pursuit-evasion---it is the point at which a straight-line chase towards the prey no longer works. These results show that, without mutual reward, the pursuers are not properly incentivized to work together and therefore do not develop a coordination strategy that is any better than a greedy individual pursuit of the evader. Thus, we confirm that mutual reward (a single, shared objective) is vital to coordination.

\subsection{Fair Outcomes With Fair-E}
\begin{figure}
    \centering
    \includegraphics[width=0.7\linewidth]{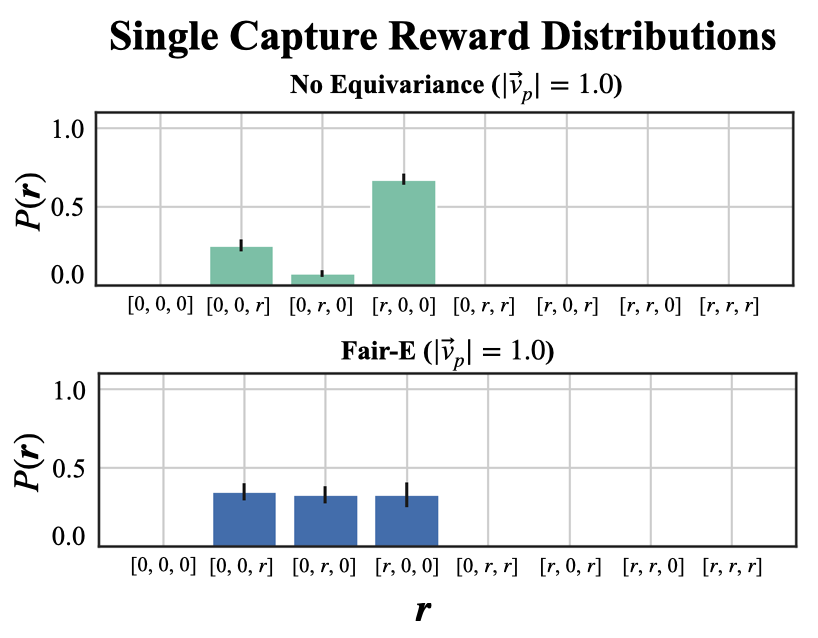}
    \caption{Quantitative comparison of performance for equivariant (i.e. Fair-E) vs. non-equivariant policies. Distribution of reward vectors for both strategies at the pursuer velocity $\lvert \vec{v}_p\rvert=1.0$. Non-equivariant policies (top), which learn strategies that push captures towards one agent, yield highly uneven reward distributions. Fair-E policies (bottom) learn to spread captures amongst teammates equally, resulting in even reward distributions.}
    \label{fig:fairness_ind_vs_sym_dist}
\end{figure}

Though mutual reward incentivizes efficient team coordination, it does not stipulate \textit{how} agents should coordinate. To study the nature of the resulting strategy, we examine the distribution of reward vectors obtained by the pursuers over 100 test-time trajectories (averaged over five random seeds each). As shown in \cref{fig:fairness_ind_vs_sym_dist} (top), in which we plot reward vector assignments for captures involving only one pursuer, the pursuer team discovers an unfair strategy---the majority of captures are accounted for by a single agent.
\begin{figure*}[t!]
    \centering
    \includegraphics[width=0.75\linewidth]{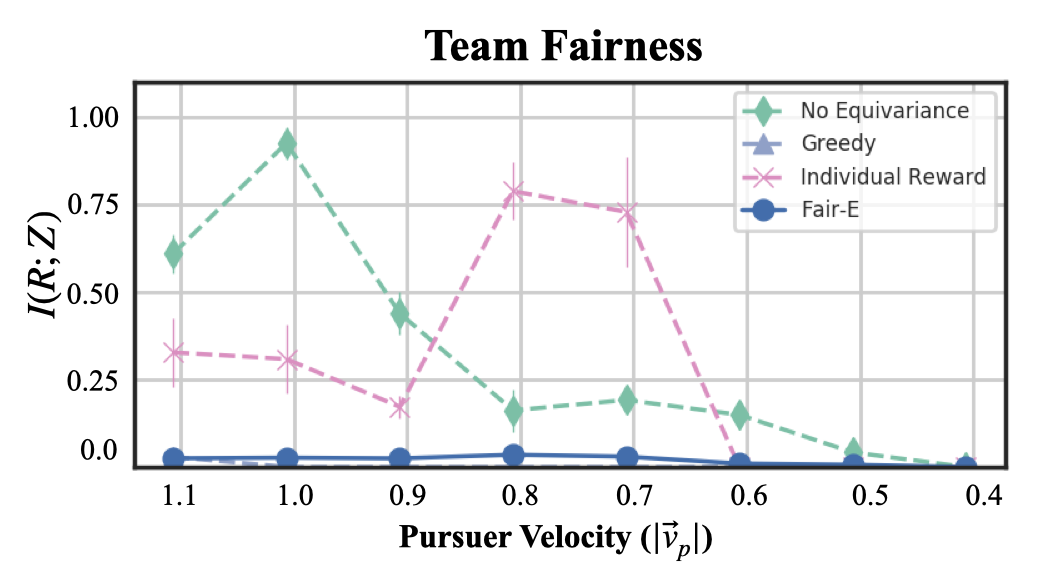}
    \caption{Team fairness scores for both strategies (lower better). Fair-E policies yield fairer outcomes than non-equivariant policies across all pursuer velocity (i.e. agent skill) levels. Note that the curve for the Greedy strategy, which is fair despite its low utility, is tucked behind the Fair-E curve.}
    \label{fig:fairness_ind_vs_sym_fairness}
\end{figure*}

The emergence of an unfair strategy reflects the difficulty of the pursuit-evasion setting. As $\lvert \vec{v}_p\rvert$ decreases, the whole pursuer team has to learn to work together to capture the evader, which is a challenging coordination task. The pursuers learn to do this effectively by assigning roles---e.g. in the $n{=}3$ case, two pursuers take supporting roles, shepherding the evader towards the third agent, who is designated the "capturer". We note that the decision of which agent becomes the capturer is an emergent phenomenon of the system. As $\lvert \vec{v}_p\rvert$ decreases further, such role assignment is not only helpful but necessary for success. Altogether, the results suggest that unconstrained mutual reward maximization prioritizes utility over fairness.

Our proposed solution, Fair-E, directly combats these fairness issues. \Cref{fig:fairness_ind_vs_sym_dist} (bottom) shows the distribution of reward vectors obtained by agents trained with Fair-E. Due to equivariance, Fair-E yields much more evenly distributed rewards. To further quantify these gains, we compare team fairness for both strategies over a variety of skill levels (i.e. $\lvert \vec{v}_p\rvert$ values). The results, shown in \cref{fig:fairness_ind_vs_sym_fairness}, confirm that Fair-E achieves much lower $I(R;Z)$ and, therefore, higher team-fairness. Note that, when $\lvert \vec{v}_p\rvert < 0.9$, $I(R;Z)$ is low for non-equivariant pursuers as well. This is an artifact of team fairness---as $\lvert \vec{v}_p\rvert$ decreases, capture success inevitably decreases as well, which is technically a fairer, albeit less desirable, outcome (all agents share equitably in failure). Nevertheless, \cref{fig:fairness_ind_vs_sym_fairness} serves as empirical evidence to backup our theoretical result from \cref{sec:fairness_equivariance} that Fair-E meets the demands of team fairness.
\begin{figure*}[t!]
    \centering
    \includegraphics[width=0.75\linewidth]{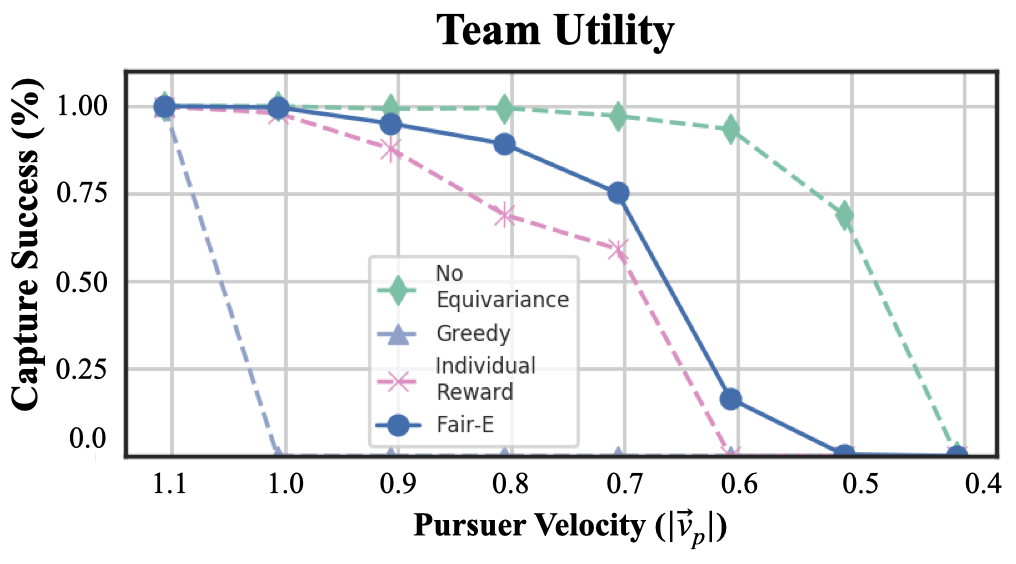}
    \caption{Team utility (i.e. capture success) achieved by both strategies (higher better). As pursuer velocity decreases, non-equivariant policies outperform Fair-E by a wide margin, indicating that a fairness-utility trade-off exists.}
    \label{fig:fairness_ind_vs_sym_utility}
\end{figure*}

Despite achieving fairer outcomes, Fair-E is subject to drops in utility as $\lvert \vec{v}_p\rvert$ decreases (see \cref{fig:fairness_ind_vs_sym_utility}). The utility curve for Fair-E drops precipitously for agent skill $\lvert \vec{v}_p\rvert < 0.9$; much faster than the drop-off for pursuers with no equivariance. This is because Fair-E directly prevents role assignment. By hard-constraining each agent's policy, Fair-E enforces $\pi_i(s_t) = \pi_j(s_t)$, whereas role assignment requires $\pi_i(s_t) \neq \pi_j(s_t)$ for $i \neq j$. We emphasize role assignment as key to this result, as parameter-sharing has been shown to be helpful in problem domains that do not require explicit role assignment \cite{baker2019emergent}. In the context of fairness, however, these results indicate that Fair-E will always elect to give up utility to preserve fairness.

For completeness, we also show results for the policies learned with individual reward (from \cref{fig:fairness_ind_vs_mut}) and a hand-crafted greedy control strategy in which each pursuer runs directly towards the evader. Note that greedy policies are equivariant---by definition, agents will select similar actions in similar states---but demonstrate no coordination. For this reason, greedy policies have high fairness, but very low utility. Utility follows the same pattern for individual reward policies. Interestingly though, individual reward policies become less fair between $\lvert \vec{v}_p\rvert=0.9$ and $\lvert \vec{v}_p\rvert=0.6$, before tapering off as performance decreases. We defer further discussion of this finding, as well as details regarding the computation of the team fairness score, $I(R;Z)$, and the hand-crafted greedy control baseline to \cref{apdx:fairness_apdx_experimental_details}.

\subsection{Modulating Fairness With Fair-ER}
\begin{figure}[t!]
    \centering
    \makebox[\linewidth][c]{\includegraphics[width=0.9\linewidth]{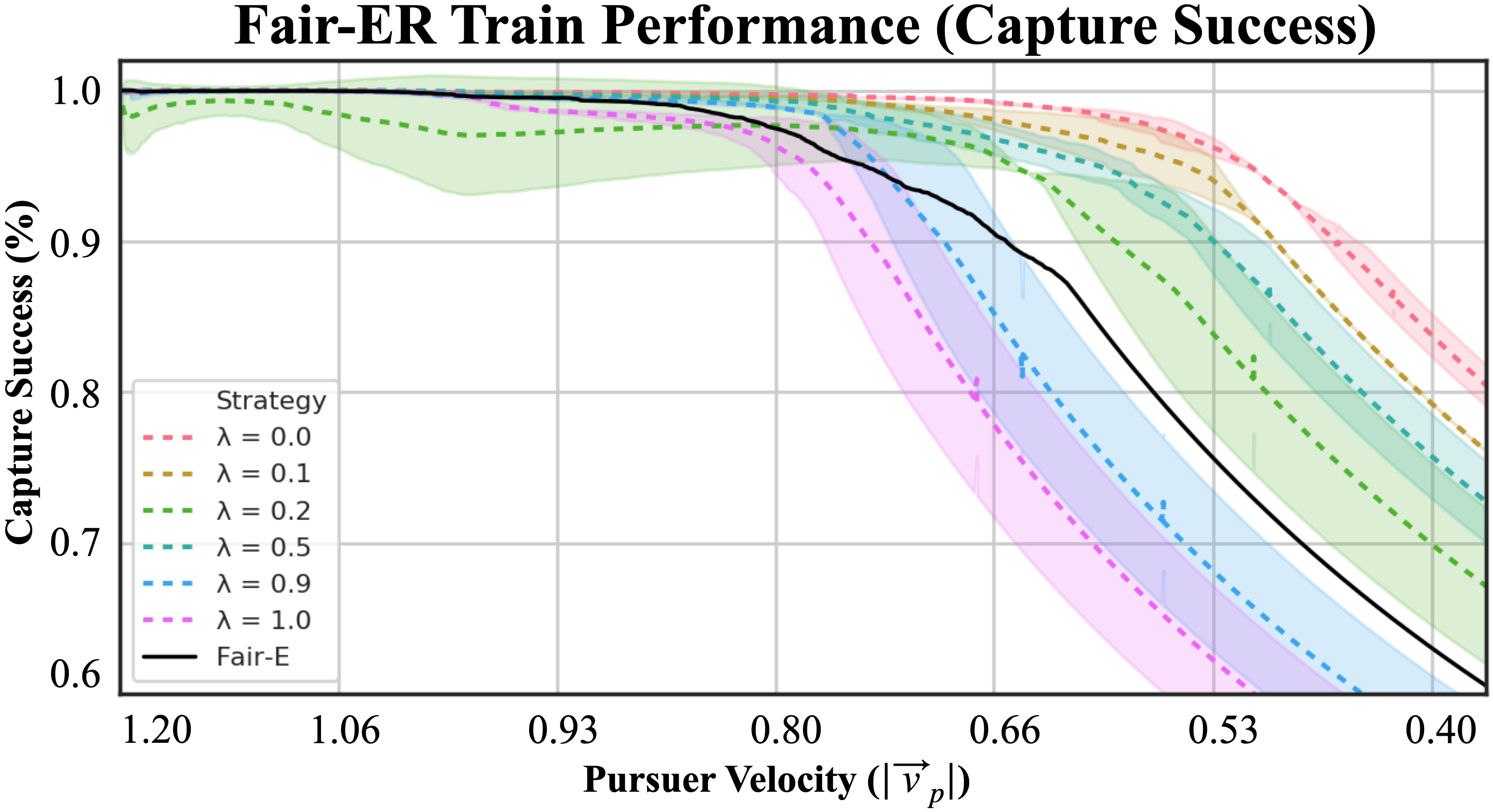}}
    \caption{Effect of the equivariance control parameter $\lambda$ on policy learning with Fair-ER. Increasing $\lambda$ yields fairer policies, but causes performance to decay more quickly in difficult environments (lower agent skill). The black line represents Fair-E's performance for comparison.}
    \label{fig:fairness_fairer_training}
\end{figure}
Unlike Fair-E, Fair-ER allows policies to balance fairness and utility dynamically. Intuitively, this is because Fair-ER incentivizes policy equivariance through the regularization objective from \cref{eqn:fairness_eqv_obj}, while still allowing agents to update their own individual policy parameters (unlike Fair-E). Therefore, the value of the fairness control weight $\lambda$ will dictate how much each agent values fairness vs. utility. 

To study the effectiveness of this method, we trained Fair-ER agents in increasingly difficult environments (by decreasing pursuer velocity $\lvert \vec{v}_p\rvert$) while modulating the fairness control parameter $\lambda$. The effect of $\lambda$ on policy training is shown in \cref{fig:fairness_fairer_training}. The results show that, for $\lambda \le 0.5$, Fair-ER is successful in bridging the performance gap between non-equivariant policies ($\lambda = 0.0)$ and Fair-E (black line). Importantly, we show that it is also possible to over-constrain the system so that it actually performs worse than Fair-E (e.g. $\lambda = 1.0$). This indicates that, though Fair-ER can mitigate the drops in performance described above, the regularization parameter $\lambda$ must be tuned appropriately.

We also performed the same test-time analysis as described for Fair-E in the previous subsection. \Cref{fig:fairness_fairness_vs_utility} shows the effect of $\lambda$ on both fairness ($I(R;Z)$) and utility (capture success). For each skill level, increasing $\lambda$ allows Fair-ER to fine-tune the balance between fair and unfair policies, achieving the highest utility possible under its given constraints. We find that, with high values of $\lambda$ (e.g. $\lambda=0.9$), Fair-ER prioritizes fairness over utility and performs in-line with (or worse than) Fair-E---achieving fair outcomes, even at the expense of utility. When $\lambda$ is in the range $\lambda=0.5$ to $\lambda=0.1$, Fair-ER withstands a drop in utility until $\lvert \vec{v}_p\rvert=0.7$ by giving up small amounts of fairness. Therefore, we find evidence that learning multi-agent coordination strategies with Fair-ER simultaneously maintains \textit{higher utility} than Fair-E while achieving \textit{higher fairness} than non-equivariant learning. Overall, tuning the fairness weight $\lambda$ allows us to directly control the strength of the fairness constraints imposed on the system, enabling Fair-ER to modulate fairness to the needs of the task.

\subsection{Fairness-Utility Trade-off}
\begin{figure}[t!]
    \centering
    \makebox[\linewidth][c]{\includegraphics[width=0.8\linewidth]{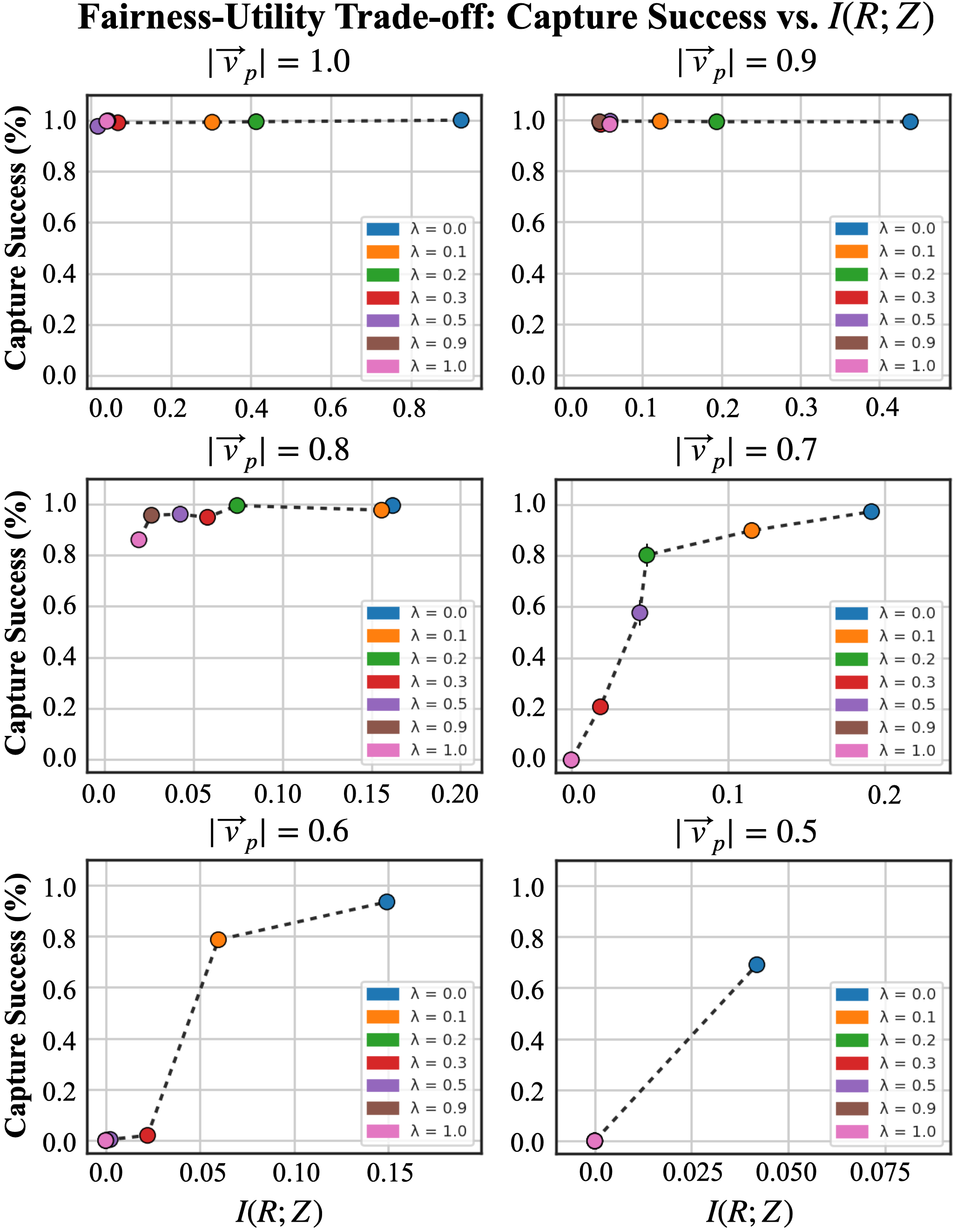}}
    \caption{Fairness vs. utility comparisons for Fair-ER trained with various values of equivariance regularization $\lambda$ and environment difficulties. Note that $\lambda=0.0$ is equivalent to no equivariance.}
    \label{fig:fairness_fairness_vs_utility}
\end{figure}
As we saw in \cref{fig:fairness_ind_vs_sym_utility}, the hard constraints that Fair-E places on each agent's policy creates an inherent commitment to achieving fair outcomes at the expense of utility (or reward). In this section, we examine the extent to which the fairness-utility trade-off exists for Fair-ER across all agent skill levels. For each agent skill level (i.e. $\lvert \vec{v}_p\rvert$ value), we computed both the fairness (through team fairness $I(R;Z)$) and utility (through capture success) scores achieved by multi-agent coordination strategies learned for $\lambda$ values in the range $\lambda \in \{0.0, 1.0\}$ over 100 test-time trajectories (averaged over five random seeds each). The results of this experiment are shown in \cref{fig:fairness_fairness_vs_utility}. 

Unlike many prior studies in both traditional multi-agent fairness settings \cite{okun2015equality, le1990equity, bertsimas2012efficiency, joe2013multiresource, bertsimas2011price} and prediction-based settings \cite{corbett2017algorithmic, zhao2019inherent}, we find that it is not always the case that fairness must be traded for utility. With Fair-ER, fairness comes for little to no cost in utility until $\lvert \vec{v}_p\rvert=0.8$. This means that, when each agent operates at a high skill level, requiring each agent in the multi-agent team to shift towards an equivariant policy (which yields fair results) does not cause coordination of the larger multi-agent team to break down. When $\lvert \vec{v}_p\rvert<0.8$, however, utility drops quickly for larger values of $\lambda$. This indicates that, when agent skill decreases (or the task becomes more complex relative the agents' current skill level), unfair strategies such as role assignment are the only effective way to maintain high levels of utility. Overall, these results serve as empirical evidence that, in the context of cooperative multi-agent tasks, fairness is inexpensive, so long as the task is easy enough (i.e. agent skill is high enough). As task difficulty increases, fairness comes at an increasingly steep cost. To the best of our knowledge, such a characterization of the fairness-utility trade-off for multi-agent settings has not been illustrated in the fairness literature.

\section{Conclusion and Future Work}
Multi-agent learning holds promise for helping AI researchers, economic theorists, and policymakers alike better evaluate real-world problems involving social structures, taxation, policy, and economic systems broadly. This work has focused on one such problem; namely, fairness in cooperative multi-agent settings. In particular, we have demonstrated that fairness issues arise naturally in cooperative, single-objective multi-agent learning problems. We have shown that our proposed method, equivariant policy optimization (Fair-E), mitigates such issues. We have also shown that soft constraints (Fair-ER) lower the cost of fairness and allow the fairness-utility trade-off to be balanced dynamically. Moreover, we have presented novel results regarding the fairness-utility trade-off for cooperative multi-agent settings; identifying a connection between agent skill and fairness. In particular, we showed that fairness comes for free when agents are highly-skilled, but becomes increasingly expensive for lesser-skilled agents.

This work represents a first step towards understanding the core factors underlying fairness and multi-agent learning in environments where team dynamics and coordination are important for task success. There are a number of exciting avenues of future work that build upon these initial ideas. First, ongoing work is investigating cooperative multi-agent fairness in more complex domains (e.g. video games, simulated economic societies). Moreover, there is room to explore indirect or backdoor causal paths between sensitive and target variables in the context of multi-agent teams, which warrant connecting additional interpretations of fairness (e.g. causal fairness) to cooperative multi-agent settings.

\part{Emergent Multi-Agent Behavior and Robustness}
\label{part:robustness}
In this part of the thesis, we turn our attention to large-scale AI systems such as AlphaZero \cite{silver2017alphazero}, and examine them through the lens of \textbf{robustness}. Specifically, we seek to address the following questions:
\begin{itemize}
    \item \textit{Why does AlphaZero's performance degrade so severely in the presence of simple adversarial examples?}
    \item \textit{Can we algorithmically improve the robustness of AlphaZero's policy and value networks?}
\end{itemize}
In \cref{chapter:robustness}, we conduct a thorough analysis of AlphaZero's failure modes and identify two issues at the root of AlphaZero's brittleness: policy-value misalignment and value function inconsistency. This analysis informs novel extensions to the AlphaZero algorithm that improve its robustness. Experimental results demonstrate the effectiveness of our method across a range of board games.

\chapter{Policy-Value Alignment and Robustness in Search-based Multi-Agent Learning}
\label{chapter:robustness}

\section{Introduction}
Learning algorithms that combine self-play reinforcement learning and search have grown in popularity in recent years due to their ability to reach superhuman levels of performance in combinatorially complex environments~\cite{silver2017alphago}. More recently, an examination of AlphaZero \cite{silver2017alphazero} revealed that it obeys neural scaling laws with respect to play strength and network capacity~\cite{neumann2022scaling}---further suggesting that hybrid algorithms, as in other domains (e.g. large language models~\cite{kaplan2020scaling}), may have significant room to scale further. As these algorithms continue to scale towards real-world complexity, it calls into question whether state-of-the-art hybrid systems are ready for deployment.

To date, researchers and practitioners have a very limited understanding of why the neural networks underpinning these hybrid algorithms make the decisions that they do---consider the infamous ``Move 37" in AlphaGo vs. Lee Sedol~\cite{move37alphago}---and what they might learn about the environment and other agents within it. Though significant progress has been made in the area of concept-based interpretability, where post-hoc analysis has shown that many human-understandable concepts can be accurately regressed from AlphaZero's representations \cite{mcgrath2022acquisition}, opaque artifacts of self-play learning still remain. For example, studies leveraging adversarial game-play have shown that AlphaZero is surprisingly susceptible to minor perturbations in opponent behavior \cite{lan2022alphazero} and can be exploited to lose games that could be easily won by amateur human players \cite{wang2022adversarial}.
\begin{figure*}[t!]
    \centering
    \includegraphics[width=0.99\textwidth]{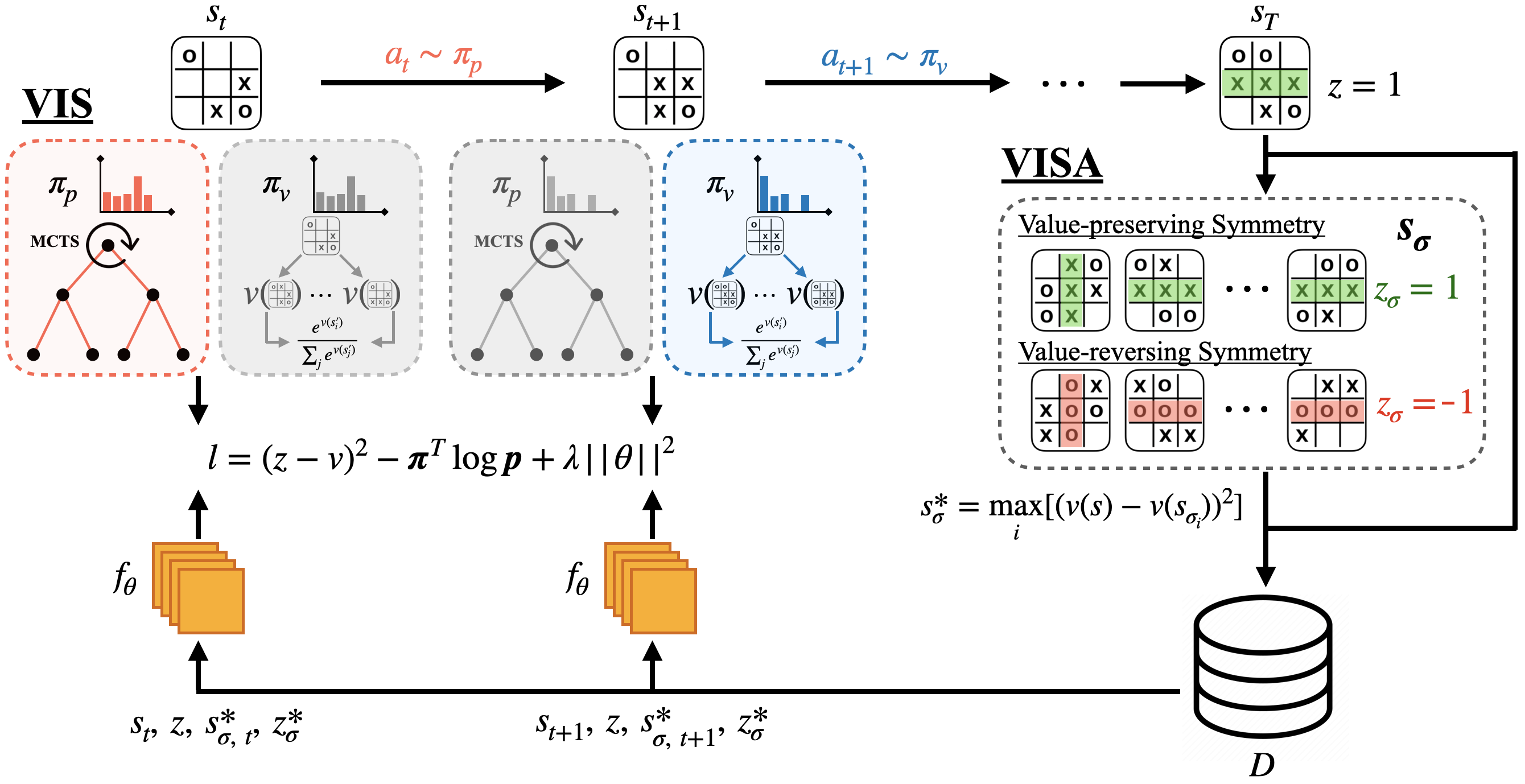}
    \caption{VISA-VIS combines two improvements to the AlphaZero algorithm: (i) \textbf{Value-Informed Selection (VIS)}, where action selection alternates between sampling actions from MCTS search probabilities ($a_t {\sim} \pi_p$) and a policy derived from AlphaZero's value function ($a_t {\sim} \pi_v$); and (ii) \textbf{Value-Informed Symmetric Augmentation (VISA)}, a form of data augmentation that targets uncertainty in AlphaZero’s value function. Before adding a state $s_t$ to the replay buffer $D$, VISA generates a set of symmetric state transformations $\boldsymbol{s_\sigma}$, evaluates each transformed state $s_{\sigma_i}$, and returns the state $s^*_\sigma$ that differs most from its value estimate for the original state $s_t$. Both $s_t$ and $s^*_\sigma$ are added to $D$.}
    \label{fig:overview}
\end{figure*}

In this work, we discover multiple phenomena that may underlie AlphaZero's documented failure modes. First, we find that, despite AlphaZero learning to play a game optimally \textit{within its search process}, its policy network and value function are \textbf{far from optimal themselves}. This reveals an interesting tension between learning and search: search is necessary to fight the combinatorial exploration problem of complex games, but also offers neural networks a crutch that leads to inadequate generalization. Next, we identify a novel emergent phenomenon, \textbf{policy-value misalignment}, in which AlphaZero's policy and value predictions contradict each other. This misalignment is a direct consequence of AlphaZero's objective, which includes separate terms for policy and value error, but no constraint on their consistency. We introduce an information-theoretic definition of policy-value misalignment and use it to quantify AlphaZero's misalignment. Lastly, we find evidence of \textbf{inconsistency within AlphaZero's value function} with respect to symmetries in the environment. As a result, AlphaZero's value predictions generalize poorly to infrequently-visited and unseen states. Crucially, these phenomena are \textbf{masked by search} and exist even when AlphaZero plays optimally within its search process. This work therefore demonstrates that, though search allows AlphaZero to achieve unprecedented scale and performance, it also hides deficiencies in its learned components.

To address these issues, we propose two modifications to AlphaZero aimed directly at policy-value alignment and value function consistency. First, we introduce Value-Informed Selection (VIS): a modified action-selection rule in which action probabilities are computed using a one-step lookahead with AlphaZero's value network. VIS then alternates stochastically between this value action selection rule and AlphaZero's standard policy (which is derived from search probabilities). During training, this forces alignment between AlphaZero's policy and value predictions. Next, we introduce Value-Informed Symmetric Augmentation (VISA): a data augmentation technique that takes into account uncertainty in AlphaZero's value network. Unlike data augmentation techniques that naively add states to a network's replay buffer, VISA considers symmetric transformations of a given state and stores only the transformed state whose value differs most from the network's value estimate for the original, un-transformed state.

VISA and VIS can improve both policy-value alignment and value robustness in AlphaZero simultaneously. We refer to this joint method as VISA-VIS. In experiments across a variety of symmetric, two-player zero-sum games, we show that VISA-VIS reduces policy-value misalignment by \textbf{up to 76\%}, reduces value generalization error by \textbf{up to 50\%}, and reduces average value prediction error \textbf{by up to 55\%}; as compared to AlphaZero.

\noindent \textbf{Contributions:}
In sum, we offer the following contributions: (i) We systematically study AlphaZero and identify multiple emergent phenomena that are masked by search. (ii) We define an information-theoretic measure of policy-value misalignment. (iii) We propose Value-Informed Selection (VIS): a modified action-selection rule for AlphaZero that forces alignment between policy and value predictions. (iv) We propose Value-Informed Symmetric Augmentation (VISA): a data augmentation technique that leverages symmetry and value function uncertainty to learn a more consistent value network. (v) We introduce a new method, VISA-VIS, that combines VIS and VISA into a single algorithm. Experimentally, we show that VISA-VIS improves the quality of AlphaZero's policy and value networks, and significantly reduces policy-value misalignment.

\section{Related Work}
The continued success of hybrid algorithms that combine search and self-play RL has drawn research interest into their robustness and interpretability.

\subsection{Adversarial Examples}
In supervised learning settings, it is well-documented that both synthetic \cite{goodfellow2014explaining} and naturally-occurring \cite{hendrycks2021natural} adversarial examples can induce sub-optimal behavior from neural network-based classifiers. Recently, similar tactics have proven effective in identifying the weaknesses of RL agents. For example, it has been shown that small perturbations of an agent's observations can cause large decreases in reward \cite{huang2017adversarial,kos2017delving} and that performance degradation can be exacerbated by leveraging knowledge of an agent's behavior \cite{lin2017tactics}.

In multi-agent settings, RL agents are particularly vulnerable to adversarial behavior from other agents in the environment \cite{gleave2019adversarial}. Of particular relevance is the work of \citet{wang2022adversarial}, which showed that a professional-level AlphaZero Go agent can be tricked into losing games by simple adversarial moves that are easily countered by amateur human players. Further studies have shown that AlphaZero is susceptible to other state- and action-space perturbation \cite{lan2022alphazero}. We posit that these surprising failure modes are a symptom of the phenomena we describe in this work---policy-value misalignment and value function inconsistency.

In multi-agent settings, RL agents are particularly vulnerable to adversarial behavior from other agents in the environment \cite{gleave2019adversarial}. Of particular relevance is the work of \citet{wang2022adversarial}, which showed that a professional-level AlphaZero Go agent can be tricked into losing games by simple adversarial moves that are easily countered by amateur human players. Further studies have shown that AlphaZero is susceptible to other state- and action-space perturbation \cite{lan2022alphazero}. We posit that these surprising failure modes are a symptom of the phenomena we describe in this work---policy-value misalignment and value function inconsistency.

\subsection{Interpretability}
RL interpretability methods are divided into those that improve the transparency of agent decision-making by either (i) introducing intrinsically-interpretable policy and value network architectures; or (ii) generating post-hoc explanations for previously-trained networks. Beyond more traditional approaches that are derived directly from supervised learning counterparts--such as saliency maps \cite{greydanus2018visualizing,selvaraju2017grad}---advances in reward-agnostic behavioral analysis \cite{omidshafiei2022beyond} and intrinsically-interpretable policy learning schemes \cite{grupen2022concept} have greatly improved the interpretability of an RL agent's decision-making (and are applicable to multi-agent settings). 

For large-scale AI systems such as AlphaZero, seminal work on post-hoc, concept-based interpretability has shown that many human-understandable concepts are encoded in AlphaZero's policy network during training \cite{mcgrath2022acquisition}. Additional studies have employed model probing and behavioral analysis to understand the concepts represented by AlphaZero for the game of Hex \cite{forde2022concepts}. These studies highlight the search-learning conundrum proposed in our work---it is possible for AlphaZero to learn complex playing strategies and sophisticated human-understandable concepts while simultaneously demonstrating inconsistencies in its policy and value predictions.

\subsection{Robustness and Generalization in RL}
The promise of RL in controlled environments has also drawn focus to their robustness and generalization in diverse real-world settings \cite{kirk2021survey}. As in supervised learning, adversarial approaches to robustness testing have been successful for exploiting learned RL behaviors, either by learning optimal destabilization policies~\cite{pinto2017robust} or sampling from populations of adversaries~\cite{vinitsky2020robust}. Following \citet{moosavi2017universal}, which showed that it is always possible to generate new adversarial examples for a given model, methods have focused on combining adversarial training with other methods to better defend against adversaries \cite{chen2019adversarial}. One exciting direction, motivated by human-agent interaction, is to construct behavioral unit tests for RL agents \cite{knott2021evaluating}.

\section{Method}
\label{sec:robustness_visa_vis}
In this section, we introduce VISA-VIS, which combines two modifications to the AlphaZero algorithm: Value-Informed Selection (VIS) and Value-Informed Symmetric Augmentation (VISA). We describe each in detail below, and provide an overview of our approach in \cref{fig:overview}.

\subsection{Value-Informed Selection}
\label{sec:robustness_vis}
Recall AlphaZero's loss function from \cref{eqn:background_az_loss}. Though this objective incentivizes accurate policy and value predictions individually, it imposes no constraint on their consistency. It is therefore possible for AlphaZero's network to output policy probabilities $\boldsymbol{p}$ that contradict its value prediction $v$---e.g. during inference, $\boldsymbol{p}$ places large probability mass on a winning action while $v$ predicts that the current state is a losing one. We find that this occurs frequently in practice. To enforce policy-value consistency, we introduce Value-Informed Selection (VIS). VIS is a novel action selection rule in which AlphaZero alternates between selecting actions with its policy (as informed by the search probabilities in \cref{eqn:background_mcts_policy}) and selecting actions with its value function.

More formally, let $\boldsymbol{s'}$ be the set of successor states from a given state $s$, where each successor $s'_a \in \boldsymbol{s'}$ is the result of taking a legal action $a$ from $s$. We then define $\pi_v$ to be the action probabilities obtained by computing a one-step value lookahead from $s$:
\begin{equation}
    \pi_v(a|s) = \frac{e^{v(s'_a)}}{\sum\limits_{s'_b \in \boldsymbol{S'}} e^{v(s'_b)}}
\end{equation}
where $v(s)$ is the value of state $s$, as predicted by AlphaZero's value network. Renaming the search policy defined in \cref{eqn:background_mcts_policy} to be $\pi_p$, VIS then defines the action selection rule:
\begin{equation}
  \pi_{\textrm{VIS}}(a|s) =
    \begin{cases}
      \pi_p(a|s) & \text{if} \; \; \eta < \epsilon\\
      \pi_v(a|s) & \text{otherwise.}\\
    \end{cases}
\end{equation}
where $\eta \sim U(0,1)$ is drawn uniformly and $\epsilon$ is a threshold determining the likelihood of policy vs. value action selection. When used in conjunction with \cref{eqn:background_az_loss}, VIS forces AlphaZero's policy and value predictions to be aligned by swapping $\boldsymbol{\pi_p}$ and $\boldsymbol{\pi_v}$ stochastically. Intuitively, VIS pushes AlphaZero to align its policy and value function because it never knows which it will have to use for any given action selection.

\subsection{Value-Informed Symmetric Augmentation}
\label{sec:robustness_visa}
Exhaustively searching a combinatorial state-space is impractical, so during training, AlphaZero relies upon MCTS to reduce the set of states it encounters. However, this also places the onus on AlphaZero's network to generalize to states that are skipped by MCTS during training. We find that AlphaZero's value predictions generalize poorly to such states.

To improve generalization in AlphaZero's value network, we propose Value-Informed Symmetric Augmentation (VISA): a form of data augmentation that explicitly targets uncertainty in AlphaZero's value function. For each state $s$ encountered during training, VISA generates a set $\boldsymbol{s_\sigma} = \{s_{\sigma_1}, ..., s_{\sigma_N}\}$ of $N$ additional states, where each $s_{\sigma_i} {\in} \boldsymbol{s_\sigma}$ is the result of applying a symmetric transformation $\sigma_i(s)$ to $s$. VISA then finds the transformed state $s^*_\sigma$ with maximum value uncertainty as:
\begin{equation*}
    s^*_\sigma = \max_i[(v(s) - v(s_{\sigma_i}))^2]
\end{equation*}
and adds both $s$ and $s^*_\sigma$ to AlphaZero's replay buffer. Note that, by adding only $s^*_\sigma$, VISA is an improvement over both random data augmentation---it targets the augmentation for which the value network is least certain---and exhaustive data augmentation---it is more memory efficient than storing all of $\boldsymbol{s_\sigma}$ in the replay buffer.

In this work, we consider three principal symmetric transformations: rotation and reflection from the dihedral group, and inversion. Intuitively, in the board game setting (as in \cref{fig:overview}), these transformations correspond to rotating the game board, mirroring the game board (vertically or horizontally), or flipping each player's pieces. Importantly, we note that board inversion also requires inverting the ground-truth game-tree value $z$ when computing \cref{eqn:background_az_loss}.

\subsection{Measuring Policy-Value Misalignment}
\label{sec:robustness_misalignment}
We also introduce an information-theoretic measure of policy-value misalignment. Given AlphaZero's standard policy $\pi_p$ and the value-informed policy $\pi_v$ from \cref{sec:robustness_vis}, it is possible to compute policy-value misalignment as the KL-divergence: $D_{\textrm{KL}}(\pi_p || \pi_v)$. Intuitively, a model with strong policy-value alignment will yield a lower divergence term, whereas policy and value networks that are inconsistent will result in higher divergence.

\section{Experiments}
In this section, we first demonstrate the presence of sub-optimal policies, policy-value misalignment, and value function inconsistency in AlphaZero. Then, we evaluate VISA-VIS and examine the extent to which it alleviates these phenomena.

\textbf{Environments:}
We consider three solved, symmetric two-player zero-sum games of increasing complexity: Tic-Tac-Toe, 4x4 Tic-Tac-Toe, and Connect Four. We emphasize the importance of studying solved games to our analysis. In order to properly measure the scope of AlphaZero's generalization error, we need a suitable ground truth with which to compare AlphaZero's predictions. Solved games fulfill that need. In each game, the outcome can take on a value of $z=1, 0,$ and $-1$ for wins, losses, and draws, respectively.

\textbf{Training Details:}
AlphaZero is trained for a fixed number of time-steps following the algorithm in \cref{sec:background_alphazero}. AlphaZero's network consists of a ResNet \cite{he2016deep} backbone with separate heads for policy and value predictions. All details are the same for VISA-VIS training runs, but following the modifications outlined in \cref{sec:robustness_visa_vis}. Additional details, including hyperparameters, are outline in \cref{apdx:robustness_training}.

\subsection{Emergent Phenomena in AlphaZero}
\begin{figure*}[t!]
    \centering
    \includegraphics[width=0.99\textwidth]{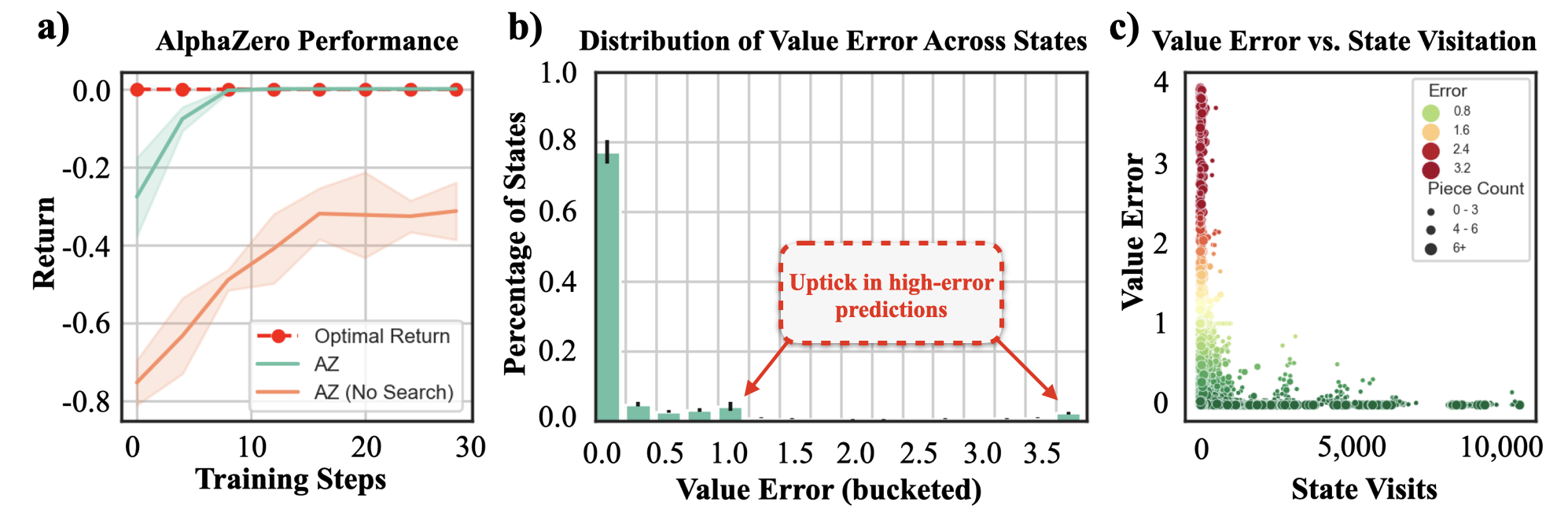}
    \caption{\textbf{a)} Though AlphaZero quickly converges to an optimal strategy, withholding search at test-time reveals that AlphaZero's policy network is a sub-optimal player. \textbf{b)} Analysis of AlphaZero's value function performance also shows that it is sub-optimal, as it makes many incorrect and high-error predictions (relative the game-tree value for each state). \textbf{c)} Computing value error as a function of state visitation reveals that AlphaZero's value function is failing to generalize to infrequently visited and unseen states.}
    \label{fig:robustness_3x3_results_quant}
\end{figure*}

\begin{figure*}[t!]
    \centering
    \includegraphics[width=0.85\textwidth]{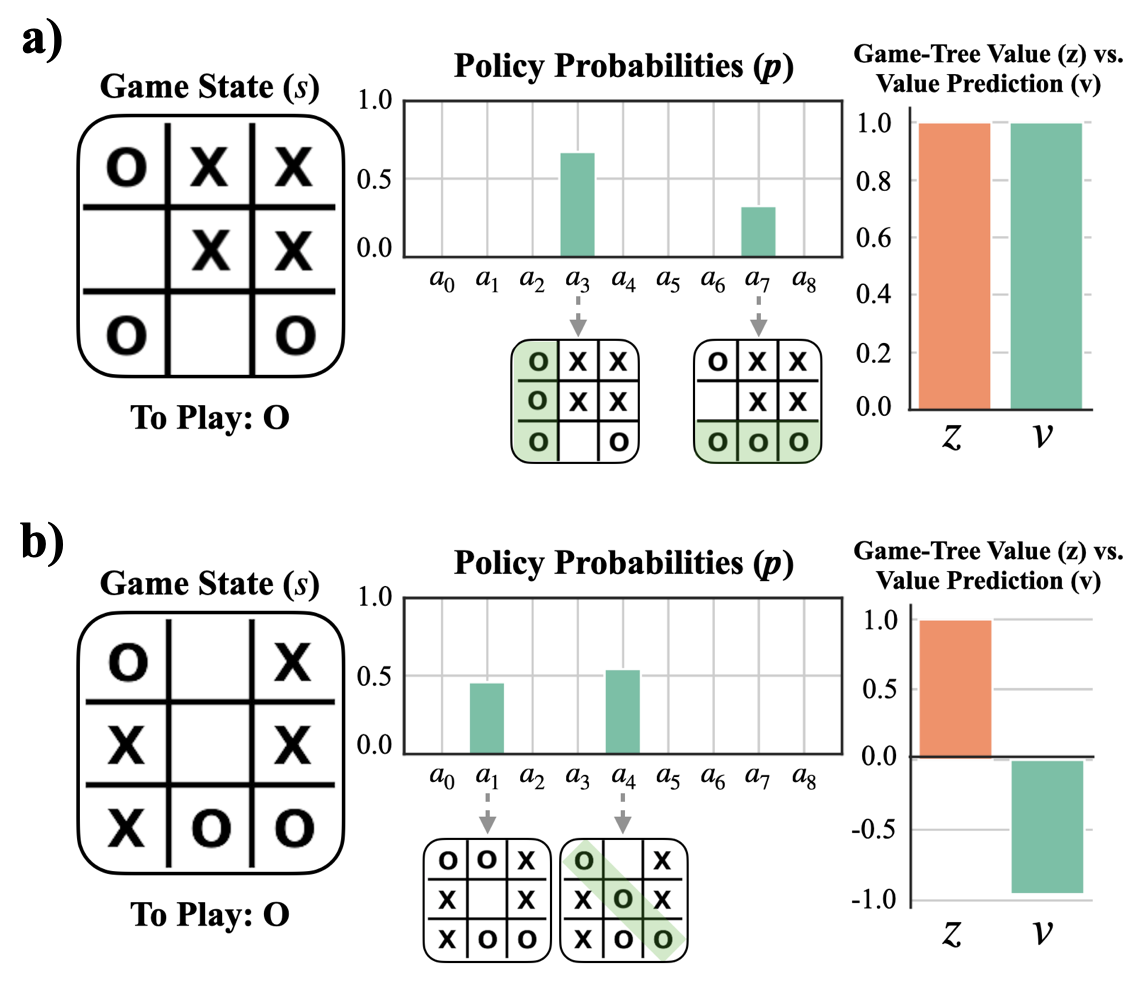}
    \caption{\textbf{a)} A qualitative example where AlphaZero's policy and value predictions are both correct. \textbf{b)} A qualitative example of policy-value misalignment---AlphaZero's policy prediction picks a winning move, but its value prediction is incorrect (predicts a loss).}
    \label{fig:robustness_3x3_results_qual}
\end{figure*}

We first present a systematic study of AlphaZero's performance in an enumerable game-tree: Tic-Tac-Toe. Due to the small state-space of Tic-Tac-Toe, it is possible to evaluate AlphaZero's predictions against every legal state, providing us with a holistic view of its behavior.

After training, the performance of AlphaZero is verified in two ways: First, we play each model against an oracle opponent for $1,000$ matches across five random seeds each and score each match according to the rules of the game. We include both games in which AlphaZero moves first, and those in which AlphaZero moves second. Next, we measure the quality of AlphaZero's policy network by re-playing each of the previous games while withholding search (AlphaZero instead samples actions directly from its policy network). Finally, we verify the accuracy of AlphaZero's value predictions by iterating through the legal state-space and computing the mean-squared error between AlphaZero's value predictions and the true game-tree value. The results of this analysis are presented in \cref{fig:robustness_3x3_results_quant}.

From the evaluation curves in \cref{fig:robustness_3x3_results_quant}a, it is clear that AlphaZero (with search) converges to a strong game-playing strategy very quickly---the expected outcome is a tie under optimal play. Once search is removed and AlphaZero is forced to play with only its policy network, however, its performance drops considerably. This result reveals a tension between search and learnability within AlphaZero---search is needed to reduce the sample complexity of exploration in complex game trees, but also prevents the policy network from learning a generalized mapping from board states to actions. In turn, \textbf{AlphaZero does not in fact learn an optimal policy}, just a policy good enough that averaging action probabilities across many MCTS simulations produces a strong search policy. Though this is enough to reach high levels of performance within subsets of the game tree that are discovered by search, it may not generalize to lesser-known states (such as those generated by \citet{wang2022adversarial}).

In \cref{fig:robustness_3x3_results_quant}b, we find that AlphaZero's value predictions are similarly sub-optimal. In fact, when the error of AlphaZero's value predictions is quantized, there is a noticeable error bump when $e > 1.0$---indicating AlphaZero is predicting a loss when the true game-tree value is a win (or vice versa) and another error bump when $e > 3.5$---indicating that AlphaZero is predicting a loss \textit{very confidently} when the true game-tree value is a win (or vice versa). When plotted as a function of state visitation during training (see \cref{fig:robustness_3x3_results_quant}c), it is clear that the error of AlphaZero's value predictions is highest for infrequently-visited or unseen states, indicating that AlphaZero has failed to generalize. Moreover, as shown in the qualitative examples from \cref{fig:robustness_3x3_results_qual}b, which includes both AlphaZero's policy predictions and its corresponding value estimates, we find that, in many instances, AlphaZero's policy predictions are correct (they choose the winning move), despite value predictions that are wildly inaccurate (they predict a loss, despite the winning opportunity). Thus, in addition to AlphaZero's sub-optimal policy, this analysis provides evidence that its value predictions are inconsistent, generalize poorly, and are not aligned with its policy predictions (i.e. AlphaZero suffers from policy-value misalignment).

\subsection{Improving Robustness and Policy-Value Alignment}
\begin{figure}[t!]
    \centering
    \makebox[\linewidth][c]{\includegraphics[width=0.9\linewidth]{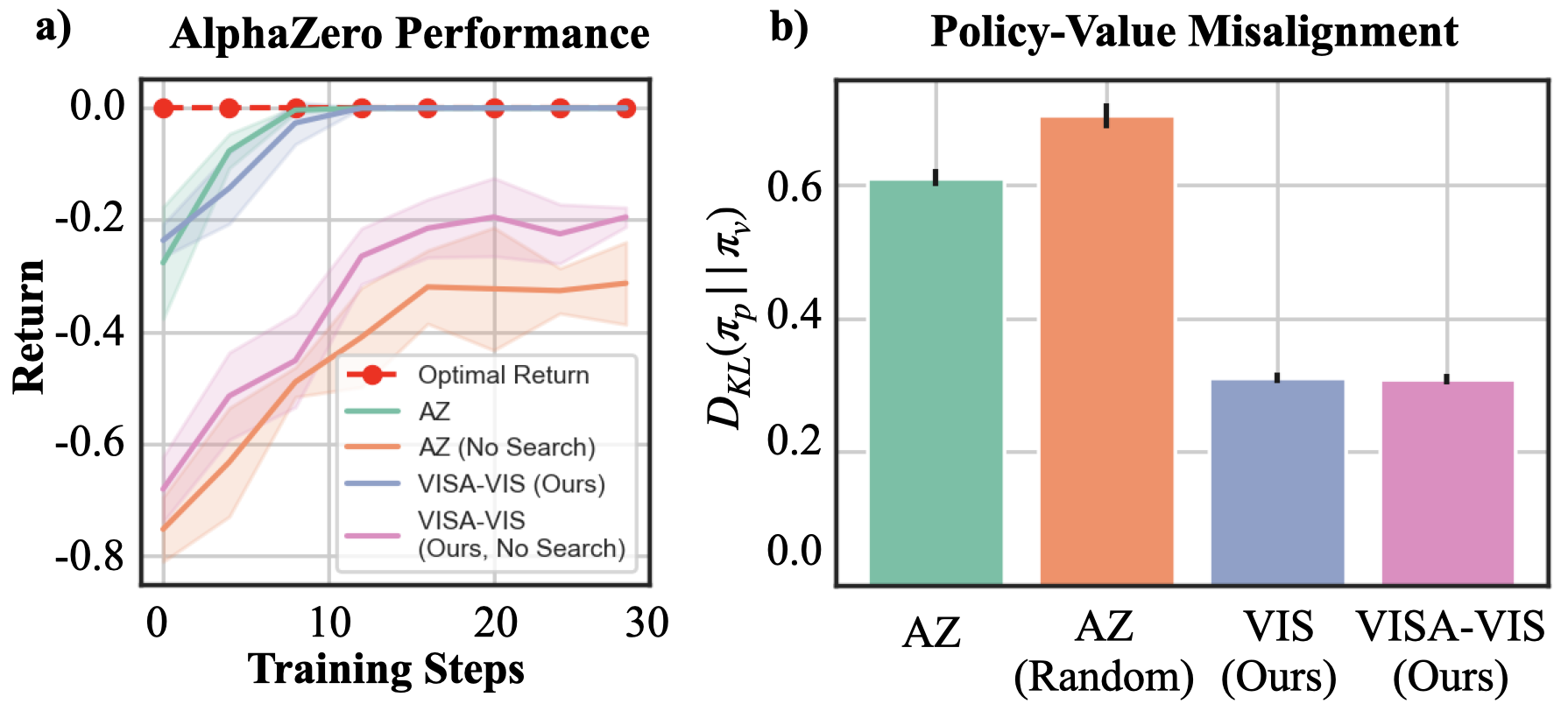}}
    \caption{\textbf{a)} Both AlphaZero and VISA-VIS reach optimal performance with search, but VISA-VIS learns a significantly stronger policy network (without search). \textbf{b)} VIS reduces policy-value misalignment by over 50\% over AlphaZero and maintains alignment when combined with VISA.}
    \label{fig:robustness_ablation_results}
\end{figure}
Next, we investigate how well our method, VISA-VIS, corrects the emergent phenomena identified in the previous section. We perform the same test-time analysis over trained VISA-VIS models and, to gauge the contributions of each component of our method and perform an ablation over both VISA and VIS separately. Aside from AlphaZero and VISA-VIS, we report results for an AlphaZero model that is trained from random initial states (each training rollout begins from a random legal, non-terminal state). Though training with random initial states does not scale to more complex games, it provides a lower-bound for value error in this setting (i.e. how well can AlphaZero do when it is allowed to visit all states many times). The results of this analysis are presented in \cref{fig:robustness_ablation_results}.

\textbf{Policy Performance}
In \cref{fig:robustness_ablation_results}, we compare AlphaZero and VISA-VIS with respect to the quality of the trained policy network. With search, both algorithms converge quickly to an optimal game-playing strategy, as expected. Without search, however, we see a difference in the quality of the policy networks. In particular, \textbf{VISA-VIS discovers a much higher-quality policy} network than vanilla AlphaZero. VISA-VIS, therefore, has reduced the burden of search on the learning algorithm---each individual network prediction is less reliant on being averaging across multiple search simulations. Though the VISA-VIS policy network does not fully reach optimal levels of performance, the goal of our analysis is to identify this search-learning tension and mitigate it partially.

\textbf{Policy-Value Misalignment}
\begin{figure}[t!]
    \centering
    \makebox[\linewidth][c]{\includegraphics[width=0.99\linewidth]{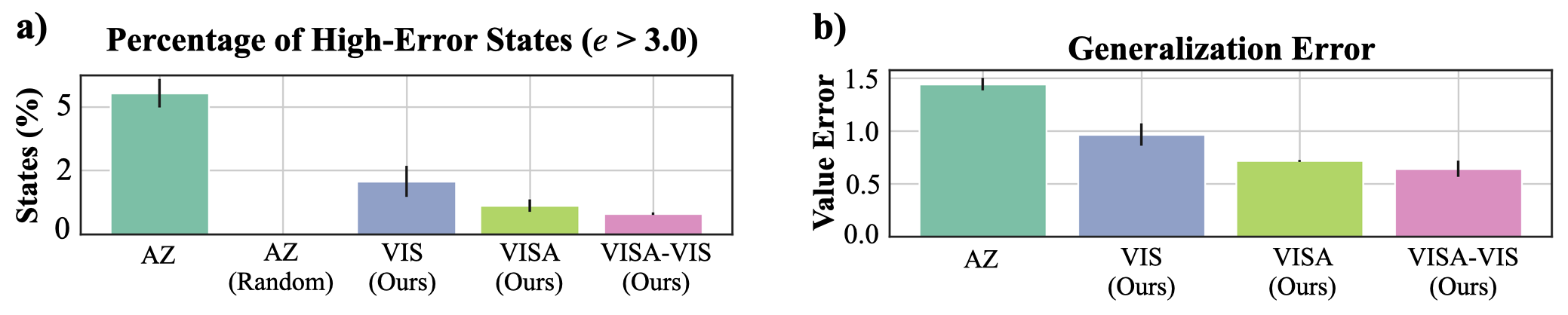}}
    \caption{VISA-VIS significantly reduces both the \textbf{a)} percentage of high error ($e > 3.0$) value predictions and \textbf{b)} average generalization error over AlphaZero's value network.}
    \label{fig:robustness_ablation_error}
\end{figure}
Next, we compute the average policy-value misalignment of each method across all game states (as defined in \cref{sec:robustness_misalignment}). \Cref{fig:robustness_ablation_results}b shows that VIS, and therefore VISA-VIS, significantly improve the consistency of the model's policy and value predictions, \textbf{reducing policy-value misalignment by 50\%} compared to AlphaZero and by over 60\% as compared to AlphaZero Random. These results also illustrate the difference between value prediction accuracy and policy-value misalignment. For example, though AlphaZero Random observes all game states directly, it still exhibits a high policy-value misalignment. This is because value estimation alone does not tell the whole story---it is still possible for AlphaZero's value function to be inconsistent with its policy, and our results show that VISA-VIS improves that inconsistency considerably.

\textbf{Value Error}
Lastly, we report value function performance in \cref{fig:robustness_ablation_error}. With respect to value error, VISA, VIS, and VISA-VIS each significantly improve the quality of AlphaZero's value predictions. In particular, VISA-VIS produces a high-value error prediction ($e > 3.0$) for only 0.6\% of states ($6.9\times$ fewer than AlphaZero), which is close to the perfect score obtained by the version of AlphaZero that observes all states (AZ Random). In terms of value generalization, \textbf{VISA-VIS reduces generalization error by over $50\%$}(from an average of 1.44 to 0.64). Notably, the performance improvements of VISA and VIS are complementary---VISA and VIS both decrease the number high-error states, but value error is decreased the most when they are combined (and likewise for generalization error).

\subsection{Scaling Up}
Here we demonstrate the scalability of our method through evaluation in significantly more complex games. The results of this analysis are presented in \cref{fig:complex_results}.
\begin{table}
  \caption{Adversarially-detected States.}
  \label{tab:adversarial_states}
  \centering
  \begin{tabular}{lll}
    \toprule
    Game     & Adversarial States     & Avg. Value Error\\
    \midrule
    4x4 Tic-Tac-Toe & \hspace{20pt}10,241 & \hspace{20pt}2.63 \\
    Connect Four & \hspace{20pt}52,768 & \hspace{20pt}2.36 \\
    \bottomrule
  \end{tabular}
\end{table}

\textbf{Adversarial State Detection}
In more complex environments, it is not possible to enumerate the state-space and evaluate each state individually. We therefore make two modifications to our evaluation strategy to support our analysis. First, we target endgame states, which allow us to target AlphaZero's most important decisions---a single move may determine the game's outcome---while still providing a ground truth outcome from the game-tree to compare against AlphaZero's predictions.

Second, inspired by prior work on adversarial examples for AlphaZero \cite{wang2022adversarial}, we define an adversarial strategy that aids in uncovering states for which AlphaZero makes inconsistent predictions. After computing search-informed action probabilities at test-time with \cref{eqn:background_mcts_policy}, we force AlphaZero to take the action $a_t = \min_a[\pi(a|s)]$ with minimum assigned probability. Because $\pi$ encodes the expected value of taking action $a$ from state $s$, this forces AlphaZero to play the worst move available. This is an important adversarial test of AlphaZero's robustness, as it nudges each trajectory towards subsets of the state-space that are most likely to have been overlooked by search during training.

In each of our environments, we played a trained AlphaZero model against itself for $100,000$ matches using the aforementioned adversarial detection strategy and collected all unique endgame states for which AlphaZero made a high-error ($e > 1.0$) value prediction. In 4x4 Tic-Tac-Toe, the adversarial detector uncovered over $10,000$ states that received high-error value estimates \textbf{in each of three separately trained AlphaZero models}. The detector similarly uncovered over $50,000$ states for Connect Four. A summary of these statistics is available in \cref{tab:adversarial_states}.

In \cref{fig:complex_results}a, which shows the results for our adversarial analysis in 4x4 Tic-Tac-Toe, we see that AlphaZero displays poor value function consistency---with an average value error of 2.63 for these states---and significant policy-value misalignment---with average KL-divergence $D_{KL}(\pi_p || \pi_v) = 1.41$. VISA-VIS corrects these issues, decreasing average value error 55\% to 1.20 and reducing policy-value misalignment by 76\% down to $D_{KL}(\pi_p || \pi_v) = 0.327$. We find a qualitatively similar result in our adversarial analysis of Connect Four (see \cref{fig:complex_results}b), where our adversarial detector identifies over 50,000 states that yield high-error predictions from AlphaZero. As before, VISA-VIS corrects the high-error predictions. More concretely, the adversarial states result in an average value error of 2.36 and an average policy-value misalignment score of 0.975 from AlphaZero. VISA-VIS compares favorably with an average error of 1.29 and an average policy-value misalignment score of 0.425---representing a 45\% and 56.5\% reduction over AlphaZero, respectively.

Altogether, the results in \cref{fig:complex_results} not only show that policy-value misalignment and value function inconsistency are general phenomena that span multiple environments, but they also provide evidence that our method, VISA-VIS, can reliably combat both phenomena simultaneously.

\begin{figure*}[t!]
    \centering
    \includegraphics[width=0.95\textwidth]{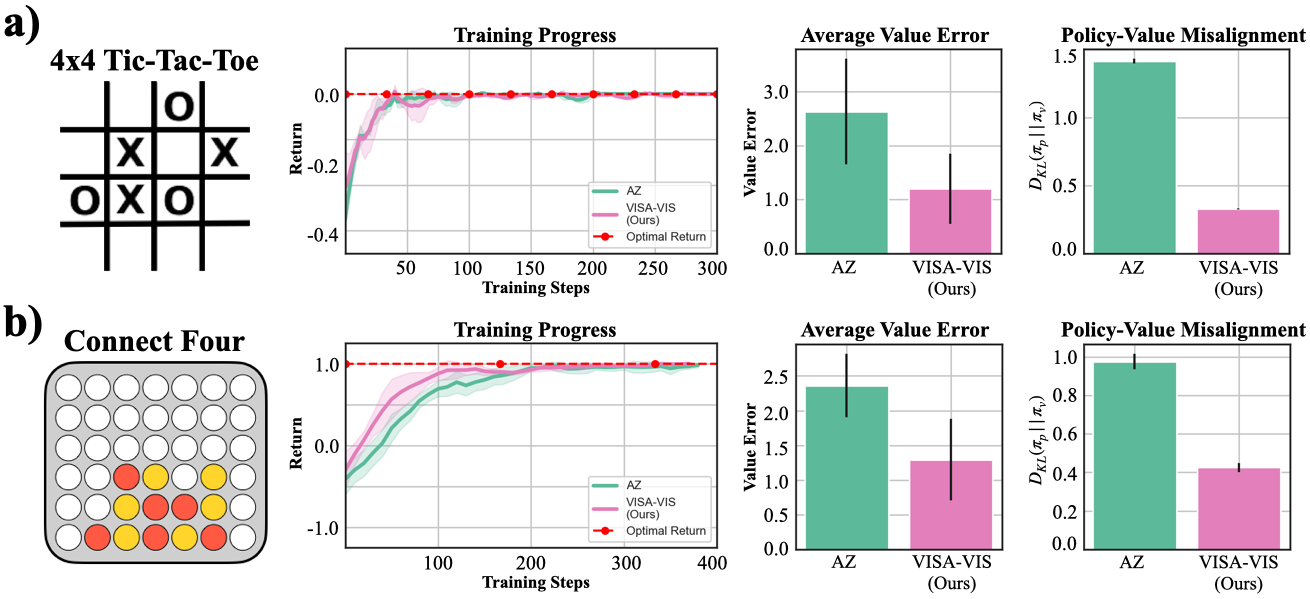}
    \caption{Scaling up game complexity. \textbf{a)} In 4x4 Tic-Tac-Toe, an adversarial analysis reveals over 10,000 states in which AlphaZero performs poorly (high value error, high policy-value misalignment). VISA-VIS corrects AlphaZero's errors in these states, reducing average value error by 55\% and policy-value misalignment by 76\%. During training, AlphaZero and VISA-VIS converge at a similar rate, indicating that VISA-VIS attains with no corresponding increase in sample complexity. \textbf{b)} In Connect Four, AlphaZero again exhibits high value function inconsistency and policy-value misalignment that is corrected by VISA-VIS (a 45\% and 56.5\% improvement, respectively). VISA-VIS also converges more quickly than AlphaZero on average, revealing an additional benefit of our method: reduced sample complexity for learning an optimal policy.}
    \label{fig:complex_results}
\end{figure*}

\textbf{Sample Complexity}
We also plot the game-playing performance of AlphaZero and VISA-VIS throughout training. As in the 3x3 case, the expected outcome of 4x4 Tic-Tac-Toe is a tie under optimal play ($z=1$). As shown in \cref{fig:complex_results}a, we find evidence that the time-to-convergence of AlphaZero and VISA-VIS are roughly the same. Overall, VISA-VIS is able to attain the aforementioned improvements in policy-value alignment and value function consistency without a loss in sample complexity. In the case of Connect Four, we see that VISA-VISA has a more significant positive impact on sample complexity. In \cref{fig:complex_results}b), it is clear that VISA-VIS converges to an optimal strategy more quickly than AlphaZero on average (the expected outcome for the first player is a win in Connect Four). This result suggests that, in this more complex game, the combined improvements of VISA-VIS reduces the sample complexity needed to learn an optimal policy.

\section{Conclusion and Future Work}
In this work, we uncovered multiple emergent phenomena that impact AlphaZero's robustness and generalization: sub-optimal policy learning, policy-value misalignment and value function inconsistency. Crucially, each of these phenomena are related to a tension between search and learning within the algorithm. For policy learning, search is necessary for fighting combinatorially-complex state spaces, but offers a crutch that prevents AlphaZero's policy network from generalizing effectively. Moreover, both policy-value misalignment and value function inconsistency are masked by search, as it is still possible for AlphaZero to play optimally despite their existence. Altogether, these phenomena may contribute to AlphaZero's susceptibility to adversarial examples; and brittleness more generally. We introduced VISA-VIS, a novel algorithm that combines Value-Informed Selection (VIS)---a novel action selection criteria for AlphaZero---and Value-Informed Symmetric Augmentation (VISA)---a novel data augmentation that targets value function uncertainty with respect to symmetry. Together, VISA-VIS leads to significant improvements for both policy-value alignment and value function robustness in AlphaZero, as demonstrated in our experiments.

\chapter{Afterword}
This dissertation has developed responsible emergent multi-agent behavior as a new discipline of AI that aims to address the challenges and opportunities of designing and deploying multi-agent systems that interact with humans or impact human experiences. In this dissertation, we have explored the importance of interpretability, fairness, and robustness in multi-agent systems, and proposed novel methods and improvements to multi-agent learning algorithms that can measure and shape emergent behavior. \cref{chapter:interpretability} introduced both low-level, behavioral techniques for analyzing coordination and implicit communication, and high-level concept-based interpretability tools for understanding multi-agent behavior through human-understandable concepts. \cref{chapter:fairness} examined algorithmic fairness through the lens of multi-agent learning, introducing both formal measures of fairness in cooperative multi-agent settings and novel algorithms for achieving provably fair outcomes for multi-agent teams. Finally, \cref{chapter:robustness} studied robustness in the context of search-based multi-agent learning systems like AlphaZero, identifying key weaknesses that underpin AlphaZero's failure modes and identifying novel algorithmic extensions to AlphaZero that improve its robustness.

\section{Future Work}
This thesis initiates the study of responsible emergent multi-agent behavior, but it is only a first step towards bridging Responsible AI and multi-agent learning. There are a number of ways that the methods proposed in this thesis can be extended to additional domains and agent incentive structures, or scaled to greater levels of complexity. The field of Responsible AI is also constantly evolving to meet the demands of new and developing algorithms and network architectures, and it will be the role of researchers interested in responsible emergent multi-agent behavior to similarly evolve the scope of this study. Here I discuss a few of the open challenges in this area:

\begin{itemize}
    \item Unsupervised concept spaces: Concept-based interpretability remains a crucial area for understanding emergent multi-agent behavior. A primary limitation of concept-based techniques, however, is that they require manual specification of human-interpretable concepts---in intrinsically-interpretable methods, these are used for training (i.e. concept bottlenecks); in post-hoc methods these are used for linear probing. Relatively unexplored in multi-agent settings is the use automatic concept extraction methods \cite{ghorbani2019towards, yeh2020completeness} to learn such a concept space directly. Learning a reliable concept space in an unsupervised manner would unlock new possibilities for concept-based interpretability in multi-agent settings.
    \item Fairness in complex systems: In this work, small multi-agent teams and assumptions of agent homogeneity go a long way in proving out useful properties of equivariant policies. Real-world multi-agent deployments, however, may vary greatly in the scale and composition of agents (i.e. differing levels of heterogeneity) and the validity of fairness criteria may vary based on this context. Consider a sports analogy---in professional soccer, players have a variety of skill-sets and their compensation is a function of both individual performance (e.g. number of goals, assists, etc) and team performance. A definition of fairness that favors equal distribution of individual outcomes may not fully capture this setting. Moreover, potentially sensitive variables, such as player health, may be important pieces of information that should be shared amongst the team. The development of novel classes of algorithms, additional formalizations, and more nuanced fairness criteria will be needed to capture more sophisticated behavior amongst larger multi-agent collectives.
    \item Scaling up robustness: With respect to the robustness of hybrid search-based learning algorithms, the goal of this work was to surface failure modes and possible root causes by combing through AlphaZero’s behavior in a fine-grained manner. For this reason, the analysis presented in this work focused on solved board games with enumerable game trees, which provide a ground truth for both action selection and value estimation. The primary feature of search-based learning algorithms, however, is their ability to scale to large game trees (e.g. Go). The adversarial state detector presented in \cref{chapter:robustness} is a first step in this direction, but identifying weak spots in AlphaZero’s policy/value network in a similarly thorough manner within complex game trees will require novel probing techniques. It remains to be seen, also, how intertwined AlphaZero's unique failure modes are with its unique strengths. For example, by solving policy-value misalignment and removing negative phenomena from emerging, are we also at risk of suppressing the positive phenomena that emerge from AlphaZero's self-play training (e.g. Move 37)? Further exploration of these dynamics is an important open challenge.
    \item Concept sharing and forward modeling: Though Concept Bottleneck Policies were primarily used as an interpretability tool in this work, endowing agents with a shared, grounded representation of their environment may also have robustness implications. For example, one can imagine using a concept bottleneck to model the future values of concepts (e.g. teammate position in $k$ time-steps), or one that, in addition to representing ego-centric concepts, estimates the concept predictions of other agents. Moreover, inline with recent advances in multi-agent communication, we can give agents the ability to communicate concepts from their bottlenecks. Enabling agents to update each other’s concept estimates via concept sharing may increase their ability to navigate conflicts and coordinate with new partners. Altogether, this functionality brings us closer to achieving a Cohen \& Levesque style framework of teamwork \cite{cohen1991teamwork} that is uniquely enabled by concept bottleneck architectures.
\end{itemize}

\section{Closing Thoughts}
\textit{``…the most fertile source of genuinely new ideas is graduate students being well advised in a university. They have the freedom to come up with genuinely new ideas, and they learn enough so that they’re not just repeating history, and we need to preserve that."}
\newline
---Geoffrey Hinton (Interviewed by Martin Ford), Architects of Intelligence, 2018.

It is hard to understate the level of excitement and activity that currently surrounds AI disciplines. Many of the algorithms and techniques discussed in this work have contributed far-reaching advances in the efficacy of AI systems and the pace of progress has escalated to the point where governments are rushing to determine how and when AI technologies should be regulated and companies in most industries are re-orienting their entire business models around AI products and services. But with this excitement comes unique pressures for researchers, especially fresh Ph.D. students learning to navigate the AI research landscape for the first time. Papers are now published at a dizzying rate---top conferences that five years ago received ~1,000 submissions per year are now receiving over 10,000 submissions---and an emphasis on scale has created a narrative that academic levels of compute and data are no longer sufficient to make tangible research progress.

However, there are still problems that cannot be solved by speed and scale alone. As researchers, we should seek out those problems. Rather than leaning-in to aspects of the current state of the art that make it successful, we should embrace the slow and search for the problems that are inefficient and messy. So for early-career researchers staring at a wave of Arxiv submissions thinking ``How will I ever make an impact?", I would offer you the same advice that Hinton offered at the end of the above interview: ``I think you need to sit and think for a few years".

\appendix

\chapter{Appendix}

\section{Implicit Signaling}
\subsection{Experimental Details}
\label{apdx:implicit_comm_apdx_details}

\subsubsection{Evader Strategy Explained}
The goal of the evader strategy in \cref{eqn:implicit_comm_evader_objective} is to run from pursuers along the maximum bisector between two pursuers. Given pursuer positions $\{q_{p_1}, ..., q_{p_n}\}$, we compute polar coordinates:
\begin{align*}
    r_i &= d(q_e, q_{p_i})
    \\
    \tilde{\theta}_i &= \textrm{atan2}(y_{p_i}, x_{p_i})
\end{align*}

\noindent for each pursuer $p_i$ relative the evader. Next, we define a potential field that will push the evader towards a bisector:
\begin{equation*}
    U(\theta_e) = \sum_i \cos(\theta_e - \tilde{\theta}_i)
\end{equation*}
\noindent Using Ptolemy's difference formula, we can expand the potential field as:
\begin{align*}
    U(\theta_e) &= \sum_i \cos(\theta_e - \tilde{\theta}_i) \\
    &= \sum_i \cos(\theta_e)\cos(\tilde{\theta}_i) + \sin(\theta_e)\sin(\tilde{\theta}_i) \\
    &= A \cos(\theta_e) + B \sin(\theta_e)
\end{align*}
\noindent when we plug-in the known $\tilde{\theta}_i$ values. The function $U(\theta_e)$ is maximized/minimized for values of $A$ and $B$ such that:
\begin{equation*}
    \nabla U(\theta_e) = -A\sin(\theta_e) + B\cos(\theta_e) = 0
\end{equation*}
\noindent which simplifies to:
\begin{equation*}
     \tan(\theta_e) = \frac{B}{A}
\end{equation*}
\noindent The evader follows the direction of the negative gradient ($-\nabla U(\theta_e)$) and pursues it at maximum speed. Modulating the cost function by $r_i$:
\begin{equation*}
    U(\theta_e) = \sum_i \bigg(\frac{1}{r_i}\bigg) \cos(\theta_e - \tilde{\theta}_i)
\end{equation*}
\noindent allows the evader to modify its bisector based on the distance to each pursuer. This helps significantly when the evader is stuck in symmetric formations.

\subsubsection{Unit Tests}
\begin{figure}[]
    \centering
    \makebox[\linewidth][c]{\includegraphics[width=0.95\linewidth]{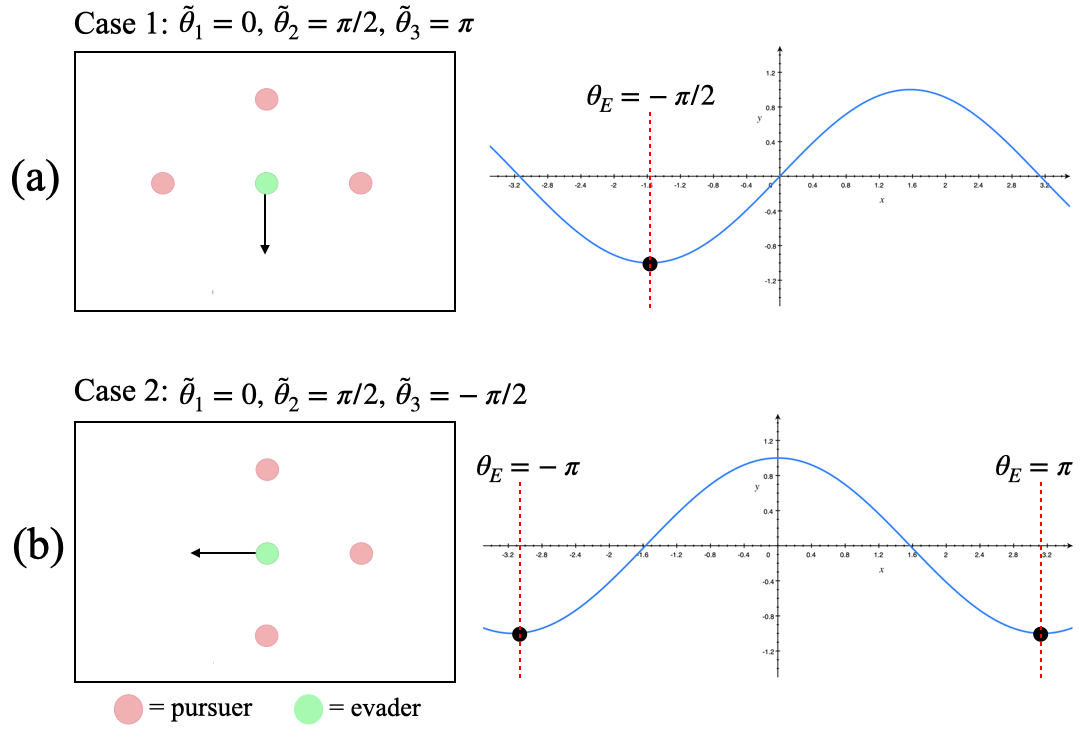}}
    \caption{Unit tests simulating one time-step of action selection as dictated by the evader's cost function. (a) Given pursuers at relative angles $\tilde{\theta}_1 = 0, \tilde{\theta}_2=\pi/2, \tilde{\theta}_3=\pi$ the evader will select the heading $\theta_e = -\pi/2$, which minimizes its cost. (b) Similarly, the evader will select the heading $\theta_e = -\pi = \pi$ when pursuers are located at relative angles $\tilde{\theta}_1 = 0, \tilde{\theta}_2=\pi/2, \tilde{\theta}_3=-\pi/2$.}
    \label{fig:implicit_comm_unit_test}
\end{figure}
We include a set of ``unit tests" that shed light on the evader's decision-making behavior. We assume $n=3$ pursuers are stationed around the evader at relative angles $\tilde{\theta}_1$, $\tilde{\theta}_2$, and $\tilde{\theta}_3$. For simplicity, we initialize the pursuers such that $\forall{i}, r_i = 1$ to negate the effects of radius modulation.
\begin{itemize}
    \item \underline{Case 1:} $\tilde{\theta}_1 = 0, \tilde{\theta}_2=\pi/2, \tilde{\theta}_3=\pi$.
    Pursuers are spaced equally around the upper-half of the unit circle. In this case, the cost minimizer occurs for $\theta_e = -\pi/2$ (see \cref{fig:implicit_comm_unit_test}a).
    \item \underline{Case 2:} $\tilde{\theta}_1 = 0, \tilde{\theta}_2=\pi/2, \tilde{\theta}_3=-\pi/2$.
    Pursuers are spaced equally around the right-half of the unit circle. In this case, the cost minimizer occurs for $\theta_e = -\pi$ and $\theta_e = \pi$(see \cref{fig:implicit_comm_unit_test}b). Either solution can be selected to move the evader towards the largest opening.
\end{itemize}

\noindent In general, the cosine function imposes structure on the evader's objective---it will oscillate between $[-1, 1]$ over a period of $\pi$, taking on a maximum value of $U(\theta_e) = 1$ when the difference between the evader's heading $\theta_e$ and the relative angle of a pursuer $\tilde{\theta}_i$ is zero and a minimum $U(\theta_e) = -1$ when $\theta_e - \tilde{\theta}_i = \pi$. Summing over all $\tilde{\theta}_i$'s incentivizes the evader to follow the heading that splits the largest bisector of the pursuers, as shown in the examples.

\subsection{Pincer Pursuit Explained}
\label{apdx:implicit_comm_details_pincer}
The Pincer strategy described by \cref{eqn:implicit_comm_pincer} is inspired by prior work on theoretical pursuit-evasion \cite{ramana2017pursuit}. Solving \cref{eqn:implicit_comm_pincer} requires optimizing over both $\boldsymbol{\tilde{\theta}_i}$ and $\boldsymbol{r_i}$. Fortunately, we can exploit the toroidal structure of the environment to construct an optimization routine that solves for $\boldsymbol{\tilde{\theta}_i}$ and $\boldsymbol{r_i}$ discretely. In particular, we can unroll the torus $k$ steps in each direction to generate $(2k + 1)^2$ replications of the current environment state. Rather than solving for optimal $\boldsymbol{\tilde{\theta}_i}$ and $\boldsymbol{r_i}$ values directly, we find the set $\boldsymbol{P}$ of pursuers that maximize \cref{eqn:implicit_comm_pincer} across all replications of the environment. We constrain the problem by limiting selections of each pursuer $p_i$ to replications of \textit{itself only}. This dramatically cuts down the number of possible sets $\boldsymbol{P}$ from $\binom{(2k+1)^2n}{n}$ to $\binom{(2k+1)^2}{1} \cdot \binom{(2k+1)^2}{1} \cdot \binom{(2k+1)^2}{1}$, where $n$ is the number of pursuers in the environment. Thus, we solve \cref{eqn:implicit_comm_pincer} via a discrete optimization over each of the $((2k + 1)^2)^3$ possible pursuer selections.
\begin{figure}[]
    \centering
    \makebox[\linewidth][c]{\includegraphics[width=0.6\linewidth]{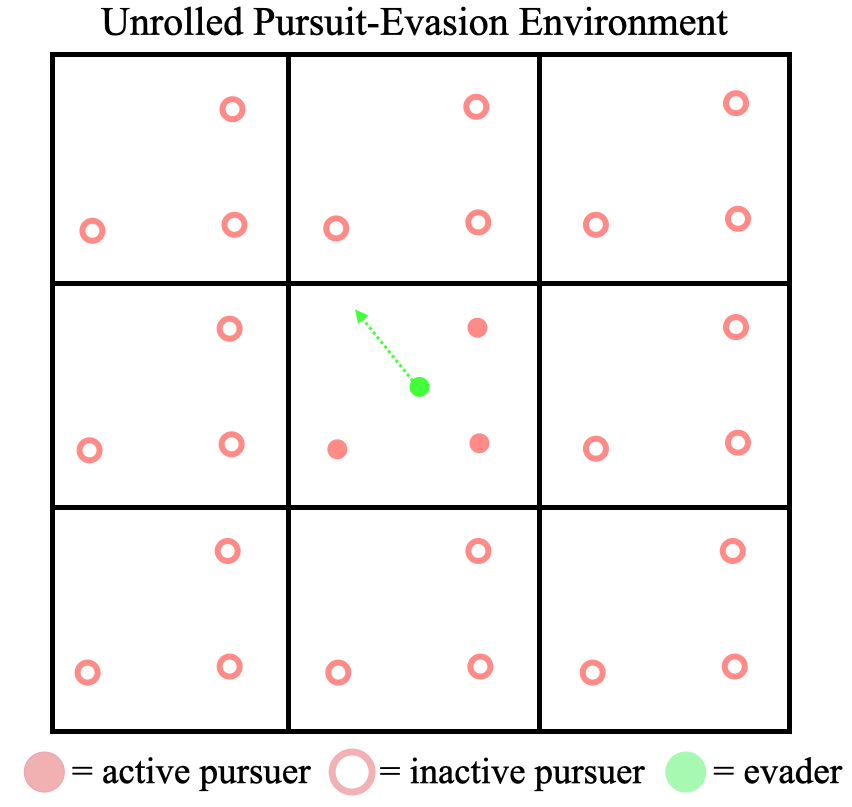}}
    \caption{The torus environment unrolled $k=1$ times in each direction. The filled in red circles denote the ``active" pursuers that are pursuing the evader at the current time-step, whereas the empty circles represent ``inactive" pursuers. We consider only a single evader, located in the center tile.}
    \label{fig:implicit_comm_replica}
\end{figure}
The resulting set $\boldsymbol{P}$ defines the set of ``active" pursuers that will pursue the evader directly at the next time-step. Due to the nature of the evader's objective function---it is attracted to bisectors and repulsed from pursuers---the maximum $\boldsymbol{P}$ tends to favor symmetric triangular formations. Though this method obviously does not scale well with $n$ and $k$, we found that we are able to find a sufficient maximizer with low values of $k$ (i.e. $k=1$ in our experiments). The replication process is shown for the $k=1$ case in \cref{fig:implicit_comm_replica}. Note that we discriminate between ``active" pursuers---i.e. those $p_i \in P$ pursuing the evader at the current time-step---from ``inactive" pursuers.

\subsection{Curriculum-driven learning}
\label{apdx:curriculum}
In this section, we highlight the benefits of our proposed curriculum-driven learning strategy strategy, showing through an ablation study that it greatly improves the quality of experience an agent obtains early in the training process. We then discuss self-play as an alternative curriculum learning paradigm. Finally, we present a formal algorithm for CD-DDPG.

\subsubsection{Velocity curriculum training phases}
The velocity curriculum introduced in \cref{eqn:implicit_comm_curriculum} defines two separate ``training phases"---one in which $\lvert \Vec{v}_P \rvert$ decays linearly over $v_{\textrm{decay}}$ epochs; and one in which $\lvert \Vec{v}_P \rvert$ remains constant, allowing the pursuers to tune their action policies at the current velocity level. We start training at a pursuer velocity ratio of $\lvert \Vec{v}_P \rvert / \lvert \Vec{v}_E \rvert = 1.2$ and anneal $\lvert \Vec{v}_P \rvert / \lvert \Vec{v}_E \rvert$ by $0.1$ over $v_{\textrm{decay}}$ epochs. At the beginning of each phase after the first, the weights of each agent's policy and action-value network are copied over from the final checkpoint of the previous training phase.

\subsubsection{Algorithm}
We present CD-DDPG in algorithmic form in \cref{alg_cd_ddpg}. Though we focus here on DDPG, we note that our strategy is applicable to any off-policy learning algorithm.
\begin{algorithm}[tb]
    \caption{Curriculum-Driven DDPG for a single agent}
    \label{alg_cd_ddpg}
    Define initial velocity $\Vec{v}_0$, final velocity $\Vec{v}_{\textrm{final}}$, decay period $v_{\textrm{decay}}$, and warm-up episode threshold W \\[2pt]
    Initialize actor $\mu_\phi(s)$ with parameters $\phi$ and target actor $\mu_{\phi'}$ with weights $\phi' \gets \phi$ \\[2pt]
    Initialize critic $Q_\omega(s, a)$ with parameters $\omega$ and target critic $Q_{\omega'}$ with weights $\omega' \gets \omega$ \\[2pt]
    Initialize replay buffer $\mathcal{D}$
    \begin{algorithmic}[1] 
    \FOR{\normalfont{i = 1} \TO \normalfont{max-epochs}}
        \STATE Initialize random process $\mathcal{N}$ for exploration \\[2pt]
        \STATE Receive initial state $s_1$ \\[2pt]
        \FOR{\normalfont{t = 1} \TO T}
            \IF {\normalfont{i} $\leq$ W}
                \STATE Sample action $a_t = \beta_0(s_t)$
            \ELSE
                \STATE Sample action $a_t = \mu_\phi(s_t) + \mathcal{N}$ 
            \ENDIF \\[2pt]
            \STATE Execute $a_t$, observe reward $r_t$ and new state $s_{t+1}$ \\[2pt]
            \STATE Store transition $(s_t, a_t, r_t, s_{t+1})$ in $\mathcal{D}$ \\[2pt]
            \STATE Sample random minibatch of transitions $\{(s_t, a_t, r_t, s_{t+1})\}_{i=1}^N$ from $\mathcal{D}$\\[2pt]
            \STATE Compute TD-target: $y_i = r_i + Q'_\omega(s_{i+1}, \mu'_\phi(s_{i+1}))$ \\[2pt]
            \STATE Update critic by minimizing the TD-error: $\mathcal{L}(\omega) = \frac{1}{N} \sum_i (y_i - Q_\omega(s_i, a_i))^2$\\[4pt]
            \STATE Update actor using the sampled policy gradient: $\nabla_\phi J(\phi) \approx \frac{1}{N} \sum_i \nabla_\phi \mu(s) \nabla_a Q_\omega(s, a)$ \\[4pt]
            \STATE Update target networks: $\phi' \gets \tau \phi + (1-\tau) \phi'$, \; $\omega' \gets \tau \omega + (1-\tau) \omega'$\\[2pt]
        \ENDFOR
        \STATE Step velocity curriculum:$\lvert \Vec{v}_P \rvert \gets \Vec{v}_{\textrm{final}} + (\Vec{v}_0 - \Vec{v}_{\textrm{final}})*\max((v_{\textrm{decay}}-\textrm{i})/v_{\textrm{decay}}, 0.0)$\\[2pt]
    \ENDFOR
    \end{algorithmic}
\end{algorithm}

\subsection{Qualitative Results}
\label{apdx:implicit_comm_apdx_qual}
\begin{figure}[t!]
    \centering
    \makebox[\linewidth][c]{\includegraphics[width=0.65\linewidth]{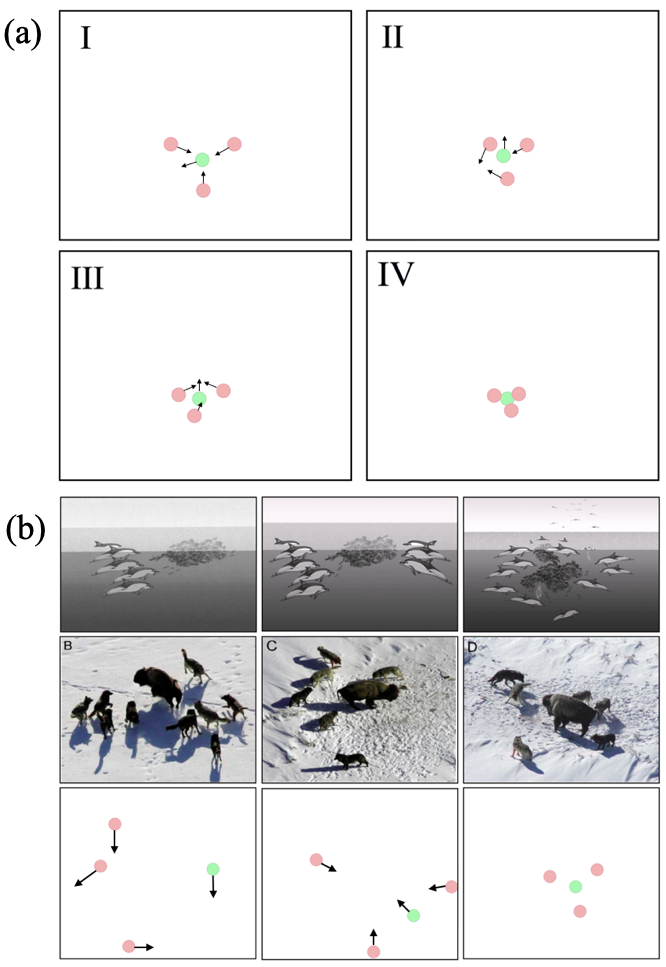}}
    \caption{Qualitative results from the pursuit-evasion experiment. (a) The pursuers coordinate to capture the evader, displaying positional shifts indicative of implicit signaling. (b) \textit{Top:} A diagram of dolphin foraging strategies documented in \protect\cite{neumann2003feeding}. \textit{Middle:} Photos of wolves coordinating while hunting, as shown in \protect\cite{muro2011wolf}. \textit{Bottom:} The learned behavior of our multi-agent system.}
    \label{fig:implicit_comm_qual_results}
\end{figure}
\subsubsection{Animalistic Coordination}
We perform post-hoc qualitative analysis of CD-DDPG trajectories. In the trajectories, the pursuers appear to adjust their position slightly in response to the movements of fellow pursuers as they close in on the evader (\cref{fig:implicit_comm_qual_results}a). Moreover, it seems that the pursuers occasionally move away from the evader---something a less coordinated strategy would not do---to maintain the integrity of the group formation. This explains the performance difference between CD-DDPG and the competing analytical strategies, as the potential-field pursuers have no basis for making small-scale adaptive movements. We interpret these results as encouraging evidence that implicit signaling has emerged amongst CD-DDPG pursuers.

\subsubsection{Phantom Coordination}
\label{apdx:implicit_comm_apdx_qual_phantom}
In \cref{sec:implicit_comm_results}, we describe ``phantom coordination" as independent action that is falsely perceived as coordination from the perspective of IC. Phantom coordination appears in the greedy pursuit strategy, where we see the IC score for greedy pursuers increase slightly as the velocity of the pursuers decreases. This is counter-intuitive because each greedy pursuer ignores the behavior of its teammates. We would expect the IC score for greedy pursuers to remain flat, mirroring the IC score of the random pursuers.

To diagnose phantom coordination in our environment, we perform qualitative analysis of greedy pursuit at low velocities. In particular, we examine $n=3$ pursuers as they chase an evader at a speed of $\lvert \Vec{v}_p \rvert / \lvert \Vec{v}_e \rvert = 0.4$. The pursuers have no chance of successfully capturing the evader, as evidenced by their capture success performance at this velocity in \cref{fig:implicit_comm_cap_success}. However, we find that the straight-line chase patterns of greedy pursuers form temporary triangular patterns around the evader. In \cref{fig:implicit_comm_phantom}, the greedy pursuers form an ad-hoc triangular formation around the evader for a duration of 60 time-steps. The leftmost pursuer lies equidistant from the evader around the periodic boundaries and iterates between moving leftward and rightward, depending on which direction creates a shorter line to the evader. The other two pursuers approach the evader from above and below, respectively. This behavior causes the evader to move in a zig-zag pattern in the center of the triangle until a large opening appears, through which the evader can escape.

This behavior leads to phantom coordination because IC is computed between two consecutive time-steps and averaged over whole trajectories. This means that, in the case of the greedy pursuers, IC scores for independent actions are averaged together with subsets of each trajectory that consist of seemingly highly coordinated behavior.
\begin{figure}[]
    \centering
    \makebox[\linewidth][c]{\includegraphics[width=0.95\linewidth]{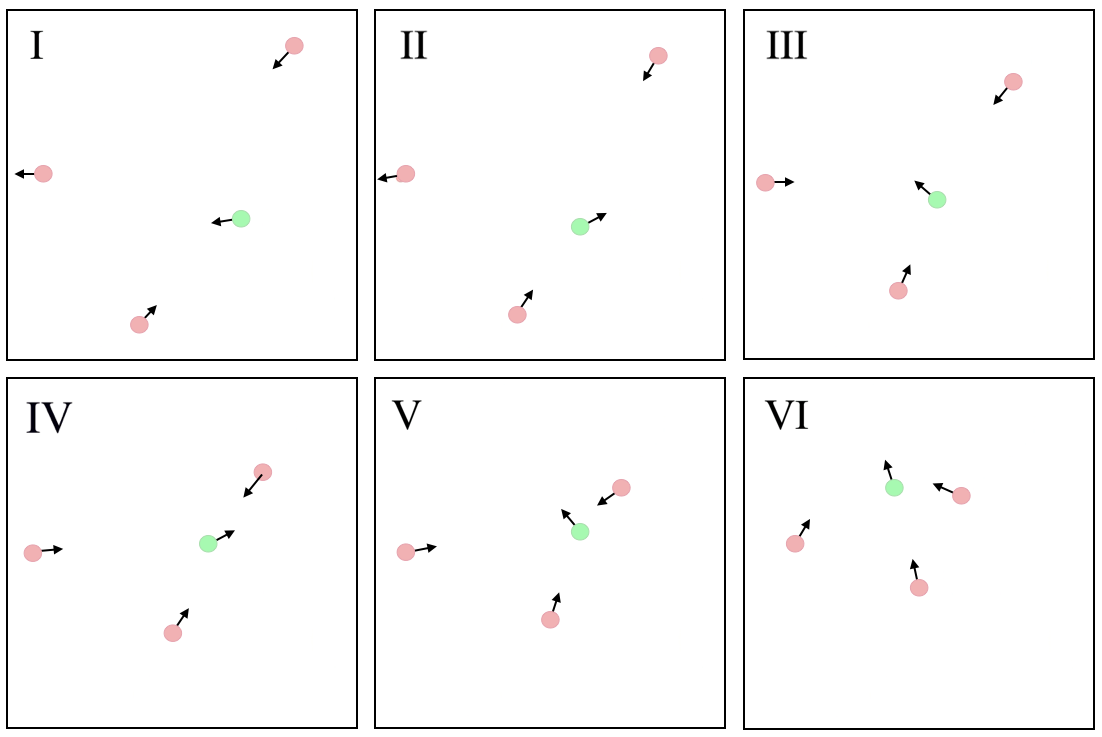}}
    \caption{Snapshots from a 60 time-step trajectory in which phantom coordination appears. Though the pursuers are following independent greedy strategies, their actions produce a triangular formation that is perceived as coordination by the IC performance measure.}
    \label{fig:implicit_comm_phantom}
\end{figure}

\subsection{Training details}
\label{apdx:implicit_comm_apdx_experimental_details}
\subsubsection{Potential-Field Hyperparameters}
The Greedy potential-field defined by \cref{eqn:implict_comm_attractive_potential} is subject to a single hyperparameter $k_{\textrm{att}}$, which defines the strength of the attractive pull of the agent towards its goal. We set $k_{\textrm{att}}=1.5$ in all of our experiments. 

\subsubsection{Policy Learning Hyperparameters}
Actors $\mu_\phi$ are trained with two hidden layers of size 128. Critics $Q_\omega$ are trained with three hidden layers of size 128. We use learning rates of $1\textrm{e}^{-4}$ and $1\textrm{e}^{-3}$ for the actor and critic, respectively, and a gradient clip of 0.5. Target networks are updated with Polyak averaging with $\tau=0.001$. We used a buffer $\mathcal{D}$ of length $500000$ and sample batches of size $512$. We used a discount factor $\gamma=0.99$. All values are the result of standard hyperparameter sweeps.

\subsubsection{Experiments}
The models for all experiments were trained for $50000$ epochs of $500$ steps. We reserve $W=1000$ epochs for warm-up, during which the behavioral policy $\beta_0$ is followed. Pursuer velocity is decayed over $v_{\textrm{decay}}=15000$ epochs. Test-time performance, as in \cref{fig:implicit_comm_cap_success}, \cref{fig:implicit_comm_ic_results}, and \cref{fig:implicit_comm_high_ic} is averaged across 100 independent trajectories with separate random seeds. For the ablation in \cref{fig:implicit_comm_ablation}, each method was trained for $7500$ epochs across 10 random seeds. All experiments leveraged an Nvidia GeForce GTX 1070 GPU with 8GB of memory.

\section{Interpretability}
\subsection{Discussion}
\label{apdx:interpretability_discussion}

\subsubsection{Limitations}
One of the limitations of the proposed method is that it requires manual specification of human-interpretable concepts to be used in the policy bottlenecks at training time. As illustrated in the experiments section of the paper, these are often intuitive characteristics of the specific environment targeted (e.g., positions and orientations of agents, states of various objects that can be interacted with, etc.), which can be then automatically extracted from the environment simulator. In future work, it would be interesting to learn such key relevant concepts automatically. Another potential limitation of the method is that the combination of traditional reward-based RL loss and concept-loss can make for a slightly complex loss landscape.
However, as illustrated by our experiments, one mechanism for controlling this is via the concept loss $\lambda$ parameter. To make the selection of this parameter easier, a baseline of the agent policies ran \emph{without} the concept bottlenecks can be used to establish reasonable bounds on their expected performance in the task. Subsequently, the parameter can be swept over (similar to our experiments) to identify a $\lambda$ that leads to accurate concept learning with minimal impact on final performance. Finally, we note that we do not provide theoretical guarantees that CBPs converge to a specific solution concept (i.e. refinement of the Nash equilibrium). In general, such guarantees are difficult to provide when combining MARL with neural networks and the addition of a concept-based loss further complicates this picture; though we do point out that the lack of such guarantees in many prior MARL works highlights the importance of the interpretability problem.

\subsubsection{Broader Impacts}
In terms of potential societal implications, modeling of interactive decision-making and concept estimations of agents in such systems might be used by adversarial actors to intervene on concepts and intentionally cause performance degradation or agents misbehaving.
On the other hand, concept bottlenecks themselves could be a means of identifying such adversarial attacks, by detecting shifts in concept estimations that were otherwise unexpected. 

\subsection{Training Details}
\label{apdx:interpretability_training}
\subsubsection{Concept Spaces}
The following details are common across all of our environments:
\begin{itemize}
    \item \emph{States}: Each environment is a grid world. Grid cells can be filled by an agent or an environment-specific object.
    \item \emph{Observations}: Agents receive partial multi-modal observations consisting of their own position and orientation in the grid, as well as a partial RGB rendering of a $5\textrm{-cell} \times 5\textrm{-cell}$ window centered at the agent.
    \item \emph{Actions}: Agents can execute one of 8 actions: no-op, move $\{\texttt{up}, \texttt{down}, \texttt{left}, \texttt{right}\}$, turn $\{\texttt{left}, \texttt{right}\}$, and interact.
\end{itemize}
\noindent The concepts supported by each of the Melting Pot environments are outlined as follows: 
\begin{itemize}
    \item \emph{Collaborative Cooking}: We assume that each environment supports the following concepts (and concept types): (i) agent position (scalar); (ii) agent orientation (scalar); (iii) whether or not an agent has a tomato, dish or soup (binary); (iv) cooking pot position (scalar) (v) the progress of the cooking pot (scalar); (vi) the number of tomatoes in the cooking pot (categorical); and (vii) the position of each tomato and dish (scalar).
    \item \emph{Clean Up}: We assume that each environment supports the following concepts (and concept types): (i) agent position (scalar); (ii) agent orientation (scalar); (iii) closest pollution position (scalar); (iv) closest apple position (scalar).
    \item \emph{Capture the Flag}: We assume that each environment supports the following concepts (and concept types): (i) agent position (scalar); (ii) agent orientation (scalar); (iii) flag position (scalar); (iv) whether or not an agent has the opponent's flag (binary) (scalar); (v) floor paint color (categorical); (vi) flag indicator tile color (categorical).   
\end{itemize}

\subsubsection{Architecture and Hyperparameters}
For both PPO and ConceptPPO, each agent's policy network consists of CNN and MLP encoders (for image and position/orientation inputs, respectively), followed by a two-layer MLP and a linear mapping that compresses the encoded inputs into concept predictions. Concept estimates are fed through a two-layer MLP, which produces the final action. ReLU activation is used throughout (except in the bottleneck layer itself). As a baseline, we use vanilla PPO (no concept loss) with the same architecture. We train 10 individual policies across each of the following values of $\lambda$: $\{0.01, 0.1, 0.25, 0.5, 0.75, 1.0. 2.0, 5.0, 10.0\}$. We conducted a wide hyperparameter sweep to train both ConceptPPO and PPO, which is summarized in \cref{tab:interpretability_hyperparams}.

\begin{table}
  \caption{Hyperparameter sweeps for training ConceptPPO and PPO. Swept values are shown in braces and highest-performing values are bolded.}
  \label{tab:interpretability_hyperparams}
  \centering
  \begin{tabular}{lll}
    \toprule
    \multicolumn{2}{c}{Hyperparameters}                   \\
    \cmidrule(r){1-2}
    Name     & Value      \\
    \midrule
    Training Steps & $25\mathrm{e}6$     \\
    Batch Size     & $\{64, 128, 256, \boldsymbol{512}, 1024\}$      \\
    Learning Rate     & $\{1\mathrm{e}{-}3, \boldsymbol{1\mathrm{e}{-}4}, 1\mathrm{e}{-}5\}$      \\
    Gradient Norm     & $\{0.1, \boldsymbol{0.5}, 1.0, 5.0, 10.\}$      \\
    PPO Unroll Length     & $\{4, 8, \boldsymbol{16}, 32\}$      \\
    PPO Clipping $\epsilon$     & $\{0.01, \boldsymbol{0.05}, 0.1, 0.2, 0.3\}$      \\
    PPO Entropy Cost    & $\{0.001, \boldsymbol{0.01}, 0.05\}$      \\
    PPO Value Cost    & $\{0.75, 0.9, \boldsymbol{1.0}\}$      \\
    \bottomrule
  \end{tabular}
\end{table}

\subsubsection{Loss Breakdown}
\begin{figure*}
    \centering
    \includegraphics[width=0.99\textwidth]{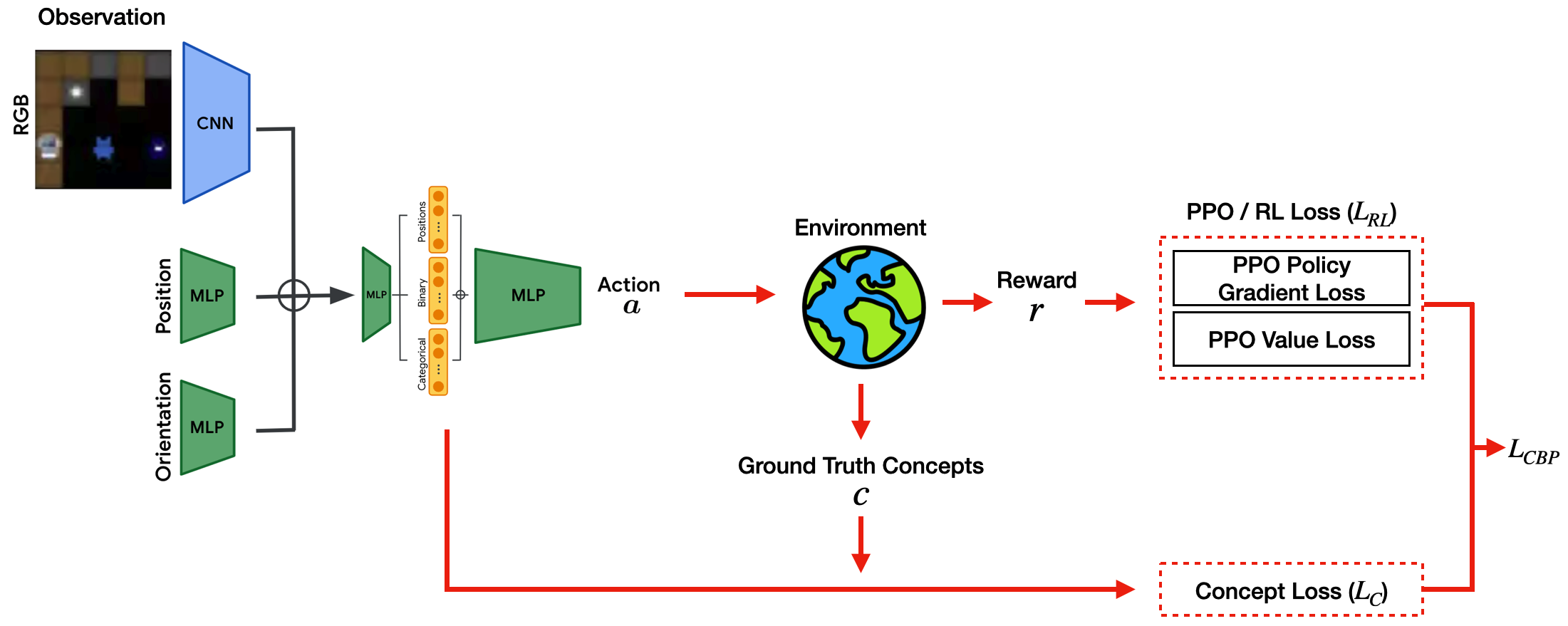}
    \caption{An architecture diagram showing how the concept bottleneck loss $L_{CBP}$ is computed in \cref{eqn:interpretability_concept_loss}
    . Crucially, PPO is applied only over $L_{RL}$, which captures environmental rewards (e.g., delivering in collaborative cooking). Separately, concept loss $L_C$ is computed in a supervised fashion using concept labels that are extracted from the environment. The concept loss is \textbf{not} incorporated into an agent's reward function and therefore does not incentivize its behavior directly.}
    \label{fig:interpretability_detailed_architecture}
\end{figure*}
In this section, we present a more detailed discussion of the concept bottleneck loss introduced in \cref{sec:interpretability_method} and, specifically, \cref{eqn:interpretability_concept_loss}. 
The objective $L_{CBP}$ is a function of two separately computed objectives: (i) $L_{RL}$, which is a reward-based loss that operates only over environmental rewards (e.g., delivering in collaborative cooking); and (ii) $L_C$, which is a supervised loss computed over concept labels that are extracted from the environment. We are leveraging PPO to optimize each agent's policy, which includes terms for both value error and generalized advantage estimation. A crucial detail of our architecture is that PPO's value and advantage estimation operates over environmental rewards only. The accuracy of concept predictions is not incorporated into an agent's reward in any way---i.e. we are not adding an auxiliary/intrinsic rewards associated with the concept predictions to the advantage function estimate. This is done intentionally so as not to bias agent behavior away from states that result in high concept error prediction.

\subsection{Environmental Demands of Coordination}
\label{apdx:interpretability_environmental_demands}
For each Concept PPO and PPO policy trained in our cooking environments, we mask out, for each agent, all of the concepts related to that agent’s teammate. We measure the average cumulative reward attained over 100 trajectories each. The results of this intervention test are shown in \cref{fig:interpretability_results_env_coordination_apdx}. The level of coordination required by the environment is apparent from the severity of performance degradation that results from intervening on each agent’s estimates of its teammate. Most notable is the contrast between the basic environment (blue) and the impassable environment (purple). The impassable environment requires strict coordination---agents can only access a subset of ingredients and must pass items to each other across the center divider. In this case, intervention performance drops to near-zero across all policies. In the basic environment, on the other hand, there are no obstacles and agents have access to their own supply of ingredients; and so both policies in which agents coordinate and policies in which agents act independently are successful. Consequently, the impact of intervention is much less severe, as the performance of policies that coordinate (and fail under intervention) is averaged in with independent policies that are unaffected by intervention. Altogether, these results indicate that our method accurately distinguishes environments that require coordination from those that do not.
\begin{figure*}
    \centering
    \includegraphics[width=0.9\textwidth]{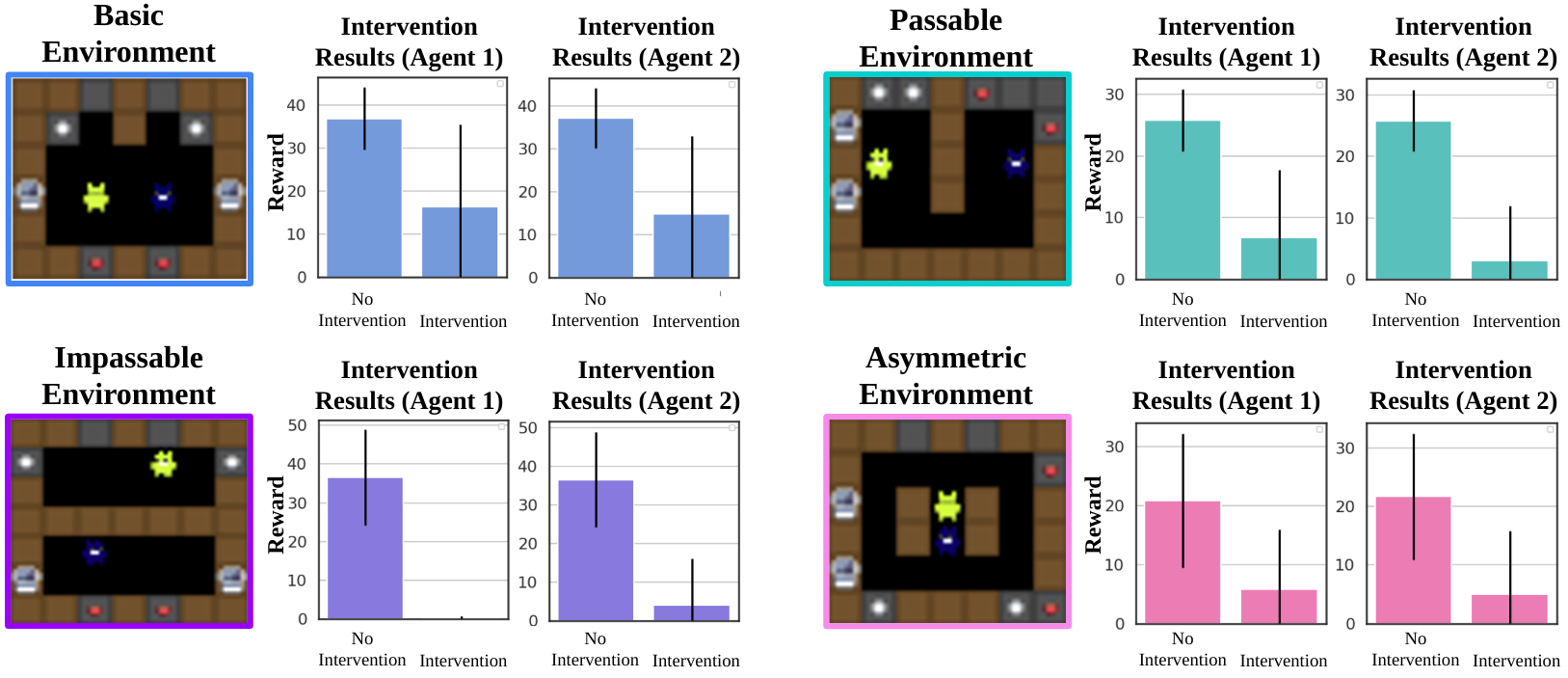}
    \caption{\textbf{Does the environment require coordination?} Averaging the impact of concept intervention over all policies trained in an environment reveals the extent to which coordination is required by that environment. In the impassable environment, agents cannot solve the task without coordinating, leading to consistent performance drops under intervention. In the basic environment, policies that coordinate (and therefore fail under intervention) are averaged with policies that act independently (and are uninterrupted by intervention), so the overall impact of intervention is less severe.}
    \label{fig:interpretability_results_env_coordination_apdx}
\end{figure*}

We also present results for this analysis in two additional cooking environments: passable and asymmetric. As in the basic and impassable environments, we find that the coordination demands of the asymmetric and passable environments can be revealed through concept intervention. Both environments involve moderate coordination---the most efficient strategy for bringing items to and from the cooking pot involves passing them over the counter from one agent to another---but this coordination is not required (like in the impassable environment). For this reason, we still see a significant drop in the performance of our agents under intervention, but not a full decrease to zero.

\subsection{Factors of Coordination Analysis (cont'd)}
\label{apdx:interpretability_coordination_analysis}
\subsubsection{In vs. Out-of-Distribution Orientation Analysis}
\label{apdx:interpretability_ood_analysis}
Here we further examine the impact of intervening on orientation. First, we compute an empirical distribution of orientations that each agent visits over 100 test-time trajectories (across five random seeds each). From those trajectories, we compute both the mean logits vector produced by each agent’s concept bottleneck for the \textit{most frequently visited orientation}, and the distribution represented by those logits. Crucially, the mean logits vector for the most frequently visited orientation can be used as an in-distribution value for concept interventions---it is an orientation estimate that each agent has likely seen before. Similarly, we can create an out-of-distribution concept intervention mask by permuting the mean logits vector such that the probability mass shifts a different cardinal orientation. Using these manufactured orientations, we perform the same intervention technique as before, iteratively replacing each agent’s orientation concept with the mean logits vector in place of each cardinal direction, and measure performance of the multi-agent team.
\begin{figure}
    \centering
    \includegraphics[width=0.99\textwidth]{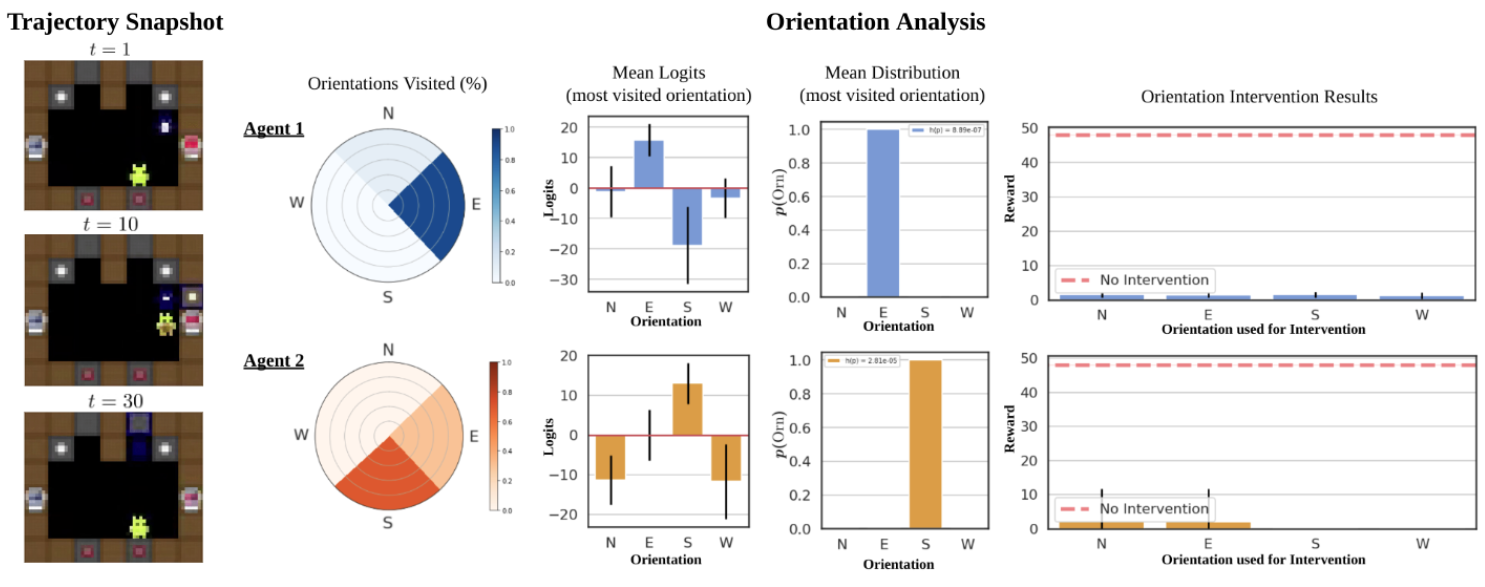}
    \caption{Overview of our method for constructing in-distribution concept masks for intervention. Using test-time trajectories, we compute an empirical distribution of the orientations experienced by each agent. Next, we compute the mean logits (and corresponding distribution) produced by each agent for for the most frequently visited orientation of its teammate. Finally, we report the results of intervening with this mean logits vector in place of each of the four cardinal orientations. The results of this intervention are consistent, regardless of whether the mask used was in- or out-of-distribution.}
    \label{fig:interpretability_results_orientation}
\end{figure}

To provide intuition for this technique, we ground it in the cooking task used for our experiments. Consider, for example, the tomato-picking agent in the trajectory snapshot of \cref{fig:interpretability_results_orientation}, who primarily faces south and east---the directions needed to pick tomatoes from the bottom counter and place them in the cooking pot on the right-hand side. The tomato-picking agent’s teammate (the waiter agent) must learn to accurately model these orientations to satisfy its concept prediction objective, and so frequently passes low-entropy distributions for south and east to its policy network. Intervening on the waiter agent’s orientation estimate with a low-entropy distribution for south and east, therefore, creates an in-distribution mask, whereas intervening with a low-entropy distribution for north or west creates an out-of-distribution mask.

The results of this intervention test are shown in \cref{fig:interpretability_results_orientation}, alongside the orientation distributions, mean logits, and the distribution represented by those logits. Interestingly, the previously observed degradation of performance as a result of intervening on teammate orientation is upheld, regardless of whether the mask value is in- or out-of-distribution. 

This provides further evidence that the agent’s reliance on orientation is a legitimate artifact of their emergent strategy and not an adversarial or OOD example.
\begin{figure}
    \centering
    \begin{subfigure}[b]{0.375\textwidth}
        \centering
        \includegraphics[width=\textwidth]{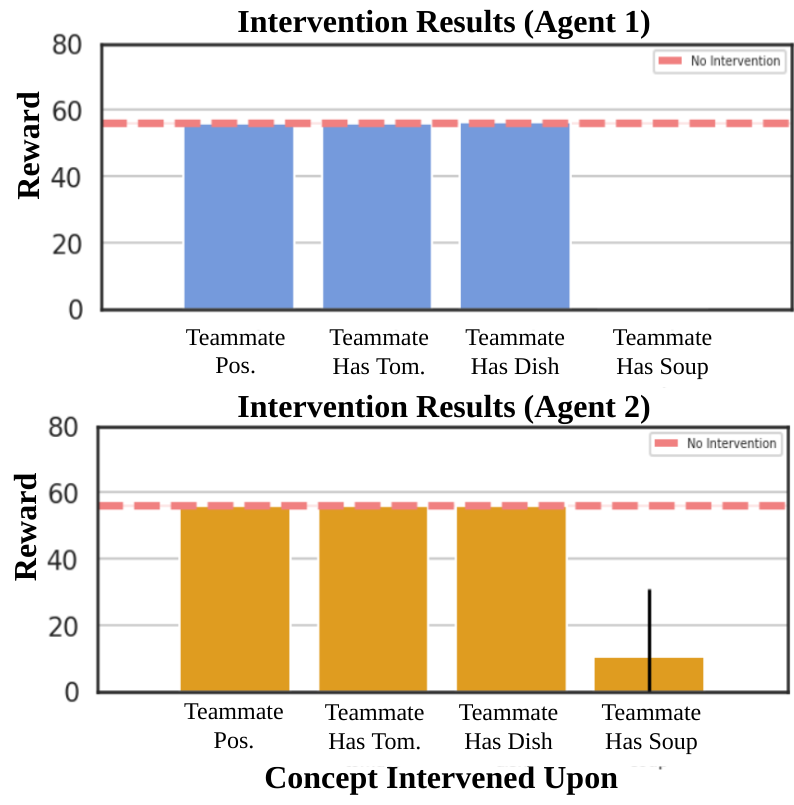}
        \caption{Intervention without orientation.}
        \label{fig:interpretability_results_no_orn}
    \end{subfigure}
    \hspace{1cm}
    \begin{subfigure}[b]{0.425\textwidth}
        \centering
        \includegraphics[width=\textwidth]{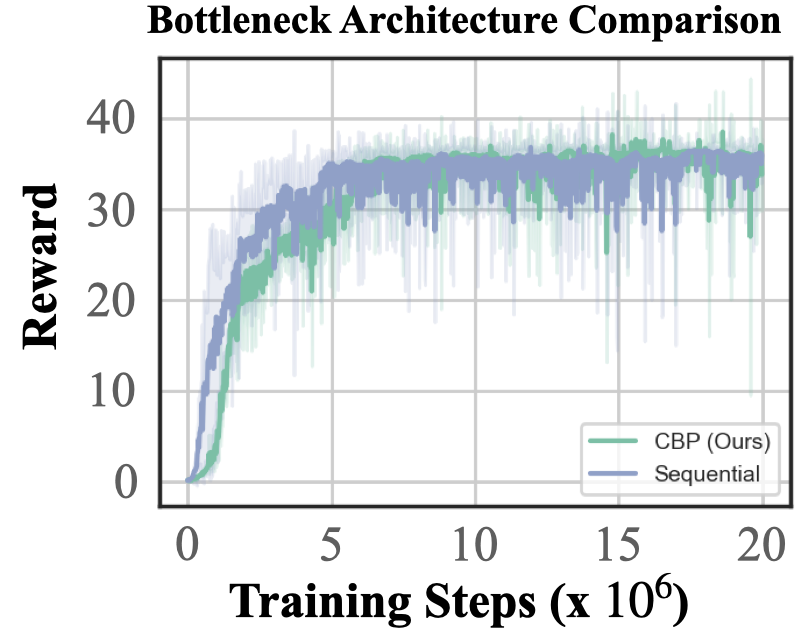}
        \caption{Bottleneck architecture test.}
        \label{fig:interpretability_joint_vs_seq}
    \end{subfigure}
    \caption{Additional bottleneck analyses. (a) Results of iterative concept intervention on Concept PPO agents trained without orientation. (b) Asymptotic performance of joint CBP architecture (ours) vs. sequential CBP architecture.}
\end{figure}

\subsubsection{Intervention Without Orientation}
\label{apdx:interpretability_int_wo_orn}
To further investigate the surprising use of orientation as the primary signal driving the emergent behavior of our learning agents, we train a set of ConceptPPO policies in our basic environment without orientation as a concept (across each $\lambda$ value as before). We then perform the same iterative intervention analysis over the concepts pertaining to each agent's teammate (as outlined in \cref{sec:interpretability_identifying_coord} (excluding orientation, of course). The results of this analysis are shown in \cref{fig:interpretability_results_no_orn}. After removing each agent's orientation concept, we find that the agents latch onto a new concept--``has soup"--as the primary concept that drives their coordination.

\subsection{Bottleneck Architecture Comparison}
Consider a concept bottleneck composed of two functions: (i) $g: x \rightarrow c$ mapping inputs to concept estimates; and (ii) $f: c \rightarrow y$ mapping concept estimates to outputs. The work of \citet{koh2020concept} compares three architectures for concept bottlenecks in supervised learning settings that lean these functions in different ways:
\begin{itemize}
    \item \textbf{Independent Bottleneck}: This architecture learns functions $\hat{f}$ and $\hat{g}$ independently, where $\hat{g}$ is trained separately to predict concept estimates $\hat{c}$ from inputs $x$ using ground truth concepts $c$ as supervision. $\hat{f}$ is trained to predict outputs $\hat{y}$ from ground truth concepts $c$, using ground truth labels $y$ as supervision.
    \item \textbf{Sequential Bottleneck}: This architecture learns $\hat{g}$ first, then uses concept predictions from $\hat{g}$ to train $\hat{f}$. Specifically, $\hat{g}$ is trained in the same manner as in the independent bottleneck. $\hat{f}$ is then trained to predict outputs $\hat{y}$ from concept estimates $\hat{c} = \hat{g}(x)$, using ground truth labels $y$ as supervision.
    \item \textbf{Joint Bottleneck}: This architecture trains $\hat{f}$ and $\hat{g}$ together. The models are trained jointly to predict $\hat{y} = \hat{f}(\hat{g}(x))$ using an objective that is a weighted sum of loss terms for concept prediction (using ground truth concepts $c$ as supervision) and target label prediction (using ground truth labels $y$ as supervision). 
\end{itemize}

Our proposed CBP architecture is most like the joint bottleneck. Here we compare the performance of our proposed architecture to a MARL equivalent of the sequential bottleneck. A simple way to obtain an approximation of the sequential bottleneck using the same architecture (shown in \cref{fig:interpretability_detailed_architecture}) and objective (defined by \cref{eqn:interpretability_concept_loss}) as CBPs is by the stopping gradient flow between the concept estimator network and policy head during backpropagation.

We train both CBPs (our proposed joint bottleneck architecture) and sequential bottlenecks in the basic cooking environment shown in \cref{fig:background_environments_cooking}. Comparing the asymptotic reward achieved by both approaches shows that joint CBPs (our architecture) and the sequential bottleneck (implemented via stop gradient) perform comparably. This gives us confidence that the joint architecture is not unfairly surpassing the performance of other bottleneck architectures by "hacking" concept estimates. We investigate concept leakage more concretely in the following subsection.

\subsection{Concept Leakage Analysis}
\begin{figure}
    \centering
    \includegraphics[width=0.75\textwidth]{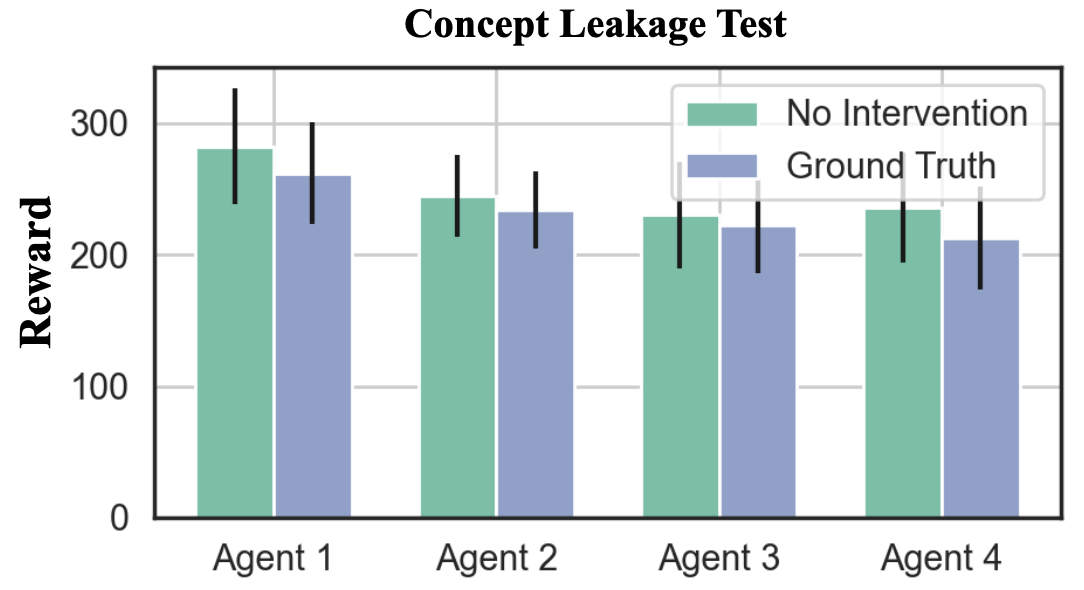}
    \caption{Concept leakage test. We perform an intervention over each agent's concept estimates with the ground truth concept labels from the environment. There is a very slight decrease in performance during intervention, which indicates that some concept leakage does exist. However, the small amount of performance degradation indicates that agents are faithfully learning to estimate concepts.}
    \label{fig:interpretability_results_ground_truth}
\end{figure}
To examine the extent to which concept leakage occurs as a result of the joint objective outlined in \cref{eqn:interpretability_concept_loss},
we perform an evaluation of CBPs while intervening over each agent's concept estimates with ground truth concept values (for all concepts). Specifically, at each time-step, we replace each agent's concept estimates with the ground truth concept values extracted from the environment. Intuitively, if concept leakage is a rampant issue in our proposed architecture and agents are learning to "hack" their concept estimates to encode additional side channel information, we should see performance degrade significantly as a result of this intervention. We perform this analysis over $100$ test-time trajectories (across $5$ random seeds each) using the trained CBP policies from the Clean Up analysis in \cref{sec:interpretability_social_dynamics}.

The results of this analysis are shown in \cref{fig:interpretability_results_ground_truth}, which compares the average reward under the aforementioned ground truth intervention analysis to the average reward obtained by agents with no interventions (using concept estimates as usual). Interestingly, there is a slight decrease in performance that occurs as a result of the ground truth intervention, indicating that some concept leakage may be present in the agents' CBPs. However, the small magnitude of performance degradation indicates that agents are reliably encoding the true concept values in their bottleneck predictions.

\subsection{Bottleneck Performance}
\begin{figure}[]
    \centering
    \includegraphics[width=0.99\columnwidth]{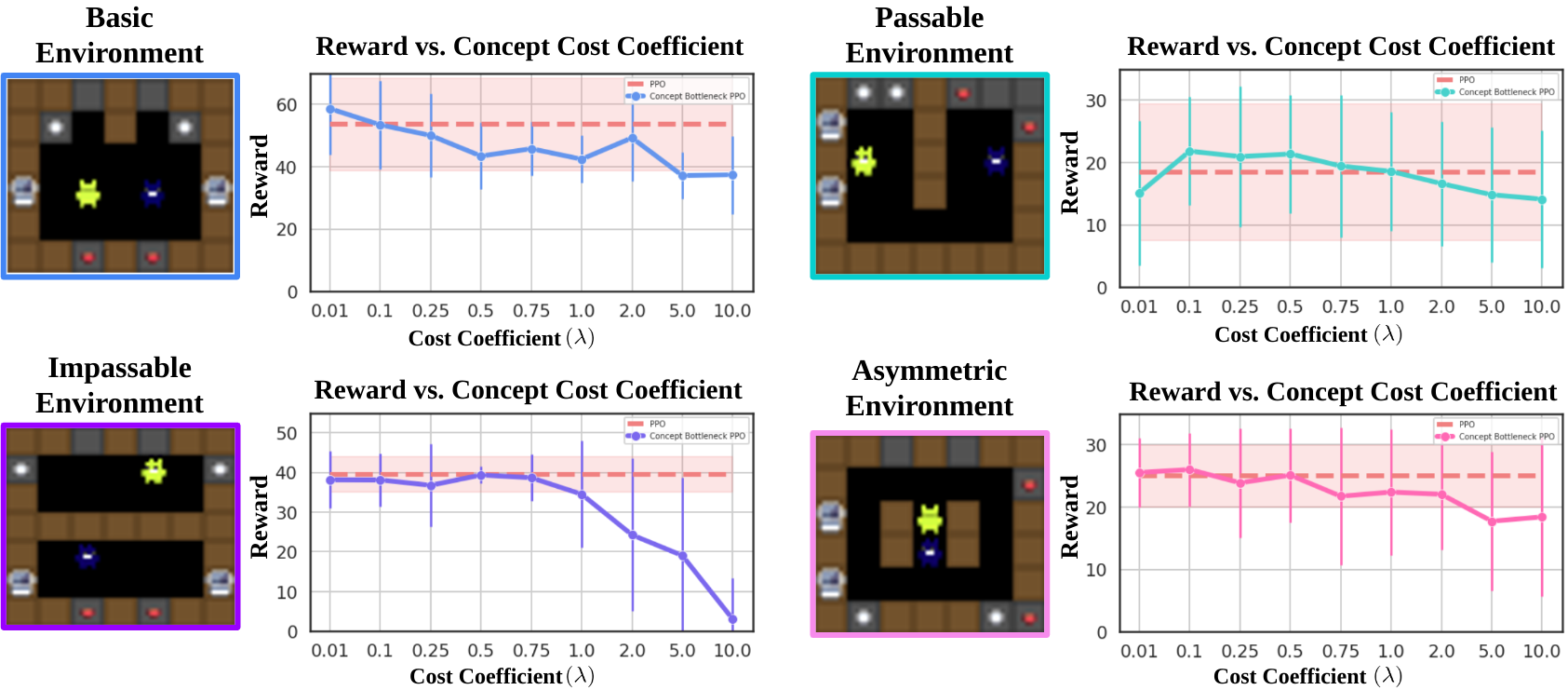}
    \caption{Asymptotic reward of ConceptPPO vs. PPO. For $\lambda {\leq} 0.5$, ConceptPPO matches the performance of non-concept-based PPO.}
    \label{fig:interpretability_reward_no_stack}
\end{figure}
To evaluate our method more generally, we compare the asymptotic performance of Concept PPO and PPO. We measure performance as the average cumulative reward obtained over 100 test-time trajectories (and five random seeds each). \Cref{fig:interpretability_reward_no_stack} provides an overview of these results. We find evidence that ConceptPPO can match the performance of PPO across each of our environments for small values of the concept cost coefficient ($\lambda \leq 0.5$).  This is an important result from the perspective of interpretability. It demonstrates that, if $\lambda$ is tuned appropriately, it is possible to train intrinsically-interpretable policy networks---where, notably, decisions are expressed in human-understandable concepts---without sacrificing in task performance. Importantly, it also demonstrates the sufficiency of the concept set. 

\Cref{fig:interpretability_reward_no_stack} also shows that over-valuing concept prediction loss causes performance to degrade. Performance falls for $\lambda > 0.75$ and, in all but the basic environment, collapses to zero for larger values ($\lambda \geq 5.0)$. This breakdown occurs because, for high $\lambda$, the total gradient from \cref{eqn:interpretability_concept_loss} ($\nabla L_{RL} + \lambda \nabla L_C$) is dominated by the gradient from the concept-based loss $L_C$. In this case, gradient descent is not able to move the policy network's parameters in the direction of $\nabla L_{RL}$, preventing policy optimization.

\subsection{State Visitation Analysis}
\label{apdx:interpretability_state_visitation}
\begin{figure*}
    \centering
    \begin{subfigure}[b]{0.45\textwidth}
        \centering
        \includegraphics[width=\textwidth]{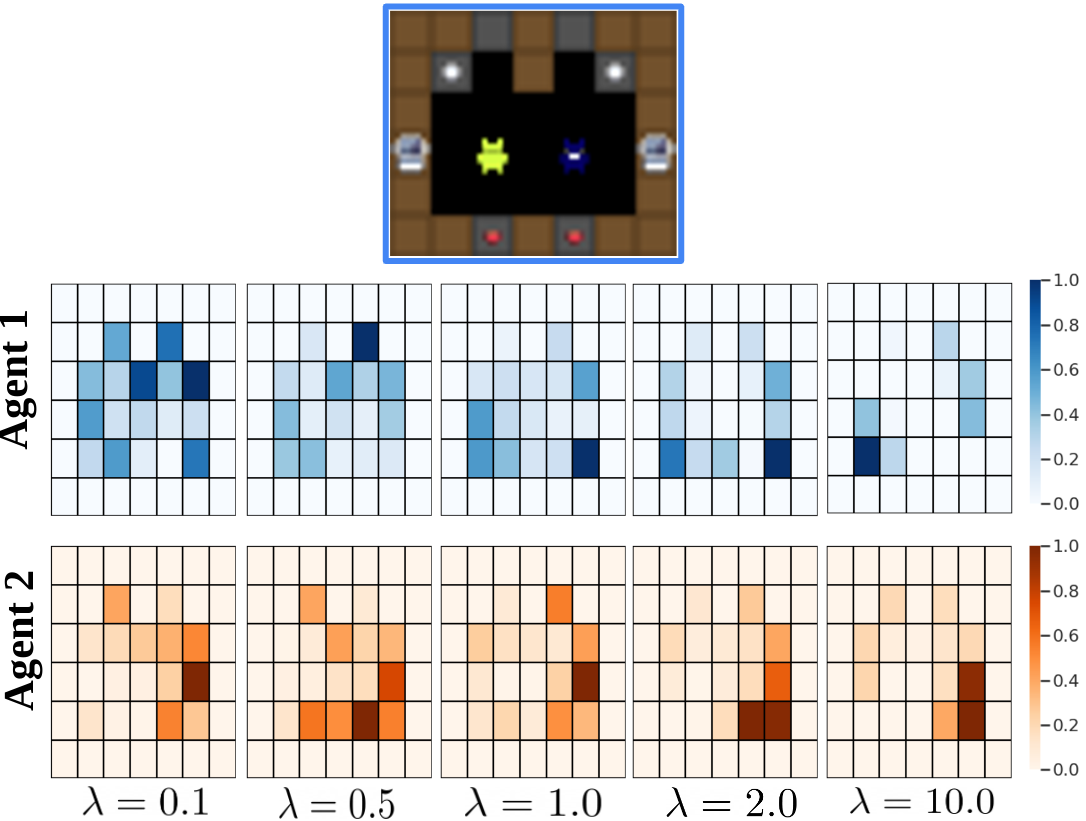}
        \caption{Basic Environment.}
        \label{fig:interpretability_state_visitation_basic}
    \end{subfigure}
    \hfill
    \begin{subfigure}[b]{0.45\textwidth}
        \centering
        \includegraphics[width=\textwidth]{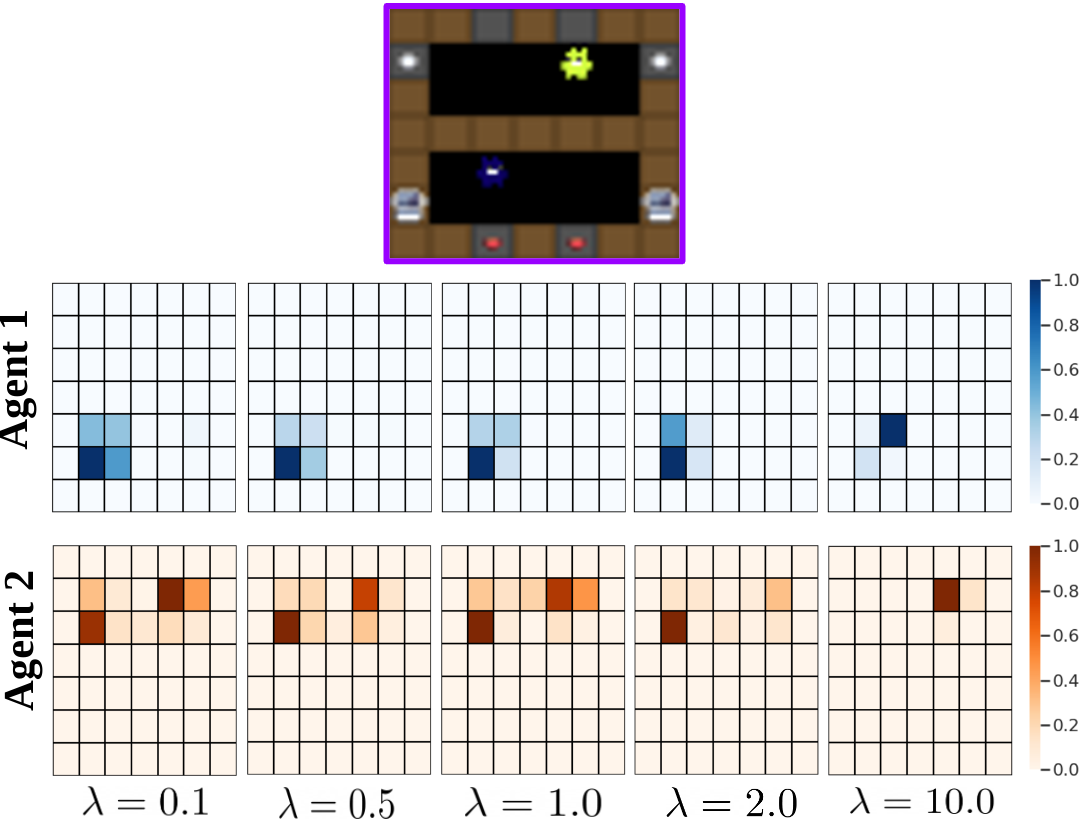}
        \caption{Impassable Environment.}
        \label{fig:interpretability_state_visitation_impassable}
    \end{subfigure}
    \caption{State visitation results. As concept cost coefficient $\lambda$ increases, behavior collapses.}
    \label{fig:interpretability_results_state_visitation}
\end{figure*}
To illustrate the behavioral changes induced by $\lambda$, we measure the distribution of states visited by each agent across a subset of the $\lambda$ values used during training. The results for each agent are plotted as a heatmap in \cref{fig:interpretability_results_state_visitation}. As $\lambda$ increases, multi-agent behavior begins to break down, as the gradient signal from environmental reward gets over-shadowed by that of the concept prediction objective. In the most extreme cases (e.g., $\lambda=10.0$ in the impassable environment) agents fail to make any progress in completing the task.

\subsection{Role Assignment Results (Cont'd)}
\paragraph{Full Temporal Analysis}
Here we supplement our analysis of role assignment in capture the flag from \cref{sec:interpretability_ctf} by presenting the complete set of reward and intervention curves over time ( \cref{fig:interpretability_ctf_int_results}). As before, we hone in on interventions over agent and flag-related concept estimates from Agent 3 and Agent 4 (from the red team). Baseline performance with no intervention is shown as the red-dashed line. As shown in \cref{fig:interpretability_ctf_int_results}, test-time reward is zero both with and without intervention in the early stages of training, as agents are first interacting with the environment. A first swing of reward occurs around checkpoint $45$, where it appears that the blue team has learned to capture the red team's flag (reward for the red agents is negative). This is followed by a quick counter-swing in which the red team begins collecting positive reward. We will hone in on the intervention analysis at checkpoint 55, as it is the first time we see the red team learn productive behavior.
\begin{figure}
    \centering
    \includegraphics[width=0.99\textwidth]{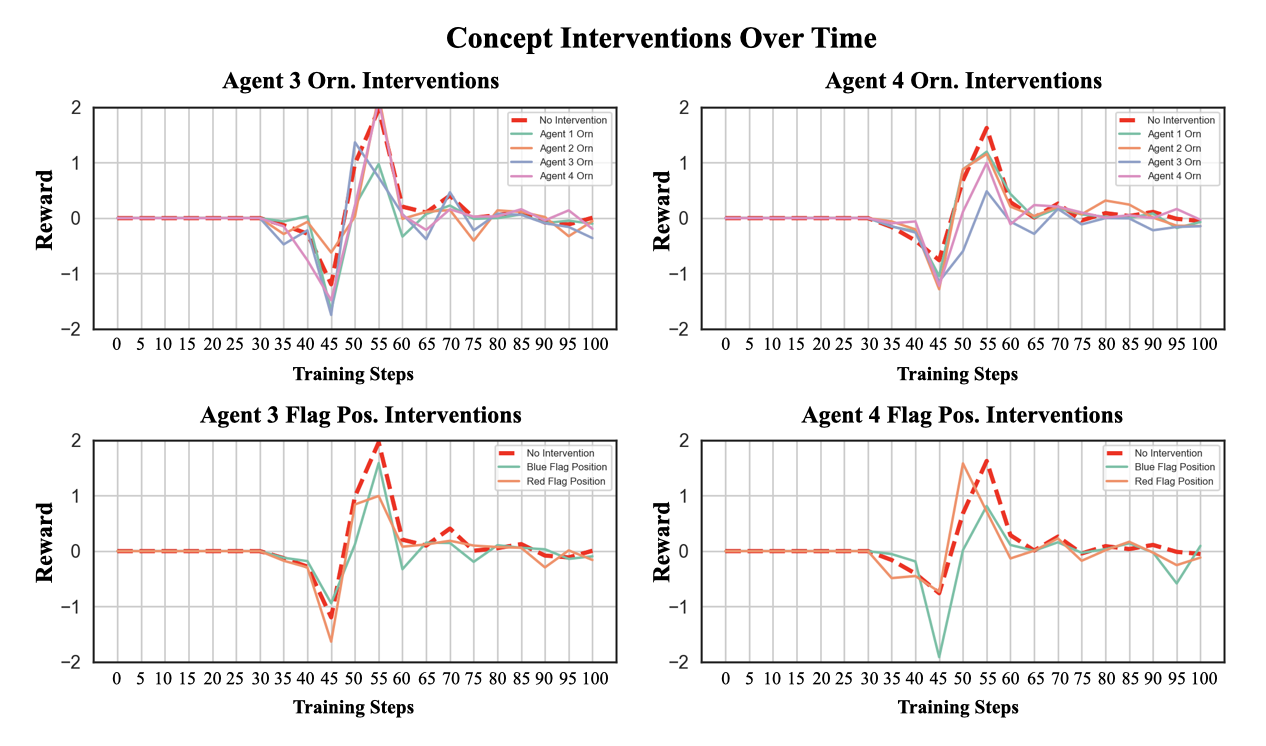}
    \caption{Concept interventions are performed intermittently over time for each agent. Results for the agents on the red team (Agent 3, Agent 4) are shown here. In the top row, we show results of an intervention over agent-related concepts (orientation), and in the bottom row we show results of an intervention over flag-related concepts (flag position). We find an interesting pattern in the intervention results at checkpoint 55. First, we find that Agent 4 is negatively impacted by all of the orientation interventions, whereas Agent 3 is only negatively impacted when intervening over the orientation of Agent 1 and Agent 3 (itself). Next, we find that Agent 4 is similarly negatively impacted by both flag interventions, whereas Agent 3 is only negatively impacted by an intervention over the red flag's position. The intervention reveals, therefore, that Agent 4 is likely an attacking agent and Agent 3 is likely a defending agent.}
    \label{fig:interpretability_ctf_int_results_temporal}
\end{figure}
At checkpoint 55, we see an interesting pattern in the intervention results for Agent 3 vs. Agent 4. Te team receives positive reward, so we know that the red team is capturing the blue team's flag, but the intervention analysis reveals more precisely how that is done. First, in the top row of \cref{fig:interpretability_ctf_int_results_temporal}, intervening over Agent 3's orientation concepts shows that the red team is negatively impacted by interventions over Agent 3's orientation estimate for Agent 1 (a blue agent) and Agent 3 (itself). Intervening over Agent 4 results in a slightly different pattern---every intervention over orientation concepts degrades the red teams performance! As we have seen previously, agents tend to model the other agents that they interact with the most, suggesting that Agent 4 is interacting with all of the other agents in the environment, whereas Agent 3 is interacting with one blue agent. We see a similar pattern in the flag-related interventions as well (bottom row of \cref{fig:interpretability_ctf_int_results}. There we find that, at checkpoint 56, Agent 3 is negatively impacted by an intervention over the red flag's position, but not the blue flag's position. Agent 4, however, is negatively impacted by intervention over both flag positions. This further suggests that Agent 3 interacts primarily

Altogether, the clues that we get from this intervention analysis suggest the following: (i) First, Agent 4 interacts with each agent in the environment (spanning both teams) and also interacts with both flags. The intervention analysis reveals, therefore, that Agent 4 has learned an attacking strategy and is achieving positive reward by capturing the blue teams flag; (ii) Next, we see that Agent 3 interacts primarily with its own flag (the red flag) and also interacts with one of the blue agents. The intervention analysis reveals, therefore, that Agent 3 has learned a defensive strategy, and is likely maintaining the red team's positive reward by fending off a blue attacker.

\subsection{Qualitative Results}
\label{apdx:interpretability_ctf_qualitative}
\begin{figure}
    \centering
    \includegraphics[width=0.85\textwidth]{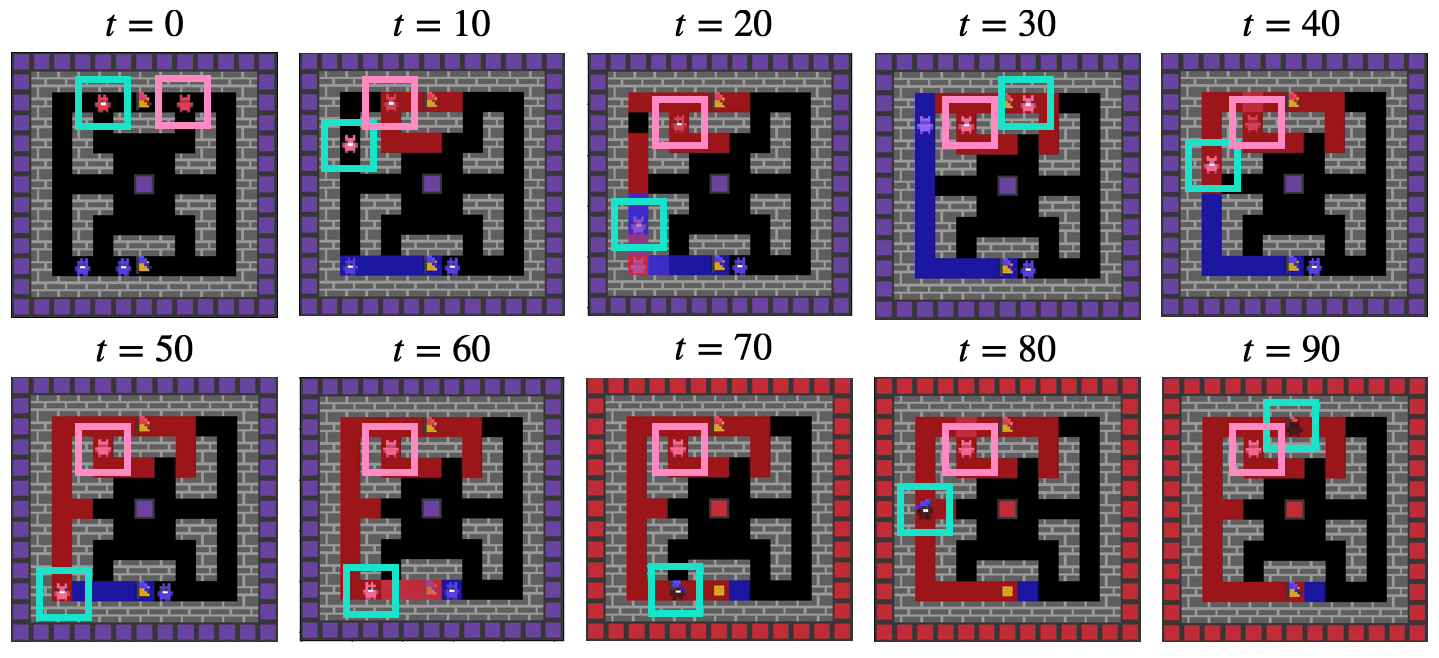}
    \caption{A Capture the Flag trajectory from trained CBP policies. The two red agents have learned emergent roles: attacker (teal square) and defender (pink square).}
    \label{fig:interpretability_ctf_trajectory}
\end{figure}
\Cref{fig:interpretability_ctf_trajectory} provides a snapshot of capture the flag behavior exhibited by the CBP agents' at episode $55$. As predicted by our intervention analysis in \cref{sec:interpretability_ctf}, one agent (R2) has learned an attacking role---traversing to the blue team's base, capturing the flag, and returning it to the red base (and passing both blue agents along the way)---and the other agent (R1) takes on a defending role---camping at the red team's base and defending it from a blue attacker. Altogether, these results confirm that intervention can be used \textit{during training} to augment reward-based analysis and investigate strategic behaviors \textit{as they emerge}.

\subsection{Lasso Neighborhood Selection}
\label{apdx:interpretability_lasso}
Lasso neighborhood selection is a simple method that poses graph learning as a Lasso regression problem~\citep{dong2019learning}. Let $\boldsymbol{X}$ be an observation matrix that is composed of some set of random variables (rows), each of which is described by a set of features (columns). Lasso neighborhood assumes that each variable can be approximated as a sparse linear combination of the observations (i.e. features) of other variables. For a random variable $x_i$, this approximation is computed as:
\begin{equation*}
    \min_{\beta_{i}} ||\boldsymbol{X}_{i} - \boldsymbol{X}_{\textbackslash i} \beta_{i} ||^2_2 + \alpha ||\beta_{i}||_1
\end{equation*}
where $\boldsymbol{X}_i$ represents the features describing the variable $x_i$ (i.e., transpose of the $i$'th row of X), $\boldsymbol{X}_{\backslash i}$ represents the features from rest of the variables (the remaining rows in $\boldsymbol{X}$, $\boldsymbol{\beta}_{i}$ is a vector of coefficients, and $\alpha$ weights the L1-regularization term (enforcing sparsity).

Importantly, coefficients $\boldsymbol{\beta}{i}$ determine which edges, if any, are connected to the node that represents $x_i$. In particular, for some additional variable $x_j$, an edge is established between $x_i$ and $x_j$ if either $\beta_{ij}$ or $\beta_{ji}$ is non-zero (or both). Intuitively, because a graph is a representation of pairwise relationships, Lasso neighborhood selection posits that learning a graph is equivalent to learning a neighborhood for each vertex---i.e., the other vertices to which it is connected---assumes that the observation at a particular vertex may be represented by observations at the neighboring vertices.

In this work, we construct an observation matrix from the outcomes of interventions and use Lasso neighborhood selection to model the relationships (i.e. similarities and dissimilarities) between interventions.

\section{Fairness}
\label{apdx:fairness}

\subsection{Deterministic Fairness Through Equivariance}
\label{apdx:fairness_apdx_deterministic_proofs}
We review the proofs from \cref{sec:fairness_equivariance} in the context of deterministic policies. We assume that agents are homogeneous in their non-sensitive variables. For $n$ agents, each with an individual policy $\mu_{\phi_i}$, let $\boldsymbol{\mu} = \{\mu_{\phi_1}, ..., \mu_{\phi_n} \}$ be the joint policy representing the team. Recall that, in the symmetric case, $\phi_1=\phi_2=...=\phi_n$ and $\omega_1=\omega_2=...=\omega_n$.

\begin{theorem}
    \label{thm:fairness_deterministic_policy_eqv_map}
    If individual policies $\mu_{\phi_i}$ are symmetric, then the joint policy $\boldsymbol{\mu} = \{\mu_{\phi_1}, ..., \mu_{\phi_n} \}$ is an equivariant map.
\end{theorem}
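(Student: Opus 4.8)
The plan is to mirror the stochastic argument of \cref{thm:fairness_policy_eqv_map} verbatim, substituting the deterministic maps $\mu_{\phi_i}$ for the conditional distributions $\pi_{\phi_i}$. The entire content of the claim reduces to verifying a single commutative relationship $\boldsymbol{\mu}(\sigma \cdot s) = \sigma \cdot \boldsymbol{\mu}(s)$, so there is no need to reason about probability distributions at all; deterministic policies make the argument, if anything, cleaner, since each agent outputs a single action rather than a distribution over actions.

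First I would fix notation. Let $\sigma$ be the permutation operator that acts on a state $s$ by permuting the per-agent components (e.g. swapping the positions and observations of agents $i$ and $j$) and that acts analogously on a joint action $\boldsymbol{a} = (a_1, \ldots, a_n)$ by permuting its components. Write $o_i$ for the observation that agent $i$ extracts from $s$, so that $\boldsymbol{\mu}(s) = (\mu_{\phi_1}(o_1), \ldots, \mu_{\phi_n}(o_n))$. The key structural input is parameter symmetry: under the homogeneity assumption we have $\phi_1 = \phi_2 = \cdots = \phi_n$, hence $\mu_{\phi_i} = \mu_\phi$ for every $i$, so every agent computes the \emph{same} function of its local observation.

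Then I would carry out the component-wise verification. Applying $\sigma$ to $s$ relocates agent $i$'s observation $o_i$ to position $\sigma(i)$ in the permuted state $s^\sigma = \sigma \cdot s$. Since all agents share $\mu_\phi$, the $\sigma(i)$-th component of $\boldsymbol{\mu}(\sigma \cdot s)$ is $\mu_\phi(o_i)$, which is exactly the $i$-th component of $\boldsymbol{\mu}(s)$ relocated to position $\sigma(i)$, i.e. the $\sigma(i)$-th component of $\sigma \cdot \boldsymbol{\mu}(s)$. Running this equality over all positions yields the chain
\begin{equation*}
    \boldsymbol{\mu}(\sigma \cdot s) = \boldsymbol{\mu}(s^\sigma) = \boldsymbol{a}^\sigma = \sigma \cdot \boldsymbol{a} = \sigma \cdot \boldsymbol{\mu}(s),
\end{equation*}
which is precisely the definition of $\boldsymbol{\mu}$ being an equivariant map, completing the proof.

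The main obstacle, and really the only delicate point, is justifying that permuting the \emph{global} state vector induces the corresponding permutation on each agent's \emph{local} observation, so that the agent at position $\sigma(i)$ in the permuted state genuinely sees $o_i$. This is where the exchangeability-of-positions and agent-homogeneity assumptions do the essential work: without them, the observation function need not commute with $\sigma$, and the second equality in the chain above could fail. I would therefore state this compatibility explicitly as a consequence of those standing assumptions before invoking it, exactly as the stochastic proof tacitly does, so that the deterministic version rests on the same footing and can be cited identically in the downstream results (the deterministic analogues of \cref{thm:fairness_states_eqv} and \cref{thm:fairness_sym_fair}).
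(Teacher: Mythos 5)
Your proposal is correct and follows essentially the same route as the paper's own proof: fix the permutation operator $\sigma$, invoke parameter symmetry $\phi_1 = \cdots = \phi_n$, and verify the commutative chain $\boldsymbol{\mu}(\sigma \cdot s) = \boldsymbol{\mu}(s^\sigma) = \boldsymbol{a}^\sigma = \sigma \cdot \boldsymbol{\mu}(s)$. Your component-wise justification of the middle equality (that permuting the global state relocates each local observation consistently, so the shared map $\mu_\phi$ produces the permuted joint action) is a welcome elaboration of a step the paper asserts without comment, but it does not change the argument.
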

\begin{proof}
    Let $\sigma$ be a permutation operator that, when applied to a vector (such as a state $s_t$ or joint action $\boldsymbol{a}_t$), produces the permuted vector ($\sigma \cdot s_t {=} s^\sigma_t$ or $\sigma \cdot \boldsymbol{a}_t {=} \boldsymbol{a}^\sigma_t$, respectively). Under parameter symmetry (i.e. $\phi_1${=}$\phi_2{=}{\cdots}{=}\phi_n$), we have:
    \begin{equation*}
        \boldsymbol{\mu}(\sigma \cdot s) = \boldsymbol{\mu}(s^{\sigma}) \\
        = \boldsymbol{a}^\sigma \\
        = \sigma \cdot \boldsymbol{a} \\
        = \sigma \cdot \boldsymbol{\mu}(s)
    \end{equation*}
    \noindent where the commutative relationship $\boldsymbol{\mu}(\sigma \cdot s) = \sigma \cdot \boldsymbol{\mu}(s)$ implies that $\boldsymbol{\mu}$ is an equivariant map.
\end{proof}

\begin{theorem}
    \label{thm:fairness_deterministic_states_eqv}
    Let $p^{\boldsymbol{\mu}}(s \rightarrow s', k)$ be the probability of transitioning from $s$ to $s'$ in $k$ steps.
    Given that the joint policy $\boldsymbol{\mu}$ is an equivariant map, it follows that $p^{\boldsymbol{\mu}}(s_1 \rightarrow s_T, T) = p^{\boldsymbol{\mu}}(s_1^\sigma \rightarrow s_T^\sigma, T)$.
\end{theorem}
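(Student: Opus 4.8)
The plan is to mirror the proof of \cref{thm:fairness_states_eqv} almost verbatim, replacing the stochastic policy factor $\boldsymbol{\pi}(a_t \mid s_t)$ with the deterministic assignment $\boldsymbol{a}_t = \boldsymbol{\mu}(s_t)$ throughout. First I would invoke \cref{thm:fairness_deterministic_policy_eqv_map} to record that the joint deterministic policy is equivariant, i.e. $\boldsymbol{\mu}(\sigma \cdot s) = \sigma \cdot \boldsymbol{\mu}(s)$, so that the action selected in a permuted state $s_t^\sigma$ is exactly the permuted action $\boldsymbol{a}_t^\sigma = \sigma \cdot \boldsymbol{\mu}(s_t)$. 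This is the deterministic analogue of the commutativity that drove the stochastic argument.

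Next I would reuse the homogeneity assumption to establish that permuting a state--action pair leaves the environment's one-step transition kernel unchanged: $P(s_{t+1} \mid s_t, \boldsymbol{a}_t) = P(s_{t+1}^\sigma \mid s_t^\sigma, \boldsymbol{a}_t^\sigma)$, since from the environment's perspective a state--action pair is indistinguishable from the one produced by the same agents after swapping positions and actions. Because the policy is now deterministic, I would substitute $\boldsymbol{a}_t = \boldsymbol{\mu}(s_t)$ and apply the equivariance from the previous step to rewrite the right-hand side as $P(s_{t+1}^\sigma \mid s_t^\sigma, \boldsymbol{\mu}(s_t^\sigma))$. Together with the exchangeability of the start-state distribution, which gives $P_\emptyset(s_1) = P_\emptyset(s_1^\sigma)$, this makes every factor of the deterministic specialization of the trajectory probability in \cref{eqn:background_traj_prob} match its permuted counterpart term by term.

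Finally, I would express $p^{\boldsymbol{\mu}}(s_1 \rightarrow s_T, T)$ as the integral over the intermediate states $s_2, \ldots, s_{T-1}$ of this trajectory probability and perform the change of variables $s_t \mapsto s_t^\sigma$ at each intermediate index. Since $\sigma$ is a permutation -- a measure-preserving bijection on the (product) state space -- the integral is invariant under this relabeling, and the factor-by-factor matching from the previous step converts the integrand for the $s_1 \rightarrow s_T$ transition into the integrand for the $s_1^\sigma \rightarrow s_T^\sigma$ transition. Collecting terms yields $p^{\boldsymbol{\mu}}(s_1 \rightarrow s_T, T) = p^{\boldsymbol{\mu}}(s_1^\sigma \rightarrow s_T^\sigma, T)$, as claimed.

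The one place that requires genuine care -- rather than a mechanical transcription of the stochastic proof -- is the interaction between determinism and the change of variables. In the stochastic case the policy contributes an explicit density factor that transforms cleanly under $\sigma$; here the policy contributes no density at all, so I must verify that substituting the deterministic action \emph{before} rather than \emph{after} applying the permutation is legitimate, which is precisely what the commutation $\boldsymbol{\mu} \circ \sigma = \sigma \circ \boldsymbol{\mu}$ guarantees. I expect this commutation step -- together with confirming that $\sigma$ acts consistently on both the state argument and the action argument of the transition kernel -- to be the crux of the argument; the measure-preservation of $\sigma$ and the remaining bookkeeping are routine.
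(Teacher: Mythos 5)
Your proposal is correct and follows essentially the same route as the paper's own proof: invoke the deterministic equivariance of $\boldsymbol{\mu}$, use agent homogeneity to get permutation-invariance of the one-step transition kernel together with $P_\emptyset(s_1) = P_\emptyset(s_1^\sigma)$, and then marginalize out the intermediate states with the change of variables $s_t \mapsto s_t^\sigma$. Your explicit remark that the permutation is measure-preserving and that the commutation $\boldsymbol{\mu}\circ\sigma = \sigma\circ\boldsymbol{\mu}$ is what licenses substituting the deterministic action before permuting is, if anything, slightly more careful than the paper's write-up, which states the same chain of integrals without comment.
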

\begin{proof}
    It follows from agent homogeneity that permuting a state $\sigma \cdot s_t$, which in turn permutes action selection $\sigma \cdot \boldsymbol{a}_t$ (from \cref{thm:fairness_deterministic_policy_eqv_map}), also permutes the environment's transition probabilities:
    \begin{equation*}
        P(s_{t+1} \mid s_t, a_t) = P(s_{t+1}^\sigma \mid s_t^\sigma, \boldsymbol{a}_t^\sigma)
    \end{equation*}
    This is because, from the environment's perspective, a state-action pair is indistinguishable from the state-action pair generated by the same agents after swapping their positions and selected actions. Assuming that the full distribution of start-states $P_\emptyset$ is uniform, we also have $P_\emptyset(s_1) = P_\emptyset(s_1^\sigma)$. Recall the probability of a trajectory from \cref{eqn:background_traj_prob}. Given the equivariant function $\boldsymbol{\mu}$ and the two equalities above, it follows that:
    \begin{equation*}
        P_\emptyset(s_1)\prod_{t=1}^T P(s_{t+1} \mid s_t, \boldsymbol{a}_t) \mid_{\boldsymbol{a}_t = \boldsymbol{\mu}(s_t)}
        = P_\emptyset(s_1^\sigma)\prod_{t=1}^T  P(s_{t+1}^\sigma \mid s_t^\sigma , \boldsymbol{a}_t^\sigma ) \mid_{\boldsymbol{a}_t^\sigma  = \boldsymbol{\mu}(s_t^\sigma )}
    \end{equation*}
    Note that the following properties hold for $p^{\boldsymbol{\mu}}(s \rightarrow s', k)$:

    \begin{itemize}
        \item $p^{\boldsymbol{\mu}}(s \rightarrow s, 0) = 1$
        \item $p^{\boldsymbol{\mu}}(s \rightarrow s', 1) = p(s_{t+1}=s' \mid s_t=s, a_t) \mid a_t=\mu(s_t)$,
        \item $p^{\boldsymbol{\mu}}(s \rightarrow x, T) = \int_{s'} p^\mu(s \rightarrow s', T-1) p^\mu(s' \rightarrow x, 1)$
    \end{itemize}

    We can therefore represent the probability of a trajectory as a single transition from initial state $s_1$ to terminal state $s_T$ by marginalizing out the intermediate states and show that:

    \begin{align*}
        p^\mu(s_1 \rightarrow s_T, T)
        &= \int_{s_{T-1}} p^\mu(s_1 \rightarrow s_{T-1}, T-1) p^\mu(s_{T-1} \rightarrow s_T, 1) \\
        &\begin{multlined} = \int_{s_{T-2}} p^\mu(s_1 \rightarrow s_{T-2}, T-2) \big[p^\mu(s_{T-2} \rightarrow s_{T-1}, 1) p^\mu(s_{T-1} \rightarrow s_T, 1)\big] \end{multlined}\\
        &= \cdots \\
        &= \int_{s_1} \cdots \int_{s_{T-1}} P_\emptyset(s_1)\prod_{t=1}^T  P(s_{t+1} \mid s_t, a_t)\mid_{a_t = \mu(s_t)} \\
        &= \int_{s_1^\sigma} \cdots \int_{s_{T-1}^\sigma} P_\emptyset(s_1^\sigma)\prod_{t=1}^T  P(s_{t+1}^\sigma \mid s_t^\sigma, a_t^\sigma)\mid_{a_t^\sigma = \mu(s_t^\sigma)} \\
        &= \cdots \\
        &= p^\mu(s_1^\sigma \rightarrow s_T^\sigma, T)
    \end{align*}

    Thus, the probability of reaching terminal state $s_T$ from initial state $s_1$ is equivalent to the probability of reaching $s_T^\sigma$ from $s_1^\sigma$.
\end{proof}

\begin{theorem}
    \label{thm:fairness_deterministic_sym_fair}
    Equivariant deterministic policies are exactly fair.
\end{theorem}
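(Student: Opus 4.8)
The plan is to carry the argument from the stochastic case (\cref{thm:fairness_sym_fair}) over almost verbatim, replacing the appeal to \cref{thm:fairness_states_eqv} with its deterministic counterpart, \cref{thm:fairness_deterministic_states_eqv}. First I would invoke \cref{thm:fairness_deterministic_states_eqv}, which has already established the key equality $p^{\boldsymbol{\mu}}(s_1 \rightarrow s_T, T) = p^{\boldsymbol{\mu}}(s_1^\sigma \rightarrow s_T^\sigma, T)$ for the equivariant joint policy $\boldsymbol{\mu}$. Since the reward vector $\boldsymbol{r}$ is a deterministic function of the terminal state $s_T$ (via the reward function $r(s,a)$ and the capture condition of the environment), the probability that the team realizes reward $\boldsymbol{r}$ from start state $s_1$ equals the probability of realizing the permuted reward $\boldsymbol{r}^\sigma$ from the permuted start state $s_1^\sigma$.

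Next I would translate this statement about transition probabilities into a statement about the conditional reward distribution given sensitive variables. Recall that each agent's sensitive variable $z_i$ is a unique identifier, so applying $\sigma$ to a state relabels which agent occupies which position; correspondingly $\sigma$ acts on the sensitive assignment $\boldsymbol{z}$ and on the reward vector $\boldsymbol{r}$. Marginalizing over the uniform (exchangeable) start-state distribution $P_\emptyset$, the equality above yields
\begin{equation*}
    P[R = \boldsymbol{r} \mid Z = \boldsymbol{z}] = P[R = \boldsymbol{r}^\sigma \mid Z = \boldsymbol{z}^\sigma]
\end{equation*}
for every reward vector $\boldsymbol{r}$ and every assignment $\boldsymbol{z}$. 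Because this holds for all permutations $\sigma$, the conditional law of $R$ cannot depend on the labeling $Z$, which is exactly the independence $R \perp Z$. By the same identity used for team fairness---namely $\textrm{I}(R;Z) = 0 \Leftrightarrow R \perp Z$---this gives $\textrm{I}(R;Z) = 0$, meeting exact team fairness as in \cref{def:fairness_exact_fairness}.

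I expect the main obstacle to be the second step: making rigorous the passage from permutation-equivariance of trajectory probabilities to permutation-invariance of the conditional reward law. The delicate points are (i) confirming that the reward vector is genuinely a measurable function of the terminal state, so that terminal-state equivalence transfers cleanly to reward equivalence, and (ii) justifying, via the exchangeability assumption on $P_\emptyset$, that averaging over the start-state distribution preserves the equality rather than merely holding it for a fixed $s_1$. Once both are secured, the deduction $R \perp Z \Rightarrow \textrm{I}(R;Z) = 0$ is immediate from \cref{eqn:fairness_mi_entropy} (equivalently \cref{eqn:fairness_kl_div}), and no machinery beyond \cref{thm:fairness_deterministic_states_eqv} is required---the deterministic proof is strictly a specialization of the stochastic one, with the policy's point-mass action distribution absorbed into the transition equality.
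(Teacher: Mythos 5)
Your proposal matches the paper's own proof, which simply declares the deterministic case to be ``exact same as'' the stochastic one: both arguments invoke the deterministic trajectory-equivariance result (\cref{thm:fairness_deterministic_states_eqv}) to get $P[R=\boldsymbol{r} \mid Z=\boldsymbol{z}] = P[R=\boldsymbol{r}^\sigma \mid Z=\boldsymbol{z}^\sigma]$ under the start-state distribution, and then conclude $R \perp Z$ and hence $\textrm{I}(R;Z)=0$. Your added care about reward being a function of the terminal state and about marginalizing over $P_\emptyset$ is a reasonable tightening, but it is the same route.
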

\begin{proof}
    Exact same as \cref{thm:fairness_sym_fair}.
\end{proof}

\subsection{Gradient of Fairness Objective}
\label{apdx:fairness_apdx_derivations_sym}
We present a derivation of the equivariant objective gradient from \cref{eqn:fairness_eqv_grad}:

\begin{align*}
    \frac{\partial J_{\textrm{eqv}}}{\partial \phi_i} &\approx \frac{\partial}{\partial \phi_i} \frac{1}{M} \sum_i^M \frac{1}{N-1} \sum_{j \neq i}^{N-1}1 - \cos(\mu_{\phi_i}(s) - \mu_{\phi_j}(s))
    \\[4pt]
    &\begin{multlined} =\frac{1}{M} \sum_i^M \frac{1}{N-1} \sum_{j \neq i}^{N-1} - (-\sin(\mu_{\phi_i}(s) - \mu_{\phi_j}(s))) \frac{\partial}{\partial \phi_i} (\mu_{\phi_i}(s) - \mu_{\phi_j}(s)) \end{multlined}
    \\[4pt]
    &\begin{multlined} =\frac{1}{M} \sum_i^M \frac{1}{N-1} \sum_{j \neq i}^{N-1} \sin(\mu_{\phi_i}(s) - \mu_{\phi_j}(s)) \bigg(\frac{\partial}{\partial \phi_i} \mu_{\phi_i}(s) - \frac{\partial}{\partial \phi_i} \mu_{\phi_j}(s) \bigg)
    \end{multlined}
    \\[4pt]
    &\begin{multlined} = \frac{1}{M} \sum_i^M \frac{1}{N-1} \sum_{j \neq i}^{N-1} \sin(\mu_{\phi_i}(s) - \mu_{\phi_j}(s)) \nabla_{\phi_i} \mu_{\phi_i}(s)
    \end{multlined}
\end{align*}

\subsection{Why Not Mean Squared Error?}
\label{apdx:fairness_apdx_mse}
We could just have as easily defined $J_{\textrm{eqv}}$ as the mean-squared error (MSE) between $a_i$ and $a_j$, with the objective:

\begin{equation*}
    J_{\textrm{sym}}(\phi_1, ..., \phi_i, ..., \phi_n) = \mathbb{E}_s[ \mathbb{E}_{j \neq i}[(\mu_{\phi_i}(s) - \mu_{\phi_j}(s))^2]]
\end{equation*}
\noindent and corresponding gradient:
\begin{equation*}
    \nabla_{\phi_i} J_{\textrm{sym}}(\phi) \approx \frac{1}{M} \sum_i \frac{1}{N-1} \sum_{j \neq i} (\mu_{\phi_i}(s) - \mu_{\phi_j}(s)) \nabla_{\phi_i}\mu_{\phi_i}(s) \mid_{s=s_i}
\end{equation*}

\noindent However, recall that each $a$ is a heading angle sampled from $A$, which is a circular variable in the range $[0, 2\pi]$ or $[-\pi, \pi]$. Any difference of actions, as in MSE loss, must account for this by applying a modulo operation on top of the difference. Our objective represents angular distance as a single Fourier mode, which yields a convex optimization surface without discontinuity issues on the range of possible headings. 

\subsection{Experimental Details}
\label{apdx:fairness_apdx_experimental_details}

\paragraph{Computing Team Fairness}
We illustrate how team fairness scores $I(R;Z)$ are computed. Recall that each agent's sensitive attribute $z_i$ is an identity variable that uniquely identifies it from the group (see \cref{sec:fairness_results}). Also note that the vectorial reward vector $\boldsymbol{r}$ serves as a proxy for agent identity---e.g. $\boldsymbol{r} {=} [0,0,1]$ indicates that pursuer $p_3$ captured the evader. We can therefore compute team fairness as the mutual information obtained by the difference of entropies $h(R) - h(R_\textrm{uniform})$, where $R$ is a distribution over team rewards $\boldsymbol{r}$ and $R_\textrm{uniform}$ is a uniform reward distribution (i.e. captures are spread evenly across all pursuers). Computed this way, $I(R;Z)$ represents the extent to which knowing the outcome of the pursuit-evasion task reveals the agents identity, and vice versa. It therefore measures how fairly outcomes are distributed across cooperative teammates.

\paragraph{Greedy Control Baseline}
Let $q_i$ be the position of an agent $i$ and $U_{\textrm{att}}(q_i)$ be a quadratic function of distance between $q_i$ and a target $q_{\textrm{goal}}$:
\begin{equation}
    \label{eqn:fairness_attractive_potential}
    U_{\textrm{att}}(q_i, q_{\textrm{goal}}) = \frac{1}{2}k_{\textrm{att}} \, d(q_i, q_{\textrm{goal}})^2
\end{equation}
where $k_{\textrm{att}}$ is an attraction coefficient and $d(,)$ is a measure of distance. Taking the negative gradient $F(q_i) = -\nabla U(q_i)$ yields the following control law for agent $i$'s motion:
\begin{equation}
    F_{\textrm{att}} = -\nabla U_{\textrm{att}}(q_i, q_{\textrm{goal}}) = -k_{\textrm{att}}(q_i - q_{\textrm{goal}})
\end{equation}
\noindent In this work, the environment's action-space is defined in terms of agent headings, so only the \textit{direction} of this force impacts the agents. Setting $q_{\textrm{goal}}$ to be the position of the evader and following \cref{eqn:fairness_attractive_potential} at each time-step results in a greedy policy that runs directly towards the evader.

\paragraph{Fairness for Individual Reward Policies}
In \cref{fig:fairness_ind_vs_sym_fairness}, we find that policies learned with individual reward exhibit a temporary spike in $I(R;Z)$ from $\lvert \vec{v}_p\rvert=0.9$ to $\lvert \vec{v}_p\rvert=0.6$, indicating that their reward distributions $R$ are less fair. This occurs because, as $\lvert \vec{v}_p\rvert$ decreases, policies learned with individual reward fall into a degenerate state where only one pursuer experiences positive reward from capturing the evader. The other pursuers, never experiencing positive reward, slowly diverge from their previous greedy strategies and become incapable of capturing the evader. Their best strategy becomes hoping that the ``capturer" pursuer captures the evader, which results in a highly skewed capture distribution. Eventually this strategy fails when $\lvert \vec{v}_p\rvert$ drops low enough that all pursuers fail to capture the evader, which causes $I(R;Z)$ to fall again.

\paragraph{Policy Learning Hyperparameters}
All actors $\mu_\phi$ are trained with two hidden layers of size 128. Critics $Q_\omega$ are trained with three hidden layers of size 128. We use a learning rate of $1\textrm{e}^{-4}$ and $1\textrm{e}^{-3}$ for the actor and critic, respectively, and a gradient clip of 0.5 on both. Target networks are updated with Polyak averaging with $\tau=0.001$. We maintain a buffer $\mathcal{D}$ of length $500000$ and sample batches of size $512$. Finally, we use a discount factor $\gamma=0.99$. All values are the results of standard hyperparamter sweeps.

\subsection{Training Details}
As described in \cref{sec:fairness_results}, each pursuit-evasion experiment includes $n=3$ pursuer agents and a single evader. The pursuers each train their own policy for a total of 125,000 episodes, during which their velocity is decreased from $\lvert \vec{v}_p\rvert = 1.2$ to $\lvert \vec{v}_p\rvert = 0.4$. The evader speed is fixed at $\lvert \vec{v}_e\rvert = 1.0$. After training, we test the resulting policies at discrete velocity intervals (e.g. $\lvert \vec{v}_p\rvert = 1.2$, $\lvert \vec{v}_p\rvert = 1.1$, etc), where a decrease in pursuer velocity represents a greater ``difficulty level" for the pursuers. Test-time performance, such as is shown in \cref{fig:fairness_ind_vs_mut}, \cref{fig:fairness_ind_vs_sym_fairness}, \cref{fig:fairness_ind_vs_sym_utility}, and \cref{fig:fairness_fairness_vs_utility} is averaged across 100 independent trajectories from five different random seeds each. All experiments leveraged an Nvidia GeForce GTX 1070 GPU with 8GB of memory.

\section{Robustness}
\subsection{Training Details}
\label{apdx:robustness_training}
Here we provide additional training details, including the search parameters, training hyper parameters, and other learning details that are used when training AlphaZero and VISA-VIS.

\subsubsection{Training parameters}
Both algorithm's were trained for $500,000$ games, $1.75$M games, $1.75$M games, and $7.5$M games for Tic-Tac-Toe, 4x4 Tic-Tac-Toe, and Connect Four, respectively. The joint policy-value network was represented by a ResNet backbone with separate heads outputting policy and value predictions, as in \cref{eqn:background_az_network}. Each fully-connected layer was initialized with a width of $128$ neurons and the depth of the network backbone was scaled to account for game complexity----$d=2$ ResNet modules for Tic-Tac-Toe, $d=4$ for 4x4 Tic-Tac-Toe and Connect Four. Batch size and learning rate also varied for each game: (i) Tic-Tac-Toe used a learning rate of $1\mathrm{e}{-}3$ and batch size of $64$; (ii) 4x4 Tic-Tac-Toe used a learning rate of $1\mathrm{e}{-}4$ and batch size of $128$; and (iii) Connect Four used a learning rate of $1\mathrm{e}{-}4$ and batch size of $256$. A coefficient of $\lambda=1\mathrm{e}{-}4$ was used to weight L2-regularization in all games, as described in \cref{eqn:background_az_loss}.

\begin{table}
  \caption{Hyperparameters for AlphaZero and VISA-VIS.}
  \label{tab:hyperparams}
  \centering
  \begin{tabular}{llll}
    \toprule
    \multicolumn{4}{c}{Training Parameters and  Hyperparameters}                   \\
    \cmidrule(r){1-4}
    Parameter     & Value & Value & Value     \\
    Name       & (TTT) & (4x4 TTT) & (Connect Four)     \\
    \midrule
    Num. Games & $5\mathrm{e}5$ & $1.75\mathrm{e}6$ & $7.5\mathrm{e}6$   \\
    Batch Size     & $64$ & $128$ & $256$      \\
    Learning Rate     & $1\mathrm{e}{-}3$ & $1\mathrm{e}{-}4$ & $1\mathrm{e}{-}4$      \\
    Network Depth     & $2$ & $4$ & $4$      \\
    Network Width     & $128$ & $128$ & $128$      \\
    $\lambda$     & $1\mathrm{e}{-}4$ & $1\mathrm{e}{-}4$ & $1\mathrm{e}{-}4$      \\
    \bottomrule
  \end{tabular}
\end{table}

\subsubsection{Search parameters}
Recall from \cref{sec:background_alphazero} that there are a number of parameters that influence the search process of AlphaZero (and therefore VISA-VIS). During training, AlphaZero used $25$ simulations of MCTS search per time-step for Tic-Tac-Toe and 4x4 Tic-Tac-Toe, and $50$ simulations per time-step for Connect Four. For each simulation, the within-tree exploration parameter $c=2.0$ was used to guide exploration in the upper confidence bound specified by \cref{eqn:background_az_puct} in all games. The temperature parameter $\tau$, which guides exploration in the action selection step specified by \cref{eqn:background_mcts_policy}, was set to an initial value of $\tau=1.0$ for all games. During each training rollout, $\tau$ was dropped after a fixed number of steps---$5$, $9$, and $21$ for Tic-Tac-Toe, 4x4 Tic-Tac-Toe, and Connect Four, respectively---as in \cite{silver2017alphago}.
\begin{table}
  \caption{Search parameters for AlphaZero and VISA-VIS.}
  \label{tab:hyperparams}
  \centering
  \begin{tabular}{llll}
    \toprule
    \multicolumn{4}{c}{Search Parameters}                   \\
    \cmidrule(r){1-4}
    Parameter     & Value & Value & Value     \\
    Name       & (TTT) & (4x4 TTT) & (Connect Four)     \\
    \midrule
    MCTS Simulations     & $25$ & $25$ & $50$      \\
    $c$     & $2.0$ & $2.0$ & $2.0$      \\
    Initial $\tau$     & $1.0$ & $1.0$ & $1.0$      \\
    $\tau$ drop (steps)     & $5$ & $9$ & $21$      \\
    \bottomrule
  \end{tabular}
\end{table}

\subsubsection{Implementation}
Our implementation is based on the open-source AlphaZero provided by DeepMind's OpenSpiel project~\citep{LanctotEtAl2019OpenSpiel}. VISA-VIS extends the implementation available in OpenSpiel. The most significant implementation detail is the choice of each game's board representation. For each game, the board state was represented as a stack of $3$ $H \times W$ frames---where $H$ and $W$ are the height and width of the game board, respectively---representing the pieces of each player (noughts and crosses for both Tic-Tac-Toe variants, red and yellow pieces for Connect Four) and an extra plane to encode turns (i.e. which player's turn it is to move). Each game also encoded the game rules into the action space, which are used to specify legal moves at each turn---illegal moves are masked out and assigned a probability of zero by the value network.

\subsubsection{Resources}
All experiments were run on a cluster with 2x18 core Intel Xeon Skylake 6154 CPUs an Nvidia V100 GPU with 16GB of memory.

\subsubsection{Discussion}
\label{apdx:discussion}

\paragraph{Limitations}
The primary limitation of our method, which is true of many hybrid search-leaning methods (including AlphaZero), is that it requires a transition model that includes a specification of legal moves from each state. Though this assumption holds in game-playing environments, such as the board games studied by prior methods and in this work, many real-world environments do not provide a transition model \textit{a priori} (such a model must be learned).

\paragraph{Broader Impacts}
This work examines the robustness and reliability of AlphaZero. As large-scale AI systems like AI proliferate in real-world use cases, it is important to understand their limitations and potential failure modes. Algorithms that leverage self-play reinforcement learning are particularly susceptible to potential harms (e.g. safety, reliability, interpretability), because they do not observe any human-provided data during training. We posit that, in order to ensure that such systems are generally helpful, and to mitigate unforeseen or unintended harms during deployment, we must ensure their robustness to potential attacks (e.g. adversarial inputs) and overall reliability. The goal of this work was to take a small step in that direction. This work does not make use of sensitive data or include as part of its results a sensitive task or study.

\paragraph{Future Work}
There are a number of interesting avenues for future work in the area of concept bottlenecks for MARL. First, taking inspiration from MuZero, we can explore model-free extensions to our architecture, where the environment's transition model is learned or estimated from experience. We can also consider more complex environments. An important future work is to extend our simple thresholding coefficient $\epsilon$ that trades off between policy and value-based action selection.

\bibliography{thesis}

\end{document}